\documentclass[lettersize,journal]{IEEEtran}
\usepackage{amsmath,amsfonts}
\usepackage{algorithmic}
\usepackage{algorithm}
\usepackage{array}
\usepackage{subfig} \usepackage{textcomp}
\usepackage{stfloats}
\usepackage{url}
\usepackage{verbatim}
\usepackage{graphicx}
\usepackage{cite}
\usepackage{subfiles}
\usepackage{xr-hyper}
\usepackage[pagebackref=true,breaklinks=true,letterpaper=true,colorlinks,filecolor=red,bookmarks=false]{hyperref} 

\usepackage{makecell}
\usepackage{boldline}
\usepackage{algorithm, algorithmic}
\usepackage{multirow}   
\usepackage{enumitem}
\usepackage{breqn}
\usepackage{cleveref} \usepackage{wrapfig}  
\usepackage{caption} \usepackage{booktabs}
\usepackage{amsthm} \usepackage{amssymb}

\usepackage{amsmath,amsfonts}
\usepackage{algorithmic}
\usepackage{array}
\usepackage{textcomp}
\usepackage{stfloats}
\usepackage{url}
\usepackage{verbatim}
\usepackage{graphicx}
\usepackage{balance}
\usepackage{graphicx}
\usepackage{mathtools}
\usepackage{pgffor}
\usepackage{algorithm}
\usepackage{algorithmic}
\usepackage{booktabs}
\usepackage{placeins}
\usepackage{hyperref}

\usepackage{xparse}
\usepackage{cleveref}
\usepackage{bm}

\usepackage{chngcntr}
\usepackage[linewidth=3pt]{mdframed}

\newif\ifHeightAcquired
\NewDocumentCommand{\resetHeight}{}{\HeightAcquiredfalse}
\NewDocumentCommand{\myincludegraphics}{O{} m}{\ifHeightAcquired \includegraphics[height=\figheight]{#2}\else \let\tempbox\relax \newsavebox{\tempbox}\sbox{\tempbox}{\includegraphics[#1]{#2}}\global\edef\figheight{\the\ht\tempbox}\global\HeightAcquiredtrue \usebox{\tempbox}\fi }

\def\addvalue#1#2{\expandafter\gdef\csname my@data@#1\endcsname{#2}}
\def\usevalue#1{\csname my@data@#1\endcsname}

\newcommand{\mysubsubsection}[1]{\noindent\subsubsection{#1}}

\title{Towards the Connection between Activation Sparsity and Flat Minima}
\author{
    \IEEEauthorblockN{Ze Peng, Jian Zhang, Lei Qi, Yang Gao, Yinghuan Shi*}
    \thanks{Ze Peng, Jian Zhang, Yang Gao, and Yinghuan Shi are with 
State Key Laboratory for Novel Software Technology, Nanjing University, 
Nanjing 210023, China. 
Ze Peng, Jian Zhang, and Yinghuan Shi are also with the Institute of Brain-Machine Interface, Nanjing University, Nanjing 210023, China.
E-mail: \{pengze\}@smail.nju.edu.cn, \{zhang.jian, gaoy, syh\}@nju.edu.cn.
Lei Qi is with the School of Computer Science and Engineering, 
Southeast University, Nanjing 211189, China. E-mail: qilei@seu.edu.cn.
Corresponding author: Yinghuan Shi (Email: syh@nju.edu.cn).}
}

\newcommand{\set}[1]{\{#1\}}

\newcommand{\size}[1]{{\left|{#1}\right|}}
\newcommand{\abs}[1]{\size{#1}}

\newcommand{\indic}[1]{{\mathrm{I}\left[{#1}\right]}}

\newcommand{\norm}[1]{\left\lVert#1\right\rVert}
\newcommand{\mat}[1]{\mathbf{#1}}

\NewDocumentCommand{\reals}{}{\mathbb{R}}
\NewDocumentCommand{\nats}{}{\mathbb{N}}

\NewDocumentCommand{\opt}{O{}}{\mathrm{OPT}_{\mathrm{#1}}}
\NewDocumentCommand{\sol}{O{}}{\mathrm{SOL}_{\mathrm{#1}}}

\NewDocumentCommand{\ex}{O{} m}{{\mathbb{E}_{#1}\left[#2\right]}}

\NewDocumentCommand{\defeq}{}{\triangleq}
\NewDocumentCommand{\mutualinfo}{O{} O{} m m}{I_{#1}^{#2}(#3; #4)}
\NewDocumentCommand{\entropy}{O{} O{} m m}{H_{#1}^{#2}(#3)}
\NewDocumentCommand{\prob}{O{} O{} m}{\mathrm{P}^{#1}_{#2}\left[#3\right]}
\NewDocumentCommand{\transpose}{}{T}
\NewDocumentCommand{\trace}{m}{\mathrm{tr}\left(#1\right)}
\NewDocumentCommand{\hadamard}{}{\odot}

\NewDocumentCommand{\dataset}{}{\mathcal{D}}
\NewDocumentCommand{\loss}{}{\mathcal{L}}
\NewDocumentCommand{\derivatives}{m m O{}}{\frac{\partial#3 #1}{\partial #2#3}}
\NewDocumentCommand{\hessian}{O{\theta}}{H_{#1}}
\NewDocumentCommand{\mlp}{}{\operatorname{MLP}}

\NewDocumentCommand{\relu}{}{\operatorname{ReLU}}
\NewDocumentCommand{\gelu}{}{\operatorname{GELU}}
\RenewDocumentCommand{\defeq}{}{\coloneq}

\RenewDocumentCommand{\transpose}{}{\top}

\NewDocumentCommand{\dbmlp}{}{\operatorname{DB-MLP}}
\NewDocumentCommand{\jrelu}{}{\operatorname{JSReLU}}
\NewDocumentCommand{\Alpha}{}{A}

\NewDocumentCommand{\magic}{O{}}{\mathrm{#1MagicSynapse}}

\renewcommand{\vec}[1]{{\bm{#1}}}
\renewcommand{\mat}[1]{\bm{#1}}
\newcommand{\sampleloss}{\ell}
\newcommand{\uniform}[1]{U\left\{#1\right\}}

\NewDocumentCommand{\param}{O{I}}{\vec{\theta}_{#1}}
\NewDocumentCommand{\allparam}{}{\param[]}

\NewDocumentCommand{\KVparam}{}{\param[\mathrm{KV}]}
\NewDocumentCommand{\Kparam}{}{\param[\mathrm{K}]}
\NewDocumentCommand{\Vparam}{}{\param[\mathrm{V}]}

\NewDocumentCommand{\tildeparam}{O{I}}{\tilde{\vec{\theta}}_{#1}}
\NewDocumentCommand{\tildeallparam}{}{\tildeparam[]}

\NewDocumentCommand{\tildeKVparam}{}{\tildeparam[\mathrm{KV}]}
\NewDocumentCommand{\tildeKparam}{}{\tildeparam[\mathrm{K}]}
\NewDocumentCommand{\tildeVparam}{}{\tildeparam[\mathrm{V}]}

\NewDocumentCommand{\AS}{O{\KVparam}}{\operatorname{AF}_{#1}}
\NewDocumentCommand{\tildeAS}{O{\tildeKVparam}}{\tilde{\operatorname{AF}}_{#1}}

\NewDocumentCommand{\weird}{}{\operatorname{weird}}
\NewDocumentCommand{\halfwidth}{}{w} \newcommand{\wrt}{w.r.t.}
\newcommand{\ie}{\textit{i.e.,}}
\newcommand{\eg}{\textit{e.g.,}}
\newcommand{\etal}{\textit{et al.}}
\newcommand{\lhs}{\textrm{LHS}}
\newcommand{\rhs}{\textrm{RHS}}

\newtheorem{definition}{Definition}

\newtheorem{assumption}{Assumption}
\newtheorem{lemma}{Lemma}
\newtheorem{theorem}{Theorem}
\newtheorem{corollary}{Corollary}

\crefname{definition}{Definition}{Definitions}
\crefname{assumption}{Assumption}{Assumptions}
\crefname{lemma}{Lemma}{Lemmas}
\crefname{theorem}{Theorem}{Theorems}
\crefname{remark}{Remark}{Remarks}
\crefname{appendix}{Appendix}{Appendices}
\crefname{section}{Section}{Sections}
\crefname{algorithm}{Algorithm}{Algorithms}
\crefname{corollary}{Corollary}{Corollaries}
\crefname{figure}{Fig.}{Figs.}
\crefname{table}{Table}{Tables}
\crefname{equation}{}{}

\newcommand{\whichisbetter}[1]{\scalebox{0.8}[1.0]{$(#1)$}}
\newcommand{\lowerbetter}{\whichisbetter{\downarrow}}
\newcommand{\higherbetter}{\whichisbetter{\uparrow}}

\newcommand{\whatisbold}{ Symbol \higherbetter{} indicates a higher value is better, while symbol \lowerbetter{}  means the opposite. The best performance and sparsity are marked \textbf{bold}.}  
\usepackage[table]{xcolor}

\newcommand{\gray}[1]{\textcolor{gray}{#1}}

\newcommand{\version}{pami}

\usepackage{ifthen}
\newcommand{\pamionly}[1]{\ifthenelse{\equal{\version}{pami}}{#1}{}}

\newcommand{\arxivonly}[1]{\ifthenelse{\equal{\version}{arxiv}}{#1}{}}

\newcommand{\minorrevision}[1]{\color{blue}#1\color{black}}
\renewcommand{\minorrevision}[1]{#1}

\newcommand{\minorrevisionimage}[1]{\fcolorbox{blue}{white}{#1}
}
\renewcommand{\minorrevisionimage}[1]{#1} \pamionly{
\NewDocumentCommand{\coderepo}{}{\url{https://github.com/cdgyp/sparsity}} }

\usepackage{xr-hyper}
\begin{document}

\maketitle

\begin{abstract}The observation that activation sparsity emerges in MLP blocks of standardly trained Transformers offers an opportunity to drastically reduce computation costs without sacrificing performance. 
To theoretically explain this phenomenon, existing works have shown that activation sparsity does not result from the data properties or data fitting but from the implicit bias of the training process. However, these connections are obtained with strong assumptions (\eg{} shallow networks, a small number of training steps, and special training techniques), which cannot be applied to deep models standardly trained with a large number of steps. 
Different from these works, we find that the flatness of loss landscapes is also closely related to the MLP activation sparsity and can serve as a weaker assumption because it naturally emerges in the standard training of deep networks without the above strong assumptions.  
Specifically, we find that 1) the MLP activation sparsity equals a ratio between ``augmented flatness'' (a weighted sum of flatness measures) and the product of the input norm and activation gradient of the MLP. We empirically find that this ratio decreases during training, leading to sparse activations. 
2) We also propose the notion of derivative sparsity, which reduces to activation sparsity under $\relu$, but further enables pruning in the backward propagation and is more stable than activation sparsity.
With the theoretical findings, we can further encourage activation sparsity by decreasing the numerator and increasing the denominator of the ratio: 1) To improve (lower) the flatness, we add different bias vectors to input tokens of MLP blocks to strengthen stochastic gradient noises that drive the model to a flat area.  
2) We restrict the lower bound of affine parameters in LayerNorm to increase the input norm of MLPs.
3) To increase the activation sparsity, we propose an activation function $\jrelu$ to encourage the search of parameters with sparse derivatives and sparse activations.
These plug-and-play modifications can effectively reduce the ratio and produce sparser activations. Experiments on ImageNet-1K and C4 demonstrate relative improvements of at least $36\%$ on inference sparsity and at least $50\%$ on training sparsity over vanilla Transformers, indicating further potential cost reduction in both inference and training.
\end{abstract}

\begin{IEEEkeywords}
    sparsity, flat minima, implicit bias
\end{IEEEkeywords} 
\section{Introduction}\label{sec:intro}

Deep neural networks (DNNs) have achieved impressive success on a wide range of tasks. Nevertheless, training and deploying DNNs require substantial computation, time, and especially energy, unlike their biological inspirations, which are highly energy-efficient. 
One possible reason for this difference is sparsity. In biological neural networks, evidence shows that only a small fraction of neurons spike simultaneously despite the existence of billions of neurons \cite{observation}. This activation sparsity is considered a contributing factor to their energy efficiency. However, it was unclear whether artificial DNNs exhibit similar sparsity during inference or training.

A recent work \cite{observation} discovers that, in two-layer MLP blocks, activation sparsity emerges \emph{without} explicit regularization, \ie{} only a small portion of neurons are activated (with non-zero activation) for a large portion of samples during inference in MLP blocks of feed-forward networks, ResNet, T5 \cite{t5}, ($\relu$) ViT \cite{vit}, MLP-Mixer \cite{mixer}, and other architectures across various tasks. This finding suggests aggressive neuron pruning during inference without sacrificing performance. 
For example, if non-activated neurons are ideally skipped, T5's inference FLOPs due to the second linear layers of MLP blocks can be astonishingly reduced by about $90\%$ \cite{observation} without performance loss.
To \emph{exploit} activation sparsity for faster inference, Mirzadeh \etal{} \cite{strikeback} observed that the large plateau of $\relu$ in $(-\infty, 0)$ allows many zero activations, which suggests the renaissance of ReLU in Transformer-based large language models to improve activation sparsity.
Several recent works \cite{liu2023deja,liu2025trainingfree} have demonstrated the acceleration by exploiting activation sparsity on large language models.
The achievable lossless acceleration of this method is positively correlated with the number of zero activations.
Therefore, understanding and improving activation sparsity becomes critical for further accelerating inference, and possibly training as well. 

\begin{table}
    \centering
    \caption{Comparison between assumptions in existing explanations of activation sparsity and ours.}
    \begin{tabular}{cccc}
        \toprule
                                &   Depth    &   Training Steps    &   Training Techniques\\
        \midrule
        \cite{observation}      &  1       &   1    &       SGD      \\
        \cite{sharpness_aware}          &   2         &   All        &   SAM\\
        \cite{from_noises}    &   Deep        &   All        &   Gaussian Noise\\
        \cite{large_step}     &   2, Diagonal         &   All        &   SGD \\
        Ours                            &    Deep    &    All        &    SGD\\
        \bottomrule
    \end{tabular}\label{table:assumptions}
\end{table}

However, the emergence of sparsity is still not well understood. Several studies \cite{observation,sharpness_aware,large_step,from_noises} have sought to explain it through training dynamics. 
Li \etal{} \cite{observation} explain activation sparsity in the last layer at the first update step by manually computing gradients.
They find that gradients have components that decrease activations in the last MLP block. 
When sharpness-aware (SA) optimization \cite{foret2020sharpness,wen2023how} is used, Andriushchenko \etal{} \cite{sharpness_aware} manually compute gradients for a shallow 2-layer MLP and find that gradients from the SA objective also contain components that reduce pre-activations. This reduction pushes activations toward zero, inducing sparsity in $\relu$ networks. \cite{large_step} considers second-order behaviors of SGD and proves sparsity on diagonal 2-layer MLPs, but the emergence of sparsity in general deep, non-diagonal networks remains conjectural. 
Bricken \etal{} \cite{from_noises} find that adding noise to samples improves sparsity. However, this noise is manually imposed and not part of standard augmentations. 
These studies have improved our understanding of activation sparsity. Specifically, they reveal the roles of training dynamics, especially noise \cite{observation, large_step, from_noises} and sharpness reduction \cite{sharpness_aware}, in its emergence. Nevertheless, these studies rely on strong assumptions, \eg{} \textit{shallow networks} \cite{observation,sharpness_aware,large_step}, \textit{a small number of training steps} \cite{observation}, and \textit{additional augmentations \cite{from_noises} or optimization objectives} \cite{sharpness_aware} that are absent from standard training protocols, as summarized in \cref{table:assumptions}. Therefore, a gap remains between these explanations and experiments \cite{observation}, where all these assumptions are violated, \ie{} \emph{activation sparsity emerges in deep networks optimized by standard SGD or variants for millions of steps under standard augmentations}.
 
To advance understanding, we rebase the emergence of activation sparsity on the inductive bias of flat minima.
Flat minima are minima located in wide basins \cite{flatness_generalization} of loss landscape and are favored by SGD because its stochastic gradient noise can drive parameters to escape sharp minima \cite{alpha_stable,escape,alignment}. Flat minima often lead to better generalization of deep neural networks \cite{flatness_generalization,izmailov2018averaging,jiang2019fantastic,foret2020sharpness,tan2024sharpness}.
Flatness is commonly measured by the nuclear \cite{chaudhari_stochastic,anti_PGD} or spectral \cite{cha2021swad,gam,fad} norm of the Hessian of the empirical loss \wrt{} parameters. The lower the measure, the flatter the minima.
Flatness has several desirable properties that can help fill these gaps:
1) Flatness allows us to analyze deep (instead of shallow) networks because it is measured \wrt{} all parameters. 
2) Under a suitably large learning rate and small batch size, flat minima can be obtained with SGD over many training steps, not only at early updates.
3) Its prevalence under standard training eliminates the need for special training techniques.
Therefore, flatness is a more suitable assumption for activation sparsity analysis.

After careful derivation, we find \emph{equalities} linking activation sparsity to a ratio between an augmented form of flatness (\ie{} a weighted sum of flatness measures of multiple losses) and the product of MLP input norms and gradient magnitudes on MLP activations in $\relu$-activated MLP blocks:
\begin{align}
    \frac{[\text{(Augmented) Flatness}]}{\norm{[\text{MLP inputs}]}^2 \times \left|[\text{Gradients}]\right|^2} = [\text{Activation Sparsity}].
\end{align}
We empirically measure the augmented flatness and the denominators of the ratio and observe that the denominators increase faster than the augmented flatness. As a result, the ratio decreases and activations become sparse. 
\begin{figure}
    \centering
    \includegraphics[width=\linewidth]{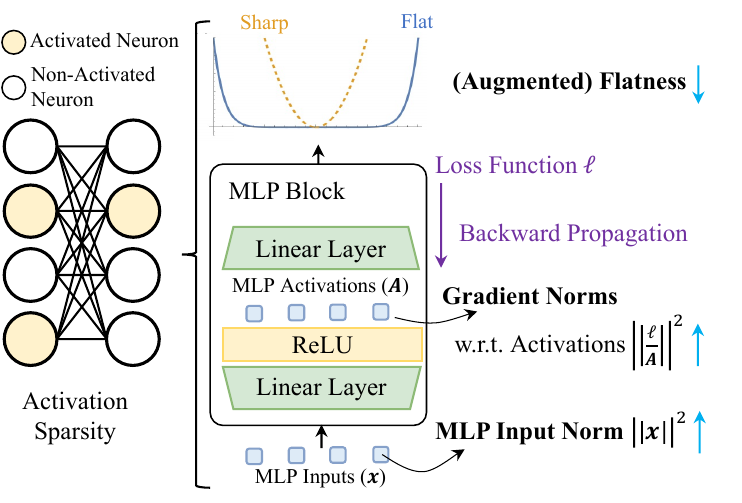}
    \caption{Overview of theoretical results. We connect activation sparsity with the augmented flatness of losses \wrt{} parameters, gradient norms \wrt{} activations, as well as the squared norms of MLP input tokens.}
    \label{fig:theory}
\end{figure}
 
We also propose the notion of derivative sparsity, defined as the number of non-zero derivatives of activation functions evaluated at pre-activations (inputs to activation functions) produced during forward propagation.
The derivative sparsity equals the activation sparsity when $\relu$ is used, because the $\relu$ activation is non-zero if and only if its derivative is non-zero (see \cref{fig:activations}).
Practically, derivative sparsity enables additional pruning during backward propagation and improves training efficiency.
With theoretical and empirical support, we find that derivative sparsity is a more stable notion than activation sparsity under some non-$\relu$ activation functions. Therefore, derivative sparsity is more closely related to implicit biases in training dynamics and is potentially useful for research on activation sparsity.

The equality between activation sparsity and the ratio indicates that activation sparsity can be improved by decreasing the numerator and increasing the denominator.
To this end, we identify factors in vanilla neural networks that hinder reducing this ratio and propose modifications to address them:
1) To decrease the numerator of the ratio, \ie{} to improve flatness in MLP blocks, we propose strengthening the gradient noise that MLP blocks are trained with. 
According to \cite{petzka_relative_2021}, the flatness of MLP blocks can be related to input robustness. Thus, it can be enhanced by noise on MLP input tokens.
We rely on gradient noise generated at parameters in shallower layers and forward-propagated to tokens, due to its larger magnitude \cite{alpha_stable} and alignment with sharp directions \cite{alignment}. 
However, for noise on MLP input tokens, we observe that parameter sharing in shallower layers (\ie{} shallower layers process all tokens with the same set of parameters) causes noise correlation among different input tokens, resulting in low noise diversity.
Besides, we also observe that nonlinearities (\eg{} $\relu$, attention, \textit{etc}) can suppress noise magnitude when saturated. 
Therefore, we propose adding different bias vectors to different tokens right before every MLP block so that gradient noises on these bias vectors are not shared or suppressed by nonlinearities.   
2) To increase the MLP input norms in the denominator, we impose lower-bounds on the affine parameters of LayerNorm layers before MLP blocks. 
3) With empirical evidence, we find that the training process implicitly decreases the norm of activation derivatives, which benefits derivative and activation sparsity.
However, we observe that $\relu$ with piecewise constant derivatives hinders this implicit optimization: when parameters are perturbed slightly, activation derivatives of $\relu$ remain constant, creating spurious minima for activation-derivative norms.
Therefore, we propose JSReLU with monotonically increasing derivatives to eliminate such spurious minima of activation derivative norms.

In experiments on ImageNet-1K \cite{imagenet1k} and C4 \cite{t5}, these plug-and-play modifications bring at least $50\%$ relative improvements in training sparsity and at least $36\%$ in testing sparsity compared to existing naturally emergent sparsity.
The contributions of this work are as follows:
\begin{itemize}
    \item To better understand the emergence of the activation sparsity, we derive equalities between activation sparsity and a ratio between flatness measures and the product of MLP input norms and gradients on MLP activations. We also give empirical results on its decrease;
    \item We propose the notion of derivative sparsity. We argue for its better stability than activation sparsity by showing that activation sparsity can be lost while derivative sparsity persists under some activation functions;
    \item We propose three modifications to architectures, activation functions, and normalization layers to improve activation sparsity.
    \item The proposed modifications achieve at least $36\%$ relative improvements over natural inference sparsity and at least $50\%$ over natural training sparsity in the pretraining of both natural image classification on ImageNet-1K and natural language generalization on C4.
\end{itemize}

The rest of the manuscript is organized as follows: \cref{sec:related} lists related works; \cref{sec:preliminary} introduces notations; \cref{sec:theory} derives theoretical results and presents empirical evidence for the connection between flatness and sparsity; \cref{sec:stability} argues for the value of derivative sparsity; \cref{sec:algorithm} proposes architectural modifications based on theoretical results and empirical observations; \cref{sec:method_experiments} empirically evaluates the modifications followed by ablation studies; \cref{sec:conclusion} summarizes the manuscript, discusses limitations, and outlines potential future work.
Proofs can be found in the Appendix.
\pamionly{The code for experiments can be found on GitHub\footnote{\coderepo{}}.} 
\section{Related Works}\label{sec:related}

In this section, we review related work on the particular roles of neurons in MLP blocks and on understanding the emergence of activation sparsity in these neurons. We also discuss works that further induce and exploit this sparsity for improved computational efficiency. We finally list other notions of emergent sparsity.

\subsection{MLP Blocks and Knowledge Neurons}
   
We trace research on activation sparsity in MLP blocks back to the discovery of the relation between MLP blocks and parametric memory. \cite{mlp_as_database} rewrites Transformer MLPs as an unnormalized attention mechanism, where queries are inputs to the MLP block and keys and values come from the first and second weight matrices rather than inputs. In this way, MLP blocks can be viewed as key-value memories. 
\cite{knowledge_neurons} extends this line by detecting how each key-value pair relates to each question in Q\&A tasks. By exploiting activation magnitudes and their gradients, they provide a method that surgically manipulates answers to individual questions. These works redirect attention to MLP blocks, which were previously overshadowed by self-attention. 

\subsection{Understanding Activation Sparsity}

Recently, comprehensive experiments by \cite{observation} demonstrate that activation sparsity in MLP blocks is prevalent across architectures and across vision and language tasks. 
\cite{observation} also rules out alternative explanations and attributes activation sparsity solely to training dynamics. 
The authors explain sparsity theoretically by exploiting properties of random initialization and manually calculating gradients. They find that gradients \wrt{} activations in the last MLP block have positive components that decrease activations and thus push them toward suppression. However, their explanation is restricted to the last layer (due to manual gradient computation) and to the first training step (because their analysis relies on independence between weights and samples, which no longer holds later in training). 
They also discover that some activation functions, such as $\tanh$, hinder sparsity (see Fig. B.3(c) in \cite{observation}), but do not elaborate further. 
Compared with their explanation, ours applies to all layers and many training steps, and it accounts for the critical role of activation functions in activation sparsity.

Following these empirical discoveries, \cite{sharpness_aware} shows that sharpness-aware (SA) optimization has a stronger bias toward activation sparsity. 
They explain this theoretically by manually calculating gradients and finding that SA optimization introduces gradient components that reduce activation norms. However, their explanation is still restricted to shallow 2-layer pure MLPs and requires SA optimization, which is not part of standard training practice. Nevertheless, this explanation hints at the role of flatness in the emergence of activation sparsity. Inspired by this, we explain \emph{deep} networks trained by standard SGD or its variants by using flat minima in place of SA optimization.

A more recent work \cite{from_noises} points out that sparsity is resistance to noise. However, the noise is manually imposed and not included in standard data augmentations. We instead consider gradient noise from SGD or other stochastic optimizers.
\cite{large_step} proves sparsity in 2-layer diagonal MLPs and conjectures that a similar process occurs in more general networks. Both works hint at a relation between noise (Gaussian noise added to inputs and stochastic gradient noise) and activation sparsity, and thus to implicit flatness bias.

\cite{adversarial_of_moe} studies the adversarial robustness of Mixture-of-Experts (MoE) models induced by architecture-imposed sparsity. 
We are inspired by this work to relate sparsity to adversarial robustness, although we do so in a reverse and implicit manner.

\subsection{Sparsity Inducing and Exploitation}

Although it does not focus on explaining the emergence of activation sparsity in CNNs, \cite{exploit_sparsity_in_CNN} boosts activation sparsity through Hoyer regularization \cite{hoyer} and a new activation function, FATReLU, which uses dynamic thresholds between activation and deactivation. 
They also design algorithms to exploit this sparsity, leading to $\ge 1.75\mathrm{x}$ speedup in CNN inference. Compared with their sparsity-encouraging method, which requires carefully designed threshold-selection procedures, hyperparameters for our theoretically motivated modifications are easier to select. The discontinuity of FATReLU also hinders training from scratch \cite{exploit_sparsity_in_CNN}, whereas we recommend applying our modifications from scratch to obtain better sparsity and lower \emph{training} cost. 
\cite{L1_sparsity} encourages activation sparsity in CNNs via explicit $L_1$ regularization. We instead investigate the emergence of activation sparsity from implicit regularization as demonstrated by \cite{observation}, and thus rely only on implicit regularization enhanced by our modifications. Nevertheless, our methods are architecturally orthogonal, and we believe combining both could further boost activation sparsity.
\cite{strikeback} suggests $\relu$ for Transformer-based LLMs because smoother activation functions like $\gelu$ or $\text{SiLU}$ do not provide a large zero-activation plateau where many activations can reside. They also propose a GPU-implemented pruning method, with which inference latency is greatly reduced as sparsity increases.

\subsection{Other Notions of Emergent Sparsity}

Another notion of sparsity in hidden features is attention sparsity, \ie{} many entries in the attention map are near zero. It is the focus of \cite{attention_sparsity_1,attention_sparsity_2}.
For input-side sparsity notions, \cite{sparse_symbol} uses Shapley values to formulate and prove the existence of sparse ``symbols,'' namely small patch groups that are the main contributors to the output of well-trained and masking-robust AI models.
 
\section{Preliminary}\label{sec:preliminary}

Here we introduce the notation. \cref{sec:notations} presents basic symbols, \cref{sec:arch} defines the architecture of interest, and \cref{sec:def_sparsity} defines flatness measures and sparsity.

\subsection{Notation and Problem Setting}\label{sec:notations}

For $n \in \nats$, let $[n] = \set{1, 2, \dots, n}$.
We use bold lowercase letters like $\vec{x}$ for vectors and bold uppercase letters like $\mat{X}$ for matrices.
When the context is clear, we use $x_j$ for an element of vector $\vec{x}$, $\vec{x}_i \in \reals^{d}$ for the \emph{transpose} of a row in matrix $\mat{X} \in \reals^{k \times d}$, and $x_{i, j}$ for an element of matrix $\mat{X}$. 
Let $\norm{\cdot}_2$ denote the $L_2$ norm for vectors and $\norm{\mat{X}}_F$ denote the Frobenius norm for matrices. For vectors or matrices, the $L_0$ pseudo norm $\norm{\cdot}_0$ counts their non-zero elements.
Let $p$ be the number of parameters.
For a parameterized function $f: \reals^p \times \mathcal{A} \to \mathcal{B}$, we write $f_{\allparam}$ for the partial application of a parameter $\allparam$. We also use $f_{\allparam}: \mathcal{A} \to \mathcal{B}$ to indicate $f: \reals^p \times \mathcal{A} \to \mathcal{B}$.

We assume classification tasks throughout the analysis. 
Let $\dataset = \set{(\vec{x}_i, y_i): i \in [N]} \subseteq \reals^{d_{\textnormal{input}}} \times [C]$ be the \emph{finite} training set. Our focus is solely on activation sparsity within \emph{training} samples, as in \cite{observation,sharpness_aware}. Since the dataset is finite, the analysis involves finite summations rather than integrals.
For classification tasks, $f_{\allparam}$ outputs a distribution over $[C]$, where $f(c \mid \vec{x}, \allparam)$ denotes the estimated probability of class $c$ for sample $\vec{x}$ under parameter $\allparam$. 
We define $\sampleloss(f_{\allparam}, (\vec{x}, y)) \defeq -\log f(y \mid \vec{x}, \allparam)$ as the cross-entropy loss on a sample, and $\hat{\loss}(\allparam) \defeq \ex[(\vec{x}, y) \sim \uniform{\dataset}]{\ell(f_{\allparam}, (\vec{x}, y))}$ as the empirical loss on the training set, where $\uniform{\cdot}$ denotes the uniform distribution on a non-empty finite set.
Our analysis naturally extends to self-supervised objectives in language-model pretraining, since they are essentially token prediction with cross-entropy loss. 

\subsection{Architectures}\label{sec:arch}

We assume that the network of interest $f_{\allparam}$ comprises $L$ MLP blocks interleaved with other structures:
\begin{align}
    f_{\allparam} \defeq \text{softmax} \circ D_{\allparam} \circ \left(\mathop{\bigcirc}_{l=1}^L (\mlp^l_{\allparam} \circ G^l_{\allparam})\right) \circ E,\label{eq:architecture}
\end{align}
where $\mathop{\bigcirc}$ denotes function composition, $E$ is the input embedding layer, $D_{\allparam}$ is the classifier, and $\mlp^l_{\allparam}: \reals^{k \times d} \to \reals^{k \times d}$ takes a matrix $\mat{X}^l$ of $k$ $d$-dimensional tokens and outputs a matrix $\mat{Z}^l$ of the same shape. 
$G^l_{\allparam}$ encompasses non-MLP structures, including normalization layers, self-attention layers, \textit{etc}.
Modules are assumed to use disjoint parameters from $\allparam$.
Specifically, $\mlp^l_{\allparam}$ uses parameters from $\allparam$ to define $(\mat{K}^l, \vec{b}^{l, K}) \in \reals^{n \times d} \times \reals^{n}, (\mat{V}^l, \vec{b}^{l, V}) \in \reals^{d \times n} \times \reals^{d}$, and computes $\mat{Z}^l$ by 
\begin{align}
    \mat{P}^l \defeq& \mat{X}^l \times \left(\mat{K}^l\right)^\transpose + \vec{1}_k \times \left(\vec{b}^{l, K}\right)^\transpose \in \reals^{k \times n},\label{eq:mlp_start}\\
    \mat{\Alpha}^l \defeq& \sigma\left(\mat{P}^l\right) \in \reals^{k \times n},\\
    \mat{Z}^l \defeq& \mat{\Alpha}^l \times \left(\mat{V}^l\right)^\transpose +  \vec{1}_k \times \left(\vec{b}^{l, V}\right)^\transpose \in \reals^{k \times d}, \label{eq:mlp_end}
\end{align}
where $\vec{1}_k \in \reals^{k}$ is the all-$1$ vector, and $\sigma: \reals \to \reals$ is the activation function.
Here, $\mat{P}^l$ is the pre-activation matrix and $\mat{A}^l$ is the activation matrix. 
$\mat{K}^l$ and $\mat{V}^l$ are referred to as key and value matrices, following \cite{mlp_as_database,knowledge_neurons}. Let $\Kparam^l, \Vparam^l$ denote the parameters in $\mat{K}^l, \mat{V}^l$, respectively, and let $\KVparam$ denote the parameters of all MLP matrices.
Since the number of tokens affects the analysis, we distinguish networks with different token counts in
\cref{def:token_difference}. 
\begin{definition}[Single- and Multi-Token Scenarios]\label{def:token_difference}
    $f$ is single-token if $k=1$ and is multi-token if $k > 1$.
\end{definition}
According to this definition, feed-forward networks are single-token networks, while CNNs (with receptive fields as tokens), Transformers, and MLP-Mixers are multi-token networks. For clarity, skip connections (\eg{} residual and dense connections) are omitted in \cref{eq:architecture}. Nevertheless, our results extend to architectures with skip connections, as detailed in \cref{appendix:full}.

\subsection{Gradients, Flatness and Sparsities}\label{sec:def_sparsity}

Given a scalar function $g$ of a matrix or vector $\mat{X}$, we use $\derivatives{g}{\mat{X}} \defeq \begin{bmatrix} \derivatives{g}{x_{i, j}} \end{bmatrix}_{i, j}$ to denote partial derivatives.
Our analysis involves non-smooth activation functions such as $\relu(x) = \max\set{x, 0}$. With \cref{assumption:differentiable} and \cref{def:subdrivatives}, we treat undefined derivatives as $0$ when computing gradients, supported formally by \cref{lemma:differentiable} in \cref{appendix:full}.
\begin{assumption}\label{assumption:differentiable}
    At every $(\vec{x}, y) \in \dataset$, the model $f$ and parameter $\allparam$ satisfy that $f_{\allparam}$ is twice continuously differentiable \wrt{} every hidden feature and parameter.
\end{assumption}
\begin{definition}\label{def:subdrivatives}
    The (zero-assigned) subderivative matrix $\mat{D}^l$ of $\mlp^l_{\allparam}$ on sample $(\vec{x}, y) \in \dataset$ is $\mat{D}^l \in \reals^{k \times n}$, where
    \begin{align}
        d^l_{i, j} \defeq \begin{cases}
            \sigma'(p^l_{i, j}) & \text{$\sigma'(p^l_{i, j})$ exists},\\
            0 & \text{otherwise}.
        \end{cases}
    \end{align}
\end{definition}
The Hessian of a scalar function $g$ \wrt{} $\allparam$ is $\nabla^2_{\allparam} g \defeq \begin{bmatrix} \frac{\partial^2 g}{\partial \theta_i \partial \theta_j} \end{bmatrix}_{i, j}$. Specifically, we denote the Hessian of the empirical loss by $\hessian[\param] \defeq \nabla^2_{\param} \ex[(\vec{x}, y) \sim \uniform{\dataset}]{\sampleloss(f_{\allparam}, (\vec{x}, y))}$, where $\param$ is any subset of $\allparam$.
We measure the flatness with $\trace{\hessian[\param]}$ \cite{anti_PGD}, which is the same as $\norm{\hessian[\param]}_{*}$ used in \cite{chaudhari_stochastic} at minima and lower-bounds the latter in general cases.

Our analysis uses a sum of Hessian traces, as in \cref{def:augmented_sharpness}, due to its connection to gradient norms.
\begin{definition}\label{def:augmented_sharpness}
    The \underline{a}ugmented \underline{f}latness of $f_{\allparam}$ on training set $\dataset$ \wrt{} a subset of parameters $\param$ is
    \begin{align}
        \AS[\param] \defeq \trace{\hessian[\param] + \ex[(\vec{x}, y)\sim \uniform{\dataset}]{\frac{\nabla^2_{\param} f(y \mid \vec{x}, \allparam)}{f(y \mid \vec{x}, \allparam)}}}\label{eq:augmented_flatness}.
    \end{align}
\end{definition}
The augmented flatness combines the trace of the empirical-loss Hessian and the expected Hessian traces of the estimated probabilities of true labels, weighted by those probabilities. The Hessian traces in the second term serve as (weighted) flatness measures for a classification loss that directly uses the estimated probability of the true label as the loss. Therefore, we also consider this term a flatness measure.

Sparsities are defined using hidden features and derivatives.

\newcommand{\definesparsity}[4]{
    \begin{definition}[#1 Sparsity]\label{def:#2_sparsity}
        The #2 sparsity of $\mlp^l$ on sample $\vec{x}$ is $\norm{\mat{#3}^l}_0$.
        The training #2 sparsity of $\mlp^l$ on training set $\dataset$ is $\ex[(\vec{x}, y) \sim \uniform{\dataset}]{\norm{\mat{#3}^l}_0}$.
        #4
    \end{definition}
}

\definesparsity{Activation}{activation}{A}{}
\definesparsity{Derivative}{derivative}{D}{}

Under $\relu$ activation, we have $\norm{\mat{A}^l}_0 = \norm{\mat{D}^l}_0$ because for $x \neq 0$, $\relu'(x) > 0$ if and only if $\relu(x) > 0$. 
\section{Connection between Flatness and Sparsity}\label{sec:theory}

In this section, we present a theoretical analysis that reveals the connection between flatness and activation sparsity in MLP blocks.
First, we provide the main theorems on the equality between sparsity and the ratio.
Next, we support the decrease of the ratio with empirical results.

\subsection{Analysis and Theoretical Results}\label{sec:analyses}

This subsection presents the main theoretical results connecting flatness and activation sparsity.
After noting that gradient norms are linked to augmented flatness and depend on activation derivatives, we first establish the connection between augmented flatness and gradient norms. 
Then, for single-token scenarios, we decompose the gradient norms to obtain derivative sparsity. Since activation and derivative sparsity are equal for $\relu$, we then obtain the connection between augmented flatness and activation sparsity for single-token networks. 
Finally, we extend the single-token results to multi-token scenarios. Details are in \cref{appendix:full}.

We first establish an equality between augmented flatness and expected squared gradient norms under cross-entropy loss. 
\begin{lemma}\label{lemma:af_and_gradient_norm}
    Under \cref{assumption:differentiable}, for any subset of parameters $\param \subseteq \allparam$, we have
    \begin{align}
        \AS[\param]=\ex[\vec{s} \sim \uniform{\dataset}]{\norm{\nabla_{\param} \sampleloss(f_{\allparam}, \vec{s})}_2^2}.\label{eq:af_and_gradient_norm}
    \end{align}
\end{lemma}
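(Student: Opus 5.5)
The plan is a direct per-sample computation of the Hessian of the cross-entropy loss, followed by averaging over the training set. Fix a sample $\vec{s} = (\vec{x}, y) \in \dataset$ and write $q(\allparam) \defeq f(y \mid \vec{x}, \allparam)$, so that $\sampleloss(f_{\allparam}, \vec{s}) = -\log q(\allparam)$. By \cref{assumption:differentiable}, $q$ is twice continuously differentiable in every parameter of $\param$ at this sample, and $q > 0$ because it is a softmax output. Where $\relu$-type kinks are involved, I will rely on \cref{lemma:differentiable} in \cref{appendix:full}, which justifies treating the zero-assigned subderivatives of \cref{def:subdrivatives} as genuine derivatives.

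The key step is the identity for the Hessian of a log. Differentiating once gives
\begin{align}
\nabla_{\param}\sampleloss = -\frac{\nabla_{\param} q}{q},
\end{align}
and differentiating again with the quotient rule gives
\begin{align}
\nabla^2_{\param}\sampleloss = -\frac{\nabla^2_{\param} q}{q} + \frac{\nabla_{\param} q \, (\nabla_{\param} q)^\transpose}{q^2} = -\frac{\nabla^2_{\param} q}{q} + \nabla_{\param}\sampleloss \, (\nabla_{\param}\sampleloss)^\transpose .
\end{align}
Moving the first term to the left and taking traces, with $\trace{\vec{g}\vec{g}^\transpose} = \norm{\vec{g}}_2^2$, I get the per-sample identity
\begin{align}
\trace{\nabla^2_{\param}\sampleloss + \frac{\nabla^2_{\param} q}{q}} = \norm{\nabla_{\param}\sampleloss(f_{\allparam},\vec{s})}_2^2 .
\end{align}

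Finally, I average over $\vec{s} \sim \uniform{\dataset}$. The dataset is finite, so the expectation is a finite sum. It therefore commutes with both the Hessian (giving $\hessian[\param]$) and the trace, and the left-hand side becomes exactly $\AS[\param]$ from \cref{def:augmented_sharpness}, which proves \cref{eq:af_and_gradient_norm}.

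The only step needing care is the differentiability bookkeeping. The argument uses that $\log q$ is $C^2$ near $\allparam$ for each training sample, and that the subderivative convention agrees with the true derivatives there; this is what \cref{assumption:differentiable} supplies. Everything else is a routine calculation.
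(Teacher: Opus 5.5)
Your proposal is correct and is essentially the paper's own proof. You expand $\nabla^2_{\param}\log f(y\mid\vec{x},\allparam)$ with the quotient rule into $\nabla^2_{\param} f/f$ minus the outer product of $\nabla_{\param}\log f$, swap the expectation over the finite set $\dataset$ with the Hessian and the trace, and finish with $\trace{\vec{g}\vec{g}^\transpose}=\norm{\vec{g}}_2^2$. The appeal to \cref{lemma:differentiable} is unnecessary, since this identity involves only parameter derivatives, which \cref{assumption:differentiable} already covers.
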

 
To compute gradients, back-propagation uses hidden features and derivatives, thereby linking gradient norms (and augmented flatness) with activation and derivative matrices. 
This connection allows us to decompose the expected gradient norms of MLP matrices in \cref{lemma:af_and_gradient_norm} with $\mat{A}^l$ and $\mat{D}^l$, leading to \cref{lemma:main} (details are in \cref{lemma:main_full} of \cref{appendix:full}). 
\begin{lemma}\label{lemma:main} 
    Under \cref{assumption:differentiable}, for $l \in [L]$, we have
    \begin{align}
\AS[\Kparam^l]
        =& \ex[\vec{s} \sim \uniform{\dataset}]{\norm{\left(\mat{X}^l\right)^\transpose \left(\derivatives{\sampleloss(f_{\allparam}, \vec{s})}{\mat{A}^l} \hadamard \mat{D}^l \right)}_F^2}, \label{eq:main_K}\\
        \AS[\Vparam^l]=&\ex[\vec{s} \sim \uniform{\dataset}]{\norm{\left(\mat{A}^l\right)^\transpose \derivatives{\sampleloss(f_{\allparam}, \vec{s})}{\mat{Z}^l}}_F^2}. \label{eq:main_V}
    \end{align}
\end{lemma}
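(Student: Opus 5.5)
The plan is to start from \cref{lemma:af_and_gradient_norm}, applied with $\param = \Kparam^l$ and then with $\param = \Vparam^l$. This immediately gives $\AS[\Kparam^l] = \ex[\vec{s}\sim\uniform{\dataset}]{\norm{\nabla_{\Kparam^l}\sampleloss(f_{\allparam},\vec{s})}_2^2}$, and likewise for $\Vparam^l$. Since $\Kparam^l$ consists exactly of the entries of $\mat{K}^l$, the Euclidean norm of the gradient equals the Frobenius norm of the matrix $\derivatives{\sampleloss}{\mat{K}^l}$. So it remains, for each fixed sample $\vec{s}$, to identify this matrix gradient (and the analogous one for $\mat{V}^l$) by the chain rule through \cref{eq:mlp_start}--\cref{eq:mlp_end}. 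Taking expectations then finishes the proof.

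For the value matrix, $\mat{Z}^l$ depends on $\mat{V}^l$ only through the linear map $\mat{A}^l(\mat{V}^l)^\transpose$. Moreover, by the disjoint-parameter assumption, nothing upstream of $\mat{Z}^l$ depends on $\mat{V}^l$. Writing $z_{i,a} = \sum_j \alpha_{i,j} v_{a,j} + b_a$, I get
\[
\derivatives{\sampleloss}{v_{a,j}} = \sum_i \derivatives{\sampleloss}{z_{i,a}}\,\alpha_{i,j}.
\]
In matrix form this reads $\derivatives{\sampleloss}{\mat{V}^l} = \bigl(\derivatives{\sampleloss}{\mat{Z}^l}\bigr)^\transpose \mat{A}^l$. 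Its Frobenius norm coincides with that of its transpose $(\mat{A}^l)^\transpose \derivatives{\sampleloss}{\mat{Z}^l}$, which yields \cref{eq:main_V}.

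For the key matrix, $\mat{K}^l$ enters only through $\mat{P}^l$, with $p_{i,j} = \sum_c x_{i,c}k_{j,c} + b_j$. Hence
\[
\derivatives{\sampleloss}{k_{j,c}} = \sum_i \derivatives{\sampleloss}{p_{i,j}}\, x_{i,c}.
\]
Next, $\alpha_{i,j} = \sigma(p_{i,j})$ acts elementwise, so $\derivatives{\sampleloss}{p_{i,j}} = \derivatives{\sampleloss}{\alpha_{i,j}}\,\sigma'(p_{i,j})$. Therefore $\derivatives{\sampleloss}{\mat{K}^l} = \bigl(\derivatives{\sampleloss}{\mat{A}^l}\hadamard \mat{D}^l\bigr)^\transpose \mat{X}^l$. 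Transposing inside the Frobenius norm gives \cref{eq:main_K}.

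The main obstacle is justifying $\derivatives{\sampleloss}{p_{i,j}} = \derivatives{\sampleloss}{\alpha_{i,j}}\, d^l_{i,j}$ when $\sigma$ is not differentiable at $p^l_{i,j}$, as with $\relu$ at $0$. The plan is to invoke \cref{assumption:differentiable}, which says $f_{\allparam}$, and hence $\sampleloss$, is continuously differentiable in every hidden feature and parameter at the training points. One case is $\sigma'(p^l_{i,j})$ existing, where the ordinary chain rule applies with $d^l_{i,j} = \sigma'(p^l_{i,j})$. The other is $\sigma'$ not existing at $p^l_{i,j}$; here the assumed differentiability of $\sampleloss$ in $p_{i,j}$ forces the derivative to vanish, since a kink of $\sigma$ would otherwise produce a kink in $\sampleloss$. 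I expect the paper's \cref{lemma:differentiable} to formalize exactly this, showing that the zero assignment in \cref{def:subdrivatives} is consistent. I would cite it at that point, rather than reprove it, to conclude that the chain-rule expression with $\mat{D}^l$ holds in both cases.
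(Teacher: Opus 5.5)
Your proposal is correct and follows the paper's proof essentially step for step. You instantiate \cref{lemma:af_and_gradient_norm} on $\Kparam^l$ and $\Vparam^l$, obtain $\derivatives{\sampleloss}{\mat{K}^l} = \bigl(\derivatives{\sampleloss}{\mat{A}^l}\hadamard\mat{D}^l\bigr)^\transpose\mat{X}^l$ and $\derivatives{\sampleloss}{\mat{V}^l} = \bigl(\derivatives{\sampleloss}{\mat{Z}^l}\bigr)^\transpose\mat{A}^l$ by the chain rule, and handle the non-differentiable points of $\sigma$ by citing \cref{lemma:differentiable}, exactly as you anticipated; the only difference is that you spell out the index computation and the transposition inside the Frobenius norm, which the paper leaves implicit.
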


To obtain a clearer relationship between flatness and derivative sparsity $\norm{\mat{D}^l}_0$, the number of tokens should be differentiated in the following sections.

\subsection{Single-Token Scenarios}

In single-token scenarios, all matrices in \cref{lemma:main} reduce to vectors. This simplification allows us to interchange matrix multiplication and the Frobenius norm, resulting in \cref{lemma:single_token} (details are in \cref{lemma:single_token_full}).

\begin{lemma}\label{lemma:single_token}
    Under \cref{assumption:differentiable}, if $f_{\theta}$ is single-token, then for $l \in [L]$,
    \begin{align}
        \AS[\Kparam^l] =& \ex[\vec{s} \sim \uniform{\dataset}]{\norm{\derivatives{\sampleloss(f_{\allparam}, \vec{s})}{\mat{A}^l} \hadamard \mat{D}^l}_F^2 \norm{\mat{X}^l}_F^2}, \label{eq:single_token_K}\\
        \AS[\Vparam^l] =& \ex[\vec{s} \sim \uniform{\dataset}]{\norm{\derivatives{\sampleloss(f_{\allparam}, \vec{s})}{\mat{Z}^l}}_F^2 \norm{\mat{A}^l}_F^2}.\label{eq:single_token_V} 
    \end{align}
\end{lemma}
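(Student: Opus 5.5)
This follows directly from \cref{lemma:main}, specialized to $k=1$. When $f_{\allparam}$ is single-token, $\mat{X}^l \in \reals^{1\times d}$ is a single row, while $\derivatives{\sampleloss}{\mat{A}^l}\hadamard \mat{D}^l$, $\mat{A}^l \in \reals^{1\times n}$ and $\derivatives{\sampleloss}{\mat{Z}^l}\in\reals^{1\times d}$ are also single rows. Write $\vec{x} = (\mat{X}^l)^\transpose \in \reals^d$ and $\vec{g} = (\derivatives{\sampleloss}{\mat{A}^l}\hadamard \mat{D}^l)^\transpose \in \reals^n$. The matrix inside the norm in \cref{eq:main_K} is then the outer product $\vec{x}\vec{g}^\transpose \in \reals^{d\times n}$. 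Likewise, the matrix in \cref{eq:main_V} is the outer product of $(\mat{A}^l)^\transpose$ and $\derivatives{\sampleloss}{\mat{Z}^l}$, which lies in $\reals^{n\times d}$.

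The key step is the identity $\norm{\vec{u}\vec{v}^\transpose}_F^2 = \norm{\vec{u}}_2^2\norm{\vec{v}}_2^2$ for any vectors $\vec{u},\vec{v}$. To verify it, note that $\norm{\vec{u}\vec{v}^\transpose}_F^2 = \sum_{i,j} u_i^2 v_j^2 = (\sum_i u_i^2)(\sum_j v_j^2)$. Equivalently, $\trace{\vec{v}\vec{u}^\transpose\vec{u}\vec{v}^\transpose} = \norm{\vec{u}}_2^2\,\trace{\vec{v}\vec{v}^\transpose}$.

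For a single-row matrix, the Frobenius norm equals the $L_2$ norm of the corresponding vector, and transposition preserves it. Substituting the identity inside the expectations of \cref{eq:main_K} and \cref{eq:main_V} therefore gives \cref{eq:single_token_K} and \cref{eq:single_token_V} pointwise for each sample $\vec{s}$. Taking expectations over $\uniform{\dataset}$ completes the argument. \cref{assumption:differentiable} is used only to invoke \cref{lemma:main}.

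No step is a genuine obstacle. The only thing to check is bookkeeping: the transposes must match the paper's row convention, so that the product in \cref{eq:main_K} is an outer product and not an inner product. With $k=1$, $(\mat{X}^l)^\transpose$ is a $d\times 1$ column multiplied by a $1\times n$ row, so it is indeed an outer product.
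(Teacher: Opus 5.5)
Your proposal is correct and matches the paper's proof. The paper also specializes \cref{lemma:main} to $k=1$, rewrites both matrices as outer products of the single-row vectors, applies $\norm{\vec{u}\vec{v}^\transpose}_F^2 = \trace{\vec{v}^\transpose\vec{v}\,\vec{u}^\transpose\vec{u}} = \norm{\vec{u}}_2^2\norm{\vec{v}}_2^2$ sample-wise inside the expectation, and your dimension check on the transposes is right.
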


To extract derivative sparsity $\norm{\mat{D}^l}_0$ from \cref{lemma:single_token}, we exploit properties of $\relu$ and LayerNorm.
First, for $\relu$ networks, when $x \neq 0$, the derivative satisfies $\relu'(x) = 1$ if $\relu(x) > 0$ and $\relu'(x) = 0$ if $\relu(x) = 0$. As a result, the subderivatives in $\mat{D}^l$ act as indicators of whether $\relu$ neurons are activated. Since indicators are 0-1 valued and the Frobenius norm of a 0-1 matrix equals its $L_0$ pseudo norm, one can extract sparsity measured by the $L_0$ pseudo norm from Frobenius norms with the help of \cref{lemma:L0_and_L2} in \cref{appendix:full}. 
Second, to extract $\norm{\mat{X}^l}_F^2$, we recall that in modern architectures such as Transformers and MLP-Mixer, inputs to MLP blocks are normalized by LayerNorm. If the affine parameters in LayerNorm are turned off and $\epsilon_{\textnormal{LayerNorm}}$ (the value added to the denominator in LayerNorm for numerical stability) is set to $0$, then every (and here the only) token in $\mat{X}$ has $L_2$ norm $\sqrt{d}$, allowing us to extract $\norm{\mat{X}^l}_F^2$ and leave only $\norm{\derivatives{\sampleloss(f_{\allparam}, \vec{s})}{\mat{A}^l} \hadamard \mat{D}^l}_F^2$ inside the expectation.
\cref{theorem:single_token} summarizes the contributions of LayerNorm and $\relu$ on activation sparsity. The theorem is proved as an implication of \cref{theorem:single_token_full} in \cref{appendix:full}.

\begin{theorem}\label{theorem:single_token}
    Under \cref{assumption:differentiable}, if $f_{\allparam}$ is used under single-token scenarios, for $l \in [L]$, we have
    \begin{align}
        \ex[\vec{s} \sim \uniform{\dataset}]{\norm{\mat{D}^l}_0}
        =&\frac{\AS[\Kparam^l]}{\ex{\norm{\mat{X}^l}_F^2 \left(\derivatives{\sampleloss(f_{\allparam}, \vec{s})}{a^l_{i, j}} \cdot d^l_{i, j}\right)^2 \mid d^l_{i, j} > 0}}, \label{eq:single_token_expectation}\\
        \AS[\Vparam^l]=& \ex[\vec{s} \sim \uniform{\dataset}]{\norm{\derivatives{\sampleloss(f_{\allparam}, \vec{s})}{\mat{Z}^l}}_F^2 \norm{\mat{A}^l}_F^2},
    \end{align}
    where the conditional expectation in \cref{eq:single_token_expectation} is taken over $\vec{s} \sim \uniform{\dataset}, (i, j) \sim \uniform{[k] \times [n]}$.
    If further MLP blocks use $\relu$ as activation functions, then we have 
    \begin{align}
        \ex[\vec{s} \sim \uniform{\dataset}]{\norm{\mat{A}^l}_0}
        =&\ex[\vec{s} \sim \uniform{\dataset}]{\norm{\mat{D}^l}_0}\\
        =&\frac{\AS[\Kparam^l]}{\ex{\norm{\mat{X}^l}_F^2 \left(\derivatives{\sampleloss(f_{\allparam}, \vec{s})}{a^l_{i, j}}\right)^2 \mid a^l_{i, j} > 0}}. 
    \end{align}
    If further inputs to MLP blocks are LayerNorm-ed with LayerNorms' affine parameters and $\epsilon_{\textnormal{LayerNorm}}$ turned off, we have
    \begin{align}
        \ex[\vec{s} \sim \uniform{\dataset}]{\norm{\mat{A}^l}_0}
        =&\frac{\AS[\Kparam^l]}{d \cdot \ex{\left(\derivatives{\sampleloss(f_{\allparam}, \vec{s})}{a^l_{i, j}}\right)^2 \mid a^l_{i, j} > 0}}. \label{eq:main}
    \end{align}
\end{theorem}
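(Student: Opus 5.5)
We start from \cref{lemma:single_token}, which already gives the $\AS[\Vparam^l]$ identity verbatim, so only the $\Kparam^l$ part needs work. Since $k=1$, write $g_{i,j} \defeq \derivatives{\sampleloss(f_{\allparam}, \vec{s})}{a^l_{i, j}} \cdot d^l_{i, j}$ (with $i=1$). Expanding the Frobenius norm in \cref{eq:single_token_K} turns it into an expectation of a finite sum:
\begin{align}
    \AS[\Kparam^l] = \ex[\vec{s}]{\sum_{i \in [k], j\in[n]} \norm{\mat{X}^l}_F^2\, g_{i,j}^2 \,\indic{d^l_{i,j} \neq 0}}.
\end{align}
We can insert the indicator for free because $g_{i,j}=0$ whenever $d^l_{i,j}=0$. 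Rewrite the sum over $(i,j)$ as $kn$ times an expectation over $(i,j)\sim\uniform{[k]\times[n]}$. We then apply the elementary identity behind \cref{lemma:L0_and_L2}, namely $\ex{Y\,\indic{E}} = \prob{E}\,\ex{Y \mid E}$ on the joint finite uniform space of $(\vec{s},i,j)$. This gives
\begin{align}
    \AS[\Kparam^l] = kn\,\prob{d^l_{i,j}\neq 0}\;\ex{\norm{\mat{X}^l}_F^2 g_{i,j}^2 \mid d^l_{i,j}\neq0}.
\end{align}
Moreover, $kn\,\prob{d^l_{i,j}\neq0} = \ex[\vec{s}]{\norm{\mat{D}^l}_0}$. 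Dividing yields \cref{eq:single_token_expectation}, provided the conditional expectation is positive; we make that a standing nondegeneracy assumption. If it is zero, both sides degenerate.

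One issue is the sign of the conditioning event. The statement conditions on $d^l_{i,j}>0$, while the count $\norm{\mat{D}^l}_0$ uses $d\neq 0$. We handle this either by assuming $\sigma'\ge 0$ (true for $\relu$) or by reading the condition as $d^l_{i,j}\neq0$. We flag this as the main point needing care, together with the zero-assignment convention of \cref{def:subdrivatives} at non-differentiable points; \cref{lemma:differentiable} and \cref{assumption:differentiable} justify that convention.

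For $\relu$, at points where the derivative exists (so $p\neq0$), we have $d^l_{i,j}\in\{0,1\}$, with $d^l_{i,j}=1 \iff a^l_{i,j}>0$. At $p=0$ the zero assignment gives $d=0$ and also $a=0$. Hence the events $\{d^l_{i,j}>0\}$ and $\{a^l_{i,j}>0\}$ coincide, $\norm{\mat{A}^l}_0=\norm{\mat{D}^l}_0$ samplewise, and $g_{i,j}^2 = (\partial \sampleloss/\partial a^l_{i,j})^2$ on that event. Substituting gives the second display.

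Finally, with affine parameters off and $\epsilon_{\textnormal{LayerNorm}}=0$, the single token is standardized to zero mean and unit variance across its $d$ coordinates. Hence $\norm{\mat{X}^l}_F^2 = d$ deterministically, which assumes the token is not constant so that LayerNorm is defined. This constant pulls out of the conditional expectation and gives \cref{eq:main}.
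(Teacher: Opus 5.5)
Your proposal is correct and follows the paper's proof. Both start from \cref{lemma:single_token}, insert the indicator of nonzero derivatives, and extract $\ex{\norm{\mat{D}^l}_0}$ via the conditional-expectation identity that the paper packages as \cref{lemma:L0_and_L2}. Both then specialize using $d^l_{i,j}=1 \iff a^l_{i,j}>0$ for $\relu$, and $\norm{\mat{X}^l}_F^2=d$ for LayerNorm. Your flag about $d^l_{i,j}>0$ versus $d^l_{i,j}\neq 0$ is well placed: the paper's step replacing $\derivatives{\sampleloss}{\mat{A}^l} \hadamard \mat{D}^l$ by $\derivatives{\sampleloss}{\mat{A}^l} \hadamard \mat{D}^l \hadamard \mat{D}^l_{>0}$ silently requires $\sigma' \ge 0$ in exactly the same way.
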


In \cref{theorem:single_token}, derivative sparsity $\norm{\mat{D}^l}_0$ and activation norm $\norm{\mat{A}^l}_F^2$ are connected to augmented flatness \wrt{} key matrices and value matrices, respectively. Activation sparsity $\norm{\mat{A}^l}_0$ is then plugged in via properties of $\relu$. As a result, activation sparsity equals a ratio between augmented flatness \wrt{} key matrices and a product related to MLP input norms and magnitudes of gradients \wrt{} activations.
 
\subsection{Multi-Token Scenarios}
When multiple tokens are used, as in CNNs, Transformers, and MLP-Mixers, the derivation is non-trivial because $\mat{K}^l, \mat{V}^l$ are shared among tokens and gradients from different tokens may cancel out, preventing results similar to \cref{lemma:single_token}.
Fortunately, if we copy the shared $\mat{K}^l, \mat{V}^l$ $k$ times ($k$ is the token number) and assign each copy to process exactly one token as in \cref{def:unwrapped}, we obtain a single-token network to which we can directly apply the single-token results.

\begin{definition}\label{def:unwrapped}
    Let $f_{\allparam}$ be a neural network defined by \cref{eq:architecture}, then its copied architecture $\tilde{f}$ is defined by 
    \begin{align}
        \tilde{f}_{\tildeallparam} \defeq
            \operatorname{softmax} \circ D_{\tildeallparam} \circ \left(\mathop{\bigcirc}_{l=1}^L (\operatorname{kMLP}^l_{\tildeallparam} \circ G^l_{\tildeallparam})\right) \circ E,\label{eq:unwrapped}
    \end{align}
    where $\operatorname{kMLP}^l_{\tildeallparam}: \reals^{k \times d} \to \reals^{k \times d}$ is defined by
    \begin{align}
        \operatorname{kMLP}^l_{\tildeallparam}(\tilde{\mat{X}}^{l})
        =&  \begin{bmatrix}
            \mlp^{l, i}_{\tildeallparam}(\tilde{\vec{x}}_i^\transpose)
        \end{bmatrix}_{i \in [k]},
    \end{align}
    and $\mlp^{l, i}_{\tildeallparam}$ has parameters $(\mat{K}^{l, i}, \vec{b}^{l, i, K}) \in \reals^{n \times d} \times \reals^{n}, (\mat{V}^{l, i}, \vec{b}^{l, i, V}) \in \reals^{d \times n} \times \reals^{d}$.
    The copied parameter $\tildeallparam$ of $\allparam$ is constructed by copying parameters of $D$ and $G^l$, and by copying parameters of $\mlp^l$ in $f_{\allparam}$ $k$ times to $\mlp^{l, i}$ of $f_{\tildeallparam}$.
    Denote the input and output of $\text{k}\mlp^l_{\tildeallparam}$ by $\tilde{\mat{X}}^l$ and $\tilde{\mat{Z}}^l$, respectively.
    Let $\tildeKparam^l$ and $\tildeVparam^l$ be the parameters in all $\mat{K}^{l, i}$ and $\mat{V}^{l, i}$ of layer $l$, respectively.
    Let $\tildeAS[\tildeparam]$ be the augmented flatness in $\tilde{f}_{\tildeallparam}$ \wrt{} any subset of parameters $\tildeparam$ of $\tildeallparam$.
\end{definition}

Similar connections between augmented flatness, activation sparsity, and activation norms, as in \cref{theorem:single_token}, can be derived on $\tilde{f}_{\tildeallparam}$.
Additionally, one can easily verify inductively that $(\tilde{\mat{X}}^l, \tilde{\mat{Z}}^l) = (\mat{X}^l, \mat{Z}^l)$ on every sample and gradients \wrt{} them are also the same.
We then transfer the results on $\tilde{f}_{\tildeallparam}$ to $f_{\allparam}$ through this equivalence, obtaining \cref{theorem:multi_token}, which is an implication of \cref{theorem:multi_token_full} in \cref{appendix:full}.

\begin{theorem}\label{theorem:multi_token}
    Assume that $f_{\theta}$ satisfies \cref{assumption:differentiable} on $\dataset$.
    Then we have
    \begin{align}
        \ex{\norm{\mat{D}^l}_0}
        =&\frac{\tildeAS[\tildeKparam^l]}{\ex{\norm{\vec{x}^l_i}_2^2 \left(\derivatives{\sampleloss}{a^l_{i, j}}\right)^2 \left(d^{l}_{i, j}\right)^2 \mid d^l_{i, j} > 0}},\label{eq:multi_token_derivative_sparsity}\\
        \tildeAS[\tildeVparam^l]=& \ex[\vec{s} \sim \uniform{\dataset}]{\sum_{i \in [k]} \norm{\derivatives{\sampleloss}{\vec{z}^l_i}}_F^2 \norm{\vec{a}^l_i}_2^2}\label{eq:multi_token_norm}.
    \end{align}
    where the expectation in \cref{eq:multi_token_derivative_sparsity} is taken over $\vec{s} \sim \uniform{\dataset}$ and $(i, j) \in \uniform{[k] \times [n]}$.
    If all MLP blocks use $\relu$ activation functions, we further have
    \begin{align}
        \ex{\norm{\mat{A}^l}_0}
        =&\ex{\norm{\mat{D}^l}_0}\\
        =&\frac{\tildeAS[\tildeKparam^l]}{\ex{\norm{\vec{x}^l_i}_2^2 \left(\derivatives{\sampleloss}{a^l_{i, j}}\right)^2 \mid a^l_{i, j} > 0}}.\label{eq:multi_token_sparsity}
    \end{align}
    If further all inputs to MLP blocks are LayerNorm-ed with affine parameters and $\epsilon_{\textnormal{LayerNorm}}$ turned off, we further have
    \begin{align}
        \ex{\norm{\mat{A}^l}_0}
        =&\frac{\tildeAS[\tildeKparam^l]}{d \cdot \ex{\left(\derivatives{\sampleloss}{a^l_{i, j}}\right)^2 \mid a^l_{i, j} > 0}}. \label{eq:multi_token_sparsity_layernormed}
    \end{align}
\end{theorem}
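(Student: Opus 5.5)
The plan is to reduce everything to the single-token analysis of \cref{theorem:single_token}, applied to the copied network $\tilde{f}_{\tildeallparam}$ of \cref{def:unwrapped}.

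First, I will check that $\tilde{f}_{\tildeallparam}$ satisfies \cref{assumption:differentiable} and that it computes the same function as $f_{\allparam}$. The argument is an induction over $l$: if $\tilde{\mat{X}}^l=\mat{X}^l$, then each copy $\mlp^{l,i}$ applies the same $(\mat{K}^l,\vec{b}^{l,K},\mat{V}^l,\vec{b}^{l,V})$ to the row $\vec{x}^l_i$. It therefore produces the $i$-th rows of $\mat{P}^l,\mat{A}^l,\mat{Z}^l$, and so $\tilde{\mat{Z}}^l=\mat{Z}^l$. The $G^l$, $D$ and $E$ parts are literally shared, so the outputs and losses agree. Because the forward maps agree on the hidden features, the backward pass agrees as well: $\partial\sampleloss/\partial \tilde a^{l}_{i,j}=\partial\sampleloss/\partial a^l_{i,j}$, $\partial\sampleloss/\partial\tilde{\vec z}^l_i=\partial\sampleloss/\partial\vec z^l_i$, and the subderivatives $\tilde d^l_{i,j}=d^l_{i,j}$. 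Twice differentiability transfers for the same reason.

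Second, I will apply \cref{lemma:main} on $\tilde f$ to each copy separately. The parameter set $\tildeKparam^l$ is the disjoint union of the $\mat K^{l,i}$. The squared gradient norm from \cref{lemma:af_and_gradient_norm} therefore splits as a sum over $i\in[k]$, and copy $i$ sees only token $i$. Within each copy, the identity $\norm{\vec u\vec v^\transpose}_F^2=\norm{\vec u}_2^2\norm{\vec v}_2^2$ (the single-token step of \cref{lemma:single_token}) gives
\begin{align}
\tildeAS[\tildeKparam^l]=\ex[\vec s]{\sum_{i\in[k]}\norm{\vec x^l_i}_2^2\sum_{j\in[n]}\Big(\derivatives{\sampleloss}{a^l_{i,j}}\Big)^2\big(d^l_{i,j}\big)^2}.
\end{align}
The analogous computation for $\tildeVparam^l$ yields \cref{eq:multi_token_norm} directly.

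Third, I will extract the $L_0$ count. The inner double sum only has nonzero terms where $d^l_{i,j}\neq 0$. I will rewrite it as $kn\,\ex[\vec{s},(i,j)]{Q\,\mathbb{1}[d^l_{i,j}>0]}=kn\Pr[d^l_{i,j}>0]\,\ex{Q\mid d^l_{i,j}>0}$, where $Q$ is the summand. I also have $kn\Pr[d^l_{i,j}>0]=\ex{\norm{\mat D^l}_0}$, which is essentially the content of \cref{lemma:L0_and_L2}. Dividing gives \cref{eq:multi_token_derivative_sparsity}. There is one caveat here: equating $d>0$ with $d\neq0$ needs $\sigma'\ge 0$ on the relevant points. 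I would either assume a monotone activation or read the condition as $d^l_{i,j}\ne0$.

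For $\relu$, $d^l_{i,j}\in\{0,1\}$ and $d^l_{i,j}=1\iff a^l_{i,j}>0$. This holds because at $p=0$ the derivative is zero-assigned and $a=0$. So $\norm{\mat A^l}_0=\norm{\mat D^l}_0$, the factor $(d^l_{i,j})^2$ drops, and the conditioning event becomes $a^l_{i,j}>0$, giving \cref{eq:multi_token_sparsity}. Finally, LayerNorm with no affine parameters and $\epsilon_{\textnormal{LayerNorm}}=0$ forces $\norm{\vec x^l_i}_2^2=d$ for every token, which pulls out of the conditional expectation and gives \cref{eq:multi_token_sparsity_layernormed}.

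The main obstacle is the first step: stating rigorously that the gradients of $\tilde f$ with respect to hidden features coincide with those of $f$, and that nondifferentiable points of $\relu$ are handled consistently via \cref{lemma:differentiable}. After that, the remaining steps are bookkeeping.
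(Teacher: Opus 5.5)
Your proposal is correct and follows essentially the paper's route. Like you, the paper builds the copied single-token network, verifies the forward/backward correspondence by induction, applies the single-token outer-product factorization copy by copy, and then substitutes $\norm{\mat{A}^l}_0=\norm{\mat{D}^l}_0$ for $\relu$ and $\norm{\vec{x}^l_i}_2^2=d$ under LayerNorm. The only difference is bookkeeping: the paper extracts a per-copy conditional expectation and $L_0$ count via \cref{theorem:single_token_full} and then re-aggregates over $i$, whereas you sum over tokens first and condition once, which also avoids conditioning on a possibly null per-token event. Your caveat that $d^l_{i,j}>0$ versus $d^l_{i,j}\neq 0$ needs $\sigma'\ge 0$ is well taken; the paper relies on the same implicit assumption when it uses $\mat{D}^l_{>0}$.
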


In \cref{theorem:multi_token}, we obtain results similar to \cref{theorem:single_token}, except that augmented flatness is defined on the copied version $\tilde{f}_{\tildeallparam}$ of $f_{\allparam}$.
We will measure this augmented flatness, the denominator of \cref{eq:multi_token_sparsity}, and the expected average coefficients before activation norms in \cref{eq:multi_token_norm}.
It turns out that as the training proceeds, the ratio in \cref{eq:multi_token_sparsity} continues to decrease, leading to activation sparsity.  
The coefficients before $\norm{\vec{a}_i^l}_2^2$ also increase faster than $\tildeAS[\tildeVparam]$, so activation norms generally decrease.
The augmented flatness in the copied network may seem somewhat distant from the flatness of the original network. To give it a more concrete interpretation, we show that it can be connected to robustness against hidden-feature noise. See \cref{theorem:multi_token_full} in \cref{appendix:full} for details. 
\subsection{Empirical Results}\label{sec:coefficients}

\begin{figure}
    \centering
    \includegraphics[width=0.8\linewidth]{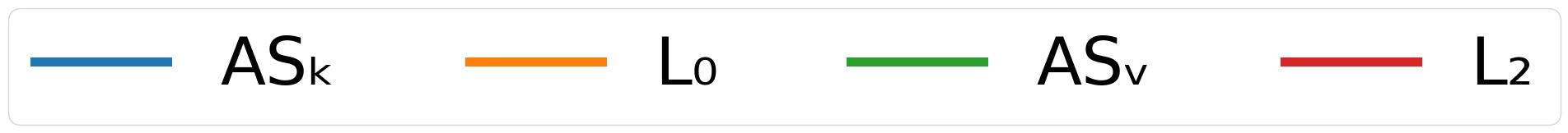}\\
    \subfloat[CIFAR-10\label{fig:full_run_cifar10}]{
        \includegraphics[width=0.48\linewidth]{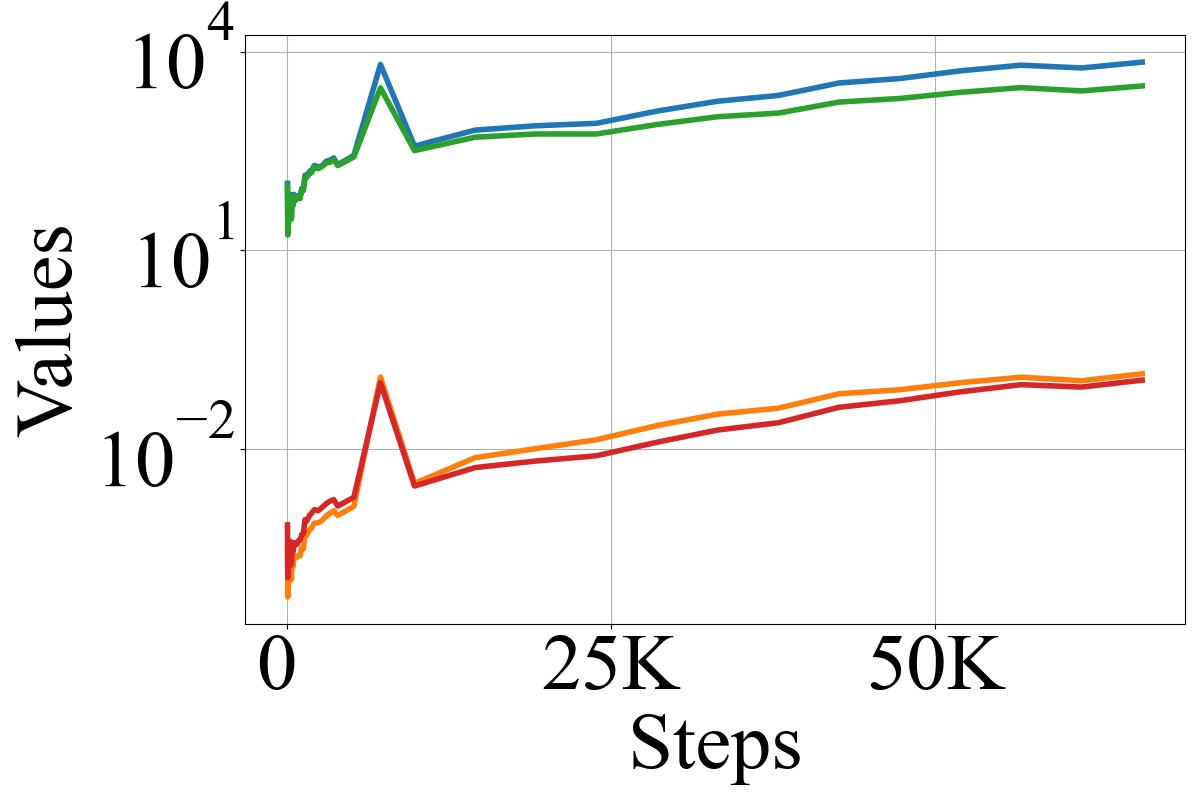}
    }\subfloat[ImageNet-1K\label{fig:full_run_imagenet1k}]{
        \includegraphics[width=0.48\linewidth]{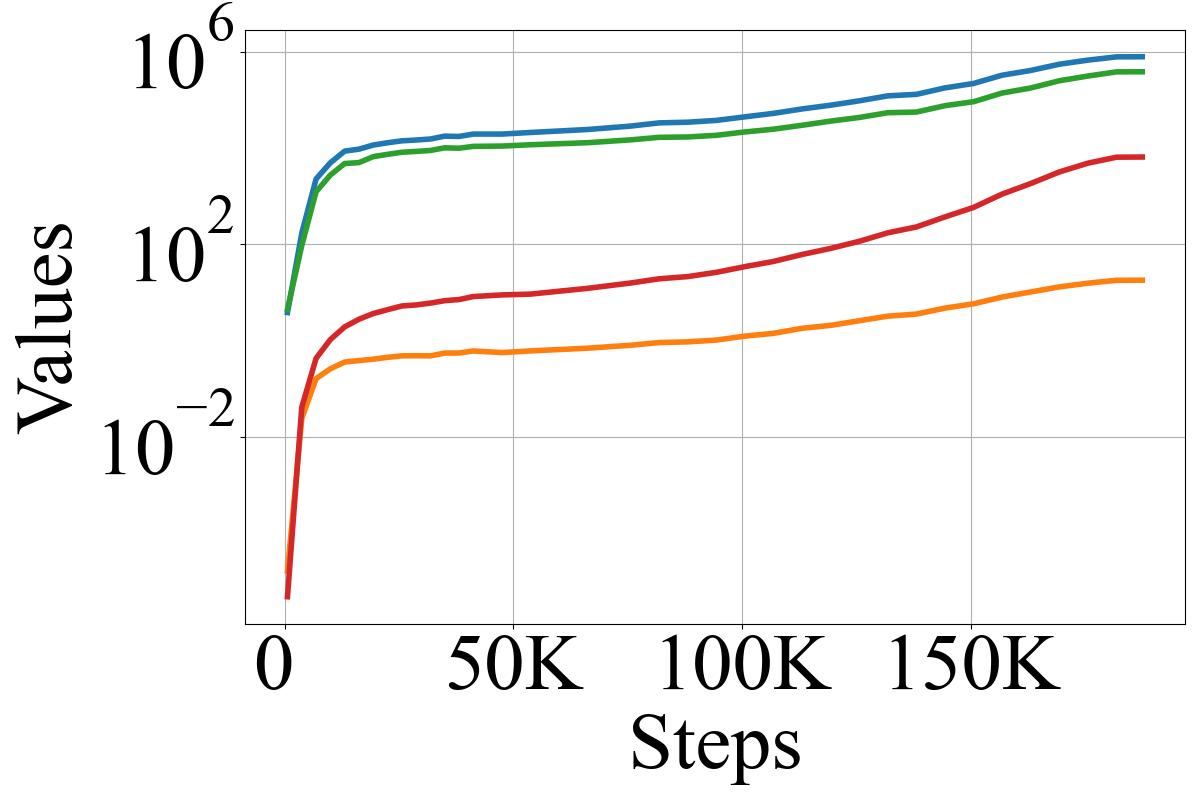}
    }\\
    \subfloat[Sparsity on CIFAR-10\label{fig:cifar10_sparsity}]{
        \includegraphics[width=0.48\linewidth]{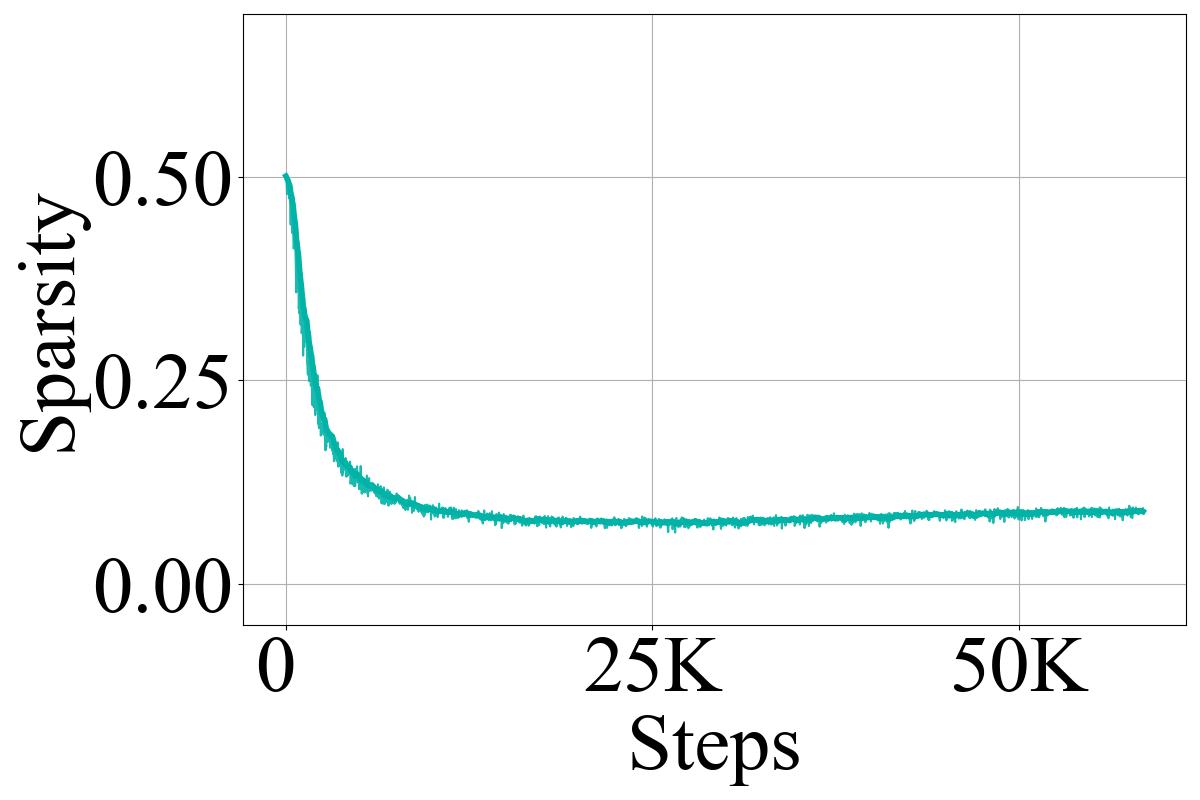}
    }
    \caption{Evolution of $\tildeAS[\tildeKparam]^l, c_{L_0}^l, \tildeAS[\tildeVparam]^l, c_{L_2}^l$ during the entire training on CIFAR-10 and ImageNet-1K as well as the training sparsity on CIFAR-10.}\label{fig:full_run}
\end{figure}

We focus on the multi-token case and measure how quantities in \cref{theorem:multi_token} evolve during Transformer training.
Specifically, we estimate the augmented flatness $\tildeAS[\tildeKparam^l]$ and its denominator in \cref{eq:multi_token_sparsity}
\begin{align}
    c_{L_0}^l \defeq \ex{\norm{\vec{x}^l_i}_2^2 \left(\derivatives{\sampleloss}{a^l_{i, j}}\right)^2 \mid a^l_{i, j} > 0}
\end{align}
as well as the augmented flatness $\tildeAS[\tildeVparam^l]$ and the \emph{expected average} coefficients in \cref{eq:multi_token_norm}
\begin{align}
    c_{L_2}^l \defeq \ex[\vec{s} \sim \uniform{\dataset}]{\frac{1}{k} \sum_{i=1}^k \norm{\derivatives{\sampleloss}{\vec{z}^l_i}}_2^2}
\end{align}
for every layer $l \in [L]$ at every checkpoint during training.
We train vanilla ViT-Base/16 \cite{vit} (with $\relu$ activation as in \cite{observation}) on CIFAR-10 \cite{cifar10} and ImageNet-1K \cite{imagenet1k} for natural image classification. Experimental details are in \cref{appendix:details}.
The estimates over the full training process are shown in \cref{fig:full_run}, while those from the initial epochs are shown in \cref{fig:initial_epochs}. To save space, results are averaged across layers; full layer-wise results are provided in \cref{appendix:more_experiments}. 
The overall training sparsity on CIFAR-10 is shown in \cref{fig:cifar10_sparsity}, while that on ImageNet-1K is shown in \cref{figure:productive_vit_average_training}.

Throughout training, especially at the beginning, $\tildeAS[\tildeKparam]^l$ increases but more slowly than the denominator. Consequently, the ratio in \cref{eq:multi_token_sparsity} decreases (as reflected by their difference in the log-scaled plot), leading to the emergence of activation sparsity.
Similarly, the activation norms decrease in general, suppressing large activations.

In the later stages of training on ImageNet-1K, activation sparsity deteriorates. 
We hypothesize that this happens because the denominators $c^l_{L_0}$ increase more slowly. 
By comparing \cref{eq:multi_token_sparsity} and \cref{eq:multi_token_sparsity_layernormed}, we hypothesize that the affine parameters in LayerNorm layers decrease during training, reducing MLP input norms and the denominator $c^l_{L_0}$. To counteract this, we propose lower-bounding the affine parameters in LayerNorm layers, as discussed in \cref{sec:algorithm}.

\begin{figure}[t]
    \centering \includegraphics[width=0.8\linewidth]{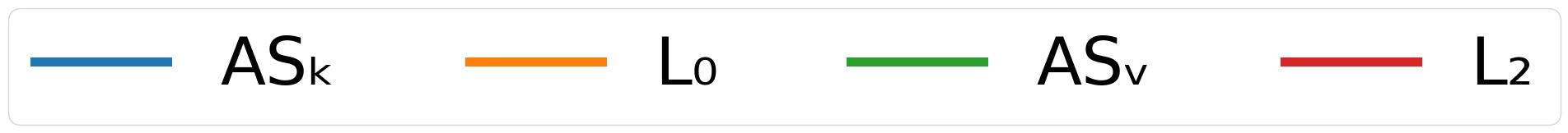}\\
    \subfloat[CIFAR-10\label{fig:initial_epochs_cifar10}]{
        \includegraphics[width=0.48\linewidth]{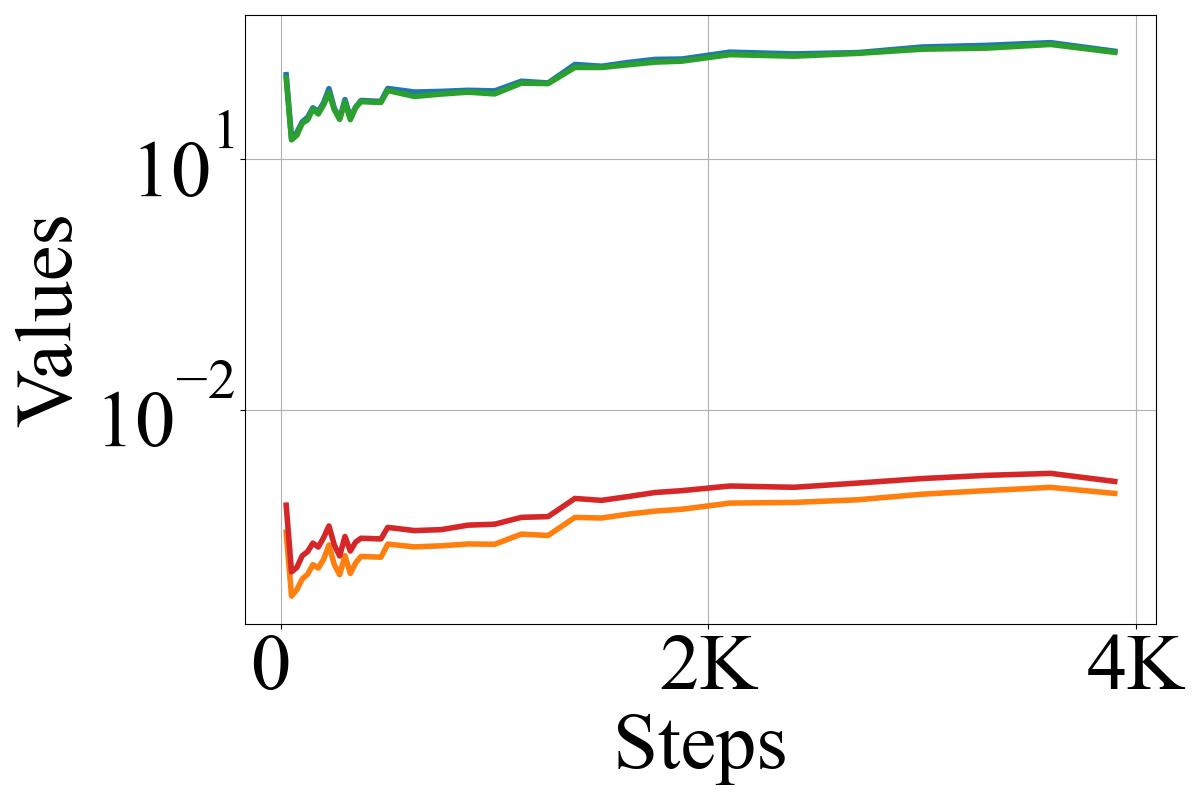}
    }
    \subfloat[ImageNet-1K\label{fig:initial_epochs_imagenet1k}]{
        \includegraphics[width=0.48\linewidth]{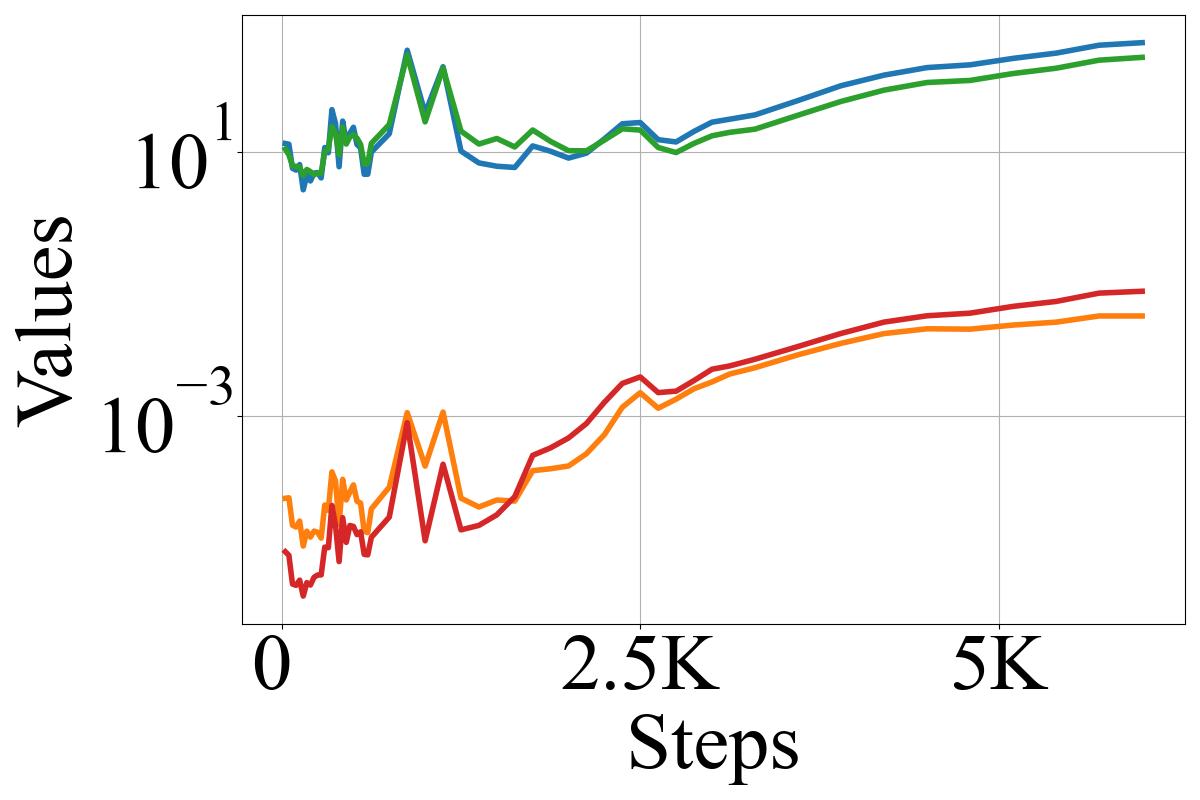}
    }
    \caption{Evolution of $\tildeAS[\tildeKparam]^l, c_{L_0}^l, \tildeAS[\tildeVparam]^l, c_{L_2}^l$ in the first 10 epochs.}\label{fig:initial_epochs}
\end{figure}
  \section{Discussions on Derivative Sparsity}\label{sec:stability}

\begin{figure}
    \centering \foreach \i in {,-}{
    \foreach \j in {-,}{
        \subfloat[$(\i1.6, \j1.6)$ \label{fig:shifted_weird\i0\j}]{
            \includegraphics[width=0.46\linewidth]{pic/activation/wired_jrelu2_\i1.6_\j1.6.jpg}
        }
    }}
    \caption{The visualization of $\weird$ activation functions with different $(\Delta x, \Delta y)$ and their derivatives, indicated by blue lines and red dashed lines, respectively.}\label{fig:weird} 
\end{figure}

A byproduct of our analysis is derivative sparsity, which bridges augmented flatness and activation sparsity in \cref{theorem:single_token,theorem:multi_token} under $\relu$ activations.
Here, we argue that attention should be paid to the notion of derivative sparsity.

First, derivative sparsity has practical value.
In addition to pruning during forward propagation, neurons with zero derivatives backpropagate zero and can be safely pruned during backpropagation. This dual pruning reduces training cost, making derivative sparsity crucial for large-scale training.

Second, the connection between derivative sparsity and flatness is more stable. In \cref{theorem:single_token} and \cref{theorem:multi_token}, the ratio directly and stably equals derivative sparsity. In contrast, the connection to activation sparsity relies on the activation-derivative correlation specific to $\relu$, which may not hold for other activation functions. Therefore, derivative sparsity is more closely connected to (augmented) flatness and training dynamics than activation sparsity.

To verify this stability, we empirically demonstrate that derivative sparsity is a more stable notion than activation sparsity under some non-$\relu$ activation functions. 
To differentiate activation sparsity and derivative sparsity, we design special activation functions where they are no longer equal:

\begin{definition}\label{def:weird}
    The base $\weird$ function is defined by
    \minorrevision{
    \begin{align}
        \weird(x) =& \begin{cases}
            \frac{\operatorname{sgn}(x)}{2} ((|x| - \halfwidth + 1)^2 - 1)  & |x| \ge \halfwidth,\\
            0 & |x| < \halfwidth.
\end{cases}
    \end{align}
    }
    For $\Delta x, \Delta y \in \reals$, let $\weird_{\Delta x, \Delta y}(x) = \weird(x - \Delta x) + \Delta y$.
\end{definition}

\begin{figure}
    \centering \resetHeight{}\newcommand{\si}{}\newcommand{\sj}{}\newcommand{\validationalwidth}{0.45}
    
    \includegraphics[width=0.9\linewidth]{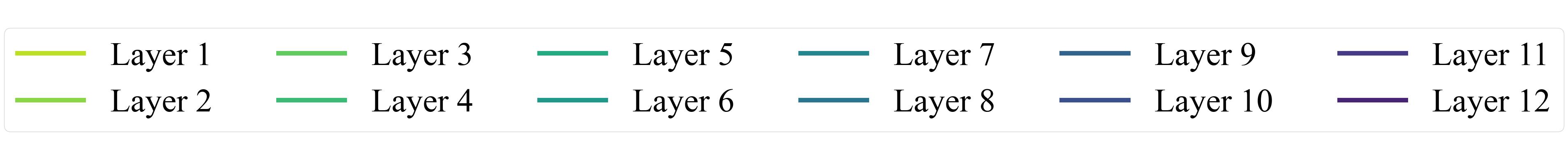}\foreach \i in {+,-}{
    \foreach \j in {-,+}{
        \subfloat[$(\j1.6, \i1.6)$ \label{fig:validational_full\i0\j}]{
            \centering
            \myincludegraphics[width=\validationalwidth\linewidth]{pic/results/dumps/validations/dx=\j1.6,dy=\i1.6/sparsity_training.jpg}
        }
    }\\}
    \caption{Derivative sparsity when training ViT-Base/16 with $\dbmlp$ and $\weird$ shifted by $\set{\pm 1.6}^2$ on CIFAR-10.}\label{fig:exp_stability}
\end{figure}

As illustrated in \cref{fig:weird}, this family of activation functions has plateaus that are shifted around. 
On these plateaus, derivatives are zero while activations are nonzero, allowing us to distinguish the two sparsities and compare their stability.
Moreover, existing work often interprets activation sparsity through decreasing magnitudes of activations or pre-activations \cite{observation,sharpness_aware}. Motivated by the theoretical stability of derivative sparsity, we ask whether decreases in derivative magnitudes are more stable. To compare these two decreases, we construct $\weird_{1.6, 1.6}$ so that stability of the former indicates pre-activation concentration around $x \approx -1.05$, where $\weird_{1.6, 1.6}(x) = 0$, while stability of the latter implies concentration on the plateau. One can then compare their stability by where pre-activations concentrate.
Similarly, vertical and horizontal shifts allow us to compare stability of decreases in derivative magnitudes against decreases or increases in activations, pre-activations, or their magnitudes.
These activation functions have discontinuous derivatives at plateau edges and increasing derivatives beyond the plateaus, designed to amplify these effects in the same spirit as $\jrelu$ defined later in \cref{sec:algo_jrelu}.

We conduct experiments to observe which form of sparsity is preserved. 
We replace GELU with $\weird_{\Delta x, \Delta y}$, where $(\Delta x, \Delta y) \in \set{-1.6, +1.6}^2$, in ViT-Base/16 and train the model on CIFAR-10 \cite{cifar10}. $\dbmlp$, defined later in \cref{sec:algorithm}, is also incorporated to strengthen the effects. The percentage of non-zero derivatives of MLP-block activation functions in training batches is shown in \cref{fig:exp_stability}.
In most MLP blocks, at least $70\%$ of activation derivatives become zero in deep MLP blocks during training, \ie{} derivative sparsity is achieved. At the same time, by construction of the activation function, in most layers at least $70\%$ of activations are non-zero, \ie{} activation sparsity is lost.
Moreover, as the plateau shifts, derivative sparsity still holds, demonstrating its stability.
Similar trends exist for derivative norms in \cref{fig:exp_stability_derivative_norm}, where squared derivatives decrease in most layers.  
These results indicate that some previous intermediate characterizations, such as ``decreasing activation'' \cite{observation,sharpness_aware}, are not accurate for general activation functions, because training prefers plateau occupancy and derivative sparsity over smaller activations. Other potential characterizations, such as decreasing or increasing activations, pre-activations, or their norms or absolute values, can also be invalidated by comparing the results of activation functions with differently shifted plateaus. Emphasizing derivative sparsity and derivative norms leads to a better understanding of the mechanisms behind activation sparsity.

\begin{figure}
    \centering \resetHeight{}\newcommand{\si}{}\newcommand{\sj}{}\newcommand{\validationalwidth}{0.45}\includegraphics[width=0.9\linewidth]{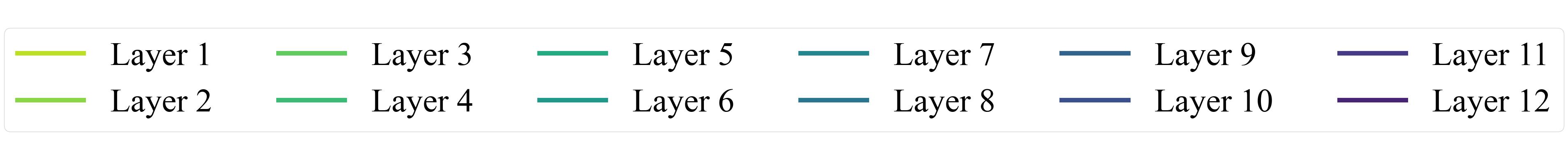}\foreach \i in {+,-}{
    \foreach \j in {-,+}{
        \subfloat[$(\j1.6, \i1.6)$ \label{fig:validational_full\i0\j_derivative_norm}]{
            \centering
            \myincludegraphics[width=\validationalwidth\linewidth]{pic/results/dumps/validations/dx=\j1.6,dy=\i1.6/norm_training.jpg}
        }
    }\\}
    \caption{Squared derivatives during ViT training with $\weird$ shifted by different $(\Delta x, \Delta y)$ and $\dbmlp$ on CIFAR-10.}\label{fig:exp_stability_derivative_norm}
\end{figure}

\section{Methods}\label{sec:algorithm}

\begin{figure*}
    \centering
    \includegraphics[width=\linewidth]{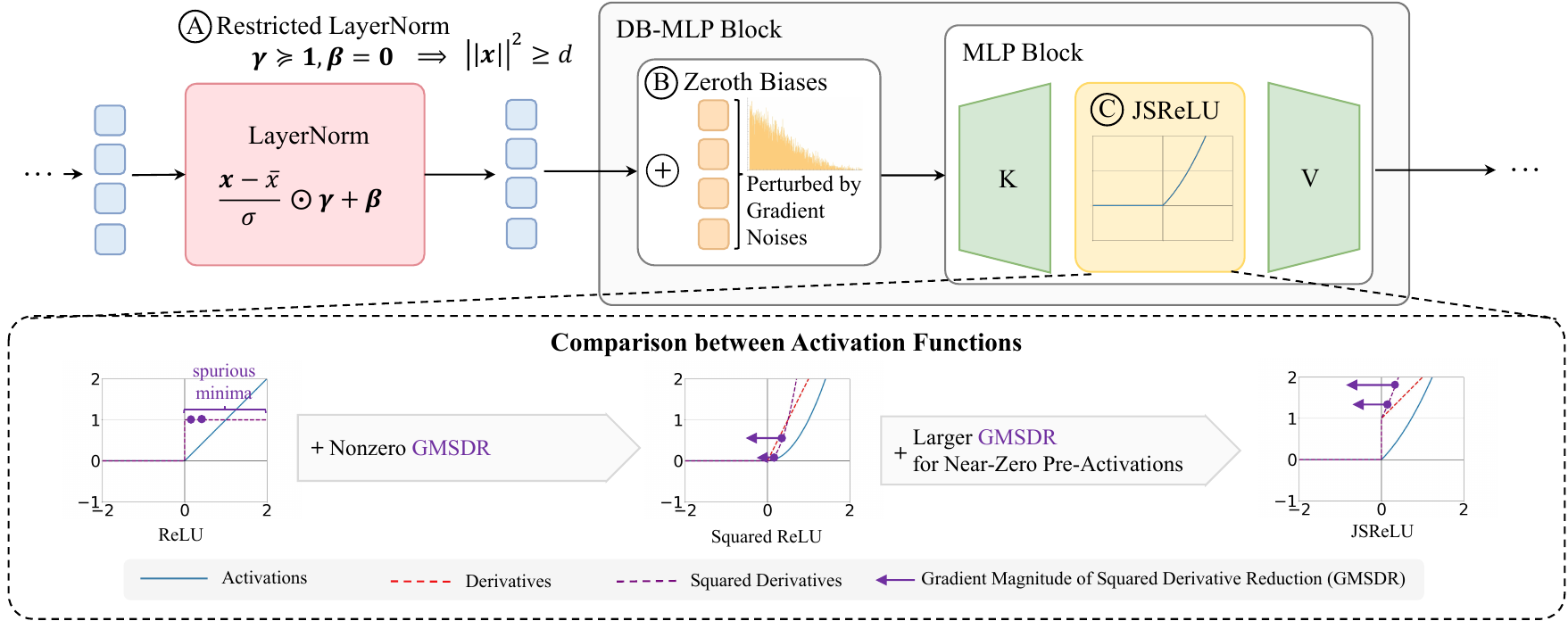}
    \caption{Overview of methods. In \cref{sec:restricted_layernorm}, we restrict affine parameters in LayerNorm layers so that MLP input tokens have large norms. In \cref{sec:zeroth_biases}, we add unshared biases, perturbed by gradient noises, to MLP input tokens, to magnify the gradient noises in MLP input tokens and improve the flatness of MLP blocks. In \cref{sec:algo_jrelu}, we propose a new activation function $\jrelu$ that helps the implicit optimization of reducing the mean squared activation derivatives.}
    \label{fig:methods} \label{fig:activations}
\end{figure*}

The theoretical and empirical results reveal a connection between activation sparsity and augmented flatness. In addition, properties of normalization layers and activation functions play critical roles in this connection.
Therefore, we propose three methods to improve sparsity via flatness, activation functions, and normalization layers, for both pretraining and finetuning.

\subsection{Restricted LayerNorm for Larger Denominator}\label{sec:restricted_layernorm}

\cref{theorem:single_token,theorem:multi_token} indicate that activation sparsity is related to MLP input norms $\norm{\vec{x}_i}_2^2$ in the ratio denominator.
Since MLP inputs are LayerNorm-ed in Transformers, the affine parameters in the LayerNorm layers affect the MLP input norms and the activation sparsity.
If these parameters remain active, they can reduce the MLP input norms, thereby decreasing the denominator and increasing the ratio, and harming activation sparsity.
To prevent this MLP-input norm reduction, we lower-bound the affine factors so that entries are at least $1$. This approach also allows training to adjust and potentially increase them, leading to larger denominators, smaller ratios, and improved sparsity. During pretraining from scratch, we execute \cref{algo:layernorm} after every parameter update. 

During finetuning, LayerNorm affine factors may not initially be lower-bounded by $1$.
To address this discrepancy, we freeze LayerNorm biases and gradually increase the lower bound on absolute affine factors during early finetuning until they reach $1$, as detailed in \cref{algo:layernorm_finetuning}.

\NewDocumentCommand{\listofLayerNorms}{}{\mathcal{L}}
\NewDocumentCommand{\listofZerothBiases}{}{\mathcal{B}}
\begin{algorithm}
    \caption{Restricted LayerNorm (RLN) for Pretraining}\label{algo:layernorm}
    \begin{algorithmic}[1]
        \STATE {\textsc{RLN-Pretraining}}$(\listofLayerNorms)$
        \STATE \hspace{0.5cm} \textbf{For} $L \in \listofLayerNorms$ \textbf{ do}
        \STATE \hspace{1.0cm} $L.\text{weight} \gets \text{clamp}(L.\text{weight},\text{min}=1.0)$
        \STATE \hspace{0.5cm} \textbf{End For}
    \end{algorithmic}
\end{algorithm}

\begin{algorithm}
    \caption{Restricted LayerNorm (RLN) for Finetuning}\label{algo:layernorm_finetuning}
    \begin{algorithmic}[1]
        \STATE {\textsc{RLN-Finetuning}}$(\listofLayerNorms, T_{\textnormal{warmup}}, t)$
        \STATE \hspace{0.5cm} \textbf{For} $L \in \listofLayerNorms$ \textbf{ do}
        \STATE \hspace{1.0cm} $\vec{\gamma} \gets L.\text{weight}$
        \STATE \hspace{1.0cm} $\vec{s} \gets \text{sign}(\text{sign}(\vec{\gamma}) + 0.1)$
        \STATE \hspace{1.0cm} $p \gets \min(t / T_{\textnormal{warmup}}, 1.0)$
        \STATE \hspace{1.0cm} $\vec{u} \gets \text{clamp}(\text{abs}(\vec{\gamma}),\text{min}= p)$
        \STATE \hspace{1.0cm} $L.\text{weight} \gets \vec{s} \hadamard \vec{u}$
        \STATE \hspace{0.5cm} \textbf{End For}
    \end{algorithmic}
\end{algorithm}

\subsection{Zeroth Biases for Larger Stochastic Gradient Noises}\label{sec:zeroth_biases}

It is widely believed that the inductive bias toward flatness stems from gradient noise that makes training escape sharp minima \cite{escape,alpha_stable}.
Therefore, we attempt to improve flatness by increasing these noises.
For gradient noise generated at MLP-block parameters themselves, we currently find limited room for further improvement without injecting additional noise. Nevertheless, \cite{petzka_relative_2021} indicates that MLP-block robustness to inputs is also related to flatness, suggesting potential gains from augmenting noise on MLP input tokens. Given the advantages of gradient noise in magnitude \cite{alpha_stable} and alignment with sharp local directions \cite{alignment}, we choose to strengthen gradient noise generated at shallower-layer parameters and forward-propagated to MLP input tokens, instead of manually injecting noise.
To this end, we identify two insufficiencies in the gradient noises due to the following factors: 1) parameter sharing and 2) nonlinearities.
In modern neural networks with extensive parameter sharing, MLP input tokens are produced by the same group of parameters, with minor exceptions such as positional embeddings. As a result, gradient noise on shared parameters introduces correlation between noise on different input tokens. This correlation harms noise diversity on MLP input tokens.
Moreover, when forward-propagated through nonlinearities (\eg{} $\relu$, self-attention), noise is suppressed by saturation, \eg{} activation plateaus and softmax saturation in attention. Perturbations to inputs of saturated nonlinearities induce little output response, diminishing noise. Therefore, saturation weakens noise magnitudes on MLP input tokens.
Based on these observations, we introduce unshared immediate parameters before MLP blocks to reduce correlation and amplify gradient noise, thereby encouraging training to explore flatter minima \wrt{} $\KVparam$.
Specifically, before feeding an input $\mat{X}^l$ to an MLP block, we add to it a trainable bias \emph{matrix} $\mat{B}^l$ of the \emph{same shape} as $\mat{X}^l$, called Zeroth Bias, forming a Doubly-Biased MLP ($\dbmlp$):
\begin{align}
    \dbmlp^l_{\allparam}(\mat{X}^l) \defeq \mlp^l_{\allparam}(\mat{X}^l + \mat{B}^l).
\end{align}
Since $\vec{b}^l_i$ is added only to $\mat{x}^l_i$ (without sharing) and directly (without nonlinearity), we expect gradient noise on $\vec{b}^l_i$ to be more diverse and stronger, without correlation and saturation effects.

\begin{algorithm}
    \caption{Restricted Zeroth Biases}\label{algo:zeroth_bias}
    \begin{algorithmic}[1]
        \STATE {\textsc{RestrictedZerothBiases}}$(\listofLayerNorms, \listofZerothBiases, c)$
        \STATE \hspace{0.5cm} \textbf{For} $l = 0 \dots \size{\listofLayerNorms} - 1$ \textbf{ do}
        \STATE \hspace{1.0cm} $L \gets \listofLayerNorms[l]$ \quad \gray{// $l$-th LayerNorm}
        \STATE \hspace{1.0cm} $\mat{B} \gets \listofZerothBiases[l]$ \quad \gray{// $l$-th Zeroth Bias right after $L$}
        \STATE \hspace{1.0cm} $\vec{\gamma} \gets \text{abs}(L.\text{weight})$
        \STATE \hspace{1.0cm} $\mat{S} \gets c \cdot \begin{bmatrix}
            \vec{\gamma} & \vec{\gamma} & \cdots & \vec{\gamma}
        \end{bmatrix}$
        \STATE \hspace{1.0cm} $\mat{B} \gets \text{clamp}(\mat{B}, \text{min}=-\mat{S}, \text{max}=\mat{S})$
        \STATE \hspace{0.5cm} \textbf{EndFor}
    \end{algorithmic}
\end{algorithm}

For both pretraining and finetuning, $\mat{B}^l$ is initialized to zero to seamlessly adapt pretrained weights.
However, since $\mat{B}^l$ is added after LayerNorm, it may interfere with restricted LayerNorm by canceling large entries of LayerNorm-ed $\mat{X}^l$ and reducing $\norm{\vec{x}_i^l}_2^2$. To alleviate this conflict, we restrict the magnitude of entries in $\mat{B}^l$ to allow gradient noise while preventing cancellation. Specifically, if $\vec{\gamma}^l$ is the affine weight of the LayerNorm right before $\mat{B}^l$, then we restrict $\mat{B}^l$ so that
$
|b^l_{i, j}| \le c \cdot \abs{\gamma_j^l}
$
with $c \in (0, 1)$. In this way, we always maintain $\norm{\vec{x}^l_i + \vec{b}^l_i}_2^2 \ge (1-c)^2 \norm{\vec{x}^l_i}_2^2 \ge (1-c)^2 d$. Since we only lower-bound $\abs{\gamma_j}$ and allow it to exceed $1$, this strategy also permits larger gradient-noise magnitudes when $\abs{\gamma_j}$ grows. To implement this strategy, we execute \cref{algo:zeroth_bias} after every parameter update.

\subsection{$\jrelu$ Activation for Better Derivative Sparsity Search}\label{sec:algo_jrelu}

The coincidence of activation and derivative sparsity under $\relu$ is beneficial, and we aim to preserve this property.
However, $\relu$'s derivative remains constant after being activated. 
Its piecewise constant derivative hinders the implicit decrease of derivative norms empirically shown in \cref{fig:exp_stability_derivative_norm} of \cref{sec:stability}, which benefits derivative sparsity.
This is because small parameter perturbations do not change activation derivatives or their norms, creating spurious minima in implicit flatness optimization. 
To provide local guidance and remove these spurious minima, we make the derivative monotonically increasing so that as activations move toward zero, activation derivatives decrease, immediately benefiting derivative norms.
The simplest and most direct modification may be $\relu^2(x)$ \cite{primer}, with monotonically increasing derivative $\max\set{2x, 0}$. However, our tentative experiments show that although large activations become rarer, many small but \emph{non-zero} activations remain, and strictly zero activations become fewer.
We hypothesize this is because when the pre-activation $x > 0$ is close to $0$, the derivative $2x$ is already very small, providing little drive for further reduction.
To address this, we reintroduce the derivative discontinuity of $\relu$ at $0$ using $\jrelu$, as in \cref{def:jrelu}, so that derivatives remain large near $0$ and training is pushed to reduce pre-activations to $0$ for augmented flatness. $\relu$, $\relu^2$, and $\jrelu$ are visually compared in \cref{fig:activations}.
\begin{definition}\label{def:jrelu}
    $\jrelu: \reals \to \reals$ is defined by
    \begin{align}
        \jrelu(x) \defeq \begin{cases}
            \frac{1}{2} \left( (x + 1)^2 - 1 \right) & x \ge 0\\
            0   &   x < 0.
        \end{cases} \label{eq:jsrelu}
    \end{align}
\end{definition}

We use $\jrelu$ to replace $\relu$ from the start of pretraining. To adapt $\relu$-pretrained models during finetuning, we linearly mix $\relu$ with $\jrelu$ and linearly increase the weight of $\jrelu$ during the first few thousand finetuning steps. After this warmup, $\relu$ is fully removed.

Similar theoretical results can be derived for $\jrelu$:
\begin{align}
    \ex{\norm{\mat{A}^l}_0} =& \frac{\tildeAS[\tildeKparam^l]}{d \cdot \ex{\left(\derivatives{\ell}{a_{i, j}} \cdot d_{i, j}\right)^2 \mid a_{i, j} > 0}} \label{eq:jsrelu_larger_denominator}. 
\end{align}
The detailed formal results can be found in \cref{appendix:jrelu_results}.
These results highlight an additional benefit of $\jrelu$: multiplying gradient magnitudes by $d_{i, j} \ge 1$, which further increases the denominator and decreases the ratio. 
\section{Experiments}\label{sec:method_experiments}

\subsection{Evaluation}

Methods proposed in \cref{sec:algorithm} are evaluated on natural image classification and natural language generation tasks.
Following \cite{observation}, we assess and compare sparsity using the percentage of non-zero activations.
To better reflect both training and inference costs, we distinguish training sparsity from testing sparsity during pretraining. Since training cost is much lower during finetuning, we focus comparisons primarily on testing sparsity of finetuning.
For each layer's MLP block, training sparsity is estimated from training batches every few steps, while testing sparsity is computed over the entire test set at each evaluation.
The overall training sparsity is obtained by averaging over steps 
and layers, and the overall testing sparsity by averaging over layers at the last evaluation.
All models are pretrained only once due to their high computational costs.
As baselines, we use vanilla ($\relu$) vision or language Transformers.
Throughout this section, we use warm colors for modified Transformers and cold colors for baselines in plots. 

\subsection{Sparsity-Aware Pretraining}\label{sec:p_experiments}

\minorrevision{
For vision tasks, we train ViT \cite{vit} or SwinTransformer \cite{swin} of various scales on ImageNet-1K \cite{imagenet1k}, Places365 \cite{places365}, and a LAION-400M \cite{laion400m} subset. 
Among them, LAION-400M provides language captions for images instead of explicit labels. Thus, we train the ViT using CLIP loss \cite{clip} with text embeddings from the pretrained CLIP-Base text encoder. We sampled a 1/10 subset of LAION-400M, containing approximately 43M images. See \cref{appendix:details} for full details.
For language tasks, we train T5-Base \cite{t5} on the C4 dataset \cite{t5}.
}

To construct the sparsified model, we replace every activation function in vision and language Transformers with $\jrelu$. Biases in all LayerNorm layers are fixed at $0$, while affine factors are lower-bounded by $1$ by running \cref{algo:layernorm} after every parameter update. Zeroth biases are inserted before every MLP block and restricted by running \cref{algo:zeroth_bias} with $c=0.1$, ensuring that the input to MLP blocks has a large norm $\norm{\vec{x}^l_i + \vec{b}^l_i}_2^2 
\ge 0.81 d$.
As in \cite{observation}, T5 is only pretrained for $10^5$ steps to save computation.

\begin{figure*}
    \centering \includegraphics[width=0.24\linewidth]{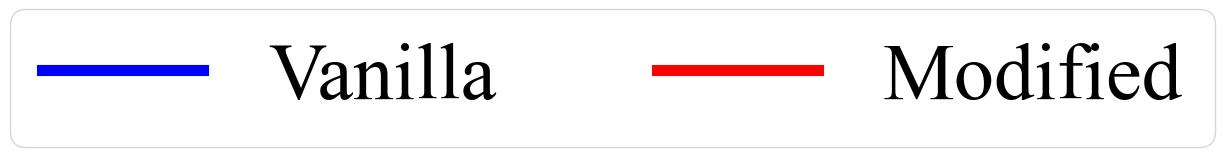}\\
    \subfloat[Training, stepwise. \label{figure:productive_vit_average_training}]{
        \includegraphics[width=0.24\linewidth]{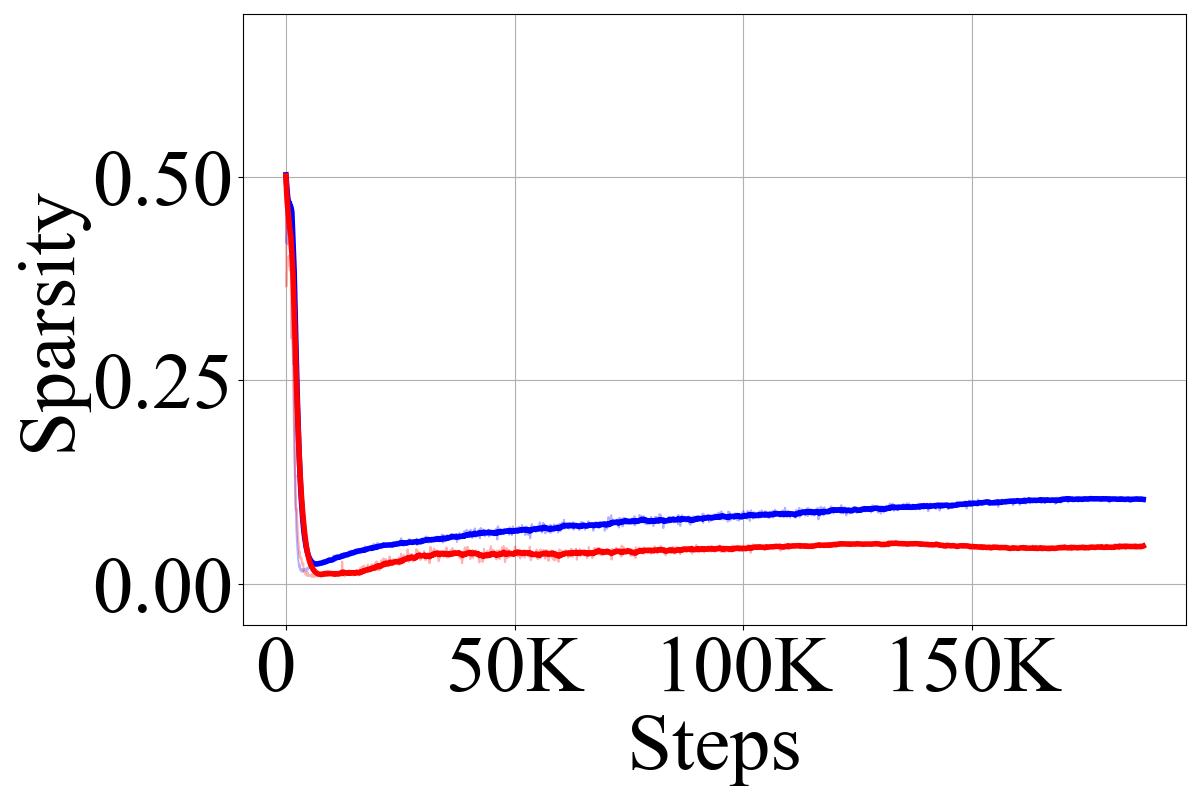}
}
    \subfloat[Training, layerwise. \label{figure:productive_vit_end_training}]{
        \includegraphics[width=0.24\linewidth]{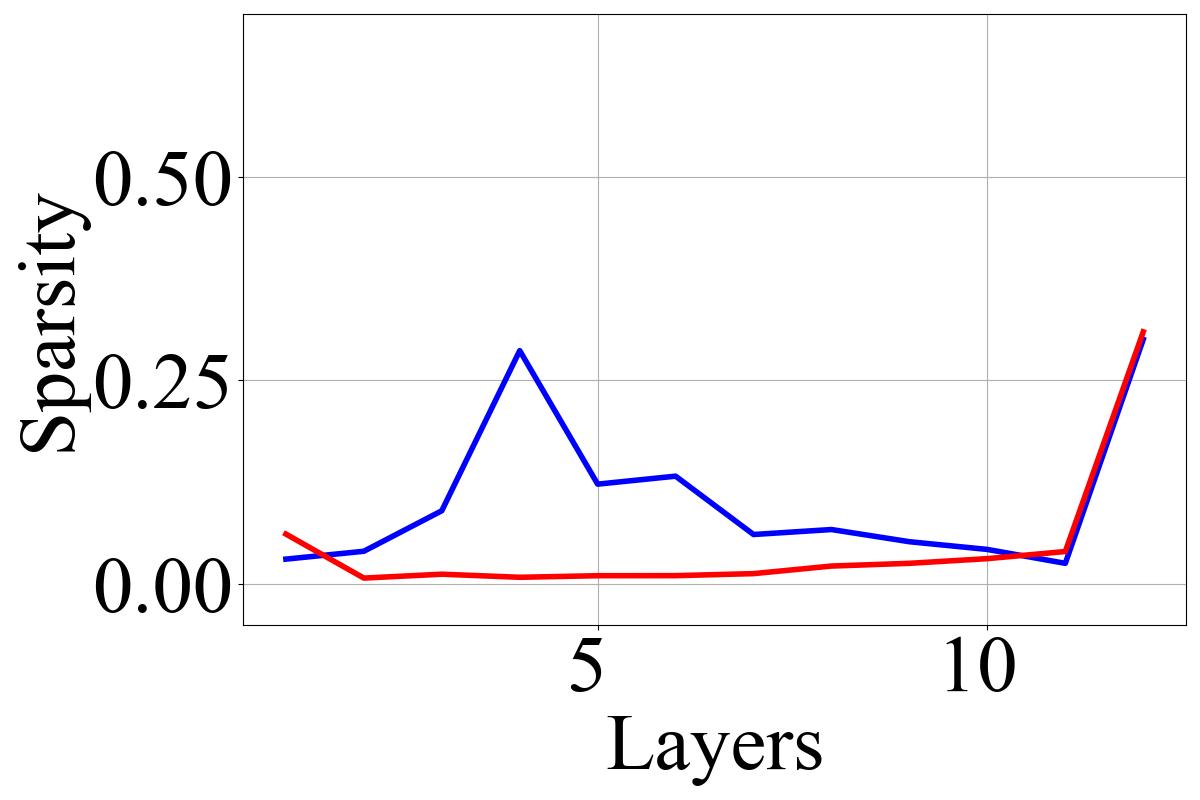}
}
    \subfloat[Testing, stepwise.\label{figure:productive_vit_average_testing}]{
        \includegraphics[width=0.24\linewidth]{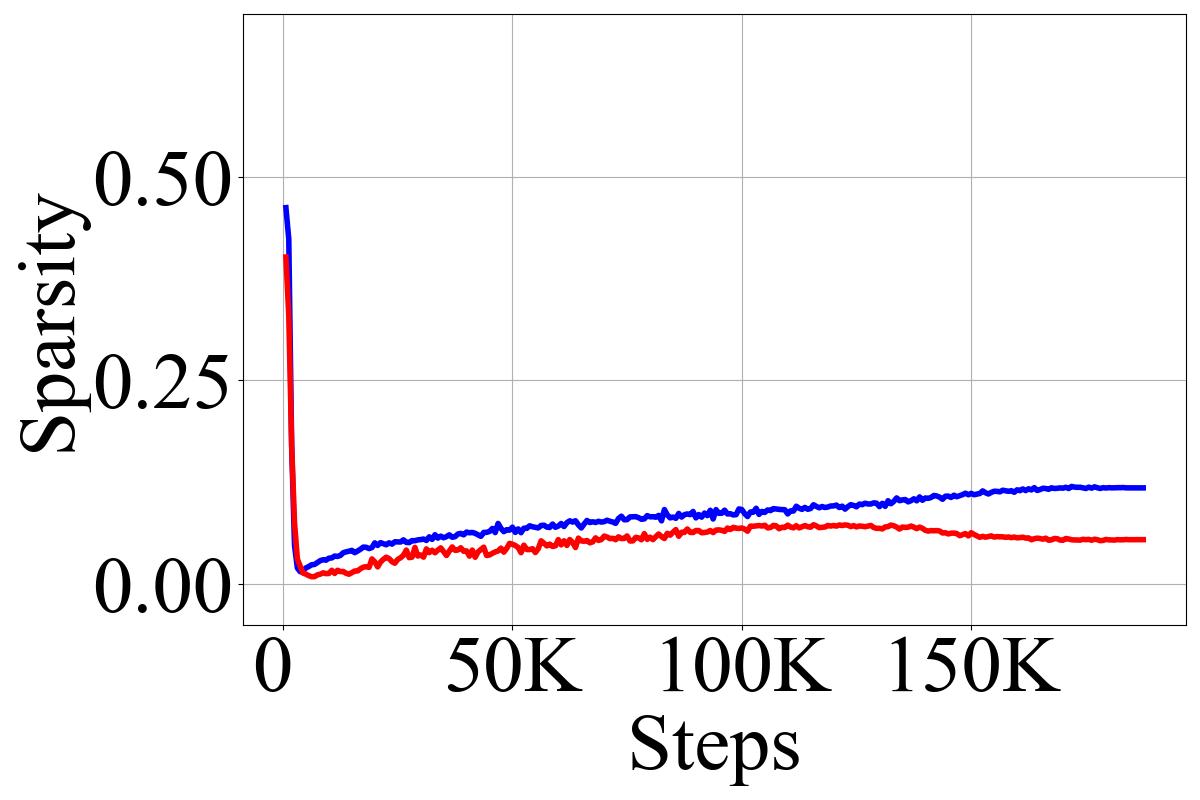}
}
    \subfloat[Testing, layerwise. \label{figure:productive_vit_end_testing}]{
        \includegraphics[width=0.24\linewidth]{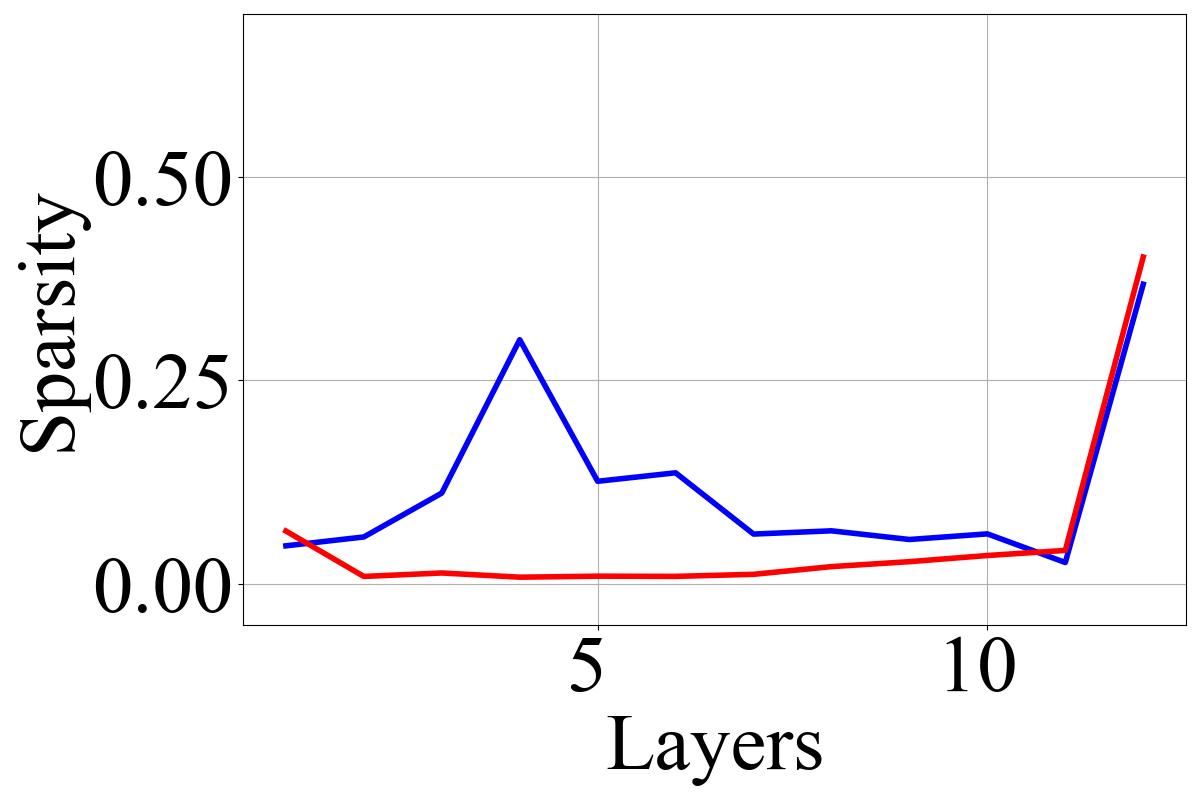}
}
    \caption{Training and testing sparsity during training of ViT-Base on ImageNet-1K.
        \cref{figure:productive_vit_average_training,figure:productive_vit_end_training} show \emph{training} sparsity while \cref{figure:productive_vit_average_testing,figure:productive_vit_end_testing} show \emph{testing} sparsity. 
        \cref{figure:productive_vit_average_training,figure:productive_vit_average_testing} show the evolution of sparsity along the training steps, while \cref{figure:productive_vit_end_training,figure:productive_vit_end_testing} compare sparsity in a layerwise manner.
        More detailed results can be found in \cref{figure:productive_vit_full} of \cref{appendix:more_experiments}.
    }\label{figure:productive_vit}
\end{figure*}

\begin{figure}
    \centering \subfloat[\minorrevision{Averaged testing sparsity of SwinTransformer-Base on ImageNet-1K.}\label{figure:productive_swin_average_testing}]{
        \minorrevisionimage{\includegraphics[width=0.48\linewidth]{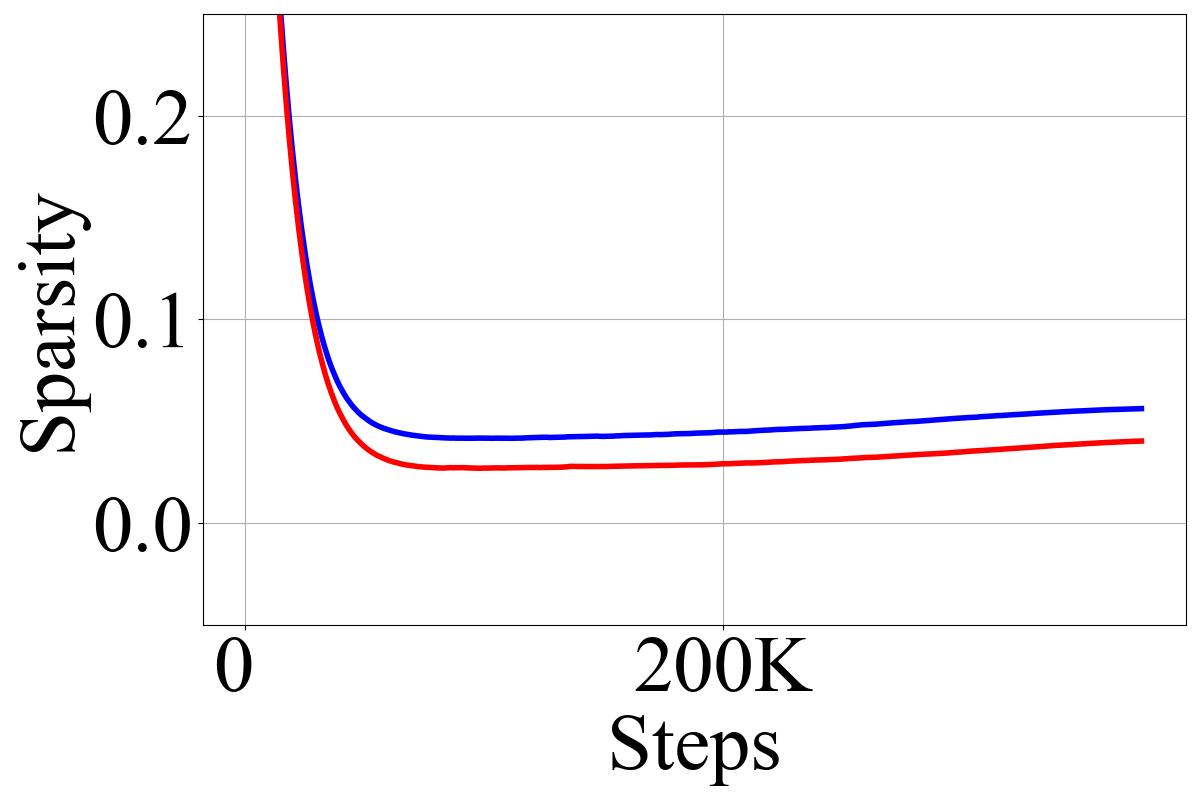}}
        \minorrevisionimage{\includegraphics[width=0.48\linewidth]{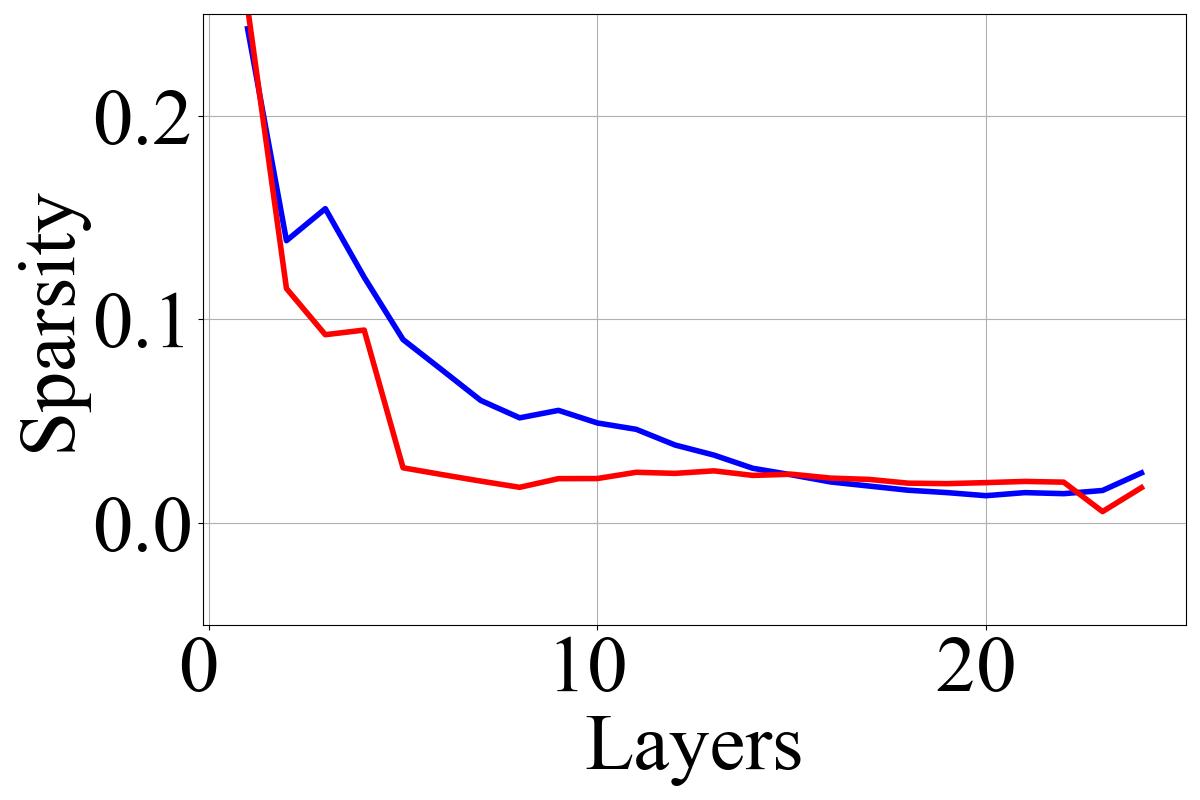}}
    }\\
    \subfloat[\minorrevision{Averaged testing sparsity of ViT-Large on Places365.}\label{figure:productive_vit_large_average_testing}]{
        \minorrevisionimage{\includegraphics[width=0.48\linewidth]{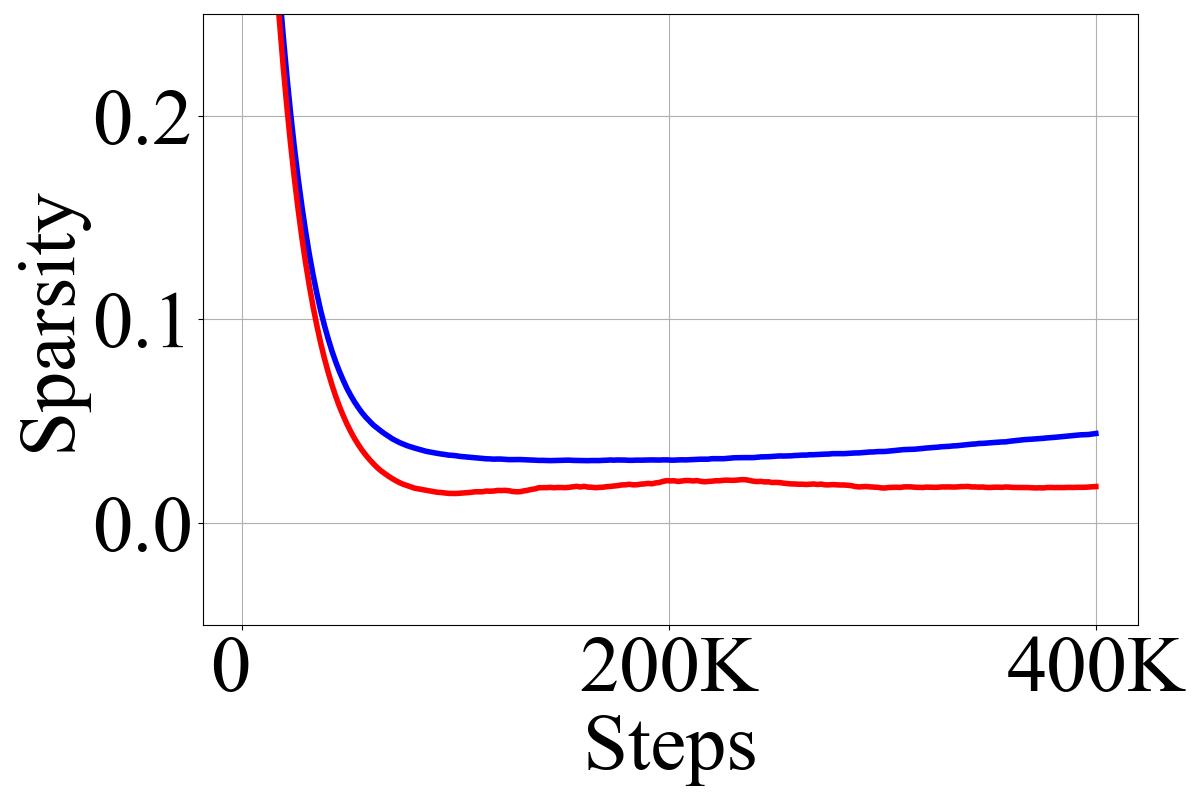}}
        \minorrevisionimage{\includegraphics[width=0.48\linewidth]{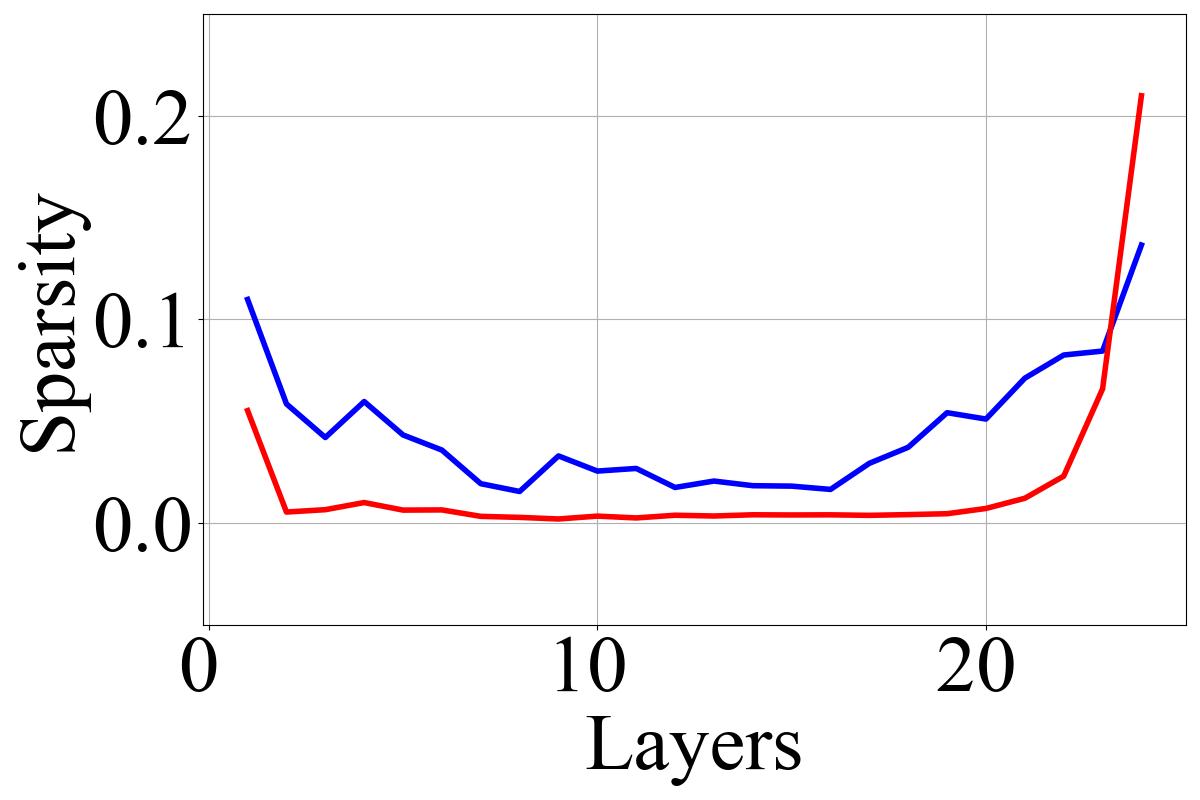}}
    }\\
    \subfloat[\minorrevision{Averaged testing sparsity of ViT-Base on a 1/10 subset of LAION-400M.}\label{figure:productive_laion_average_testing}]{
        \minorrevisionimage{\includegraphics[width=0.48\linewidth]{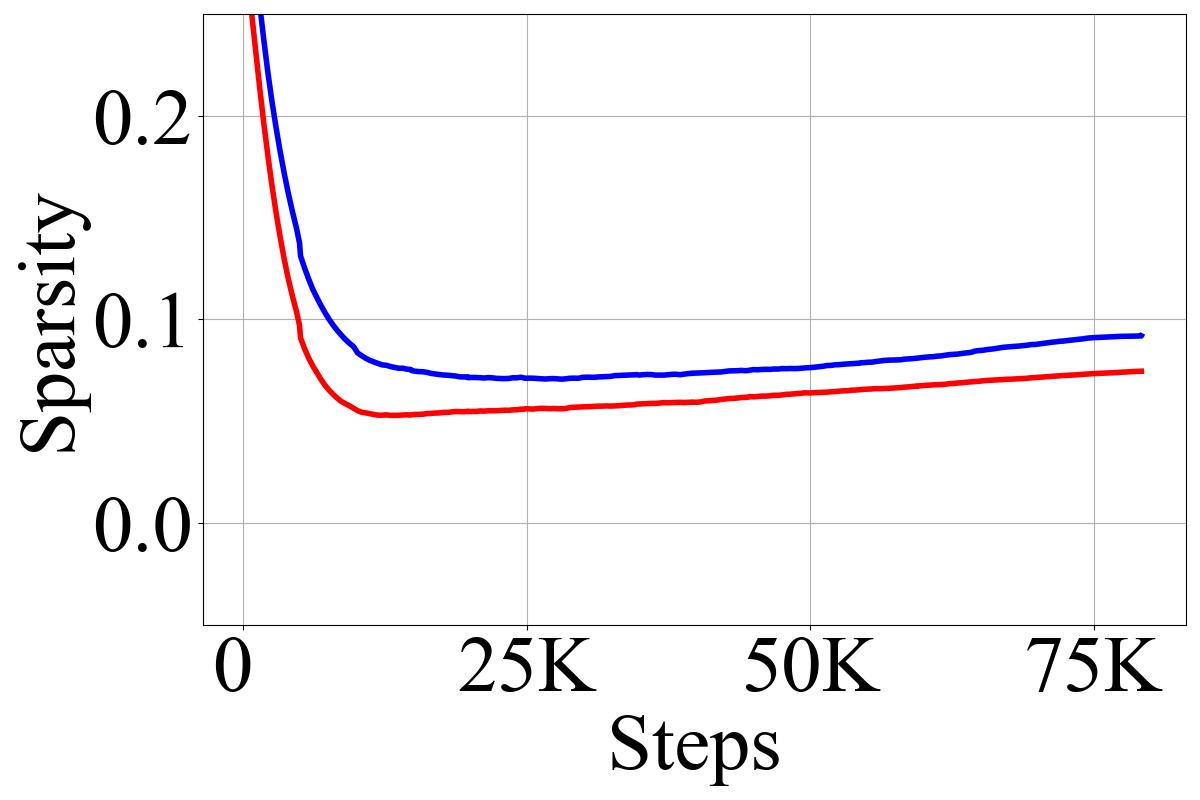}}
        \minorrevisionimage{\includegraphics[width=0.48\linewidth]{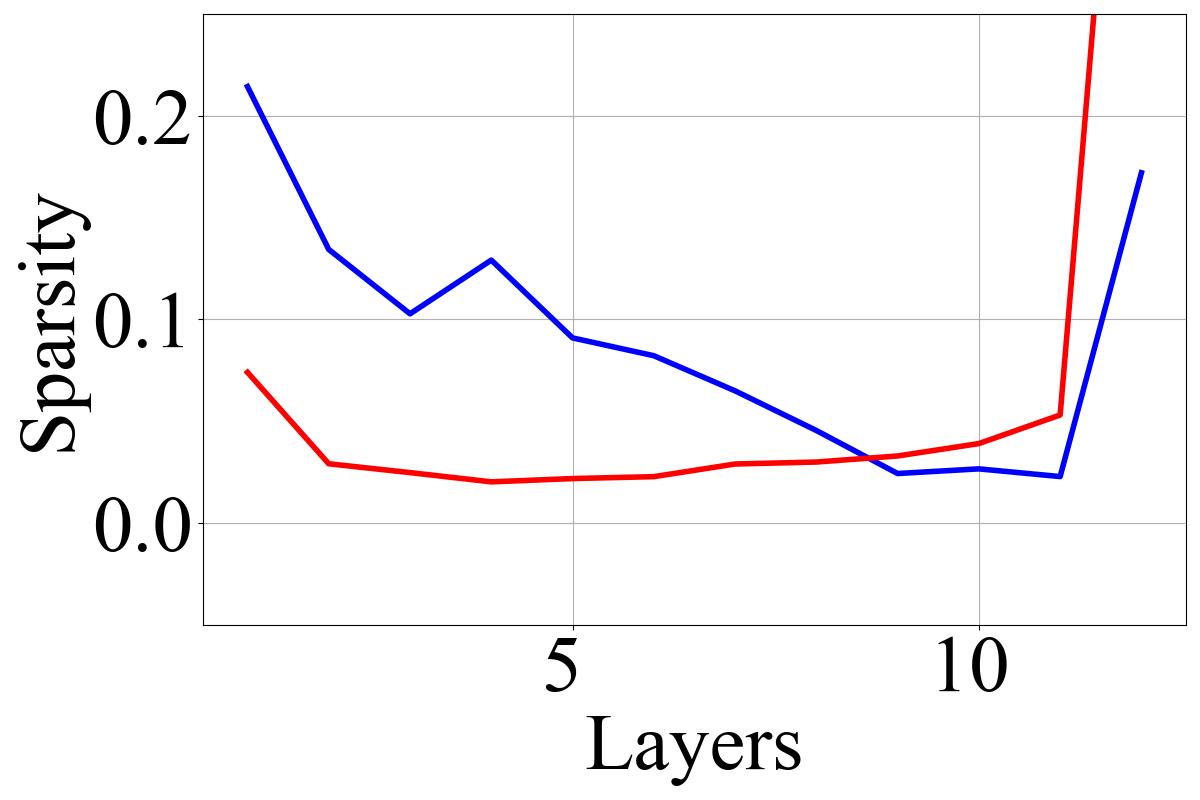}}
    }
    \caption{\minorrevision{Testing sparsity of SwinTransformer-Base on ImageNet-1K, ViT-Large on Places365, and ViT-Base on a 1/10 subset of LAION-400M. Testing sparsity is averaged across layers or steps in the same manner as in \cref{figure:productive_vit}.}
    }\label{figure:other_vision_tasks}
\end{figure}

\begin{table}
    \centering
    \caption{\minorrevision{The overall training and testing sparsities of models pretrained on vision tasks. \whatisbold{}}}\label{table:productive_vit}
    \minorrevision{
    \begin{tabular}{c|lrrrrrr}
        \toprule
                &                &    \multicolumn{2}{c}{Sparsity} & \multicolumn{2}{c}{Test acc. \higherbetter{} or loss \lowerbetter{}}\\
        \cmidrule(lr){3-4}
        \cmidrule(lr){5-6}
         Setting        &   Method  & Train \lowerbetter{}   &   Test \lowerbetter{}          &   Acc@1        &   Acc@5\\
        \midrule    
        ViT-B & Vanilla                 &   $0.104$                 &   $0.087$                         &   $\mathbf{77.35\%}$          &   $\mathbf{93.50\%}$\\
        I.Net & Modified                &   $\mathbf{0.046}$        &   $\mathbf{0.055}$                &   $76.77\%$                   &   $93.30\%$\\
        \midrule    
        SwT-B & Vanilla                 &   $0.056$                 &   $0.057$                         &   $79.75\%$          &   $94.73\%$\\
        I.Net & Modified                &   $\mathbf{0.040}$        &   $\mathbf{0.041}$                &   $\mathbf{79.82\%}$                   &   $\mathbf{94.90\%}$\\
        \midrule    
        ViT-L & Vanilla                 &   $0.049$                 &   $0.039$                         &   $\mathbf{54.81\%}$          &   $\mathbf{84.97\%}$\\
        P.365 & Modified                &   $\mathbf{0.020}$        &   $\mathbf{0.018}$                &   $54.60\%$                   &   $84.87\%$\\
        \midrule    
        ViT-B & Vanilla                 &   $0.090$                 &   $0.092$                         &   \multicolumn{2}{c}{$\mathbf{0.984}$}\\
        L.40M & Modified                &   $\mathbf{0.073}$        &   $\mathbf{0.075}$                &   \multicolumn{2}{c}{1.067}\\
        \bottomrule
    \end{tabular}
    }
\end{table}

\begin{table}
    \centering
    \caption{The overall training and testing sparsities of T5-Base pretrained on a subset of C4. \whatisbold{}}\label{table:productive_t5}
    \begin{tabular}{lrrrrrr}
        \toprule
                                Method &   Train sparsity \lowerbetter{}     &   Test sparsity \lowerbetter{}      &   Test loss \lowerbetter{}\\
        \midrule    
        Vanilla                 &   $0.302$                     &   $0.299$                     &   $4.88$\\
        Modified                &   $\mathbf{0.153}$            &   $\mathbf{0.180}$            &   $\mathbf{4.78}$\\
        \bottomrule
    \end{tabular}
\end{table}

\begin{figure*}
    \centering \includegraphics[width=0.24\linewidth]{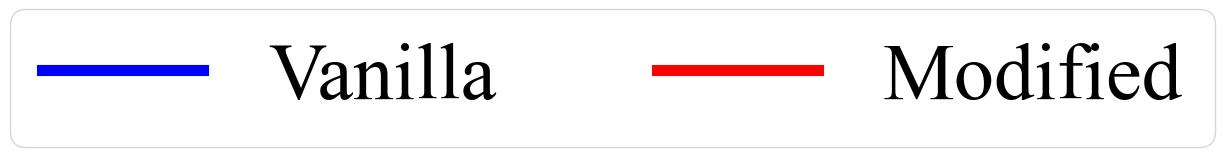}\\
    \resetHeight{}\subfloat[Training, stepwise. \label{figure:productive_t5_average_training}]{
        \includegraphics[width=0.24\linewidth]{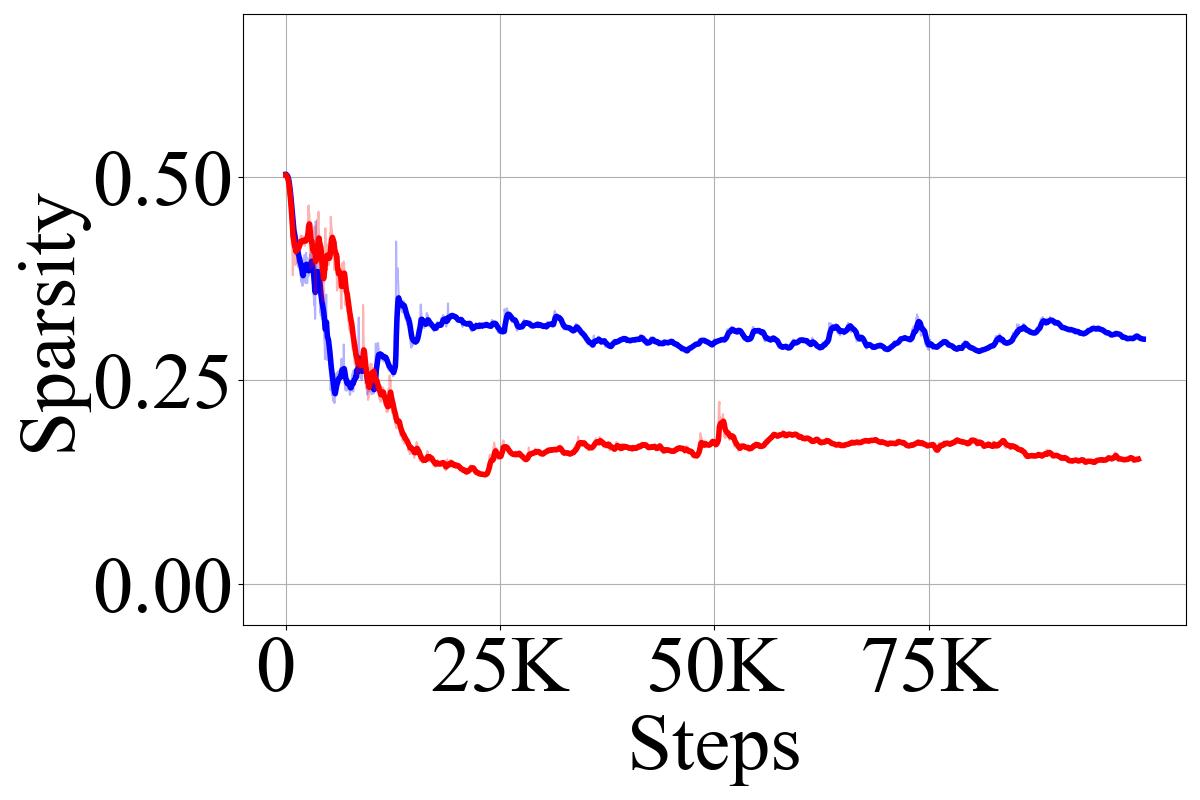}
}
    \subfloat[Training, layerwise. \label{figure:productive_t5_end_training}]{
        \includegraphics[width=0.24\linewidth]{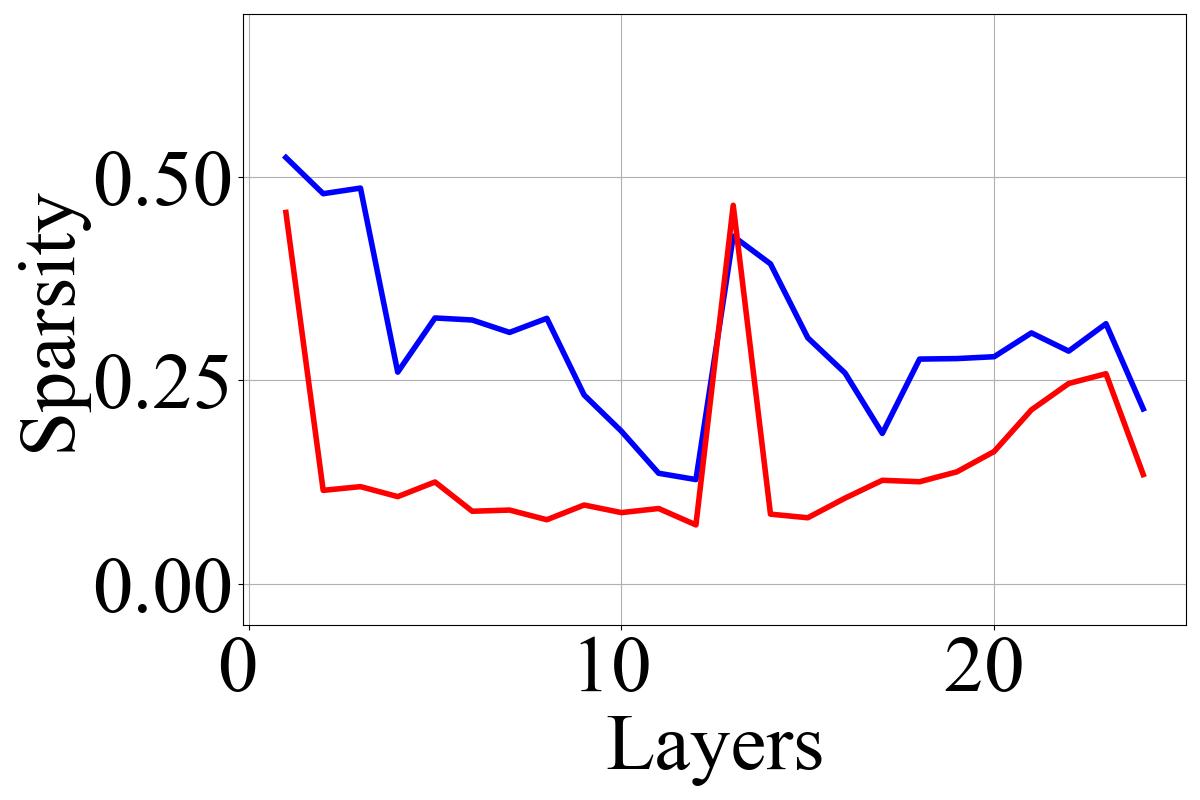}
}
    \subfloat[Testing, stepwise.\label{figure:productive_t5_average_testing}]{
        \includegraphics[width=0.24\linewidth]{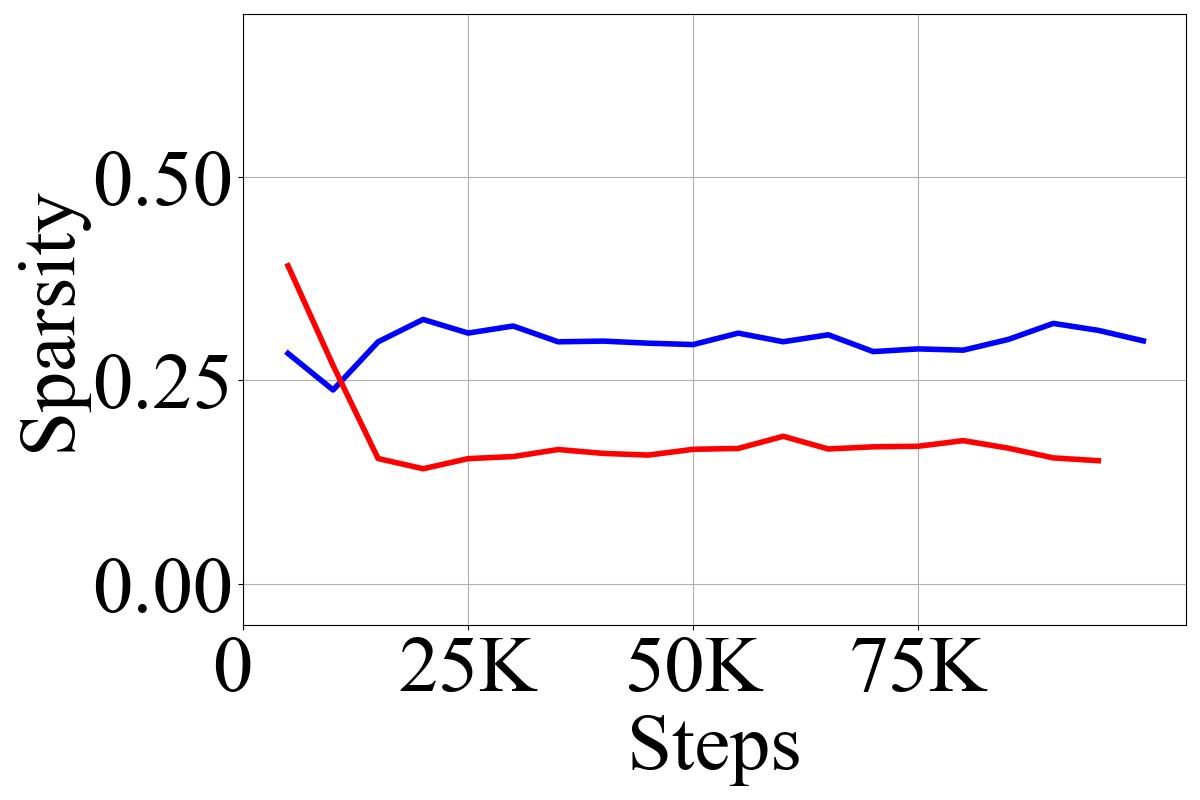}
}
    \subfloat[Testing, layerwise. \label{figure:productive_t5_end_testing}]{
        \includegraphics[width=0.24\linewidth]{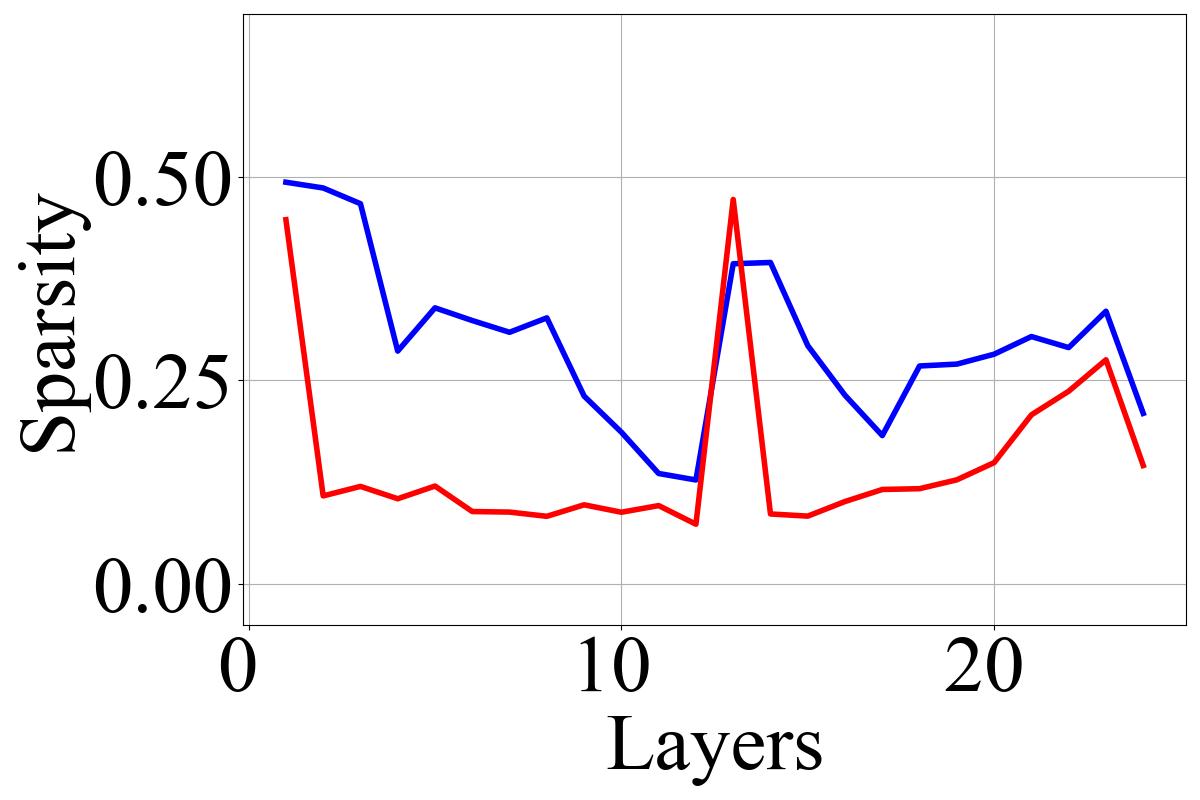}
}
    \caption{Training and testing sparsity during training of T5-Base on C4. 
        \cref{figure:productive_t5_average_training,figure:productive_t5_end_training} show \emph{training} sparsity while \cref{figure:productive_t5_average_testing,figure:productive_t5_end_testing} show \emph{testing} sparsity. 
        \cref{figure:productive_t5_average_training,figure:productive_t5_average_testing} show the evolution of sparsity along the training steps, while \cref{figure:productive_t5_end_training,figure:productive_t5_end_testing} compare sparsity improvements in a layerwise manner.
        Layers in the encoder are indexed by 1-12 while those in the decoder are indexed by 13-25.
        More detailed results can be found in \cref{figure:productive_t5_full} of \cref{appendix:more_experiments}.
    }\label{figure:productive_t5}
\end{figure*} 
Training and testing sparsity in vision and language tasks are summarized in \cref{table:productive_vit} and \cref{table:productive_t5}. \minorrevision{The evolution of sparsity during training is shown in \cref{figure:productive_vit,figure:other_vision_tasks,figure:productive_t5}. More detailed results can be found in \cref{appendix:more_experiments}}.
\minorrevision{Overall, on the two tasks across all datasets, model architectures, and model scales, our methods achieve $20\% \sim 50\%$ relative improvements on training and testing sparsity compared to the baseline. They achieve the most impressive improvement of approximately $50\%$ on experiments with ViT-B on ImageNet-1K, ViT-Large on Places365, and T5-Base on C4.} 
Despite the drastic improvement in sparsity, the drop in accuracy is minor on vision tasks. In \cref{table:productive_t5} for the NLP task, the testing loss even improves, likely due to the enhanced expressivity from the zeroth biases and the nonlinear $\jrelu$.

\minorrevision{
In vision tasks, architecture, model scale, and data distribution strongly impact vanilla and improved sparsity. Deeper networks (ViT-B with 12 MLP blocks vs SwT-B with 24 MLP blocks) tend to have stronger vanilla sparsity, possibly due to stronger gradient noise and stronger flatness optimization. The improved sparsity is maintained at the same level, however, leaving the improvement less significant. ViT-L, which is both deeper and wider, leads to equal enhancement on vanilla and improved sparsity, likely because wider networks have more redundancy. Therefore, our methods are the most effective if width and depth are scaled up together, which is often the case in modern practice. 
Surprisingly, a more diverse and challenging dataset does not severely damage vanilla sparsity, leaving it mainly determined by architecture and model scale. However, improving sparsity becomes difficult, likely because more active neurons are required to fit the data. This contrast is intermediate evidence that vanilla sparsity may contain a lot of redundancy intrinsic to architecture, while our methods can effectively remove it so that fitting difficulty controls sparsity.
}

\minorrevision{
From \cref{figure:productive_vit_average_training,figure:productive_vit_average_testing,figure:productive_swin_average_testing,figure:productive_vit_large_average_testing,figure:productive_laion_average_testing}, we observe that modified training improves sparsity through stronger initial sparsification and slower re-densification in the middle and late stages of training. From \cref{figure:productive_vit_end_training,figure:productive_vit_end_testing,figure:productive_swin_average_testing,figure:productive_vit_large_average_testing,figure:productive_laion_average_testing}, one can see that sparsity improvements mainly occur in middle layers, with the exception that the deeper half of SwT-Base layers are naturally sparse and improvements are concentrated in middle layers.}
Similar observations can be made for T5-Base, except that spatially the sparsity improvements concentrate in the middle layers of both the encoder and the decoder.

Unfortunately, the modified models exhibit sparsity overfitting, while vanilla models' sparsity generalizes well. Nevertheless, the sparsity improvement on the training set is so large that the improvement overwhelms the overfitting.

\subsection{Sparsity-Aware Finetuning}\label{sec:f_experiment}

We use the last checkpoints of vanilla Transformers from \cref{sec:p_experiments} as pretrained weights and finetune them on ImageNet-1K and C4 for better sparsity, demonstrating a cheaper way to improve sparsity on existing pretrained models.

To enable finetuning, we freeze pretrained parameters and deploy LoRA \cite{hu2022lora} at weight matrices in MLP blocks and query, key, and value matrices in attention blocks.
ViT-Base is tuned for 15 epochs on ImageNet-1K, while T5-Base is tuned for $10,000$ steps on the same subset of C4. More details on finetuning can be found in \cref{appendix:details}. 

For sparsified models, we add the finetuning version of the proposed modifications with the same configuration as \cref{sec:p_experiments}. The warmup of LayerNorm restriction and activation substitution lasts for $3,000$ steps.
The sparsity before and after finetuning is compared in \cref{table:finetuning}.

\begin{table}
    \centering
    \caption{Testing sparsity after finetuning \minorrevision{for ViT-Base on ImageNet-1K and T5-Base on C4} for better sparsity. \whatisbold{} \minorrevision{The second-best result is \underline{underlined}.}}\label{table:finetuning}
    \begin{tabular}{lrrrrrrrrrr}
        \toprule
                        &   \multicolumn{2}{c}{ViT-Base}                                                                          &   \multicolumn{2}{c}{T5-Base}\\
                        \cmidrule(lr){2-3}
        \cmidrule(lr){4-5}
                        &   \multicolumn{1}{c}{Sparsity \lowerbetter{}}                &   \multicolumn{1}{c}{Acc@1 \higherbetter{}}          &   \multicolumn{1}{c}{Sparsity \lowerbetter{}}                &   \multicolumn{1}{c}{Test Loss \lowerbetter{}}\\ 
        \midrule    
        Before F.T.   &   $\mathbf{0.087}$                                    &   $\mathbf{77.35\%}$             &   $0.299$                 &   $4.88$\\
        Vanilla         &   $0.122$                                             &   $\underline{77.17\%}$                          &   $0.304$                 &   $\mathbf{4.59}$\\
        Modified        &   $\underline{0.102}$                                             &   $74.46\%$                          &   $\mathbf{0.199}$            &   $\underline{4.61}$\\
        \bottomrule
    \end{tabular}
\end{table} 
For ViT, finetuning severely harms both testing sparsity and accuracy. Nevertheless, the modified model still has better sparsity than the baseline after finetuning.
However, for T5, the testing sparsity improves by approximately $1/3$ with a minor increase in testing loss.

\subsection{Analysis}

\mysubsubsection{Effects of Lower-bounded LayerNorm}

\begin{figure}
    \centering
    \subfloat[\label{fig:layernorm}]{
        \includegraphics[width=0.48\linewidth]{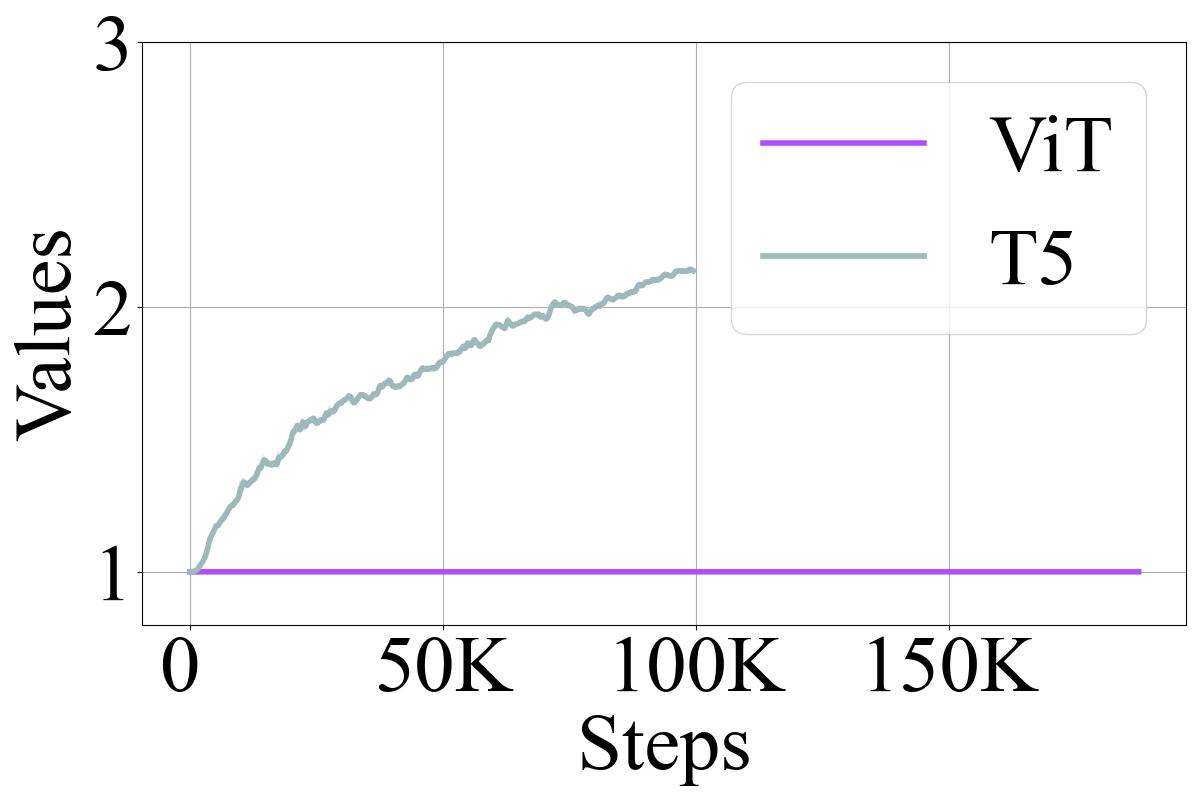}
    }
    \subfloat[\label{fig:c_with_lowerbound}]{
        \includegraphics[width=0.48\linewidth]{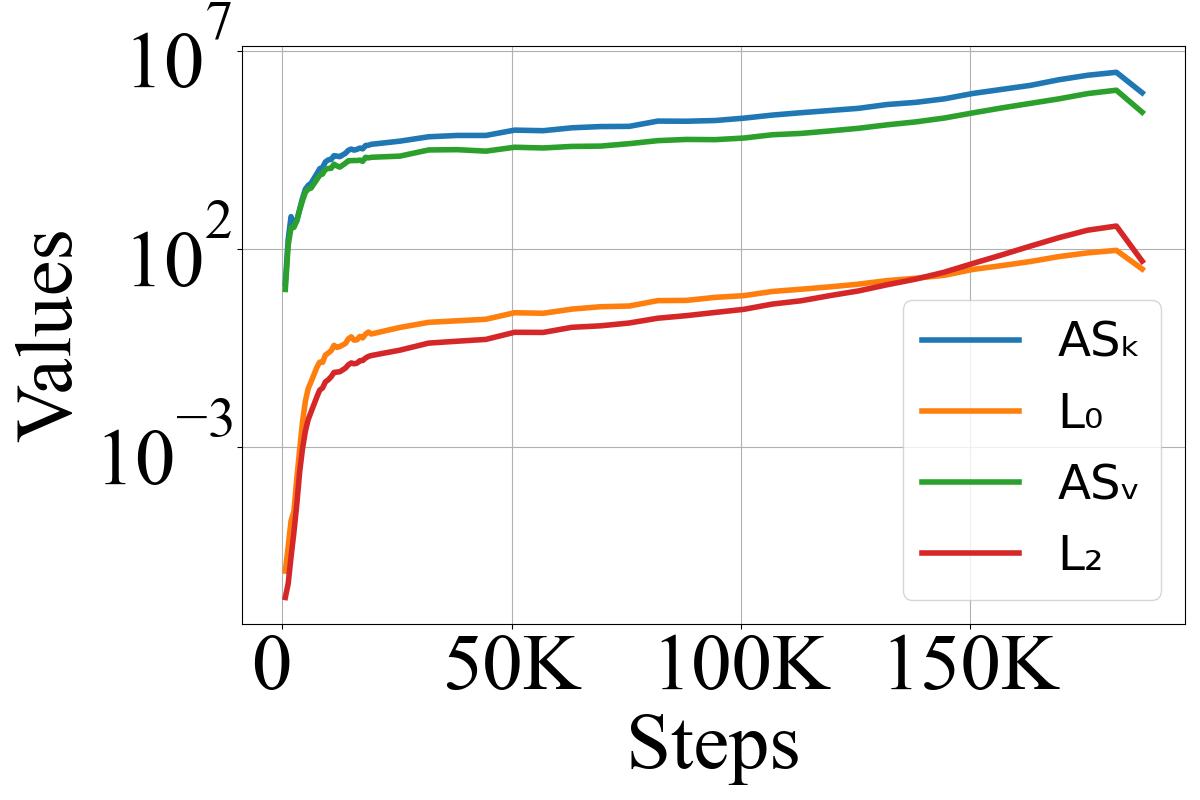}    
    }
    \caption{The role of lower-bounding LayerNorm layers. \cref{fig:layernorm} displays the averaged absolute values of affine parameters in LayerNorm layers, during the pretraining of the modified models. \cref{fig:c_with_lowerbound} displays how the $c^l_{L_0}, c^l_{L_2}$ evolve during the pretraining of the modified ViT. Full layerwise results can be found in \cref{appendix:more_experiments}.}
\end{figure}

The evolution of averaged absolute values of LayerNorm affine factors during sparsity-aware pretraining is shown in \cref{fig:layernorm}. It shows that ViT's factors closely adhere to $1$, suggesting that without the lower bound they would be much smaller, compromising the denominators $c_{L_0}^l$. For T5, \cref{fig:layernorm} shows little difference with or without the lower bound. The improvement for T5 is due to $\jrelu$ and Zeroth Biases.

To assess how lower-bounding LayerNorm layers mitigate the compromise in denominators $c_{L_0}^l$, we compute the two coefficients of the modified ViT in the same way as \cref{fig:full_run}, whose results are shown in \cref{fig:c_with_lowerbound}. Note that \cref{fig:c_with_lowerbound} does not include the denominator increase of $\jrelu$ (see \cref{eq:jsrelu_larger_denominator}) and only displays the increase due to lower-bounded LayerNorms in the sparsified ViTs.
It can be seen that $c^l_{L_0}$ becomes larger compared to \cref{fig:full_run_imagenet1k}. As a result, the ratio in \cref{theorem:multi_token} becomes smaller, and better activation sparsity is gained, given that the augmented flatness remains similar.

\mysubsubsection{Ablation on Lower-bounded LayerNorm}\label{sec:ablation}

To assess the contribution of the lower bound among the proposed methods for ViT, we present results from tentative experiments with Zeroth Biases and $\jrelu$ on ImageNet-1K, where LayerNorm affine factors and Zeroth Biases are unbounded. Besides removing restrictions, we also used a larger gradient clipping norm of up to $10$, hoping for better flatness. However, this leads to unstable training, likely due to the larger derivatives of $\jrelu$. Other setting differences from \cref{sec:p_experiments} are listed in \cref{appendix:details}.
The results are shown in \cref{fig:ablation}.

The results show that while the training sparsity improves, the degree of improvement is much smaller.
The preference for augmented flatness on $L_2$ terms also appears much earlier, leading to inferior training sparsity measured by the $L_0$ pseudo norm in later training. 
This indicates that lower-bounded LayerNorm is critical for improving sparsity in ViT, as are $\dbmlp$ and $\jrelu$.

\section{Conclusion, Limitation, and Future}\label{sec:conclusion}

In this paper, we focus on the emergence and enhancement of activation sparsity in MLP blocks.
Regarding emergence, we derive equalities linking activation sparsity to a ratio between an augmented form of flatness and a denominator related to MLP input norms and gradient magnitudes of MLP activations.
By empirically measuring the augmented flatness, the denominator, and the ratio, we observe that the denominator increases faster than the augmented flatness, causing the ratio to decrease as training proceeds and leading to activation sparsity.
We propose derivative sparsity, which allows pruning during backward propagation and is more stable than activation sparsity.
Based on the theoretical roles of flatness, activation functions, and normalization layers, we propose three simple plug-and-play methods to improve sparsity. Experimental results demonstrate that sparsity-aware pretraining substantially improves both training and testing sparsity compared with baseline Transformers on natural image classification and natural language generation tasks. This improvement offers strong potential cost reductions during both pretraining and inference.

\begin{figure}[t]
    \centering \subfloat[Modified, overall: 0.115.\label{fig:ablation_training}]{
        \includegraphics[width=0.48\linewidth]{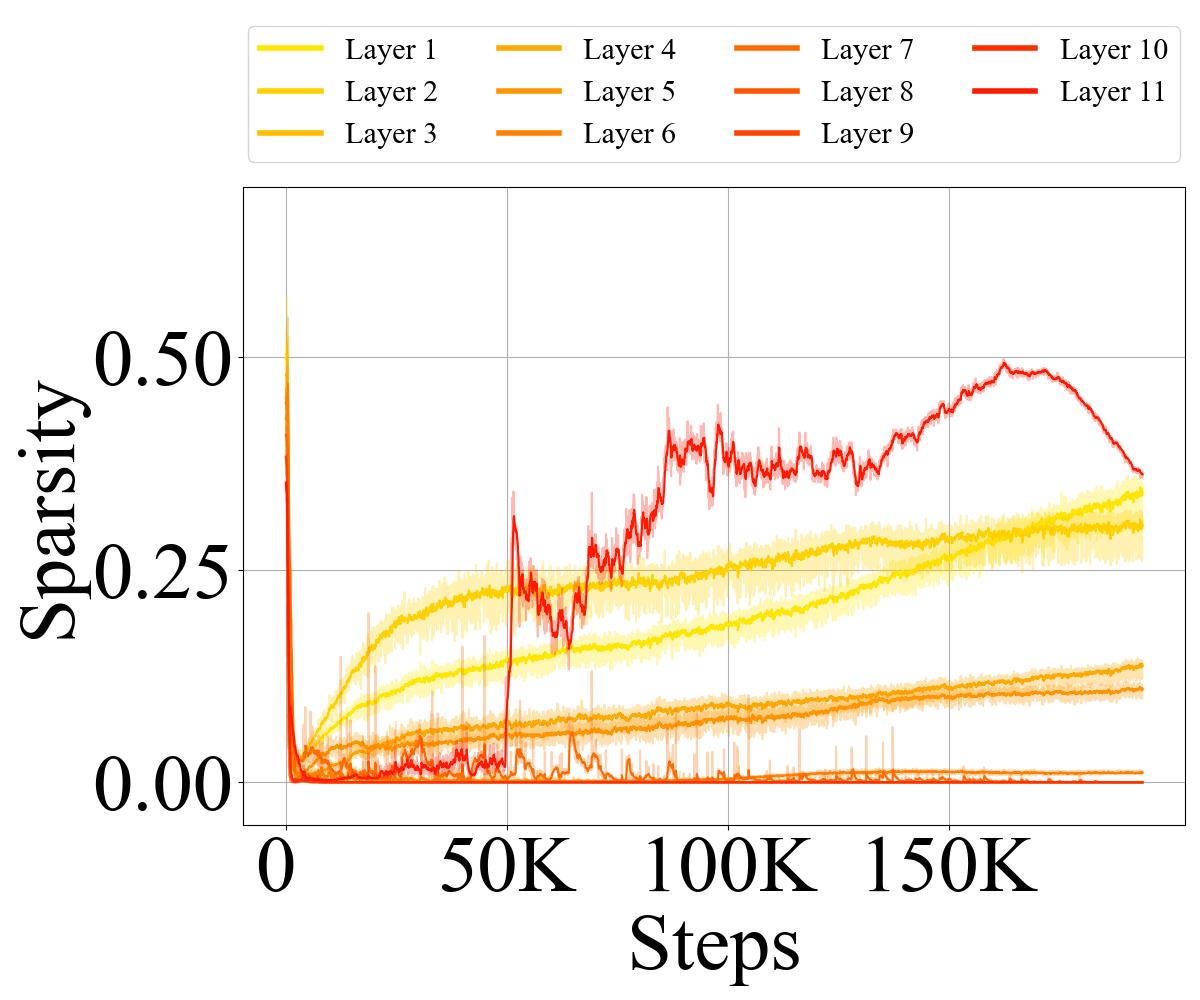}
    }
    \subfloat[Vanilla, overall: 0.130.\label{fig:ablation_testing}]{
        \includegraphics[width=0.48\linewidth]{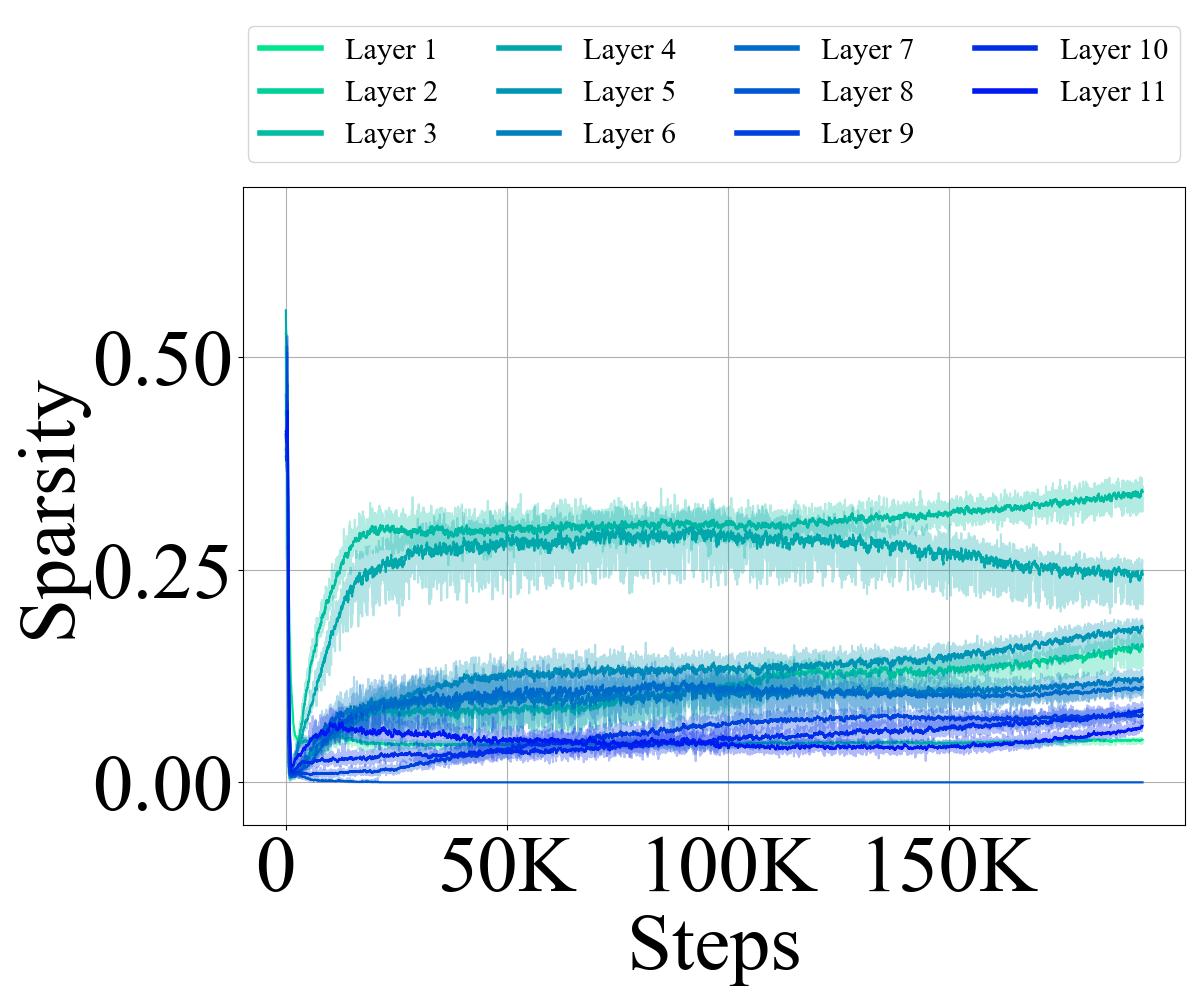}
    }
    \caption{The training sparsity of ViT trained with only (unrestricted) $\dbmlp$ and $\jrelu$, without restrictions on LayerNorm layers and $\dbmlp$. ``Overall'' indicates their overall training sparsity across layers and steps.}\label{fig:ablation}
\end{figure}

Activation sparsity is connected to augmented flatness, which may attract more research attention towards understanding the emergence of flatness.
However, our conclusion on sparsity emergence is supplemented by empirical results. Theoretical explanations for why the measured quantities behave as observed in our findings remain unknown.
Lastly, the enhanced sparsity exhibits overfitting to the training set. Developing methods to alleviate this overfitting could further improve inference efficiency.
 
\minorrevision{
We discuss conjectures and potential future studies on the above limitations.
For a more complete picture on sparsity emergence, we may need to rewrite \cref{eq:main} into $\AS[\Kparam^l] = \ex{\norm{\mat{A}^l}_0} \cdot d \cdot \ex{\left(\derivatives{\sampleloss(f_{\allparam}, \vec{s})}{a^l_{i, j}}\right)^2 \mid a^l_{i, j} > 0}$. This indicates that better sparsity can lead to better flatness, and the training \emph{may} implicitly optimize sparsity to gain flatness. 
However, as observed in the above equality, flatness is not solely determined by sparsity, and training may optimize flatness by decreasing the other factor and totally ignoring sparsity. 
We argue that sparsity optimization is not ignorable, if not dominant, in the flatness optimization. At least in pure MLPs, the other factor is only determined by the sparsity of other layers and weights (especially the spectral properties of the weights). Studies on training dynamics show that the ubiquitous small initialization and (S)GD encourage the weights to be spectrally auto-aligned \cite{autoalignment1,autoalignment2,autoalignment3,autoalignment4}, which are exactly the flattest solution \cite{mulayoff2020unique}. Therefore, the other factor has been fully optimized by (S)GD, and the only factor that can be further optimized is sparsity. This explains why sparsity must be enhanced when SGD seeks flatness. 
Additionally, flatness is mainly optimized through escape from sharp minima driven by stochastic gradient noise \cite{mori2022power,alignment}. It is unlikely that stochastic escaping solely favors the other factor. Minima with sparsity will be visited and can trap training due to better flatness.
Nevertheless, this is only a sketchy conjecture, and we leave the rigorous proof for future studies. 
For sparsity overfitting, if we treat sparsity as a special loss, information-theoretic or PAC-Bayesian generalization theories that impose few assumptions on loss can help. 
Since sparsity percentage $\norm{\mat{D}^l}_0 / k n \in [0, 1]$ is bounded and thus satisfies the sub-Gaussian condition in these bounds \cite{xu_information-theoretic_2017,alquier_user-friendly_2023}, the gap between training and testing sparsity percentages can be bounded by $\sqrt{I(\dataset; \allparam) / 2 |\dataset|}$, where $I(\dataset; \allparam)$ is the mutual information between the randomly drawn training dataset and the learned parameters. See \cref{theorem:sparsity_overfitting} in \cref{appendix:sparsity_overfitting} for formal results.
More advanced bounds \cite{steinke_CMI,asadi_chaining_2018,hafez-kolahi_conditioning_2020,wang_tighter_2023,peng2025leveraging} can also be applied.
Theoretically, this places sparsity overfitting in a unified framework with mainstream studies of loss or accuracy overfitting and allows progress to transfer across both fields. 
For algorithms, regularization techniques based on information-theoretic or PAC-Bayesian theories may help mitigate sparsity overfitting, which we leave for future studies. }
 \section*{Acknowledgments}

This work was supported by NSFC Project (62506162, 62506005, 62192783), Jiangsu Science and Technology Project (BK20251241), Fundamental and Interdisciplinary Disciplines Breakthrough Plan of the Ministry of Education of China (No. JYB2025XDXM118), and the "111 Center" (No. B26023).

\bibliographystyle{IEEEtran}
\bibliography{ref}

\onecolumn
\appendices

\crefalias{section}{appendix}
\newcommand{\appendixcounter}[1]{\counterwithin{#1}{section}}
\appendixcounter{figure}
\appendixcounter{lemma}
\appendixcounter{theorem}
\appendixcounter{table}
\appendixcounter{equation}
\appendixcounter{definition}
\appendixcounter{corollary}

\section{Results for Architectures with Skip Connections}\label{appendix:full}

In the main part of the paper, we list theoretical results for networks defined by \cref{eq:architecture}, where networks can have MLP blocks in $\mlp^l$ and non-MLP structures in $G^l$, allowing self-attention layers, normalization layers or other constructs.
However, skip connections, a imperative components in modern architectures, are excluded in \cref{eq:architecture} because MLP blocks cut the input and the output of the network.
Nevertheless, the results can be extended to architectures with skip connections.
In this section, we define a more general family of networks that embrace skip connections and derive theoretical results on these networks.
The results in the main part are then implications of these results.

We begin the definition of the networks with skip connections. 
\begin{definition}\label{def:arch}
    A parameterized function $F_{\allparam}: \reals^{d_{\textnormal{input}}} \to \reals^C$ of token number $k$ is a neural network of our interest if there exists a depth $L \in \nats^+$, and also for each layer index $l \in [L]$, 
        an MLP-input-extractor $T^l_{\allparam}: \mathcal{S}_{l-1} \to \reals^{d \times k}$, an MLP-output-reducer $R^l_{\allparam}: (\reals^{d \times k} \times \mathcal{S}_{l-1}) \to \mathcal{S}_{l}$, and finally the input embedding $E: \reals^{d_{\textnormal{input}}} \to \mathcal{S}_0$ and the classifier $D_{\allparam}: \mathcal{S}_L \to \reals^{C}$, such that $F$ can be recursively defined by
    \begin{align}
        \mat{S}^0 \defeq& E(\vec{x}), \label{eq:arch_start}\\
        \mat{X}^l \defeq& T^l_{\allparam}(\mat{S}^{l-1}),\\
        \mat{Z}^l   \defeq& \mlp^l_{\allparam}(\mat{X}^{l}) \in \reals^{d \times k},\\
        \mat{S}^l \defeq& R^l_{\allparam}(\mat{Z}^l, \mat{S}^{l-1}),\\
        F(c \mid \vec{x}, \allparam) \defeq& (\operatorname{softmax}(D_{\param[D]}(\mat{S}^L)))_c,
    \end{align}
    where $l \in [L]$, $\operatorname{softmax}(\vec{z})_j = \exp(z_j) / \sum_{j'} \exp(z_{j'})$, and for $l \in \set{0} \cup [L]$, $\mathcal{S}^l$ is a non-empty set, possibly with complicated structures such as nested Cartesian products.
\end{definition}

In \cref{def:arch}, $\mat{X}^l$ and $\mat{Z}^l$ are the input and output, respectively, to the $l$-th MLP block. $E, T^l, G^l$, and $D$ contain all none-MLP structures such as self-attention and normalization layers. $\mat{S}^l$ contains the features that skip the $l$-th MLP, covering skip connections. 
Therefore, one can suit architectures listed in \cref{sec:intro} into $F$. For example, to suit Transformers, $G^l$ absorbs the addition for the residual connection, self-attention layers, and normalization layers.
To suit \cref{eq:architecture}, we denote the input embedding in \cref{eq:architecture} by $E'$ and the classifier by $D'$, then let $E = E', T^l = G^l, R^l_{\allparam}(\mat{Z}^l, \mat{S}^{l-1}) = \mat{Z}^l$ and finally $D = D'$. Therefore, results in this section can be transferred to models defined by \cref{eq:architecture}, giving results in the main part.

We then derive results based on \cref{def:arch} in the same order as the main part.
First, we show that sub-derivatives can be used to compute gradients in $F_{\allparam}$ when derivatives are undefined.
\begin{lemma}\label{lemma:differentiable}
    Let $F_{\allparam}$ be the neural network defined in \cref{def:arch} with parameter $\allparam$, whose activation function $\sigma$ is continuous. Under \cref{assumption:differentiable}, for any training sample $(\vec{x}, y) \in \dataset$, both $\derivatives{\sampleloss(F_{\allparam}, (\vec{x}, y))}{\mat{P}^l}$ and $\derivatives{\sampleloss(F_{\allparam}, (\vec{x}, y))}{\mat{A}^l}$ exist, and one has 
    \begin{align}
        \derivatives{\sampleloss(F_{\allparam}, (\vec{x}, y))}{\mat{P}^l} = \derivatives{\sampleloss(F_{\allparam}, (\vec{x}, y))}{\mat{A}^l} \hadamard \mat{D}^l.
    \end{align}
\end{lemma}
The proof of \cref{lemma:differentiable} can be found in \cref{proof:differentiable}.

The proof of \cref{lemma:af_and_gradient_norm} is agnostic to network structure so it is not repeated.

We then decompose the expected gradient norms in \cref{lemma:af_and_gradient_norm} to bring activations and (sub-)derivatives in:
\begin{lemma}\label{lemma:main_full}
    Let $F$ be a neural network defined by \cref{def:arch}. Assume \cref{assumption:differentiable} for $F_{\allparam}$ on $\dataset$. Then for $l \in [L]$, we have 
    \begin{align}
        \AS[\Kparam^l]
        =& \ex[\vec{s} \sim \uniform{\dataset}]{\norm{\left(\mat{X}^l\right)^\transpose \left(\derivatives{\sampleloss(F_{\allparam}, \vec{s})}{\mat{A}^l} \hadamard \mat{D}^l \right)}_F^2},
        \AS[\Vparam^l]=\ex[\vec{s} \sim \uniform{\dataset}]{\norm{\left(\mat{A}^l\right)^\transpose \derivatives{\sampleloss(F_{\allparam}, \vec{s})}{\mat{Z}^l}}_F^2}.
    \end{align}
\end{lemma}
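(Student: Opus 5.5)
The proof combines \cref{lemma:af_and_gradient_norm} with the chain rule inside a single MLP block, using \cref{lemma:differentiable} to handle the non-smooth activation. The proof of \cref{lemma:af_and_gradient_norm} does not depend on the architecture, so for $\param = \Kparam^l$ and $\param = \Vparam^l$ we immediately get
\[
\AS[\Kparam^l] = \ex[\vec{s} \sim \uniform{\dataset}]{\norm{\nabla_{\Kparam^l} \sampleloss(F_{\allparam}, \vec{s})}_2^2},
\]
and the analogous identity for $\Vparam^l$. What remains is to identify, for each fixed sample $\vec{s}$, the squared gradient norm with the Frobenius norms claimed in the statement.

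Fix $\vec{s}=(\vec{x},y)$. In \cref{def:arch}, the parameters $\mat{K}^l$ and $\mat{V}^l$ enter the computation only through $\mat{P}^l$ and $\mat{Z}^l$, respectively. The modules use disjoint parameters, and $\mat{X}^l$ does not depend on $\mat{K}^l$ or $\mat{V}^l$. The gradient vector over $\Kparam^l$ is the vectorization of $\derivatives{\sampleloss}{\mat{K}^l}$, so its $L_2$ norm equals the Frobenius norm of that matrix.

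From $\mat{P}^l = \mat{X}^l (\mat{K}^l)^\transpose + \vec{1}_k (\vec{b}^{l,K})^\transpose$, we have $p^l_{i,j} = \sum_m x^l_{i,m} K^l_{j,m} + b^{l,K}_j$. The chain rule then gives
\[
\derivatives{\sampleloss}{K^l_{j,m}} = \sum_i \derivatives{\sampleloss}{p^l_{i,j}}\, x^l_{i,m},
\qquad\text{i.e.}\qquad
\derivatives{\sampleloss}{\mat{K}^l} = \Bigl(\derivatives{\sampleloss}{\mat{P}^l}\Bigr)^\transpose \mat{X}^l .
\]
This matrix is the transpose of $(\mat{X}^l)^\transpose \derivatives{\sampleloss}{\mat{P}^l}$, and transposition preserves the Frobenius norm.

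Similarly, $z^l_{i,m} = \sum_j a^l_{i,j} V^l_{m,j} + b^{l,V}_m$ yields
\[
\derivatives{\sampleloss}{\mat{V}^l} = \Bigl(\derivatives{\sampleloss}{\mat{Z}^l}\Bigr)^\transpose \mat{A}^l,
\]
whose Frobenius norm equals $\norm{(\mat{A}^l)^\transpose \derivatives{\sampleloss}{\mat{Z}^l}}_F$.

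The only delicate step is the appearance of $\mat{D}^l$. We need $\derivatives{\sampleloss}{\mat{P}^l} = \derivatives{\sampleloss}{\mat{A}^l} \hadamard \mat{D}^l$ even where $\sigma'$ fails to exist at a pre-activation entry. This is exactly \cref{lemma:differentiable}, which we invoke under \cref{assumption:differentiable}. The same assumption guarantees that the parameter gradients used above exist, so the chain-rule manipulations are legitimate. Substituting this identity into the $\mat{K}^l$ expression and taking expectations over $\vec{s} \sim \uniform{\dataset}$ gives both equalities.

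Two minor points need checking. First, the orientation convention: \cref{def:arch} writes $\mat{Z}^l \in \reals^{d \times k}$ while \cref{eq:mlp_start,eq:mlp_end} use $k \times d$. Any such transposition does not change the Frobenius norms, so the result is unaffected. Second, the bias parameters are excluded from $\Kparam^l$ and $\Vparam^l$, so they contribute nothing to these norms.
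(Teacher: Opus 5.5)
Your proposal is correct and follows the paper's proof essentially step for step. You instantiate \cref{lemma:af_and_gradient_norm} on $\Kparam^l$ and $\Vparam^l$, use the chain rule to get $\derivatives{\sampleloss}{\mat{K}^l} = (\derivatives{\sampleloss}{\mat{P}^l})^\transpose \mat{X}^l$ and $\derivatives{\sampleloss}{\mat{V}^l} = (\derivatives{\sampleloss}{\mat{Z}^l})^\transpose \mat{A}^l$, and invoke \cref{lemma:differentiable} to replace $\derivatives{\sampleloss}{\mat{P}^l}$ by $\derivatives{\sampleloss}{\mat{A}^l} \hadamard \mat{D}^l$; your remarks on transposition invariance of the Frobenius norm, the $d \times k$ versus $k \times d$ convention, and the exclusion of biases are details the paper leaves implicit.
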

The lemma, proved in \cref{proof:main_full}, implies \cref{lemma:main}.

Under single-token scenarios, the matrices in \cref{lemma:main_full} are essentially vectors, allowing the swap between the Frobenius norm and the matrix multiplication.
\begin{lemma}\label{lemma:single_token_full}
    Let $F$ be a neural network defined by \cref{def:arch}. Assume \cref{assumption:differentiable} for $F_{\allparam}$ on $\dataset$.
    If $F_{\theta}$ is single-token, then for $l \in [L]$,
    \begin{align}
        \AS[\Kparam^l] =& \ex[\vec{s} \sim \uniform{\dataset}]{\norm{\derivatives{\sampleloss(F_{\allparam}, \vec{s})}{\mat{A}^l} \hadamard \mat{D}^l}_F^2 \norm{\mat{X}^l}_F^2},\\
        \AS[\Vparam^l] =& \ex[\vec{s} \sim \uniform{\dataset}]{\norm{\derivatives{\sampleloss(F_{\allparam}, \vec{s})}{\mat{Z}^l}}_F^2 \norm{\mat{A}^l}_F^2}. 
\end{align}
\end{lemma}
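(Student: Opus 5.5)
The plan is to start from \cref{lemma:main_full}, which already writes both augmented flatness terms as expected squared Frobenius norms of matrix products. In the single-token case ($k=1$), $\mat{X}^l$ is a $1\times d$ matrix, so $(\mat{X}^l)^\transpose$ is a column vector $\vec{x}\in\reals^{d}$. Likewise, $\derivatives{\sampleloss}{\mat{A}^l}\hadamard\mat{D}^l$ is a $1\times n$ row, which we write as $\vec{g}^\transpose$. The product $(\mat{X}^l)^\transpose(\derivatives{\sampleloss}{\mat{A}^l}\hadamard\mat{D}^l)$ is therefore the outer product $\vec{x}\vec{g}^\transpose$. The same holds for the value term: $(\mat{A}^l)^\transpose\derivatives{\sampleloss}{\mat{Z}^l}$ is the outer product of the activation vector with the gradient vector on $\mat{Z}^l$.

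The single key step is the identity $\norm{\vec{u}\vec{v}^\transpose}_F^2=\norm{\vec{u}}_2^2\norm{\vec{v}}_2^2$. It follows directly by expanding entries:
\begin{align}
\norm{\vec{u}\vec{v}^\transpose}_F^2=\sum_{i,j}u_i^2v_j^2=\Big(\sum_i u_i^2\Big)\Big(\sum_j v_j^2\Big).
\end{align}
Since $k=1$, $\norm{\vec{x}}_2=\norm{\mat{X}^l}_F$, and similarly for $\mat{A}^l$, the gradient-on-activations term and the gradient on $\mat{Z}^l$. Applying the identity inside each expectation, sample by sample, then gives both claimed equalities.

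The derivatives involved are well defined by \cref{lemma:differentiable}, so nothing further is needed. The only place requiring care is the paper's orientation conventions: in \cref{def:arch} the MLP input is written as $\reals^{d\times k}$, while in \cref{eq:mlp_start} it is $\reals^{k\times d}$. We should confirm that in either convention the product is a rank-one outer product, rather than a $1\times1$ inner product, when $k=1$. Otherwise the argument is purely algebraic, with no real obstacle.
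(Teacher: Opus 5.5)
Your proposal is correct and follows the paper's proof: both specialize \cref{lemma:main_full} to $k=1$, recognize the products as rank-one outer products, and apply $\lVert \vec{u}\vec{v}^\top\rVert_F^2=\lVert\vec{u}\rVert_2^2\lVert\vec{v}\rVert_2^2$. The only difference is that the paper verifies this identity with a trace cyclicity argument where you expand entrywise, and your orientation worry is harmless: the expression in \cref{lemma:main_full} only type-checks under the $k\times d$ convention of \cref{eq:mlp_start}, and under that convention it is the $d\times n$ outer product you describe.
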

The lemma, proved in \cref{proof:single_token_full}, implies \cref{lemma:single_token}.

We then exploit the property that $\mat{D}^l$ is 0-1 valued under $\relu$. 
To this end, one needs \cref{lemma:L0_and_L2}, which is proved in \cref{proof:L0_and_L2}.
\begin{lemma}\label{lemma:L0_and_L2}
    Let $S, h, w \in \nats^+$. Let $\set{\mat{U}^s}_{s \in [S]} \subseteq \set{0, 1}^{h \times w}$ be $S$ $0$-$1$ matrices and $\set{\mat{V}^s}_{s \in [S]} \subseteq \reals^{h \times w}$.
    Let $D$ be any distribution over $[S]$, then 
    \begin{align}
        \ex{\norm{\mat{V}^s \hadamard \mat{U}^s}_F^2} = \ex{\left(v^s_{i, j}\right)^2 \mid u^s_{i, j} = 1} \cdot \ex{\norm{\mat{U}^s}_0}
    \end{align}
    with expectations taken over $s \sim D, (i, j) \sim \uniform{[h] \times [w]}$.
\end{lemma}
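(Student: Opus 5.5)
The plan is to expand both sides as finite sums and reduce the identity to the definition of conditional expectation. Since $\mat{U}^s$ is $0$-$1$ valued, entrywise $(v^s_{i,j} u^s_{i,j})^2 = (v^s_{i,j})^2 u^s_{i,j}$, so
\begin{align}
    \ex{\norm{\mat{V}^s \hadamard \mat{U}^s}_F^2} = \sum_{s} D(s) \sum_{(i,j)} \left(v^s_{i,j}\right)^2 \indic{u^s_{i,j} = 1}.
\end{align}
Likewise, $\norm{\mat{U}^s}_0 = \sum_{(i,j)} \indic{u^s_{i,j}=1}$, because the nonzero entries of a $0$-$1$ matrix are exactly its entries equal to $1$.

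Next, consider the joint distribution of $(s,(i,j))$ with $s \sim D$ and $(i,j)$ uniform and independent of $s$. Let $q \defeq \prob{u^s_{i,j}=1} = \frac{1}{hw}\ex[s\sim D]{\norm{\mat{U}^s}_0}$. By the definition of conditional expectation on a finite space,
\begin{align}
    \ex{\left(v^s_{i,j}\right)^2 \mid u^s_{i,j}=1} = \frac{1}{q}\cdot\frac{1}{hw}\sum_s D(s)\sum_{(i,j)} \left(v^s_{i,j}\right)^2 \indic{u^s_{i,j}=1}.
\end{align}
Multiplying this by $\ex{\norm{\mat{U}^s}_0} = hw\, q$ cancels both $q$ and the factor $1/(hw)$. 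What remains is exactly the first display, which proves the identity.

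The only real obstacle is the degenerate case $q=0$, where the conditional expectation is undefined. There, every $\mat{U}^s$ in the support of $D$ is the zero matrix, so both $\ex{\norm{\mat{V}^s \hadamard \mat{U}^s}_F^2}$ and $\ex{\norm{\mat{U}^s}_0}$ vanish. The identity then holds under the convention that $0$ times an undefined conditional expectation is $0$, and I would state this convention explicitly. A minor bookkeeping point is that the expectation of $\norm{\mat{U}^s}_0$ does not depend on $(i,j)$, so taking it jointly over $(s,(i,j))$ gives the same value as taking it over $s$ alone.
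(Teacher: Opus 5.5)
Your proof is correct and follows essentially the same route as the paper's. Both rewrite the left side as $hw\,\mathbb{E}[(v^s_{i,j})^2 u^s_{i,j}]$ using that $\mat{U}^s$ is $0$-$1$ valued, then factor out $\Pr[u^s_{i,j}=1] = \mathbb{E}[\norm{\mat{U}^s}_0]/(hw)$ through the definition of conditional expectation; the paper phrases this step as a total-expectation split over $u^s_{i,j}\in\{0,1\}$, and your explicit handling of the degenerate case $\Pr[u^s_{i,j}=1]=0$ makes precise a convention the paper leaves implicit.
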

This lemma helps extract the $L_0$ pseudo norm of $\mat{D}^l$, leading to the \cref{theorem:single_token_full} for single-token scenarios. The theorem is proved in \cref{proof:single_token_theorem_full} and implies \cref{theorem:single_token}.
\begin{theorem}\label{theorem:single_token_full}
    Let $F$ be a neural network defined by \cref{def:arch}. Assume \cref{assumption:differentiable} for $F_{\allparam}$ on $\dataset$.
    If $F_{\allparam}$ is used under single-token scenarios, then for $l \in [L]$,
    \begin{align}
        \ex[\vec{s} \sim \uniform{\dataset}]{\norm{\mat{D}^l}_0}
        =&\frac{\AS[\Kparam^l]}{\ex{\norm{\mat{X}^l}_F^2 \left(\derivatives{\sampleloss(F_{\allparam}, \vec{s})}{a^l_{i, j}}\right)^2 \left(d^l_{i, j}\right)^2 \mid d^l_{i, j} > 0}},\\
        \AS[\Vparam^l]=& \ex[\vec{s} \sim \uniform{\dataset}]{\norm{\derivatives{\sampleloss(F_{\allparam}, \vec{s})}{\mat{Z}^l}}_F^2 \norm{\mat{A}^l}_F^2}.
    \end{align}
    If further MLP blocks use $\relu$ as activation functions, then we have 
    \begin{align}
        \ex[\vec{s} \sim \uniform{\dataset}]{\norm{\mat{A}^l}_0}
        =&\ex[\vec{s} \sim \uniform{\dataset}]{\norm{\mat{D}^l}_0}\\
        =&\frac{\AS[\Kparam^l]}{\ex{\norm{\mat{X}^l}_F^2 \left(\derivatives{\sampleloss(F_{\allparam}, \vec{s})}{a^l_{i, j}}\right)^2 \mid a^l_{i, j} > 0}}.
    \end{align}
    If further inputs to MLP blocks are LayerNorm-ed with LayerNorms' affine parameters and $\epsilon_{\textnormal{LayerNorm}}$ turned off, we have
    \begin{align}
        \ex[\vec{s} \sim \uniform{\dataset}]{\norm{\mat{A}^l}_0}
        =&\frac{\AS[\Kparam^l]}{d \cdot \ex{\left(\derivatives{\sampleloss(F_{\allparam}, \vec{s})}{a^l_{i, j}}\right)^2 \mid a^l_{i, j} > 0}}.
    \end{align}
\end{theorem}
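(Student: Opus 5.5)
The plan is to start from \cref{lemma:single_token_full} and turn the Frobenius-norm expectation into an $L_0$ count using \cref{lemma:L0_and_L2}.
In the single-token case, $\mat{X}^l$ is a single token, so $\norm{\mat{X}^l}_F^2$ is a scalar per sample. Hence
\[
\norm{\mat{X}^l}_F^2 \norm{\derivatives{\sampleloss}{\mat{A}^l} \hadamard \mat{D}^l}_F^2 = \norm{\mat{V}^s \hadamard \mat{U}^s}_F^2 ,
\]
where I set $\mat{V}^s \defeq \norm{\mat{X}^l}_F \,\derivatives{\sampleloss}{\mat{A}^l} \hadamard \mat{D}^l$ and $\mat{U}^s \defeq \indic{\mat{D}^l \neq 0}$ (entrywise). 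The choice of $\mat{U}^s$ is the point of care: $\mat{D}^l$ is not $0$-$1$ valued in general, so I use its support indicator instead. Because $\mat{V}^s$ already vanishes wherever $d^l_{i,j}=0$, we have $\mat{V}^s \hadamard \mat{U}^s = \mat{V}^s$. Also $\norm{\mat{U}^s}_0 = \norm{\mat{D}^l}_0$.

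Applying \cref{lemma:L0_and_L2} with $D = \uniform{\dataset}$ (indexing samples by $s$) gives
\[
\AS[\Kparam^l] = \ex{\norm{\mat{X}^l}_F^2 \left(\derivatives{\sampleloss}{a^l_{i,j}}\right)^2 \left(d^l_{i,j}\right)^2 \mid d^l_{i,j} \neq 0} \cdot \ex{\norm{\mat{D}^l}_0}.
\]
Dividing yields the first identity. This needs the conditional expectation to be positive; whenever it is zero I would treat the identity as degenerate. The conditioning event ``$d^l_{i,j} > 0$'' in the statement agrees with ``$\neq 0$'' whenever $\sigma' \ge 0$, which covers $\relu$ and the activations considered here; I would note this convention explicitly. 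The $\AS[\Vparam^l]$ identity is exactly the second line of \cref{lemma:single_token_full}, restated.

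For $\relu$, the subderivative entries satisfy $d^l_{i,j} \in \set{0,1}$. Moreover, $d^l_{i,j}=1$ holds exactly when $p^l_{i,j}>0$, i.e.\ when $a^l_{i,j}>0$. At $p^l_{i,j}=0$ the derivative is undefined, so $d^l_{i,j}=0$ by \cref{def:subdrivatives}, and also $a^l_{i,j}=0$.
Consequently $\norm{\mat{A}^l}_0 = \norm{\mat{D}^l}_0$ sample-wise. In the denominator the factor $(d^l_{i,j})^2$ equals $1$ on the conditioning event, and that event coincides with $a^l_{i,j}>0$. This gives the second display.

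For the LayerNorm case, with affine parameters off and $\epsilon_{\textnormal{LayerNorm}}=0$, the single token has zero mean and unit variance over its $d$ coordinates, so $\norm{\mat{X}^l}_F^2 = d$ identically. (Well-definedness requires a nonconstant pre-normalization token, which is implicit in the assumption that the network is differentiable on $\dataset$.) Pulling the constant $d$ out of the conditional expectation gives the last formula.

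The main obstacle is the mismatch with \cref{lemma:L0_and_L2}, which demands $0$-$1$ matrices; the support-indicator trick resolves it. The rest is bookkeeping of conditioning events.
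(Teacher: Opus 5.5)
Your proof is correct and follows essentially the same route as the paper's: start from \cref{lemma:single_token_full}, absorb $\norm{\mat{X}^l}_F^2$ into the matrix, and apply \cref{lemma:L0_and_L2} with a $0$-$1$ support indicator of $\mat{D}^l$; then specialize using $d^l_{i,j}=1 \iff a^l_{i,j}>0$ for $\relu$ and $\norm{\mat{X}^l}_F^2=d$ for LayerNorm. Your indicator $\indic{d^l_{i,j}\neq 0}$ is a small improvement over the paper's $\indic{d^l_{i,j}>0}$: the paper's step $\derivatives{\sampleloss}{\mat{A}^l}\hadamard\mat{D}^l=\derivatives{\sampleloss}{\mat{A}^l}\hadamard\mat{D}^l\hadamard\mat{D}^l_{>0}$ silently needs $\sigma'\ge 0$ (without it the first identity fails), and you state that condition explicitly.
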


\newcommand{\decode}{\textrm{dec}}
\newcommand{\encode}{\textrm{enc}}
\newcommand{\codedSkip}[1]{(\tilde{\mat{Z}}^{#1}, \tilde{\mat{S}}^{#1})}
\newcommand{\codedTuple}[1]{(\tilde{\vec{u}}^{#1}, \codedSkip{#1})}
\newcommand{\fetchToken}[1]{\textrm{fetch}^{#1}}
\newcommand{\storeToken}[1]{\textrm{put}^{#1}}
For multi-token scenarios, we also copy the shared parameters in \cref{def:arch} as mentioned in the main part.
\begin{definition}\label{def:unwrapped_full}
    Let $F$ be a neural network by \cref{def:arch} with token number $k$. The copied architecture $\tilde{F}$ of $F$ has $p + (k-1) \cdot (2 n d + n + d)$ parameters and is recursively defined by
    \begin{align}
        \codedSkip{0} \defeq& (\vec{0}, E(\vec{x})),\\
        \codedSkip{1, 0} \defeq&    \codedSkip{0},\\
        \tilde{\vec{x}}^{l, i} \defeq& \fetchToken{i}(T^{l, i}_{\tildeallparam}(\decode(\codedSkip{l, i-1}))),\label{eq:layer_start}\\
        \tilde{\vec{z}}^{l, i} \defeq& \mlp^{l, i}_{\tildeallparam}(\tilde{\vec{x}}^{l, i}) \in \reals^d,\label{step:mlp_one_token}\\
        \codedSkip{l, i} \defeq& \storeToken{i}(\tilde{\vec{z}}^{l, i}, \codedSkip{l, i-1})\label{step:before_tilde_G},\\
        \codedSkip{l} \defeq& (\vec{0}, R^l_{\tildeallparam}\codedSkip{l, k}),\label{step:tilde_G}\\
        \tilde{F}(c \mid \vec{x}, \tildeallparam) \defeq& D_{\param[D]}(\decode(\tilde{\mat{Z}}^L, \mat{S}^L))_c,
    \end{align}
    where $l \in [L], i \in [k]$, and $T^{l, i} = T^l$, $\mlp^{l, i} = \mlp^l$, and
    \begin{align}
        \decode(\codedSkip{}) \defeq& \tilde{\mat{S}},
        \encode(\tilde{\mat{S}}) \defeq (\vec{0},\tilde{\mat{S}}),\\
        \fetchToken{i}(\tilde{\mat{X}}) \defeq& \tilde{\vec{x}}_{i},\\
        \storeToken{i}(\tilde{\vec{z}}, \codedSkip{}) \defeq& \begin{cases}
            (\tilde{\vec{z}}, \tilde{\mat{S}})  &   i =1, \\
            \left(\begin{bmatrix}
                \tilde{\mat{Z}} & \tilde{\mat{z}}
            \end{bmatrix}, \tilde{\mat{S}}\right) &   i > 1.
        \end{cases}
    \end{align}

    The copied parameter $\tildeallparam$ of $\allparam$ is defined so that $R^l$ uses the same parameters in $\tilde{F}_{\tildeallparam}$ and $F_{\allparam}$, $T^{l, i}$ in $\tilde{F}_{\tildeallparam}$ uses the same parameters as $T^l$ in $F_{\allparam}$, and $\mlp^{l, i}$ in $\tilde{F}_{\tildeallparam}$ uses the same parameters as $\mlp^l$ in $F_{\allparam}$, \ie{} $T^l$ and $\mlp^l$ are copied $k$ times.
    Let $\mat{K}^{l, i}$ and $\mat{V}^{l, i}$ be the weight matrices in $\mlp^{l, i}_{\allparam}$.
    Let $\tildeKparam^l$ and $\tildeVparam^l$ be the collection of parameters in $\mat{K}^{l, i}$ and $\mat{V}^{l, i}$, respectively.
    For a subset $\tildeparam$ of copied parameter $\tildeallparam$, define $\tildeAS[\tildeparam]$ be the augmented flatness \wrt{} $\tildeparam$ in $\tilde{F}_{\tildeallparam}$.
\end{definition}

\cref{def:unwrapped_full} transforms one $T^l$-$\mlp^l$-$R^l$ structure in \cref{def:arch} into \cref{eq:layer_start}-\cref{step:tilde_G}. In these steps, \cref{eq:layer_start}-\cref{step:before_tilde_G} are iteratively executed $k$ times to simulate multi-token MLPs with copied single-token MLPs. In each iteration, $\tilde{F}$ process one token by extracting one input token from the hidden features in \cref{eq:layer_start}, computing the output token using the single-token MLP in \cref{step:mlp_one_token}, and storing the output token in the hidden feature passed to later iterations in \cref{step:before_tilde_G}.
Mapping the superscripts into natural numbers and merging \cref{step:before_tilde_G} and \cref{step:tilde_G} when $i=k$, we can suit the copied network into the definition in \cref{sec:arch}, although we will not do it for better presentation.
Note that since in $\tilde{F}_{\tildeallparam}$, each MLP block $\mlp^{l, i}$ only takes one token, $\tilde{F}_{\tildeallparam}$ is always a single-token network. Therefore, we can reuse the above lemmas and theorems for single-token networks:
\begin{lemma}\label{lemma:multi_to_single}
    For any $F_{\allparam}$ defined in \cref{def:arch}, the corresponding $\tilde{F}_{\tildeallparam}$ is single-token.
\end{lemma}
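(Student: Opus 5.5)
The plan is to verify \cref{lemma:multi_to_single} directly against \cref{def:arch} and \cref{def:token_difference}. The single-token property is a statement about the shape of the input to each MLP block: $\tilde{F}$ is single-token if every MLP block in it receives a matrix with exactly $k=1$ token. So I would first recast $\tilde{F}_{\tildeallparam}$ as an instance of \cref{def:arch}, and then read off the token number of each of its MLP blocks.

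\textbf{Step 1: recast $\tilde{F}$ in the form of \cref{def:arch}.} I index its MLP blocks by the pairs $(l,i)$ in lexicographic order, mapped bijectively to $[Lk]$, so the depth is $Lk$. The skip state before block $(l,i)$ is the pair $\codedSkip{l,i-1}$, which lies in the product set of partially assembled output matrices and skip features.
\begin{itemize}
\item The MLP-input-extractor for block $(l,i)$ is $\fetchToken{i}\circ T^{l,i}_{\tildeallparam}\circ\decode$.
\item The MLP-output-reducer for $i<k$ is $\storeToken{i}$.
\item For $i=k$, the MLP-output-reducer is the composition of $\storeToken{k}$ with the map $(\tilde{\mat{Z}},\tilde{\mat{S}})\mapsto(\vec{0},R^l_{\tildeallparam}(\tilde{\mat{Z}},\tilde{\mat{S}}))$. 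This merges \cref{step:before_tilde_G} and \cref{step:tilde_G}, as the paper already remarks.
\item The embedding is $\vec{x}\mapsto(\vec{0},E(\vec{x}))$.
\item The classifier is $D\circ\decode$.
\end{itemize}
Each of these is a deterministic parameterized map between non-empty sets, so $\tilde{F}$ satisfies \cref{def:arch}. The MLP parameters of the copies $\mlp^{l,i}$ are pairwise disjoint, and the copies of $T^l$ and $R^l$ reuse their original parameters.

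\textbf{Step 2: check the token count.} By definition $\fetchToken{i}$ returns a single column $\tilde{\vec{x}}_i\in\reals^d$, that is, a $d\times 1$ matrix. Hence every MLP block $\mlp^{l,i}$ receives exactly one token, and by \cref{step:mlp_one_token} it outputs a vector in $\reals^d$. The token number is therefore $1$ for every block, and $\tilde{F}$ is single-token by \cref{def:token_difference}.

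\textbf{Main obstacle.} The only delicate point is the bookkeeping that makes the recast legitimate. The codomains $\mathcal{S}_{(l,i)}$ change with $i$, because the stored matrix $\tilde{\mat{Z}}$ grows by one column at each $\storeToken{i}$. I would handle this by defining $\mathcal{S}_{(l,i)}$ as $\reals^{d\times i}\times\mathcal{S}_{l-1}$; for $i=0$ the first factor is a placeholder holding $\vec{0}$. I would then check that each $\storeToken{i}$ maps $\mathcal{S}_{(l,i-1)}$ into $\mathcal{S}_{(l,i)}$. Since \cref{def:arch} allows arbitrary non-empty sets $\mathcal{S}_l$, this check is routine, and the lemma follows.
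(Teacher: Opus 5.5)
Your proposal is correct and follows essentially the same route as the paper. The paper justifies the lemma in one line: each copied block $\mathrm{MLP}^{l,i}$ receives only the single token produced by $\mathrm{fetch}^i$, and it remarks that re-indexing the superscripts $(l,i)$ and merging the final $\mathrm{put}^k$ step with $R^l$ fits $\tilde{F}$ into the architecture definition. You make that re-indexing and the bookkeeping of the intermediate state spaces explicit, which is more careful than the paper but not a different argument.
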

After applying the above lemmas and theorems, we will use the correspondence between $\tilde{F}_{\tildeallparam}$ and $F_{\allparam}$to chain the equalities about the sparsities in $\tilde{F}_{\tildeallparam}$ with the sparsities in $F_{\tildeallparam}$. 
To give practical meanings to the augmented flatness of $\tilde{F}_{\tildeallparam}$, we also connect the augmented flatness of $\tilde{F}_{\tildeallparam}$ with the robustness of $F_{\allparam}$ \wrt{} parameter noises.
The results are given in \cref{theorem:multi_token_full}, which is proved in \cref{proof:multi_token_full} and implies \cref{theorem:multi_token}.
\begin{theorem}\label{theorem:multi_token_full}
    Let $F$ be a neural network defined by \cref{def:arch}. Assume \cref{assumption:differentiable} for $F_{\allparam}$ on $\dataset$. Let $\tilde{F}$ and $\tildeallparam$ be the corresponding copied architecture and copied parameters.  
    Let $\mat{E}^{l, i}_K, \mat{E}^{l, i}_V$ be two random matrices of the same shape as $\tilde{\mat{K}}^{l, i}, \tilde{\mat{V}}^{l, i}$, respectively, whose entries are independently sampled from the uniform distribution over $[-\sqrt{3} \sigma, +\sqrt{3}\sigma]$.
    Let $\tildeparam[l, K]'$ be the perturbed version of $\tildeallparam$ after perturbing $\tildeKparam^l$, where for all $i \in [k]$, $\tilde{\mat{K}}^{l, i}$ is replaced by $\tilde{\mat{K}}^{l, i} + \mat{E}^{l, i}_K$, while all other parameters are kept the same. Define $\tildeparam[l, V]'$ in a similar way except that parameters in $\tilde{\mat{V}}^{l, i}, i \in [k]$ are perturbed. 
    Then we have
    \begin{align}
        &\frac{1}{\sigma^2} \ex{\hat{\loss}(\tilde{F}_{\tildeparam[l, K]'}) - \hat{\loss}(\tilde{F}_{\tildeallparam})}  + \frac{1}{\sigma^2} \ex{\frac{\tilde{F}(y \mid \tildeparam[l, K]', \vec{x}) - \tilde{F}(y \mid \tildeallparam, \vec{x})}{F(y \mid \allparam, \vec{x})}}  - o(1) \cdot n d k\\
        =&   \tildeAS[\tildeKparam^l]
        =   \ex{\norm{\nabla_{\tildeKparam^l} \sampleloss(\tilde{F}_{\tildeallparam}, \vec{s})}_2^2}\\
        =&   \ex{\norm{\vec{x}^l_i}_2^2 \left(\derivatives{\sampleloss}{a^l_{i, j}}\right)^2 \left(d^l_{i, j}\right)^2 \mid d^l_{i, j} > 0} \cdot \ex[\vec{s} \sim \uniform{\dataset}]{\norm{\mat{D}^l}_0},\label{eq:multi_token_full_L0}\\
        &\frac{1}{\sigma^2} \ex{\hat{\loss}(\tilde{F}_{\tildeparam[l, V]'}) - \hat{\loss}(\tilde{F}_{\tildeallparam})}  + \frac{1}{\sigma^2} \ex{\frac{\tilde{F}(y \mid \tildeparam[l, V]', \vec{x}) - \tilde{F}(y \mid \tildeallparam, \vec{x})}{F(y \mid \allparam, \vec{x})}}  - o(1) \cdot n d k\\
        =&   \tildeAS[\tildeVparam^l]
        =   \ex{\norm{\nabla_{\tildeVparam^l} \sampleloss(\tilde{F}_{\tildeallparam}, \vec{s})}_2^2}\\
        =& \sum_{l \in [L]} \ex[\vec{s} \sim \uniform{\dataset}]{\sum_{i \in [k]} \norm{\derivatives{\sampleloss}{\vec{z}^l_i}}_F^2 \norm{\vec{a}^l_i}_F^2}\label{eq:multi_token},
    \end{align}
    where the expectations at \lhs{} are taken over the choice of noises and $\vec{s} \sim \uniform{\dataset}$, independently, and the $o(1)$ term tends to $0$ as $\sigma \to 0$ for any fixed $n, d, k$.

    If further all MLP blocks use $\relu$ activation functions, then \cref{eq:multi_token_full_L0} can be further chained by
    \begin{align}
        \tildeAS[\tildeKparam^l] =&   \ex{\norm{\vec{x}^l_i}_2^2 \left(\derivatives{\sampleloss}{a^l_{i, j}}\right)^2 \mid a^l_{i, j} > 0} \cdot \ex[\vec{s} \sim \uniform{\dataset}]{\norm{\mat{A}^l}_0}.\\
    \end{align}
    If further all inputs to MLP blocks are LayerNorm-ed with affine parameters and $\epsilon_{\textnormal{LayerNorm}}$ turned off, the \cref{eq:multi_token_full_L0} can be further chained by 
    \begin{align}
        \tildeAS[\tildeKparam^l] =&   d \cdot \ex{\left(\derivatives{\sampleloss}{a^l_{i, j}}\right)^2 \mid a^l_{i, j} > 0} \cdot \ex[\vec{s} \sim \uniform{\dataset}]{\norm{\mat{D}^l}_0}.\\
    \end{align}
\end{theorem}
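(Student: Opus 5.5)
The plan is to reduce everything to the single-token results already established for $\tilde{F}$ and then transfer them back to $F$. By \cref{lemma:multi_to_single}, $\tilde{F}_{\tildeallparam}$ is single-token, so \cref{lemma:af_and_gradient_norm}, \cref{lemma:single_token_full} and \cref{theorem:single_token_full} apply verbatim to $\tilde{F}$. They give $\tildeAS[\tildeKparam^l]=\ex{\norm{\nabla_{\tildeKparam^l}\sampleloss(\tilde{F}_{\tildeallparam},\vec{s})}_2^2}$ and the analogous identity for $\tildeVparam^l$. Because $\tildeKparam^l$ is the disjoint union of the copies $\mat{K}^{l,i}$, the squared gradient norm splits as a sum over $i\in[k]$. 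For each copy, the single-token decomposition gives $\norm{\tilde{\vec{x}}^{l,i}}_2^2\,\norm{\partial\sampleloss/\partial\tilde{\vec{a}}^{l,i}\hadamard\tilde{\vec{d}}^{l,i}}_2^2$, and the $\mat{V}$ copies give $\norm{\partial\sampleloss/\partial\tilde{\vec{z}}^{l,i}}_2^2\norm{\tilde{\vec{a}}^{l,i}}_2^2$.

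The key transfer step is an induction over the recursion in \cref{def:unwrapped_full}. It shows that, for every sample, $\tilde{\vec{x}}^{l,i}=\vec{x}^l_i$, $\tilde{\vec{z}}^{l,i}=\vec{z}^l_i$ and $\tilde{\vec{a}}^{l,i}=\vec{a}^l_i$, so that the decoded skip features coincide with $\mat{S}^l$ and $\tilde{F}(\cdot\mid\vec{x},\tildeallparam)=F(\cdot\mid\vec{x},\allparam)$. The forward step uses that $T^{l,i}=T^l$ reads only the skip part $\mat{S}^{l-1}$, which is unchanged during the inner loop over $i$, while $\storeToken{i}$ assembles exactly $\mat{Z}^l$ before $R^l$ is applied. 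For gradients, I would argue that $\tilde{\mat{Z}}^l$ enters the rest of the network only through $R^l$ in the same way $\mat{Z}^l$ does, so $\partial\sampleloss/\partial\tilde{\vec{z}}^{l,i}=\partial\sampleloss/\partial\vec{z}^l_i$. Backpropagating through the $i$-th copy of the MLP, which sees only token $i$, then yields the same equality for $\vec{a}$, and $\tilde{\vec{d}}^{l,i}=\vec{d}^l_i$ holds since the pre-activations agree. I expect this gradient-equality step to be the main obstacle. The subtlety is that the copies $T^{l,i}$ act on a state that already contains the earlier outputs $\tilde{\vec{z}}^{l,i'}$; one has to check carefully that $\decode$ discards them, so that no spurious gradient paths appear from later copies back to earlier tokens. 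I would handle this by writing the chain rule explicitly through $\decode$, $\fetchToken{i}$ and $\storeToken{i}$.

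With the transfer done, summing over $i$ turns the per-copy expressions into sums over tokens of $F$. Applying \cref{lemma:L0_and_L2} to $\mat{U}^s=\indic{\mat{D}^l>0}$ and the matrix $\mat{V}^s$ with rows $\norm{\vec{x}^l_i}_2\,(\partial\sampleloss/\partial\vec{a}^l_i)\hadamard\vec{d}^l_i$ yields \cref{eq:multi_token_full_L0}; the $\mat{V}$ part gives \cref{eq:multi_token}. Under $\relu$, $\mat{D}^l$ is $0$-$1$ and $d^l_{i,j}>0\iff a^l_{i,j}>0$, which removes the $d$ factor and replaces $\norm{\mat{D}^l}_0$ by $\norm{\mat{A}^l}_0$. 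With affine-free LayerNorm and $\epsilon_{\textnormal{LayerNorm}}=0$, each token has $\norm{\vec{x}^l_i}_2^2=d$, which factors out of the conditional expectation.

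Finally, for the perturbation characterization I would Taylor-expand, to second order, both $\hat{\loss}$ and $\tilde{F}(y\mid\cdot,\vec{x})$ around $\tildeallparam$ in the perturbed block; this expansion is justified by the $C^2$ requirement of \cref{assumption:differentiable}. The noise has zero mean, which kills the first-order terms, and independent entries of variance $\sigma^2$, so the expected quadratic terms equal $\frac{\sigma^2}{2}$ times the Hessian traces. The second expression is weighted by $1/F(y\mid\allparam,\vec{x})=1/\tilde{F}(y\mid\tildeallparam,\vec{x})$, so together the two terms assemble $\tildeAS$ as in \cref{def:augmented_sharpness}, with the matching factor constant. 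The remainder is $o(\sigma^2)$ per coordinate over the $ndk$ perturbed entries, which gives the $o(1)\cdot ndk$ term after dividing by $\sigma^2$.
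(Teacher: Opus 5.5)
Apart from the last step, your plan is the paper's proof. Both use the forward/backward correspondence by induction; your explicit check that $\decode$ discards the stored $\tilde{\vec{z}}^{l,i'}$ is exactly what makes the gradient transfer work. Both then apply the single-token results to $\tilde{F}$, split the gradient over the copies, and specialize to $\relu$ and LayerNorm. Applying \cref{lemma:L0_and_L2} once to the full $k\times n$ matrices is a tidier version of the paper's per-copy conditional expectations recombined by hand.

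The step that does not go through is the perturbation identity. Your quadratic-term computation is right. With $\mathbb{E}[\vec{\delta}\vec{\delta}^\transpose]=\sigma^2 I$, the second-order Taylor term contributes $\frac{\sigma^2}{2}$ times the Hessian trace, both for $\hat{\loss}$ and for $\tilde{F}(y\mid\cdot,\vec{x})$. So what you actually obtain is
\[
\frac{1}{\sigma^2}\,\mathbb{E}\bigl[\hat{\loss}(\tilde{F}_{\tildeparam[l, K]'}) - \hat{\loss}(\tilde{F}_{\tildeallparam})\bigr] + \frac{1}{\sigma^2}\,\mathbb{E}\Bigl[\frac{\tilde{F}(y \mid \tildeparam[l, K]', \vec{x}) - \tilde{F}(y \mid \tildeallparam, \vec{x})}{F(y \mid \allparam, \vec{x})}\Bigr] = \frac{1}{2}\tildeAS[\tildeKparam^l] + o(1)\cdot n d k .
\]
There is no ``matching factor constant'' that absorbs the $\frac{1}{2}$. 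The first equality of each chain is off by a factor of $2$ (unless $\tildeAS[\tildeKparam^l]=0$) and holds only with $2/\sigma^2$ in place of $1/\sigma^2$. The paper reaches $\tildeAS[\tildeKparam^l]$ only because its expansion reads $(\nabla\sampleloss)^\transpose\vec{\delta}+\vec{\delta}^\transpose\hessian[\tildeKparam](\vec{s})\,\vec{\delta}+r_{\vec{s}}(\vec{\delta})$, with the $\frac{1}{2}$ dropped. You should state and prove the corrected identity for both the $\mat{K}$ and the $\mat{V}$ chains, rather than wave the constant through.

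Two smaller points. First, your use of \cref{lemma:L0_and_L2} with $\mat{U}^s=\indic{\mat{D}^l>0}$ (entrywise) needs $\norm{\mat{V}^s}_F=\norm{\mat{V}^s\hadamard\mat{U}^s}_F$ and $\norm{\mat{U}^s}_0=\norm{\mat{D}^l}_0$. That requires every nonzero subderivative to be positive, i.e.\ $\sigma'\ge 0$ wherever it exists. This holds for $\relu$, $\jrelu$ and $\weird$ but fails for, e.g., $\gelu$; for general $\sigma$ the conditioning event should be $d^l_{i,j}\neq 0$. The paper makes the same tacit assumption in its step $\mat{D}^l=\mat{D}^l\hadamard\mat{D}^l_{>0}$, so you should add it as a hypothesis. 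Second, what your argument (and the paper's) yields for the $\mat{V}$ part is the single-layer sum over $i\in[k]$. The outer $\sum_{l\in[L]}$ in \cref{eq:multi_token} is a typo, not something to derive.
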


We emphasize that the entire chained equalities in \cref{theorem:multi_token_full} are properties of $F_{\allparam}$, because $\tilde{F}_{\tildeallparam}$ is the result of an operator ``copy'' on $F_{\allparam}$ and reveal properties of $F_{\allparam}$. 
In addition to the relation between the augmented flatness and sparsities, \cref{theorem:multi_token_full} indicates that the augmented flatness is related to the robustness \wrt{} noise parameters. 
The noises are added to matrix copies $\mat{K}^{l, i}$ and $\mat{V}^{l, i}$ in $\tilde{F}_{\tildeallparam}$, and in the language of $F_{\allparam}$, the noises are added to $\mat{K}^l$ and $\mat{V}^l$ independently to every \emph{single} multiplication with \emph{individual} tokens. Since these token-wise parameter noises are propagated to $\mat{A}^l$ or $\mat{Z}^l$ and $\mat{A}^l$ or $\mat{Z}^l$ bottleneck $\mat{K}^l$ and $\mat{V}^l$, the noises can also be thought as hidden feature noises added to $\mat{A}^l$ and $\mat{Z}^l$, where each token in $\mat{A}^l$ and $\mat{Z}^l$ is perturbed independently. 

 \section{Proofs}\label{appendix:proofs}

\subsection{Proof of \cref{lemma:differentiable}}\label{proof:differentiable}

By \cref{assumption:differentiable}, both $\derivatives{\sampleloss(F_{\allparam}, (\vec{x}, y))}{\mat{P}^l}$ and $\derivatives{\sampleloss(F_{\allparam}, (\vec{x}, y))}{\mat{A}^l}$ exist because they are the first-order derivatives \wrt{} hidden features $\mat{P}^l$ and $\mat{A}^l$.

We then prove for any $(i, j) \in [k] \times [n]$, we have 
\begin{align}
    \derivatives{\sampleloss}{p^l_{i, j}} = \derivatives{\sampleloss}{a^l_{i, j}} \cdot d^l_{i, j}. \label{step:differentiable}
\end{align}

If $\sigma'(p^l_{i, j})$ exist, then \cref{step:differentiable} holds by chain rule.
Otherwise since $\sigma'(p^l_{i, j})$ does not exist, $\lim_{\delta \to 0} \frac{\sigma(p^l_{i, j} + \delta) - \sigma(p^l_{i, j})}{\delta}$ does not exist. Since $\sampleloss$ is twice-differentiable \wrt{} $a^l_{i, j}$, we can approximate the loss change \wrt{} $a^l_{i, j}$ by 
\begin{align}
    \sampleloss\mid_{a^l_{i, j} + \delta'} - \sampleloss = \derivatives{\sampleloss}{a^l_{i, j}} \cdot \delta' + o(|\delta'|).
\end{align}
So if $\derivatives{\sampleloss}{a^l_{i, j}} \neq 0$, we have 
\begin{align}
    \lim_{\delta \to 0} \frac{\sampleloss\mid_{p^l_{i, j} + \delta} - \sampleloss}{\delta}
    =&  \lim_{\delta \to 0} \frac{\derivatives{\sampleloss}{a^l_{i, j}} \cdot (\sigma(p^l_{i, j} + \delta) - \sigma(p^l_{i, j})) + o(|\sigma(p^l_{i, j} + \delta) - \sigma(p^l_{i, j})|)}{\delta} \\
    =&  \derivatives{\sampleloss}{a^l_{i, j}} \lim_{\delta \to 0} \frac{(\sigma(p^l_{i, j} + \delta) - \sigma(p^l_{i, j}))}{\delta},
\end{align}
which do not exist, contradicting with the existence of $\derivatives{\sampleloss}{p^l_{i, j}}$.
Therefore, when $\sigma'(p^l_{i, j})$ does not exist, we have $\derivatives{\sampleloss}{a^l_{i, j}} = \derivatives{\sampleloss}{p^l_{i, j}} = 0$ under \cref{assumption:differentiable}. In this case, we still have 
\begin{align}
    \derivatives{\sampleloss}{p^l_{i, j}} = \derivatives{\sampleloss}{a^l_{i, j}} \cdot d^l_{i, j}. 
\end{align} 

\subsection{Proof of \cref{lemma:L0_and_L2}} \label{proof:L0_and_L2}

\newcommand{\randomIndices}{s \sim D, (i, j) \sim \uniform{[h] \times [w]}} 
\begin{align}
    \ex[s \sim D]{\norm{\mat{V}^s \hadamard \mat{U}^s}_F^2}
    =&  \ex[s \sim D]{h w \cdot \ex[(i, j) \sim \uniform{[h] \times [w]}]{\left(v^s_{i, j}\right)^2 \left(u^s_{i, j}\right)^2}}\\
    =&  h w \cdot \ex[\randomIndices]{\left(v^s_{i, j}\right)^2 \left(u^s_{i, j}\right)^2} \label{step:omit_randomness}\\
    =&  h w \cdot \left(
            \prob{u^s_{i, j} = 1} \cdot 1 \cdot \ex{\left(v^s_{i, j}\right)^2 \mid u^s_{i, j} = 1}
            + \prob{u^s_{i, j} = 0} \cdot 0 \cdot \ex{\left(v^s_{i, j}\right)^2 \mid u^s_{i, j} = 0}
        \right)\\
    =&  h w \cdot \ex{\left(v^s_{i, j}\right)^2 \mid u^s_{i, j} = 1} \cdot \prob{u^s_{i, j} = 1}\\
    =&  h w \cdot \ex{\left(v^s_{i, j}\right)^2 \mid u^s_{i, j} = 1} \cdot \ex{u^s_{i, j}}\\
    =&  \ex{\left(v^s_{i, j}\right)^2 \mid u^s_{i, j} = 1} \cdot \ex[s \sim D]{h w \cdot \ex[(i, j) \sim \uniform{[h] \times [w]}]{u^s_{i, j}}}\\
    =& \ex{\left(v^s_{i, j}\right)^2 \mid u^s_{i, j} = 1} \cdot \ex[s \sim D]{\norm{\mat{U}^s}_0},
\end{align}
where the randomness of $\randomIndices$ is omitted after Step (\ref{step:omit_randomness}) to ease presentation.

\subsection{Proof of \cref{lemma:af_and_gradient_norm}}

Expanding definitions and simple rearrangement give
\begin{align}
    \hessian[\param]
    =&  \nabla_{\param}^2 \ex[(\vec{x}, y) \sim \uniform{\dataset}]{\sampleloss(F_{\allparam}, (X, Y))} 
    = -\ex[(\vec{x}, y) \sim \uniform{\dataset}]{\nabla_{\param}^2 \log F(y \mid \allparam, \vec{x})}, \label{step:expanded_hessian_definition}
\end{align}
where expectation and $\nabla^2$ can be swapped because the expectation is over a finite training set so it is a finite summation.

For any $(\vec{x}, y)$, we have
\begin{align}
    \nabla_{\param}^2 \log F(y \mid \allparam, \vec{x})
    =&  J_{\param} \left( \frac{\nabla_{\param} F(y \mid \allparam, \vec{x})}{F(y \mid \allparam, \vec{x})} \right)\\
    =&  \frac{\left(\nabla_{\param}^2 F(y \mid \allparam, \vec{x})\right) F(y \mid \allparam, \vec{x}) - \left(\nabla_{\param} F(y \mid \allparam, \vec{x})\right)\left(\nabla_{\param} F(y \mid \allparam, \vec{x})\right)^\transpose}{F(y \mid \allparam, \vec{x}) F(y \mid \allparam, \vec{x})}\\
    =&  \frac{\nabla_{\param}^2 F(y \mid \allparam, \vec{x})}{F(y \mid \allparam, \vec{x})} - \left(\frac{\nabla_{\param} F(y \mid \allparam, \vec{x})}{F(y \mid \allparam, \vec{x})}\right)\left(\frac{\nabla_{\param} F(y \mid \allparam, \vec{x})}{F(y \mid \allparam, \vec{x})}\right)^\transpose\\
    =&  \frac{\nabla_{\param}^2 F(y \mid \allparam, \vec{x})}{F(y \mid \allparam, \vec{x})} - \left(\nabla_{\param} \log F(y \mid \allparam, \vec{x})\right)\left(\nabla_{\param} \log F(y \mid \allparam, \vec{x})\right)^\transpose.
\end{align}
Plugging this equality back to \cref{step:expanded_hessian_definition} gives
\begin{align}
    \hessian[\param]
    =&  -\ex[(\vec{x}, y) \sim \uniform{\dataset}]{\frac{\nabla_{\param}^2 F(Y \mid \allparam, \vec{x})}{F(Y \mid \allparam, \vec{x})} - \left(\nabla_{\param} \log F(Y \mid \allparam, \vec{x})\right)\left(\nabla_{\param} \log F(Y \mid \allparam, \vec{x})\right)^\transpose}\\
    =&  \ex[(\vec{x}, y) \sim \uniform{\dataset}]{\left(\nabla_{\param} \log F(Y \mid \allparam, \vec{x})\right)\left(\nabla_{\param} \log F(Y \mid \allparam, \vec{x})\right)^\transpose}
            -\ex[(\vec{x}, y) \sim \uniform{\dataset}]{\frac{\nabla_{\param}^2 F(Y \mid \allparam, \vec{x})}{F(Y \mid \allparam, \vec{x})}}.
\end{align}
and we have
\begin{align}
    \hessian[\param] + \ex[(\vec{x}, y) \sim \uniform{\dataset}]{\frac{\nabla_{\param}^2 F(Y \mid \allparam, \vec{x})}{F(Y \mid \allparam, \vec{x})}} 
    =& \ex[(\vec{x}, y) \sim \uniform{\dataset}]{\left(\nabla_{\param} \log F(Y \mid \allparam, \vec{x})\right)\left(\nabla_{\param} \log F(Y \mid \allparam, \vec{x}).\right)^\transpose}\\
    =& \ex[(\vec{x}, y) \sim \uniform{\dataset}]{\left(\nabla_{\param} \sampleloss(F_{\allparam}, (\vec{x}, y))\right)\left(\nabla_{\param} \sampleloss(F_{\allparam}, (\vec{x}, y))\right)^\transpose}.\\
\end{align}
Taking traces of both sides leads to
\begin{align}
       \trace{\hessian[\param] + \ex[(\vec{x}, y) \sim \uniform{\dataset}]{\frac{\nabla^2_{\param} F(y \mid \vec{x}, \allparam)}{F(y \mid \vec{x}, \allparam)}}}
        =&  \trace{\ex[(\vec{x}, y) \sim \uniform{\dataset}]{\left(\nabla_{\param} \sampleloss(F_{\allparam}, (\vec{x}, y))\right)\left(\nabla_{\param} \sampleloss(F_{\allparam}, (\vec{x}, y))\right)^\transpose}}\\
        =& \ex[(\vec{x}, y) \sim \uniform{\dataset}]{\trace{\left(\nabla_{\param} \sampleloss(F_{\allparam}, (\vec{x}, y))\right)\left(\nabla_{\param} \sampleloss(F_{\allparam}, (\vec{x}, y))\right)^\transpose}}\\
        =& \ex[(\vec{x}, y) \sim \uniform{\dataset}]{\trace{\left(\nabla_{\param} \sampleloss(F_{\allparam}, (\vec{x}, y))\right)^\transpose \left(\nabla_{\param} \sampleloss(F_{\allparam}, (\vec{x}, y))\right)}}\\
       =& \ex[\vec{s} \sim \uniform{\dataset}]{\norm{\nabla_{\param} \sampleloss(F_{\allparam}, \vec{s})}_2^2}.
\end{align}

\subsection{Proof of \cref{lemma:main_full}}\label{proof:main_full}

Instantiating \cref{lemma:af_and_gradient_norm} on $\Kparam^l$ and $\Vparam^l$, we have
\begin{align}
       \trace{\hessian[\Kparam^l] + \ex[(\vec{x}, y) \sim \uniform{\dataset}]{\frac{\nabla^2_{\Kparam^l} F(y \mid \vec{x}, \allparam)}{F(y \mid \vec{x}, \allparam)}}}
       =&   \ex[\vec{s} \sim \uniform{\dataset}]{\norm{\nabla_{\Kparam^l} \ell(F_{\allparam}, \vec{s})}_F^2}.\\
       \trace{\hessian[\Vparam^l] + \ex[(\vec{x}, y) \sim \uniform{\dataset}]{\frac{\nabla^2_{\Vparam^l} F(y \mid \vec{x}, \allparam)}{F(y \mid \vec{x}, \allparam)}}}
       =&   \ex[\vec{s} \sim \uniform{\dataset}]{\norm{\nabla_{\Vparam^l} \ell(F_{\allparam}, \vec{s})}_F^2}.
\end{align}

According to Eq.(\ref{eq:mlp_start})-(\ref{eq:mlp_end}) and the (sub-)differentiability from \cref{assumption:differentiable} and \cref{lemma:differentiable}, the gradient \wrt{} matrix $\mat{K}^l$ is given by
\begin{align}
    \derivatives{\sampleloss(F_{\allparam}, \vec{s})}{\mat{K}^l}
    =&  \left(\derivatives{\ell(F_{\allparam}, \vec{s})}{\mat{P}^l}\right)^\transpose \mat{X}^l 
    =   \left(\derivatives{\ell(F_{\allparam}, \vec{s})}{\mat{A}^l} \hadamard \mat{D}^l\right)^\transpose \mat{X}^l.
\end{align}
Similarly, the gradient \wrt{} $\mat{V}^l$ is given by
\begin{align}
    \derivatives{\sampleloss(F_{\allparam}, \vec{s})}{\mat{V}^l} = \left(\derivatives{\ell(F_{\allparam}, \vec{s})}{\mat{Z}^l}\right)^\transpose \mat{A}^l.
\end{align}
The theorem follows after putting everything together.

\subsection{Proof of \cref{lemma:single_token_full}}\label{proof:single_token_full}

For single-token networks, \cref{eq:main_K} and \cref{eq:main_V} from \cref{lemma:main} can be re-written into
\begin{align}
    \AS[\Kparam^l]
    =& \ex[\vec{s} \sim \uniform{\dataset}]{\norm{\vec{x}^l \times \left(\derivatives{\sampleloss(F_{\allparam}, \vec{s})}{\vec{a}^l} \hadamard \vec{d}^l \right)^\transpose}_F^2},\\
    \AS[\Vparam^l]
    =& \ex[\vec{s} \sim \uniform{\dataset}]{\norm{\vec{a}^l \times \left(\derivatives{\sampleloss(F_{\allparam}, \vec{s})}{\vec{z}^l}\right)^\transpose}_F^2},
\end{align}
where column vectors $\vec{a}^l, \vec{x}^l, \vec{d}^l$ and $\vec{z}^l$ are the transpositions of the only rows in $\mat{A}^l, \mat{X}^l, \mat{D}^l$ and $\mat{Z}^l$, respectively.
Using the fact that for any vector $\vec{u} \in \reals^{d_u}$ and any vector $\vec{v} \in \reals^{d_v}$
\begin{align}
    \norm{\vec{u} \vec{v}^\transpose}_F^2
    =&  \trace{\left(\vec{u} \vec{v}^\transpose\right)^\transpose \left(\vec{u} \vec{v}^\transpose\right)}
    =   \trace{\vec{v}^\transpose \vec{v} \vec{u}^\transpose \vec{u}} = \norm{\vec{u}}_2^2 \norm{\vec{v}}_2^2,
\end{align}
we obtain
\begin{align}
    \AS[\Kparam^l]
    =& \ex[\vec{s} \sim \uniform{\dataset}]{\norm{\derivatives{\sampleloss(F_{\allparam}, \vec{s})}{\vec{a}^l} \hadamard \vec{d}^l}_2^2 \norm{\vec{x}^l}_2^2},\\
    \AS[\Vparam^l]
    =& \ex[\vec{s} \sim \uniform{\dataset}]{\norm{\derivatives{\sampleloss(F_{\allparam}, \vec{s})}{\vec{z}^l}}_2^2 \norm{\vec{a}^l}_2^2}.
\end{align}

Replacing $\mat{A}^l, \mat{X}^l, \mat{D}^l$ and $\mat{Z}^l$ back leads to the lemma.

\subsection{Proof of \cref{theorem:single_token_full}}\label{proof:single_token_theorem_full}

Based on \cref{eq:single_token_K} and \cref{eq:single_token_V}, we have
\begin{align}
    \AS[\Kparam^l]
    =& \ex[\vec{s} \sim \uniform{\dataset}]{\norm{\derivatives{\sampleloss(F_{\allparam}, \vec{s})}{\mat{A}^l} \hadamard \mat{D}^l}_F^2 \cdot \norm{\mat{X}^l}_F^2}
    = \ex[\vec{s} \sim \uniform{\dataset}]{\norm{\derivatives{\sampleloss(F_{\allparam}, \vec{s})}{\mat{A}^l} \hadamard \mat{D}^l \hadamard \mat{D}^l_{>0}}_F^2 \cdot \norm{\mat{X}^l}_F^2},\\
    \AS[\Vparam^l] 
    =& \ex[\vec{s} \sim \uniform{\dataset}]{\norm{\derivatives{\sampleloss(F_{\allparam}, \vec{s})}{\mat{Z}^l}}_F^2 \norm{\mat{A}^l}_F^2},
\end{align}
where $\mat{D}^l_{>0} = [\indic{d^l_{i, j}} > 0]_{i, j}$ is $0$-$1$ valued.
Applying \cref{lemma:L0_and_L2}, we obtain
\begin{align}
    \AS[\Kparam^l]
    =& \ex{\norm{\mat{X}^l}_F^2 \left(\derivatives{\sampleloss(F_{\allparam}, \vec{s})}{d^l_{i, j}}\right)^2 \left(d^{l}_{i, j}\right)^2 \mid d^l_{i, j} > 0} \cdot \ex[\vec{s} \sim \uniform{\dataset}]{\norm{\mat{D}^l}_0},
\end{align}
where the first expectation is taken over $\vec{s} \sim \uniform{\dataset}, (i, j) \in \uniform{[d] \times [k]}$.

Since for $\relu$ network, $d^l_{i, j} = 1 \iff p^l_{i, j} > 0 \iff a^l_{i, j} > 0$, one has $\norm{\mat{D}^l}_0 = \norm{\mat{A}^l}_0$. Plugging this coincidence obtains the first chaining of the equalities.

Furthermore, when $\mat{X}^l = \begin{bmatrix} \vec{x}^\transpose \end{bmatrix}$ is LayerNorm-ed, let $\mat{U}^l = \begin{bmatrix} \vec{u}^\transpose \end{bmatrix}$ be the input of that application of LayerNorm, then 
\begin{align}
    \vec{x} = \vec{\gamma} \hadamard \frac{\vec{u} - \bar{u} \vec{1}_d}{\sqrt{\frac{1}{d}\norm{\vec{u} - \bar{u} \vec{1}_d}_2^2 + \epsilon_{\textnormal{LayerNorm}}}} + \vec{\beta} \label{def:layernorm},
\end{align}
where $\bar{u} = \frac{1}{d} \sum_{i=1}^d u_i$, $\vec{\gamma}$ and $\vec{\beta}$ are affine parameters.
By assumption that affine parameters and $\epsilon_{\textnormal{LayerNorm}}$ are turned off, \ie{} $\vec{\gamma} = \vec{1}, \vec{\beta} = \vec{0}, \epsilon_{\textnormal{LayerNorm}} = 0$, we have
\begin{align}
    \vec{x} = \frac{\vec{u} - \bar{u} \vec{1}_d}{\sqrt{\frac{1}{d}\norm{\vec{u} - \bar{u} \vec{1}_d}_2^2}},
\end{align}
and with $\vec{r} \defeq \vec{u} - \bar{u} \vec{1}_d$
\begin{align}
    \vec{x} = \frac{\vec{r}}{\sqrt{\frac{1}{d}\norm{\vec{r}}_2^2}},
\end{align}
whose squared $L_2$ norm is
\begin{align}
    \norm{\mat{X}}_F^2 = \norm{\vec{x}}_2^2 = d.
\end{align}
The equality under LayerNorm follows.

\subsection{Proof of \cref{theorem:multi_token_full}}\label{proof:multi_token_full}

\newcommand{\derivativewrttilde}[2]{\derivatives{\sampleloss(\tilde{F}_{\tildeallparam}, (\vec{x}, y))}{\tilde{#1}^{#2}}}
\newcommand{\derivativewrt}[2]{\derivatives{\sampleloss(F_{\allparam}, (\vec{x}, y))}{#1^{#2}}}

On any sample $(\vec{x}, y) \in \reals^{d_{\textnormal{input}}} \times \mathcal{Y}$, one can easily verify by induction that all of the following claims are true
\begin{itemize}
    \item $(\tilde{\mat{X}}^l, \tilde{\mat{Z}}^l, \tilde{\mat{S}}^l) = (\mat{X}^l, \mat{Z}^l, \mat{S}^l), \tilde{\mat{S}}^{l, i} = \mat{S}^{l-1}$;
    \item $\tilde{\vec{x}}^{l, i} = \vec{x}^l_i, \tilde{\vec{z}}^{l, i} = \vec{z}^l_i$;
    \item $\tilde{\mat{A}}^{l, i} = [\left(\vec{a}^{l}_i\right)^\transpose], \tilde{\mat{P}}^{l, i} = [\left(\vec{p}^{l}_i\right)^\transpose]$;
\end{itemize}
and during backward propagation on differentiable samples for $F_{\theta}$,
\begin{itemize}
    \item $\left(\derivativewrttilde{\mat{X}}{l}, \derivativewrttilde{\mat{Z}}{l}, \derivativewrttilde{\mat{S}}{l}\right) = \left(\derivativewrt{\mat{X}}{l}, \derivativewrt{\mat{Z}}{l}, \derivativewrt{\mat{S}}{l}\right)$;
    \item $\derivativewrttilde{\vec{z}}{l, i} = \derivativewrt{\vec{z}_i}{l}, \derivativewrttilde{\vec{a}}{l, i} = \derivativewrt{\vec{a}_i}{l}$;
    \item $\vec{d}^l_i = \tilde{\vec{d}}^{l, i}$.
\end{itemize}

We first relate the noise-related terms to augmented flatness.
By \cref{assumption:differentiable} for $F_{\allparam}$ at all training samples, it also holds for $\tilde{F}_{\tildeallparam}$ at all training samples.
Since $\sampleloss(\tilde{F}_{\tildeallparam}, \vec{s})$ is twice continuously differentiable \wrt{} $\tildeallparam$ for any training sample $\vec{s} \in \dataset$, we can approximate the loss after the perturbation two the second order by Taylor expansion
\begin{align}
    \sampleloss(\tilde{F}_{\tildeparam[l, K]'}, \vec{s}) - \sampleloss(\tilde{F}_{\tildeallparam}, \vec{s})
    =& \left(\nabla_{\tildeKparam^l} \sampleloss(\tilde{F}_{\tildeallparam}, \vec{s})\right)^\transpose \vec{\delta} + \vec{\delta}^\transpose \hessian[\tildeKparam^l](\vec{s}) \vec{\delta} + r_{\vec{s}}(\vec{\delta}),
\end{align}
where $\vec{\delta}$ corresponds to the perturbation by adding noises $\mat{E} = (\mat{E}^{l, 1}_K, \cdots, \mat{E}^{l, k}_K) \in (\reals^{n \times d})^k$, entries in which independently follow the uniform distribution on $[-\sqrt{3} \sigma, +\sqrt{3} \sigma]$, and $r_{\vec{s}}(\vec{\delta}) = o(\norm{\vec{\delta}}_2^2)$.
For a fixed sample, taking expectation over the noises, we have 
\begin{align}
    \ex[\mat{E}]{\sampleloss(\tilde{F}_{\tildeparam[l, K]'}, \vec{s}) - \sampleloss(\tilde{F}_{\tildeallparam}, \vec{s})}
    =& \ex[\mat{E}]{\left(\nabla_{\tildeKparam} \sampleloss(\tilde{F}_{\tildeallparam}, \vec{s})\right)^\transpose \vec{\delta} + \vec{\delta}^\transpose \hessian[\tildeKparam](\vec{s}) \vec{\delta} + r_{\vec{s}}(\vec{\delta})}\\
    =& \ex[\mat{E}]{\left(\nabla_{\tildeKparam} \sampleloss(\tilde{F}_{\tildeallparam}, \vec{s})\right)^\transpose \vec{\delta}} + \ex[\mat{E}]{\vec{\delta}^\transpose \hessian[\tildeKparam](\vec{s}) \vec{\delta}} + \ex[\mat{E}]{r_{\vec{s}}(\vec{\delta})}.
\end{align}
The first term is $0$ since $\vec{\delta}$ is symmetric. The second term can be simplified to 
\begin{align}
    \ex[\mat{E}]{\vec{\delta}^\transpose \hessian[\tildeKparam](\vec{s}) \vec{\delta}} 
    =& \trace{\ex[\mat{E}]{\vec{\delta}^\transpose \hessian[\tildeKparam](\vec{s}) \vec{\delta}}}
    =  \ex[\mat{E}]{\trace{\vec{\delta}^\transpose \hessian[\tildeKparam](\vec{s}) \vec{\delta}}}
    =  \ex[\mat{E}]{\trace{\hessian[\tildeKparam](\vec{s}) \vec{\delta} \vec{\delta}^\transpose}}\\
    =& \trace{\ex[\mat{E}]{\hessian[\tildeKparam](\vec{s}) \vec{\delta} \vec{\delta}^\transpose}}
    =  \trace{\hessian[\tildeKparam](\vec{s}) \times \ex[\mat{E}]{\vec{\delta} \vec{\delta}^\transpose}}
    =  \trace{\hessian[\tildeKparam](\vec{s}) \times \sigma^2 \mat{I}} \\
    =& \sigma^2 \trace{\hessian[\tildeKparam](\vec{s})}.
\end{align}
Let $r^{E}_{\vec{s}}(\sigma)$ to denote the third expectation, then 
\begin{align}
    \abs{r^{E}_{\vec{s}}(\sigma)}
    \le&    \sup_{\vec{\delta} \in [-\sqrt{3}\sigma, +\sqrt{3}\sigma]^{2 n d k l}} \abs{r_{\vec{s}}(\vec{\delta})} = o(\sigma^2 n k d).
\end{align}
So we have 
\begin{align}
    \ex[\mat{E}]{\sampleloss(\tilde{F}_{\tildeparam[l, K]'}, \vec{s}) - \sampleloss(\tilde{F}_{\tildeallparam}, \vec{s})}
    = \sigma^2 \trace{\hessian[\tildeKVparam](\vec{s})} + r^{E}_{\vec{s}}(\sigma).
\end{align}

Taking expectation over samples leads to 
\begin{align}
    \ex{\sampleloss(\tilde{F}_{\tildeparam[l, K]'}, \vec{s}) - \sampleloss(\tilde{F}_{\tildeallparam}, \vec{s})}
    =&  \sigma^2 \trace{\hessian[\tildeKparam]} + \ex[\vec{s}]{r^E_{\vec{s}}(\sigma)}\\
    =&  \sigma^2 \trace{\hessian[\tildeKparam]} + o(\sigma^2 n d k l),
\end{align}
where the equality between the last terms is because there are only a finite number of training samples so we still have $\sup_{\vec{s} \in \dataset} |r^E_{\vec{s}}(\sigma)| = o(\sigma^2 n d k l)$.
To sum up, there is
\begin{align}
    \frac{1}{\sigma^2} \ex{\sampleloss(\tilde{F}_{\tildeparam[l, K]'}, \vec{s}) - \sampleloss(\tilde{F}_{\tildeallparam}, \vec{s})} = \trace{\hessian[\tildeKparam]} + o(1) \cdot n d k. \label{step:noise_and_hessian}
\end{align}
In a similar way one can prove
\begin{align}
    \frac{1}{\sigma^2} \ex{\frac{\tilde{F}(y \mid \tildeparam[l, K]', \vec{x}) - \tilde{F}(y \mid \tildeallparam, \vec{x})}{F(y \mid \allparam, \vec{x})}} = \trace{\ex{\frac{\nabla^2_{\tildeKparam} \tilde{F}(y \mid \tildeallparam, \vec{x})}{F(y \mid \allparam, \vec{x})}}} + o(1) \cdot n d k, \label{step:noise_and_hessian_augmentation}
\end{align}
by approximating $\tilde{F}_{\tildeallparam}$ to the second order. Combining \cref{step:noise_and_hessian} and \cref{step:noise_and_hessian_augmentation}, we have 
\begin{align}
        \frac{1}{\sigma^2} \ex{\hat{\loss}(\tilde{F}_{\tildeparam[l, K]'}) - \hat{\loss}(\tilde{F}_{\tildeallparam})}  + \frac{1}{\sigma^2} \ex{\frac{\tilde{F}(y \mid \tildeparam[l, K]', \vec{x}) - \tilde{F}(y \mid \tildeallparam, \vec{x})}{F(y \mid \allparam, \vec{x})}}  - o(1) \cdot n d k
        =   \tildeAS[\tildeKparam^l].
\end{align}
Similar results can be derived for value matrices:
\begin{align}
        \frac{1}{\sigma^2} \ex{\hat{\loss}(\tilde{F}_{\tildeparam[l, V]'}) - \hat{\loss}(\tilde{F}_{\tildeallparam})}  + \frac{1}{\sigma^2} \ex{\frac{\tilde{F}(y \mid \tildeparam[l, V]', \vec{x}) - \tilde{F}(y \mid \tildeallparam, \vec{x})}{F(y \mid \allparam, \vec{x})}}  - o(1) \cdot n d k
        =  \tildeAS[\tildeVparam^l].
\end{align}

Then we repeat arguments used for single-token cases on the copied model and relate the copied model back to the multi-token model.
By how $\tilde{F}_{\tildeallparam}$ is constructed from $F_{\allparam}$, $\tilde{F}_{\tildeallparam}$ is twice continuously differentiable \wrt{} its all parameters and hidden features if $F_{\theta}$ is. So under \cref{assumption:differentiable} for $F_{\theta}$ and the training set, \cref{assumption:differentiable} also holds for $\tilde{F}_{\tildeallparam}$ on the training set. Since $\tilde{F}_{\tildeallparam}$ is a single-token network, by \cref{theorem:single_token_full} we have
\begin{align}
        \tildeAS[\tildeKparam^l]
        =&  \ex{\norm{\nabla_{\tildeKparam^l} \sampleloss(\tilde{F}_{\tildeallparam}, \vec{s})}_2^2}
        =   \sum_{i \in [k]} \ex{\norm{\nabla_{\tildeKparam^{l, i}} \sampleloss(\tilde{F}_{\tildeallparam}, \vec{s})}_2^2}\\
        =&   \sum_{i \in [k]} \ex[\vec{s}, j]{\norm{\tilde{\vec{x}}^{l, i}}_2^2\left(\derivatives{\sampleloss(\tilde{F}_{\tildeallparam}, \vec{s})}{\tilde{a}^{l, i}_{j}}\right)^2 \left(\tilde{d}^{l, i}_{j}\right)^2 \mid \tilde{d}^{l, i}_{j} > 0} \cdot \ex[\vec{s}]{\norm{\tilde{\vec{d}}^{l, i}}_0},\\
        \tildeAS[\tildeVparam^l]
        =&  \ex{\norm{\nabla_{\tildeVparam} \sampleloss(\tilde{F}_{\tildeallparam}, \vec{s})}_2^2}
        =   \sum_{i \in [k]}  \ex{\norm{\nabla_{\tildeVparam^{l, i}} \sampleloss(\tilde{F}_{\tildeallparam}, \vec{s})}_2^2}\\
        =&    \sum_{i \in [k]} \ex[\vec{s} \sim \uniform{\dataset}]{\norm{\derivatives{\sampleloss(\tilde{F}_{\tildeallparam}, \vec{s})}{\tilde{\vec{z}}^{l, i}}}_F^2 \norm{\tilde{\vec{a}}^{l, i}}_F^2},
\end{align}
where the first expectations at \rhs{} of both equations are over $\vec{s} \sim \uniform{\dataset}$ and $j \sim \uniform{[n]}$.
By the correspondences during forward and backward propagation between $F_{\allparam}$ and $\tilde{F}_{\tildeallparam}$, we have 
\begin{align}
        \tildeAS[\tildeKparam^l]
        =&  \ex{\norm{\nabla_{\tildeKparam^l} \sampleloss(\tilde{F}_{\tildeallparam}, \vec{s})}_2^2}
        =   \sum_{i \in [k]} \ex[\vec{s}, j]{\norm{\vec{x}^l_i}_2^2 \left(\derivatives{\sampleloss(F_{\allparam}, \vec{s})}{a^{l}_{i, j}}\right)^2 \left(d^{l}_{i, j}\right)^2 \mid d^{l}_{i,j} > 0} \cdot \ex[\vec{s}]{\norm{\vec{d}^{l}_i}_0} ,\\
        \tildeAS[\tildeVparam^l]
        =&  \ex{\norm{\nabla_{\tildeVparam^l} \sampleloss(\tilde{F}_{\tildeallparam}, \vec{s})}_2^2}
        =   \sum_{i \in [k]} \ex[\vec{s} \sim \uniform{\dataset}]{\norm{\derivatives{\sampleloss(F_{\allparam}, \vec{s})}{\vec{z}^{l}_i}}_F^2 \norm{\vec{a}^l_{i}}_F^2},\\
\end{align}
Rearrange the first term:
\begin{align}
    &\sum_{i \in [k]} \ex[\vec{s}, j]{\norm{\vec{x}^l_i}_2^2 \left(\derivatives{\sampleloss(F_{\allparam}, \vec{s})}{a^{l}_{i, j}}\right)^2  \left(d^{l}_{i, j}\right)^2 \mid d^{l}_{i,j} > 0} \cdot \ex[\vec{s}]{\norm{\vec{d}^{l}_i}_0}\\ 
    =&  \sum_{i \in [k]} \ex[\vec{s}, j]{\norm{\vec{x}^l_i}_2^2 \left(\derivatives{\sampleloss(F_{\allparam}, \vec{s})}{a^{l}_{i, j}}\right)^2 \left(d^{l}_{i, j}\right)^2 \mid d^{l}_{i,j} > 0} \cdot n \cdot \prob[][\vec{s}, j]{d^l_{i, j} > 0}\\
    =&  n \sum_{i \in [k]} \ex[\vec{s}, j]{\norm{\vec{x}^l_i}_2^2 \left(\derivatives{\sampleloss(F_{\allparam}, \vec{s})}{a^{l}_{i, j}}\right)^2 \left(d^{l}_{i, j}\right)^2 \cdot \indic{d^l_{i, j} > 0}}\\
    =&  n k \ex[\vec{s}, (i, j)]{\norm{\vec{x}^l_i}_2^2 \left(\derivatives{\sampleloss(F_{\allparam}, \vec{s})}{a^{l}_{i, j}}\right)^2 \left(d^{l}_{i, j}\right)^2 \cdot \indic{d^l_{i, j} > 0}}\\
    =&  n k \ex[\vec{s}, (i, j)]{\norm{\vec{x}^l_i}_2^2 \left(\derivatives{\sampleloss(F_{\allparam}, \vec{s})}{a^{l}_{i, j}}\right)^2 \left(d^{l}_{i, j}\right)^2 \mid d^l_{i, j} > 0} \cdot \prob[][\vec{s}, (i, j)]{d^{l}_{i, j} > 0}\\
    =&  \ex[\vec{s}, (i, j)]{\norm{\vec{x}^l_i}_2^2 \left(\derivatives{\sampleloss(F_{\allparam}, \vec{s})}{a^{l}_{i, j}}\right)^2 \left(d^{l}_{i, j}\right)^2 \mid d^l_{i, j} > 0} \cdot \ex[\vec{s}]{\norm{\mat{D}^l}_0}.\\
\end{align}

Putting everything together leads to the theorem for general cases. 
When $\relu$ is used, plugging $\norm{\mat{A}^l}_0 = \norm{\mat{D}^l}_0$ leads to the part for $\relu$ cases.
When LayerNorm is used, plugging $\norm{\vec{x}^l_i}_2^2 = d$ leads to the part for LayerNorm cases. \section{More Results}\label{appendix:jrelu_results}

We extend \cref{theorem:multi_token} for a more general class of activation functions $\sigma$ that features in \cref{def:jumping}.
\begin{definition}\label{def:jumping}
    $\sigma: \reals \to \reals$ is called jumping if the following are true:
    \begin{itemize}
        \item $\sigma(x) = 0$ for $x < 0$, $\sigma(0) = 0$, $\sigma(x) > 0$ for $x > 0$; and
        \item $\sigma$ is continuous and twice continuously differentiable in $(-\infty, 0)$ and $(0, +\infty)$; and
\item $\abs{\sigma'(x)} \ge 1$ when $x > 0$ and $\sigma'(x) = 0$ when $x < 0$.
    \end{itemize}
\end{definition}
$\jrelu$ fits the definition so it is covered by \cref{corollary:jumping}.

\begin{corollary}\label{corollary:jumping}
    Under the same condition of \cref{theorem:multi_token_full} except that $\sigma$ is jumping, then for general multi-token $\sigma$-activated $F_{\theta}$ we have 
    \begin{align}
        \norm{\mat{A}^l}_0 = \norm{\mat{D}^l}_0,
    \end{align}
    and
    \begin{align}
        &\frac{1}{\sigma^2} \ex{\hat{\loss}(\tilde{F}_{\tildeparam[l, K]'}) - \hat{\loss}(\tilde{F}_{\tildeallparam})}  + \frac{1}{\sigma^2} \ex{\frac{\tilde{F}(y \mid \tildeparam[l, K]', \vec{x}) - \tilde{F}(y \mid \tildeallparam, \vec{x})}{F(y \mid \allparam, \vec{x})}}  - o(1) \cdot n d k
        =   \tildeAS[\tildeKparam^l]
        =   \ex{\norm{\nabla_{\tildeKVparam} \sampleloss(\tilde{F}_{\tildeallparam}, \vec{s})}_2^2}\\
        =& \ex{\norm{\vec{x}^l_i}_2^2 \left(\derivatives{\sampleloss}{a^l_{i, j}}\right)^2 \left(d^l_{i, j}\right)^2 \mid a^l_{i, j} > 0} \cdot \ex[\vec{s} \sim \uniform{\dataset}]{\norm{\mat{D}^l}_0}
        \ge \ex{\norm{\vec{x}^l_i}_2^2 \left(\derivatives{\sampleloss}{a^l_{i, j}}\right)^2 \mid a^l_{i, j} > 0} \cdot \ex[\vec{s} \sim \uniform{\dataset}]{\norm{\mat{D}^l}_0}.
    \end{align}
    If further all inputs to MLP blocks are LayerNorm-ed with affine parameters and $\epsilon_{\textnormal{LayerNorm}}$ are turned off, the above inequalities can be further interleaved by 
    \begin{align}
        \tildeAS[\Kparam^l]
        =&   d \sum_{l \in [L]} \ex{\left(\derivatives{\sampleloss}{a^l_{i, j}}\right)^2 \left(d^{l}_{i, j}\right)^2 \mid a^l_{i, j} > 0} \cdot \ex[\vec{s} \sim \uniform{\dataset}]{\norm{\mat{D}^l}_0}\\
        \ge& d \sum_{l \in [L]} \ex{\left(\derivatives{\sampleloss}{a^l_{i, j}}\right)^2 \mid a^l_{i, j} > 0} \cdot \ex[\vec{s} \sim \uniform{\dataset}]{\norm{\mat{D}^l}_0}
    \end{align}
\end{corollary}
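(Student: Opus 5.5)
The plan is to reuse the proof of \cref{theorem:multi_token_full} verbatim up to the point where $\relu$-specific facts were used, and to replace those facts with the properties in \cref{def:jumping}.

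\textbf{Step 1: the support of $\mat{D}^l$.} We identify where $\mat{D}^l$ is nonzero. For $p^l_{i,j}<0$ we have $\sigma'=0$, so $d^l_{i,j}=0$ and $a^l_{i,j}=0$. For $p^l_{i,j}>0$ we have $|\sigma'|\ge 1$, so $d^l_{i,j}\neq 0$, and also $a^l_{i,j}>0$. For $p^l_{i,j}=0$ we have $a^l_{i,j}=\sigma(0)=0$. If $\sigma'(0)$ does not exist, then $d^l_{i,j}=0$ by \cref{def:subdrivatives}. If $\sigma'(0)$ exists, it equals the left derivative, which is $0$. Hence $d^l_{i,j}\neq 0 \iff p^l_{i,j}>0 \iff a^l_{i,j}>0$, and this gives $\norm{\mat{A}^l}_0=\norm{\mat{D}^l}_0$ samplewise.

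\textbf{Step 2: the main chain.} The robustness equality and the identity $\tildeAS[\tildeKparam^l]=\ex{\norm{\nabla\sampleloss}_2^2}$ hold without change. Both were derived from \cref{assumption:differentiable}, \cref{lemma:af_and_gradient_norm} and Taylor expansion, and none of these depends on the activation.

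For the $L_0$ factorization, I would redo the rearrangement at the end of the proof of \cref{theorem:multi_token_full}. The one change is the indicator: use $\indic{d^l_{i,j}\neq 0}$ (equivalently $\indic{a^l_{i,j}>0}$ by Step 1) in place of the ``$d>0$'' indicator of \cref{lemma:L0_and_L2}, since $\sigma'$ could a priori be negative. \cref{lemma:L0_and_L2} only needs a $0$-$1$ mask, so it applies directly with $\mat{U}^s$ set to this indicator matrix and $\mat{V}^s=\norm{\vec{x}_i}\,\partial\sampleloss/\partial a\cdot d$. This yields the equality with conditioning on $a^l_{i,j}>0$.

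\textbf{Step 3: the inequality.} On the event $a^l_{i,j}>0$ we have $(d^l_{i,j})^2\ge 1$, so the integrand only decreases when the factor $(d^l_{i,j})^2$ is dropped. Monotonicity of conditional expectation then gives the $\ge$.

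\textbf{Step 4: the LayerNorm case.} Substituting $\norm{\vec{x}^l_i}_2^2=d$, exactly as in the proof of \cref{theorem:single_token_full}, gives the final display. I read the stray $\sum_{l}$ there as a typo for the fixed layer $l$.

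The only delicate point I expect is Step 1 at $p=0$. One also needs \cref{lemma:differentiable} to apply, which requires continuity of $\sigma$, and \cref{def:jumping} guarantees that.
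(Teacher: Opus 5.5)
Your proposal is correct and follows the argument the paper intends. The paper gives no separate proof; the corollary is meant as a rerun of the proof of \cref{theorem:multi_token_full}, with the $\relu$ coincidence replaced by the support argument from \cref{def:jumping}, the inequality obtained from $(d^l_{i,j})^2\ge 1$ on $\{a^l_{i,j}>0\}$, and the LayerNorm case from $\norm{\vec{x}^l_i}_2^2=d$, exactly as you do. One caveat, which you inherit rather than introduce: the robustness identity you import unchanged is off by a factor of $2$, because the Taylor expansion in the proof of \cref{theorem:multi_token_full} drops the factor $\frac{1}{2}$ on the second-order term, so the left-hand side should carry $\frac{2}{\sigma^2}$ rather than $\frac{1}{\sigma^2}$ (here $\sigma$ is the noise scale).
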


\cref{corollary:jumping} indicates that activation sparsities together still lower-bound augmented flatness.

\section{Experimental Details}\label{appendix:details}

\begin{table*}
    \centering
    \caption{\minorrevision{Hyperparameters for pretraining in vision tasks.}}
    \minorrevision{
    \begin{tabular}{lccccccccc}
         \toprule
                 Data & Arch. & \makecell{Model\\size}&    \makecell{Learning \\rate (max)}    &    \makecell{Weight \\decay}    &    \makecell{Batch \\size}    &    \makecell{Grad. \\clip}    &    \makecell{Label smoothing\\ or CLIP Temperature}    &    Augmentation    & Epochs\\
         \midrule
         ImageNet-1K    &    ViT    &    Base/16    &    $3.0 \times 10^{-3}$    &    $0.300$    &    $2048$    &    $1.0$    &    $0.11$    &    \makecell{mixup: $\alpha=0.2$\\ cutmix: $\alpha=1.0$\\auto aug: \texttt{ra}\\rand erase: $p=0.25$}    &     300\\
         \midrule
         ImageNet-1K    &    \makecell{Swin\\Transformer}    &    Base/16    &    $1.0 \times 10^{-3}$    &    $0.050$    &    $1024$    &    $5.0$    &    $0.10$    &    \makecell{mixup: $\alpha=0.8$\\ cutmix: $\alpha=1.0$\\auto aug: \texttt{ta\_wide}\\rand erase: $p=0.25$}    &    300\\
         \midrule
         Places365-Std    &    ViT    &    Large/16    &    $1.0 \times 10^{-3}$    &    $0.005$    &    $2048$    &    $1.0$    &    $0.10$    &    \makecell{mixup: $\alpha=0.2$\\ cutmix: $\alpha=1.0$\\auto aug: \texttt{ta\_wide}\\rand erase: $p=0.10$}    &    300\\
         \midrule
         LAION-40M    &    ViT    &    Base/16    &    $3.0 \times 10^{-3}$    &  $0.300$    &    $24 \times 360$    &    $1.0$    &    $T_{\textnormal{init}} = 2.66$    &     auto aug: \texttt{ra}    &    16\\
         \bottomrule
    \end{tabular}
    }
    \label{table:vision_hyperparameters}
\end{table*}

\minorrevision{
For vision tasks, we use ImageNet-1K, Places365 and an 1/10 subset of LAION-400M of different scales and diversity.
We use a subset LAION-40M of LAION-400M containing approximately 43M images, due to limited computational resources and partially inaccessible online resources.
LAION datasets provide text captions of images instead of explicit labels. Therefore, we train ViT with the CLIP loss \cite{clip}. We use the pretrained Base/16-sized text encoder (\texttt{huggingface.co/openai/clip-vit-base-patch16}) to generate text embeddings of each image caption. 
Note that the CLIP loss is a contrastive loss that samples negative samples in the batch, and the batch size affects the performance of CLIP training. However, due to limited GPU, we use a smaller batch size $24 \times 360 = 8640$, where $24$ is the number of gradient accumulation and $360$ is the sub-batch size of each gradient accumulation step. We pre-compute all text embeddings, and so that we can select text negative samples in the whole $8640$-sized batch with little costs. To further avoid gradient checkpointing and complicated inter-GPU gradient transferring, we sample visual negative samples in the $360$ images of one gradient accumulation step within one GPU.
We parameterize temperature with $[\textrm{logit}] = \cos(f_{\textnormal{visual}}, f_{\textnormal{text}}) \cdot e^{T}$ as in \cite{clip}, with temperature initialization $T_{\textnormal{init}} = 2.66$ and temperature bound $T \in [-100, +100]$.

All ViTs or SwinTransformers uses images of $224 \times 224$ with patch size 16.
The training recipe is adapted from \cite{pytorch_recipe}. Specifically, the optimizer is AdamW with $(\beta_1, \beta_2)=(0.9, 0.999)$, dropout $0.0$, gradient clipping, linear warmup followed by cosine decay learning rate scheduling. Detailed hyperparameters are listed in \cref{table:vision_hyperparameters}. Model EMA is \emph{removed} to ease experiment tracking. AMP with \texttt{fp16} is used to accelerate training. Images are resized to $256 \times 256$ to save storage. Logging happens every 100 steps.
During finetuning, the differences include modifications for sparsity described in \cref{sec:algorithm} and \cref{sec:p_experiments}, LoRA of rank $192$, and epoch number reduced to $15$, among which LayerNorm uplifting takes $5$ epochs.
Among vision tasks, the learning on Places365 exhibits strong overffiting at the end of training. Since focus on sparsity instead of performance or overfitting, we report the testing performance and testing sparsity at the step when the testing Acc@1 is the highest.  
}

During training T5-Base from scratch on C4, the recipe is adapted from \cite{t5_recipe} and \cite{observation}. Specifically, the optimizer is AdamW with learning rate $0.01$, $(\beta_1, \beta_2)=(0.9, 0.999)$, weight decay $0.001$, dropout $0.0$ and gradient clipping of global norm $1.0$. The learning rate is scheduled with inverse square root scheduling with a linear warmup of $10,000$ steps, while the full training lasts for $100,000$ steps. Batch size is $256$. To avoid downloading the entire data set, we use the streaming functionality of Huggingface \texttt{datasets} to download the first $25,600,000$ training samples without shuffling and the full validation split. Recipe from \cite{t5_recipe} includes data preparation that concatenates all raw sentences and re-splits them into samples. This step consumes too much storage so we confine the concatenation within a batch and empties are filled with padding tokens that are ignored during computing the loss. This compromise reduces the number of usable samples during training, but there are still approximately at least $50$ non-empty samples for every 64 samples. The generation of spans is left unchanged, where 15\% of the input is corrupted without any mixture. The encoder receives $512$ tokens while $114$ tokens are used on the decoder side. Logging happens every 25 steps. When training modified T5, the rented GPUs expired at step $95,000$. Nevertheless, the effectiveness of modification is already obvious at that step.
During finetuning, LoRA of rank $192$ considering achieving better sparsity may involve largely modifying weights, and modifications described in \cref{sec:algorithm} and \cref{sec:p_experiments} are equipped. The training step reduces to $10,000$, among which learning rate warmup takes $1,000$ steps and LayerNorm uplifting and activation function mixing last for $3,000$ steps. Logging happens every $50$ step during finetuning T5.

Since a lot of statistics are computed which slows the training, for ViT the logging (including training losses, sparsity, etc.) happens after per-epoch evaluation or every 100 steps in training. The encoder-decoder architecture and larger token number of T5 cost more memory use, so half of a batch is used to compute the statistics. As compensation logging happens every 25 steps for T5.

When we trained ViT on ImageNet-1K in \cref{sec:coefficients}, the checkpoint frequency was too coarse to capture the evolution during initial epochs, where the ratio decreases drastically. As a result, \cref{fig:initial_epochs_imagenet1k} is obtained by retraining for 10 epochs. 
Since there are a lot of checkpoints to process, the estimation is conducted on a random subset of ImageNet-1K's training set with $2048 \times 128$ samples (approximately 1/5 of the training set) while the whole training set is used for CIFAR-10.

In experiments of \cref{sec:stability}, ViT-Base is trained on CIFAR-10 using Adam. Hyperparameters include a learning rate of $10^{-4}$, a batch size of $64$, no dropout, weight decay, learning rate decay, or gradient clipping. Random cropping and random horizontal flipping are used as data augmentation. The training lasts for 100 epochs with the first 5 epochs of linear warmup. Automatic mixed precision (AMP) is not used. Shifted $\weird$ is used as activation functions, replacing all $\relu$ functions. $\dbmlp$s are used, but the experiments of \cref{sec:stability} were conducted before we realize the contradiction between zeroth biases and LayerNorm, so zeroth bias modules are added but \cref{algo:zeroth_bias} is not executed. LayerNorm affine parameters are \emph{not} turned off and \emph{not} lower-bounded. 
The ViT implementation is based on that of Torch Vision. Logging happens every 10 steps to plot \cref{fig:exp_stability}.

In experiments of \cref{sec:ablation}, in contrast to those in \cref{sec:p_experiments} we used grad clipping norm up to $10$. We also recorded other statistics using torch hooks in different layers. The last layer contains a lot of extreme values so the hook in the last layer was turned off and sparsity in it was not available. \section{Full Experimental Results}\label{appendix:more_experiments}

\minorrevision{
Figs. \ref{figure:productive_vit_full}-\ref{figure:productive_t5_full} display the detailed evolution of sparsity during pretraining.
Figs. \ref{fig:detail_start}-\ref{fig:detail_end} display the detailed evolution of augmented flatness and coefficients of activation norms during ViT-Base pretraining on CIFAR-10 and ImageNet-1K.
}

\begin{figure*}
    \centering
    \resetHeight{}
    \subfloat[Training, modified. \label{figure:productive_vit_sparsified_training}]{
        \myincludegraphics[width=0.24\linewidth]{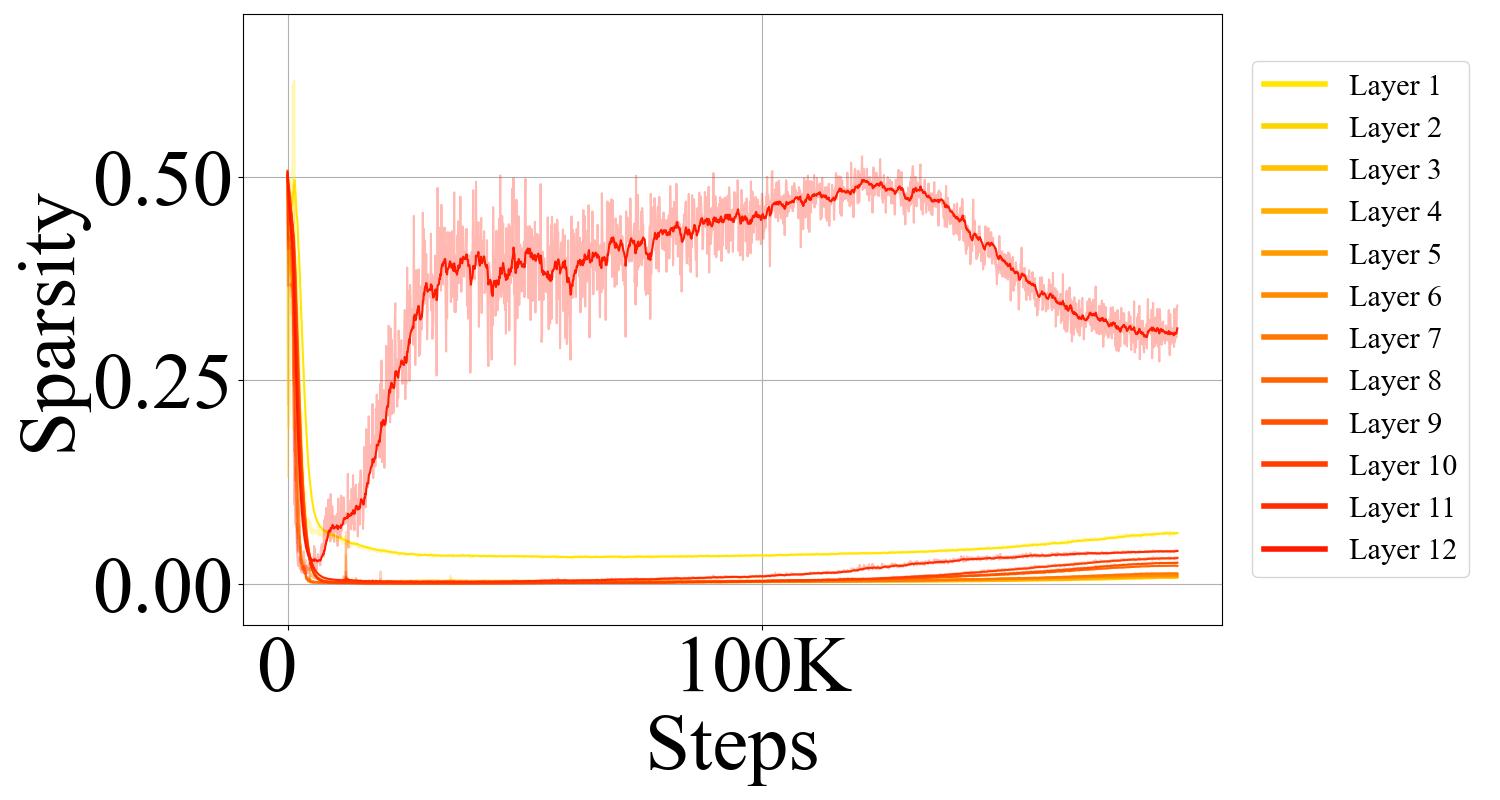}
}
    \subfloat[Training, vanilla. \label{figure:productive_vit_vanilla_training}]{
        \myincludegraphics[width=0.24\linewidth]{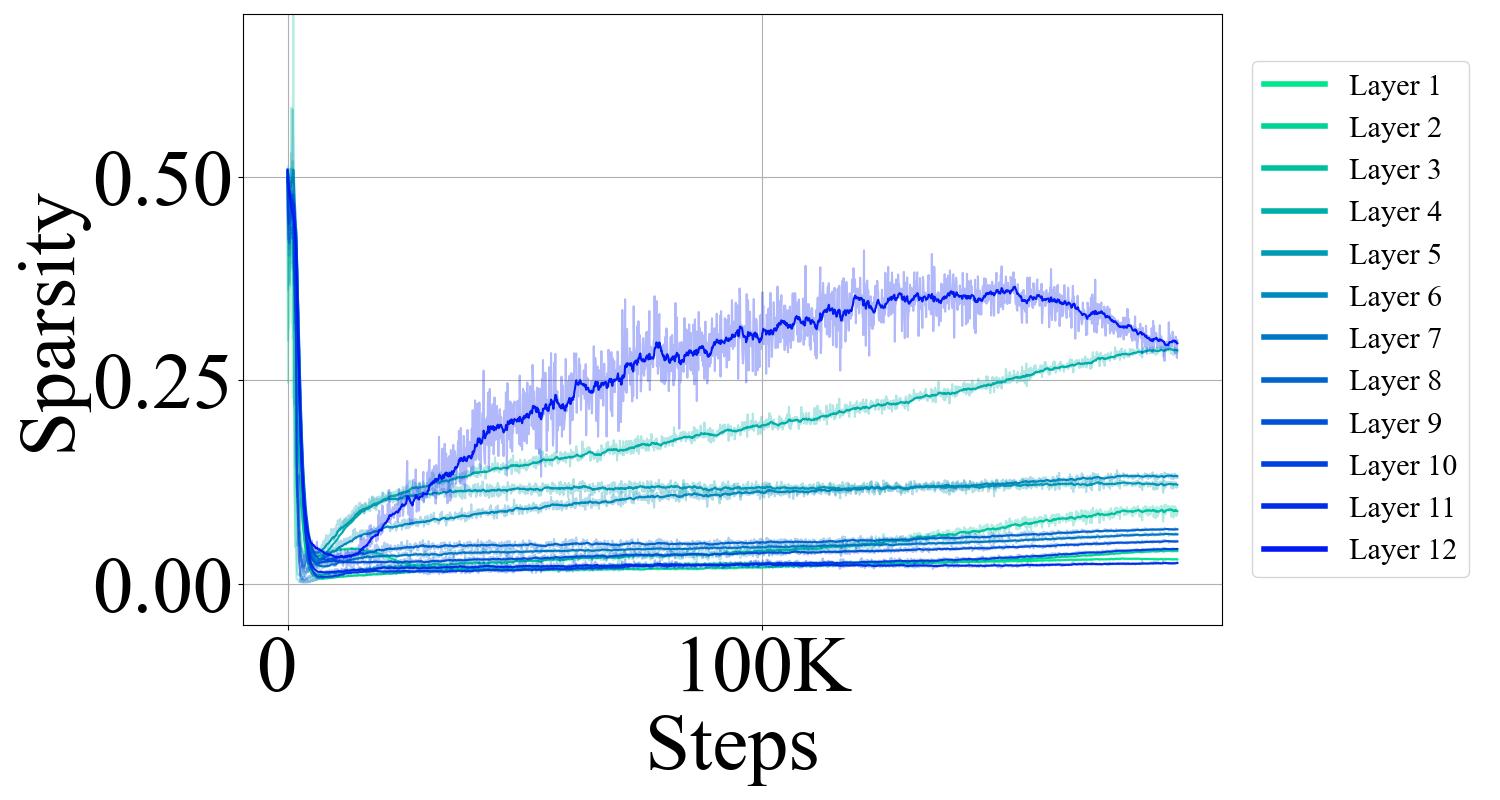}
}
    \subfloat[Testing, modified. \label{figure:productive_vit_sparsified_testing}]{
        \myincludegraphics[width=0.24\linewidth]{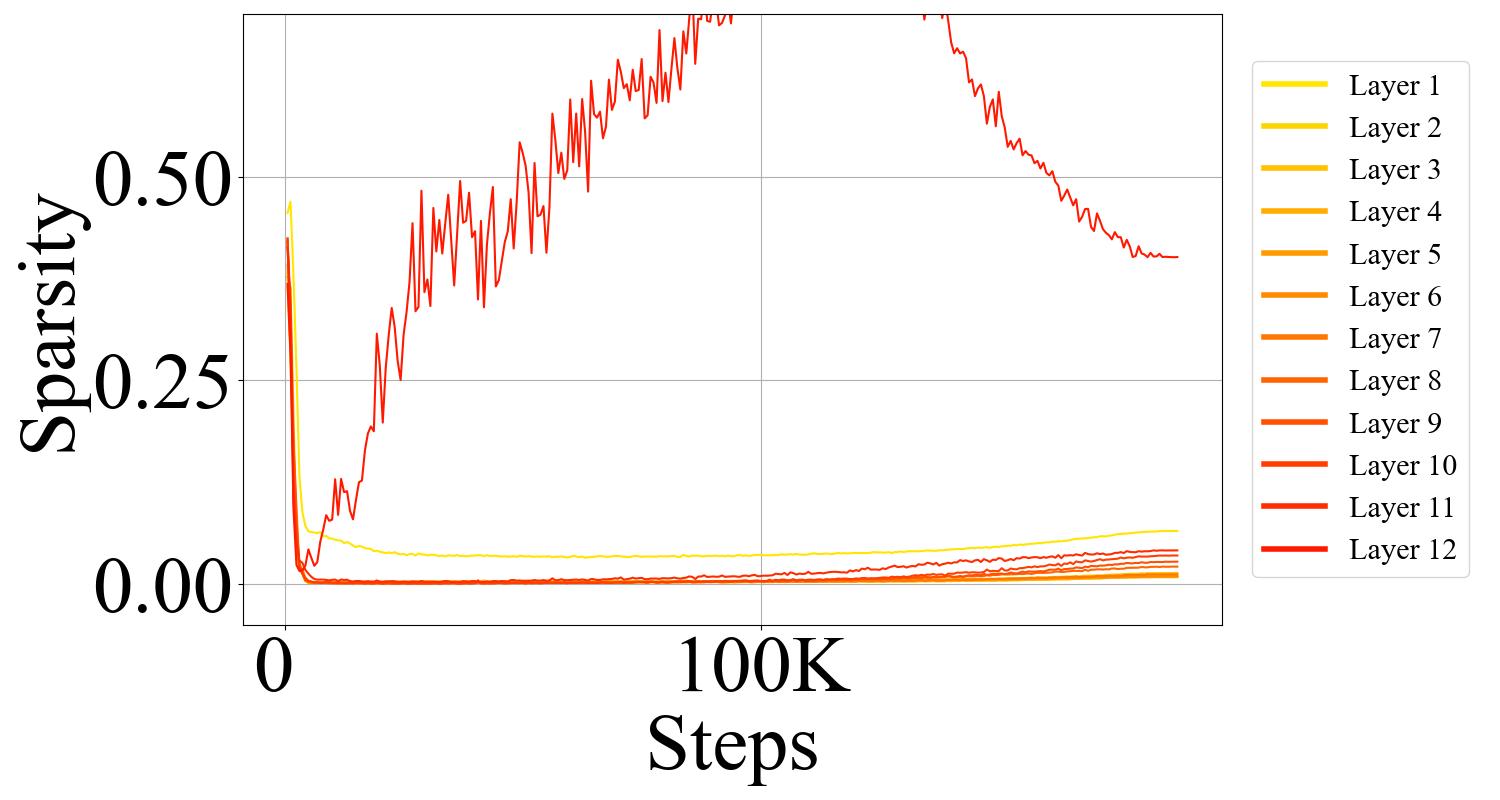}
}
    \subfloat[Testing, vanilla. \label{figure:productive_vit_vanilla_testing}]{
        \myincludegraphics[width=0.24\linewidth]{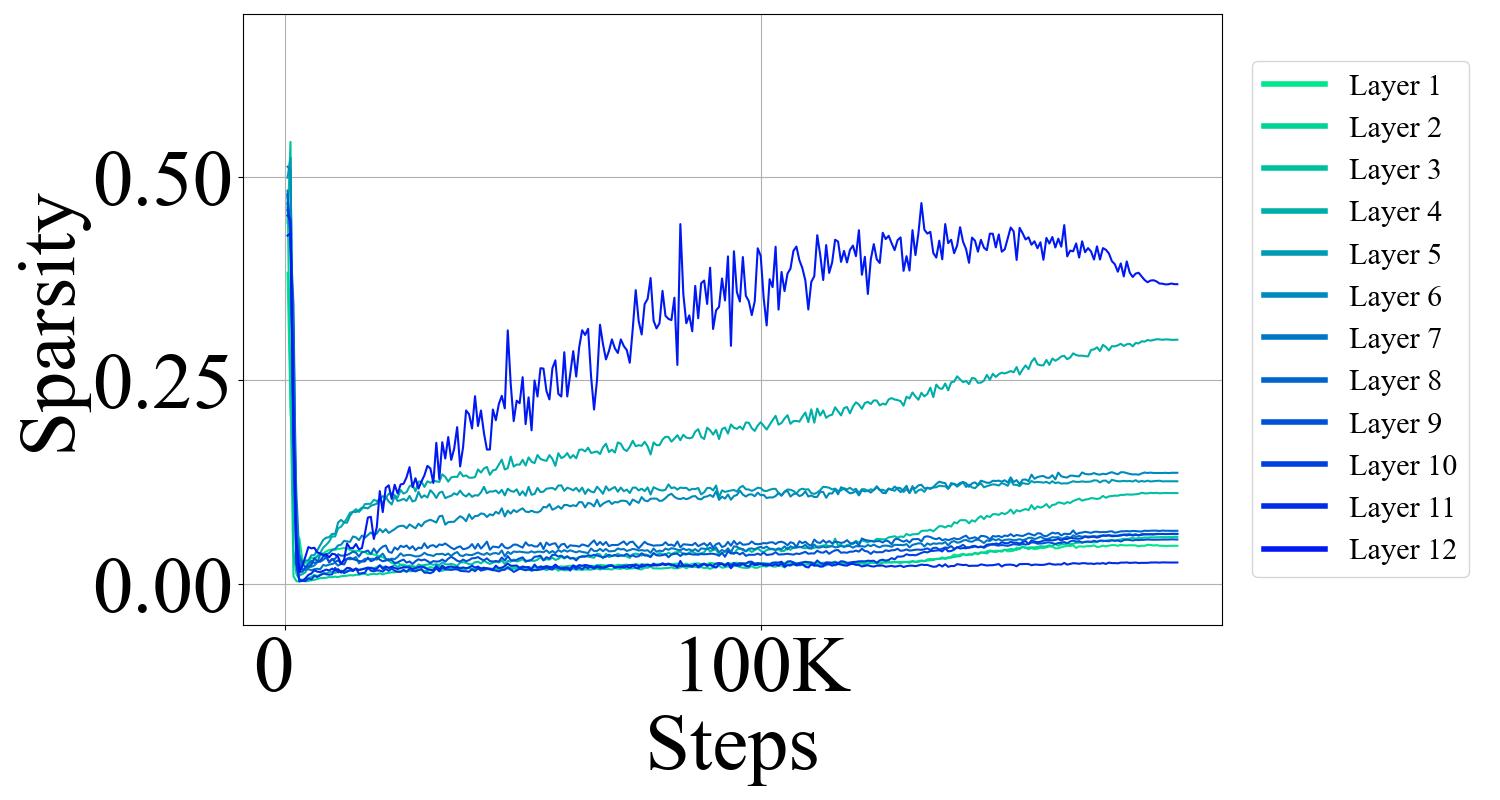}
}
    \caption{Training and testing sparsity during training of ViT-Base on ImageNet-1K across layers and steps. Red is used for modified ViT while blue indicates vanilla ViT.
    }\label{figure:productive_vit_full}
\end{figure*}

\begin{figure*}
    \centering
    \resetHeight{}
    \subfloat[Training, stepwise. \label{figure:productive_swin_average_training}]{
        \myincludegraphics[width=0.24\linewidth]{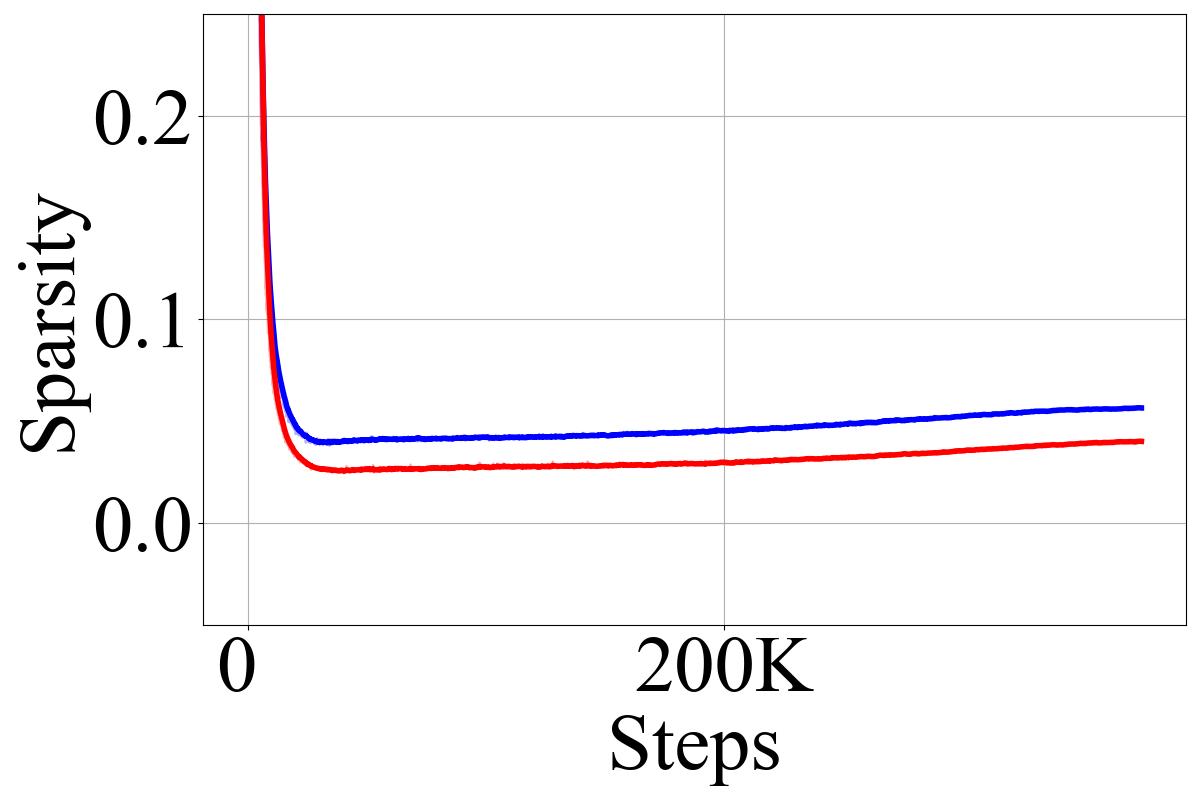}
    }
    \subfloat[Training, layerwise. \label{figure:productive_swin_end_training}]{
        \myincludegraphics[width=0.24\linewidth]{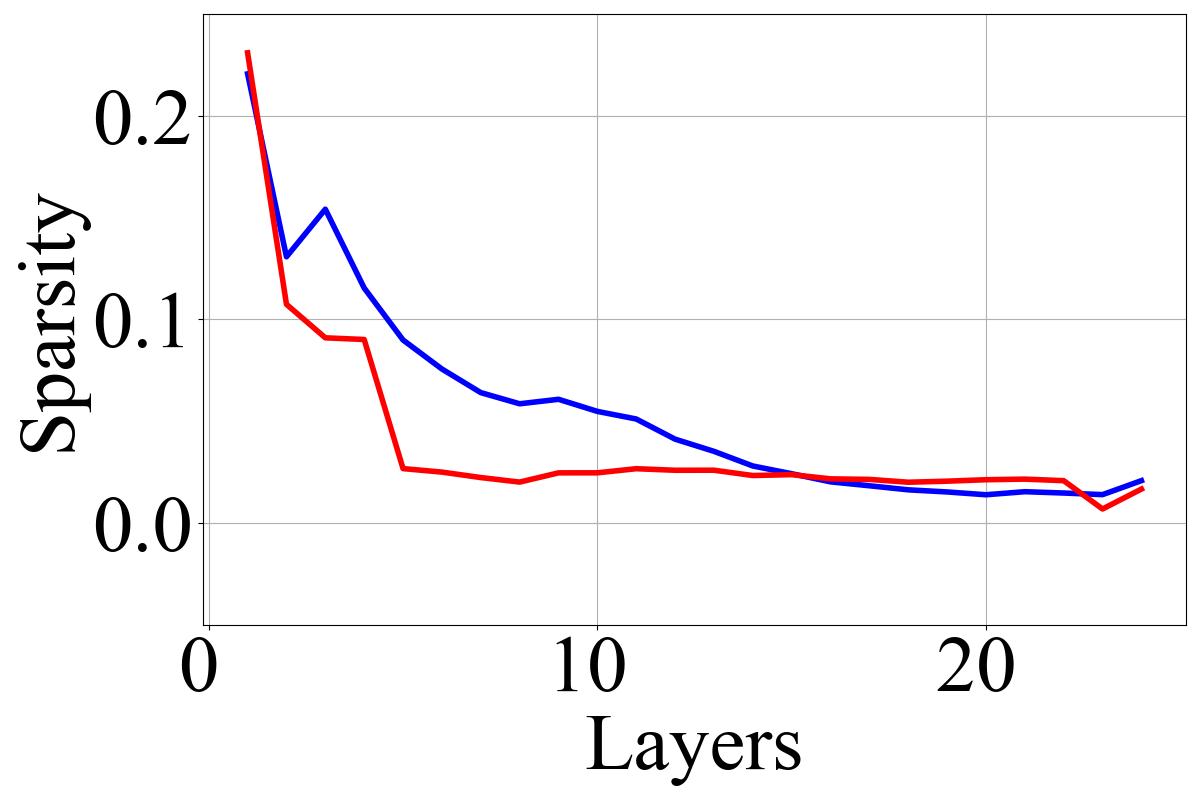}
    }
    \subfloat[Testing, stepwise.]{
        \myincludegraphics[width=0.24\linewidth]{pic/results/dumps/swin_transformer/comparison/testing.jpg}
    }
    \subfloat[Testing, layerwise.]{
        \myincludegraphics[width=0.24\linewidth]{pic/results/dumps/swin_transformer/comparison/test_end.jpg}
    }\\
    \resetHeight{}
    \subfloat[Training, modified. \label{figure:productive_swin_sparsified_training}]{
        \myincludegraphics[width=0.24\linewidth]{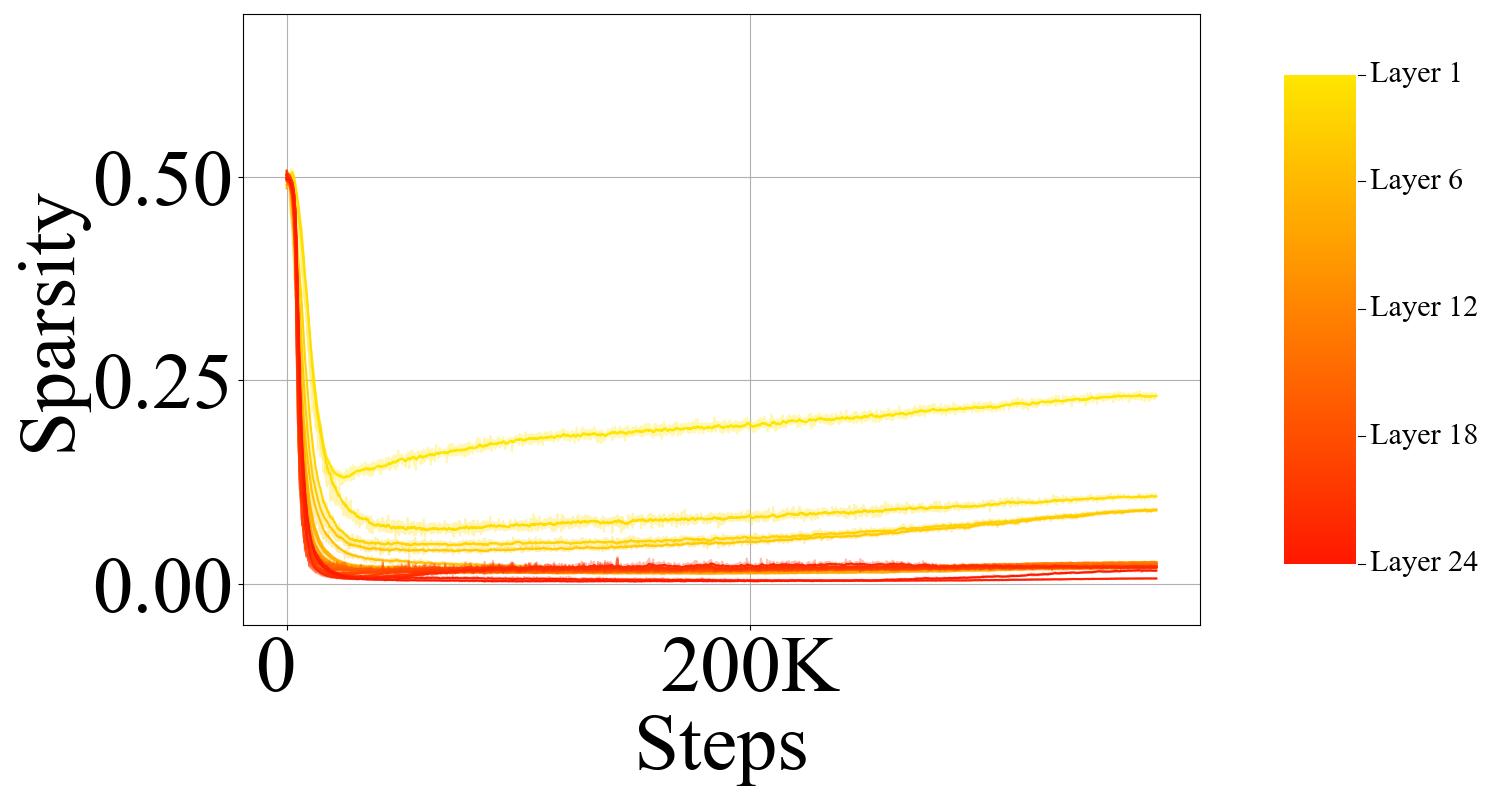}
}
    \subfloat[Training, vanilla. \label{figure:productive_swin_vanilla_training}]{
        \myincludegraphics[width=0.24\linewidth]{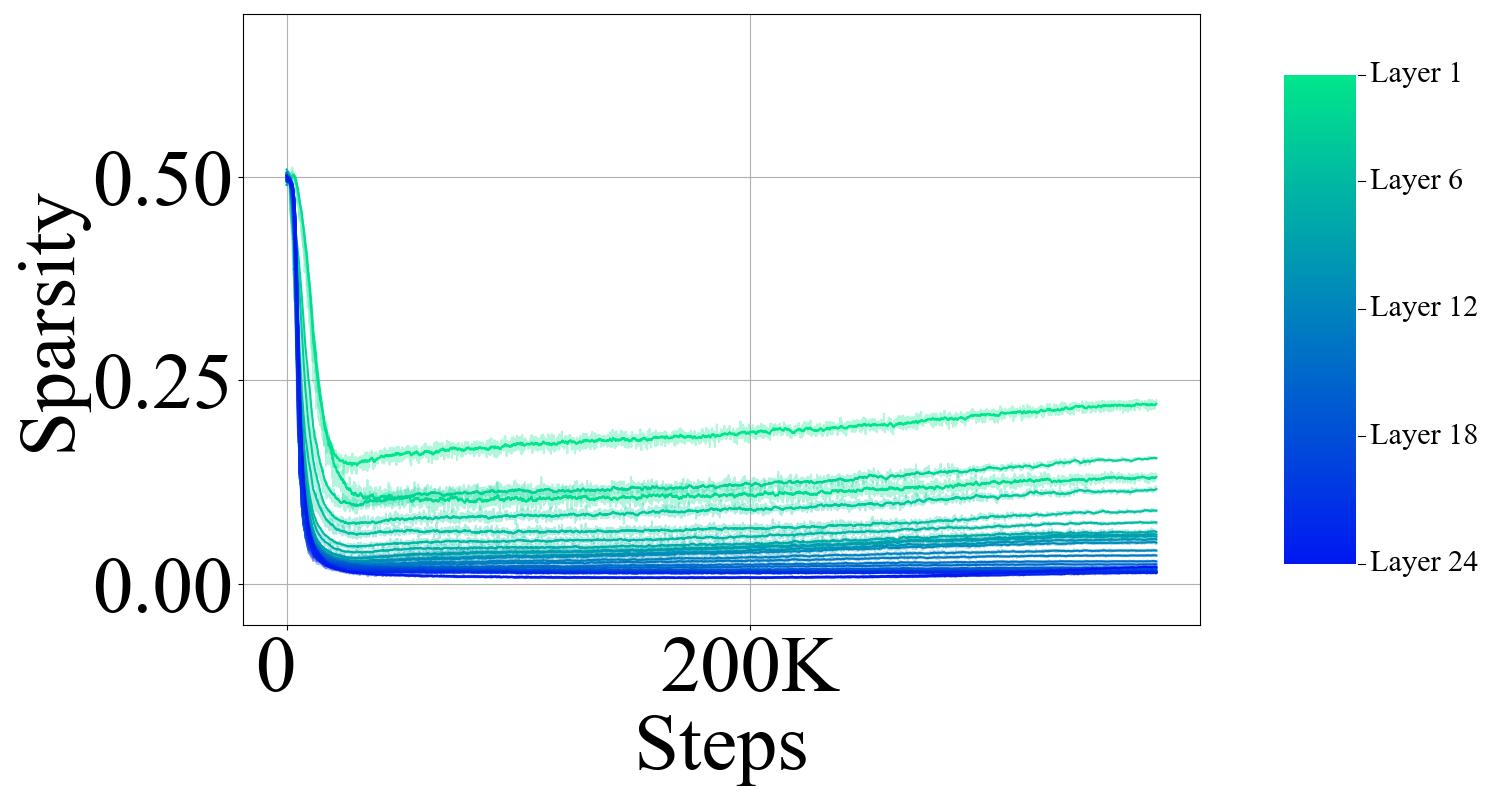}
}
    \subfloat[Testing, modified. \label{figure:productive_swin_sparsified_testing}]{
        \myincludegraphics[width=0.24\linewidth]{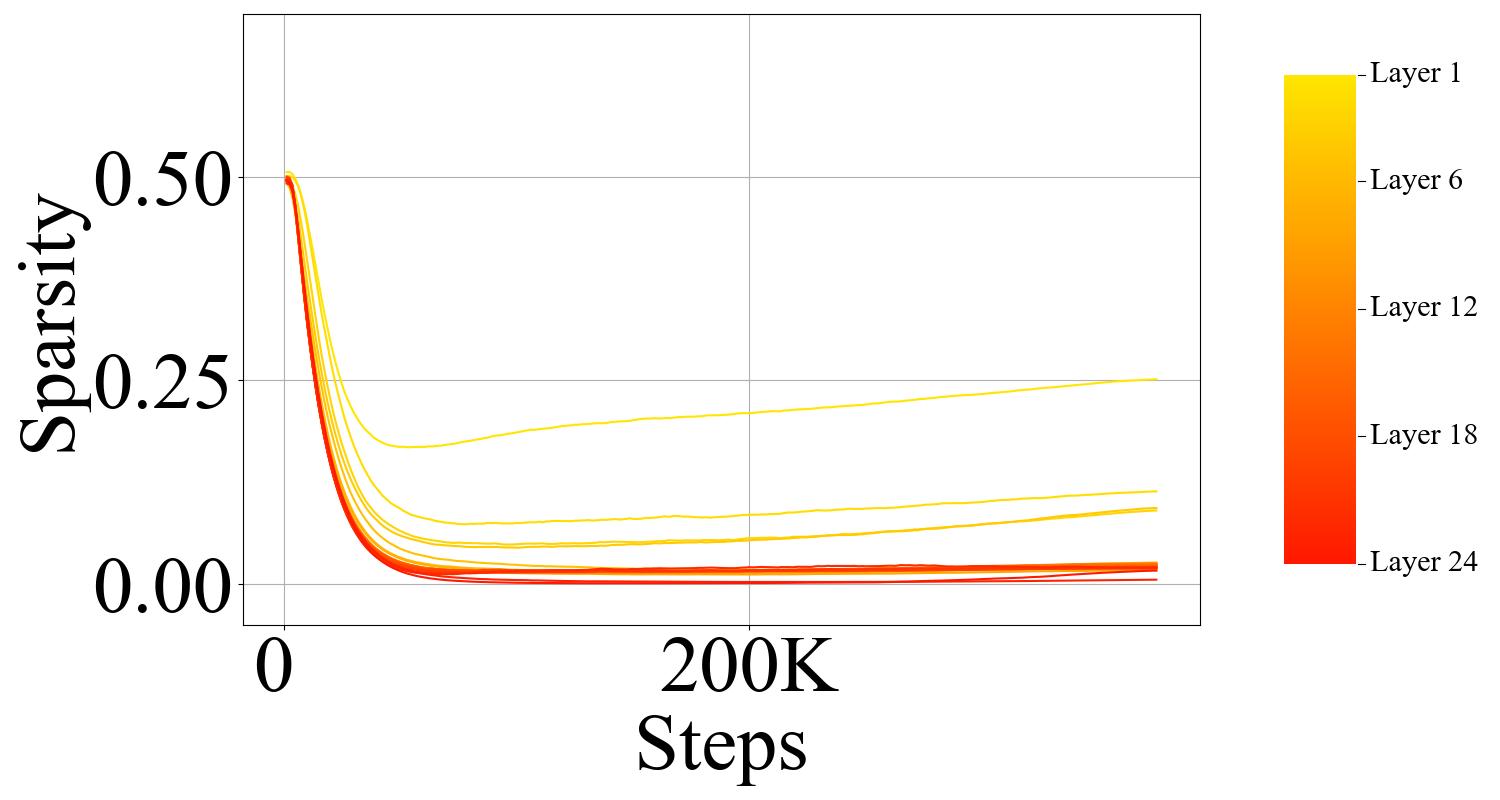}
}
    \subfloat[Testing, vanilla. \label{figure:productive_swin_vanilla_testing}]{
        \myincludegraphics[width=0.24\linewidth]{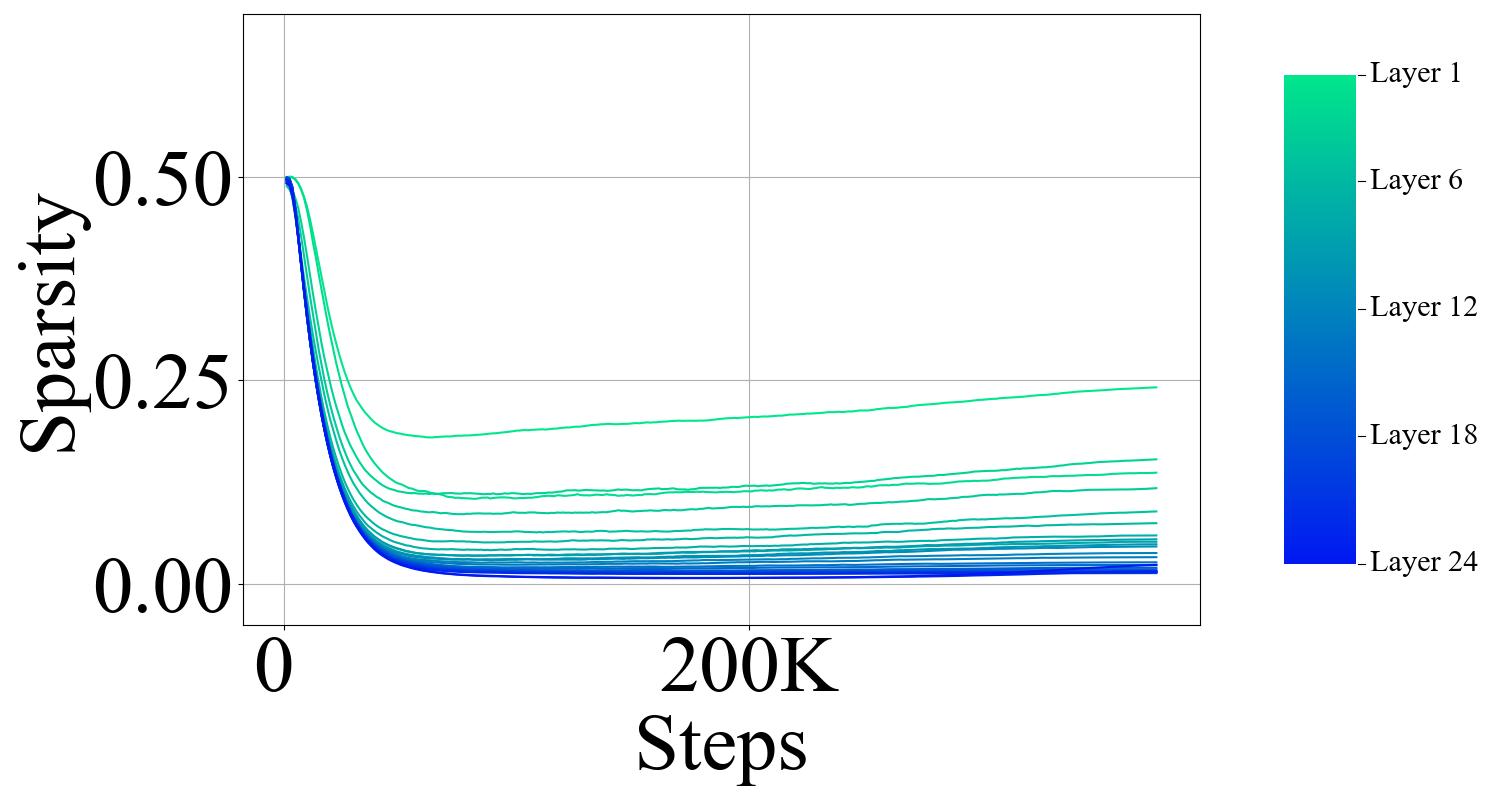}
}
    \caption{\minorrevision{Training and testing sparsity during training of SwinTransformer-Base on ImageNet-1K across layers and steps. Red is used for modified SwinTransformer while blue indicates vanilla SwinTransformer.
    }}\label{figure:productive_swin_full}
\end{figure*}

\begin{figure*}
    \centering
    \resetHeight{}
    \subfloat[Training, stepwise. \label{figure:productive_vit_large_average_training}]{
        \centering
        \myincludegraphics[width=0.24\linewidth]{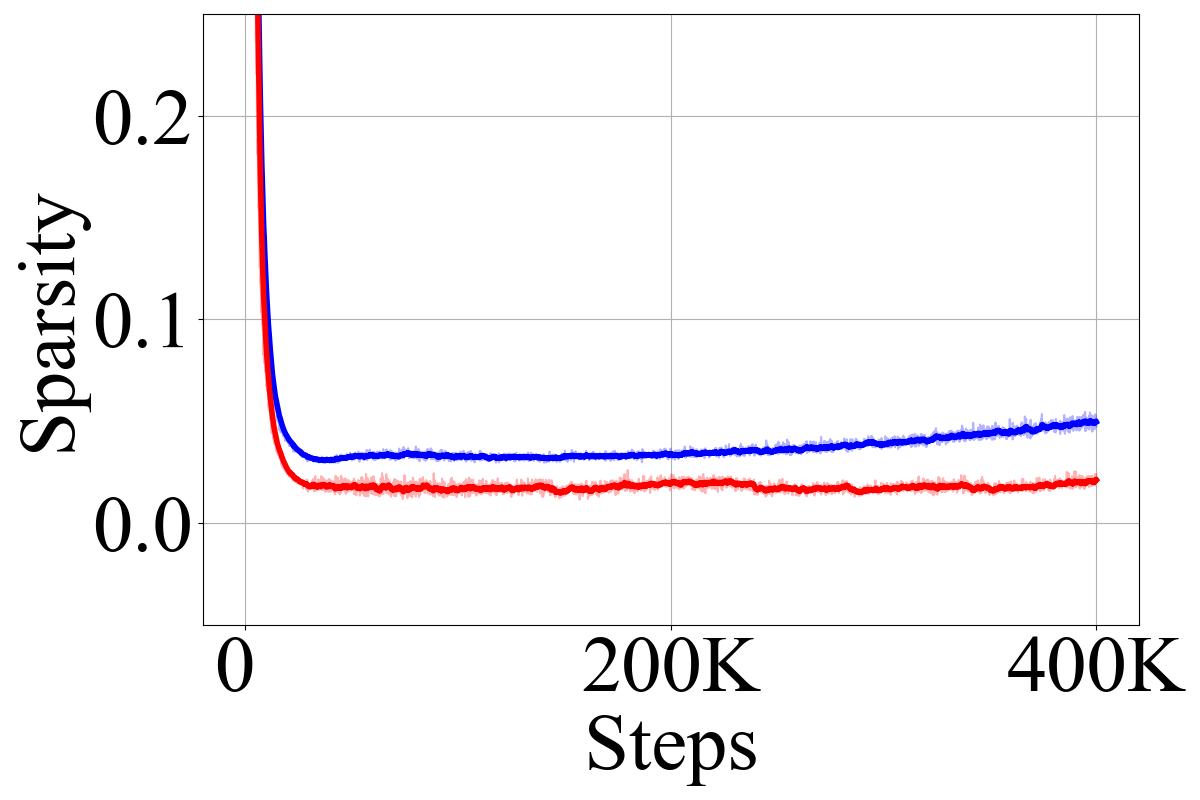}
    }
    \subfloat[Training, layerwise. \label{figure:productive_vit_large_end_training}]{
        \centering
        \myincludegraphics[width=0.24\linewidth]{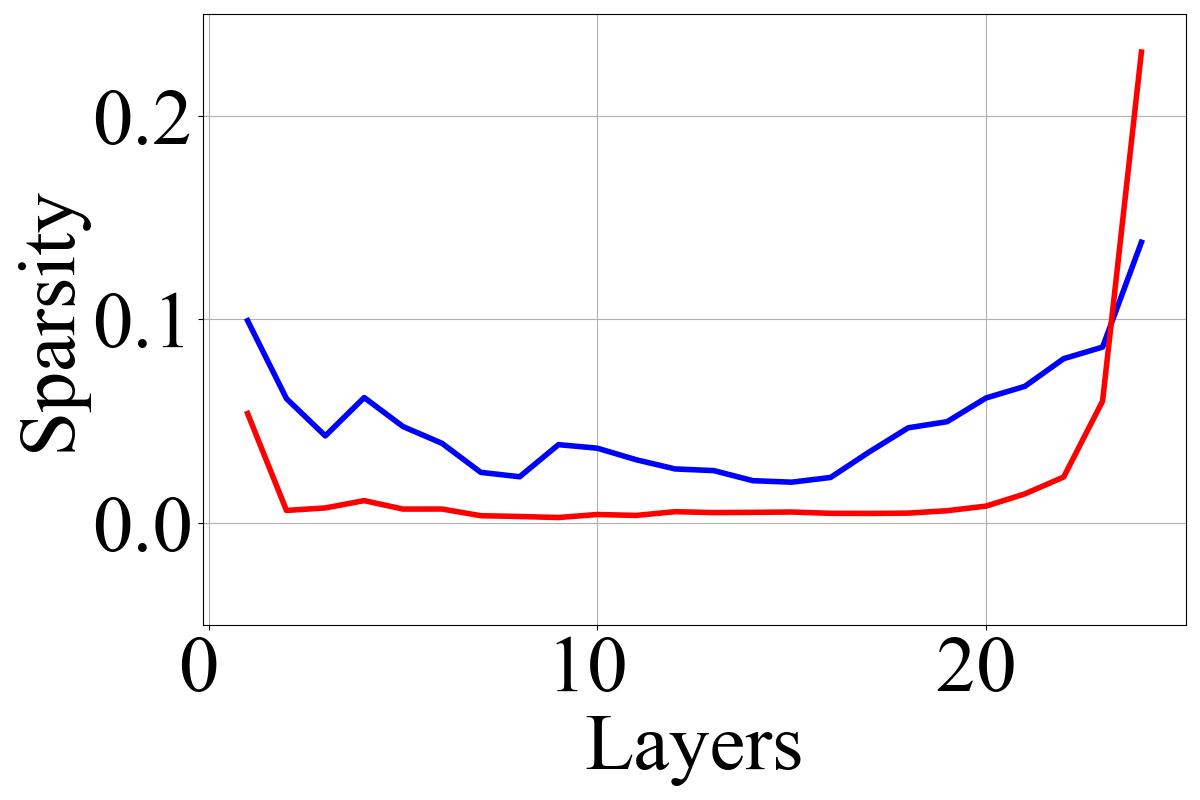}
    }
    \subfloat[Testing, stepwise.]{
        \centering
        \myincludegraphics[width=0.24\linewidth]{pic/results/dumps/vit_large/comparison/testing.jpg}
    }
    \subfloat[Testing, layerwise.]{
        \centering
        \myincludegraphics[width=0.24\linewidth]{pic/results/dumps/vit_large/comparison/test_end.jpg}
    }\\
    \resetHeight{}
    \subfloat[Training, modified. \label{figure:productive_vit_large_sparsified_training}]{
        \centering
        \myincludegraphics[width=0.24\linewidth]{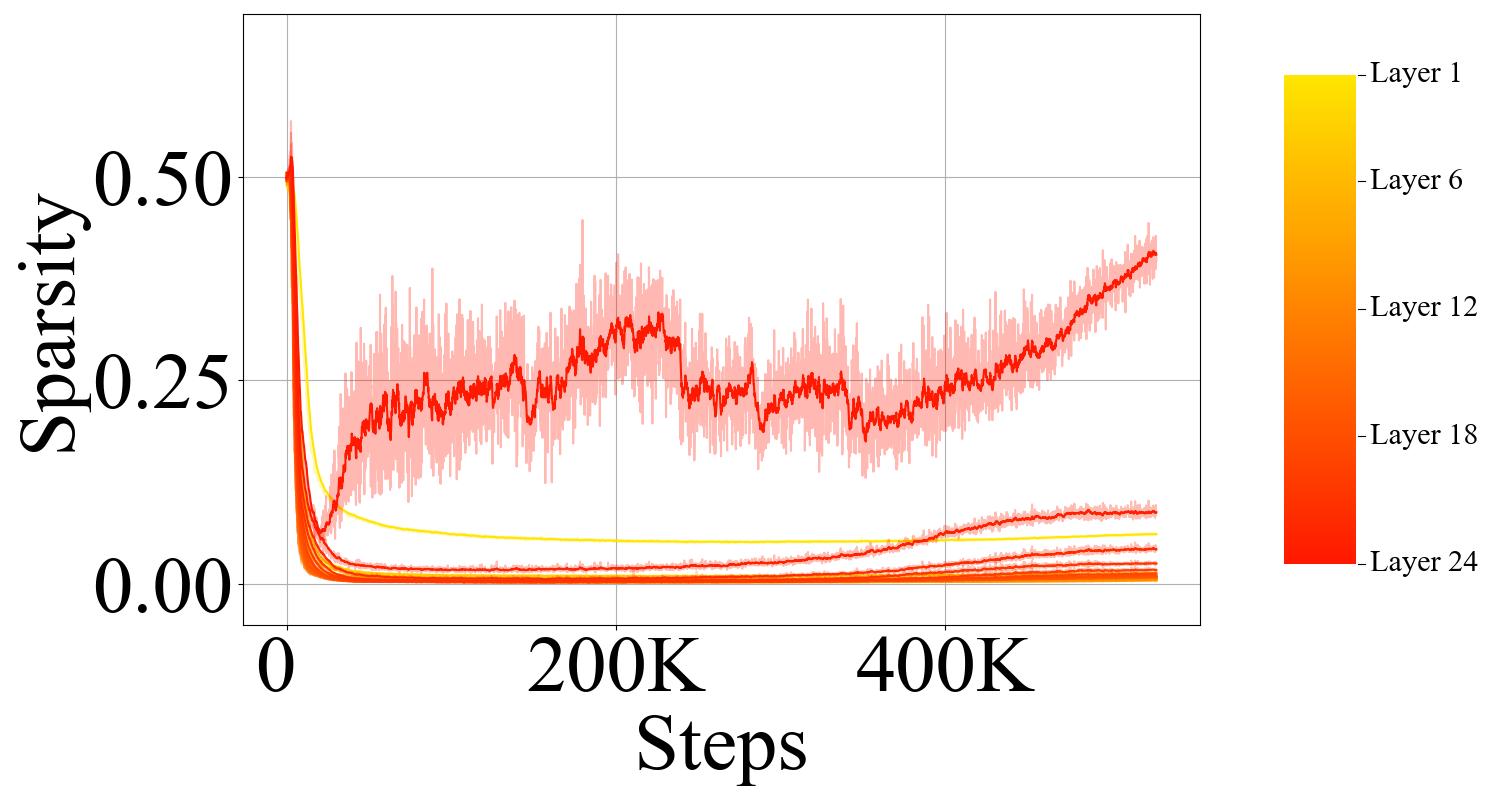}
}
    \subfloat[Training, vanilla. \label{figure:productive_vit_large_vanilla_training}]{
        \centering
        \myincludegraphics[width=0.24\linewidth]{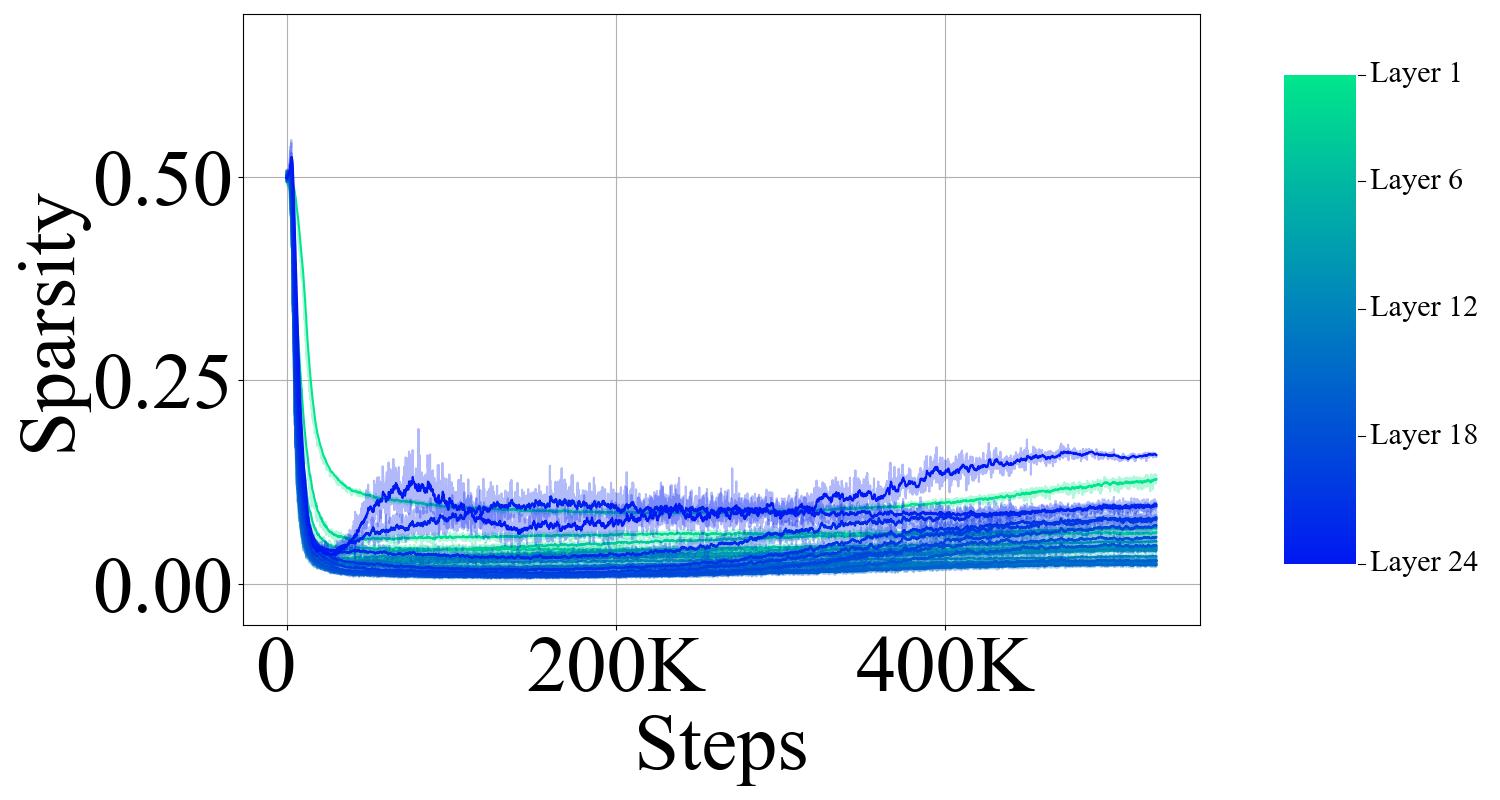}
}
    \subfloat[Testing, modified. \label{figure:productive_vit_large_sparsified_testing}]{
        \centering
        \myincludegraphics[width=0.24\linewidth]{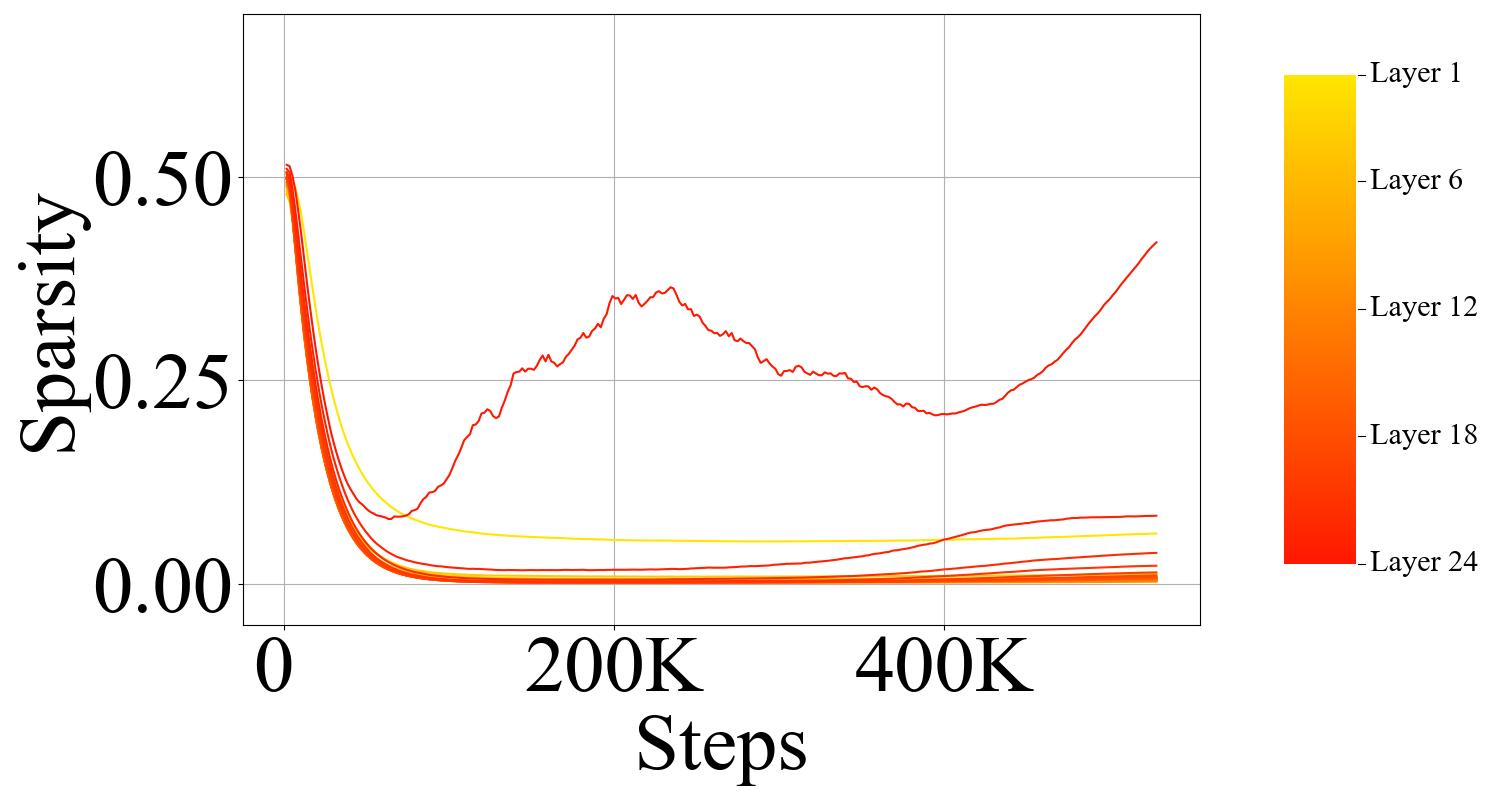}
}
    \subfloat[Testing, vanilla. \label{figure:productive_vit_large_vanilla_testing}]{
        \centering
        \myincludegraphics[width=0.24\linewidth]{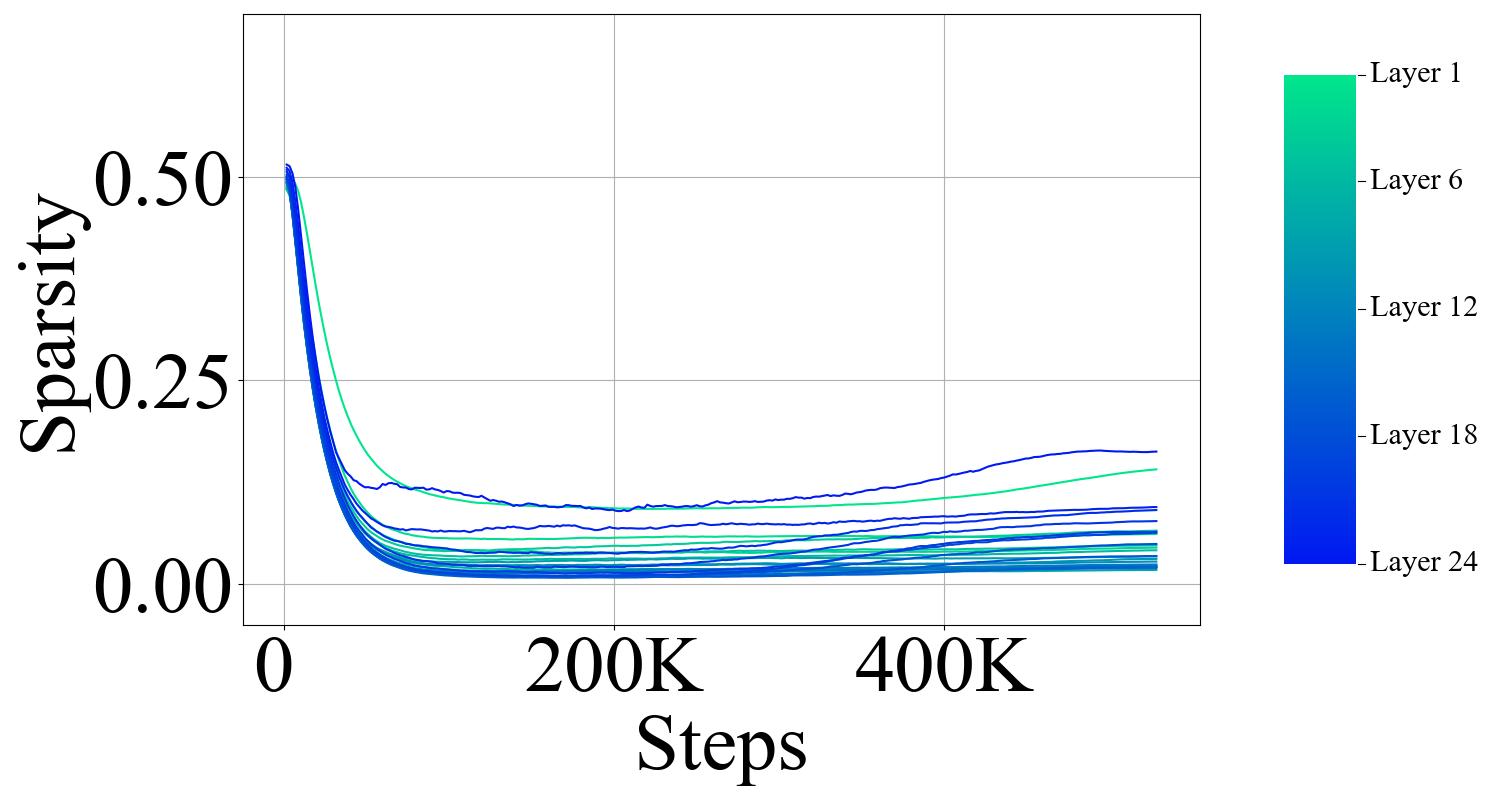}
}
    \caption{\minorrevision{Training and testing sparsity during training of ViT-Large on Places365-Standard across layers and steps. Red is used for modified ViT-Large while blue indicates vanilla ViT-Large.
    }}\label{figure:productive_vit_large_full}
\end{figure*}

\begin{figure*}
    \centering
    \resetHeight{}
    \subfloat[Training, stepwise. \label{figure:productive_laion_average_training}]{
        \centering
        \myincludegraphics[width=0.24\linewidth]{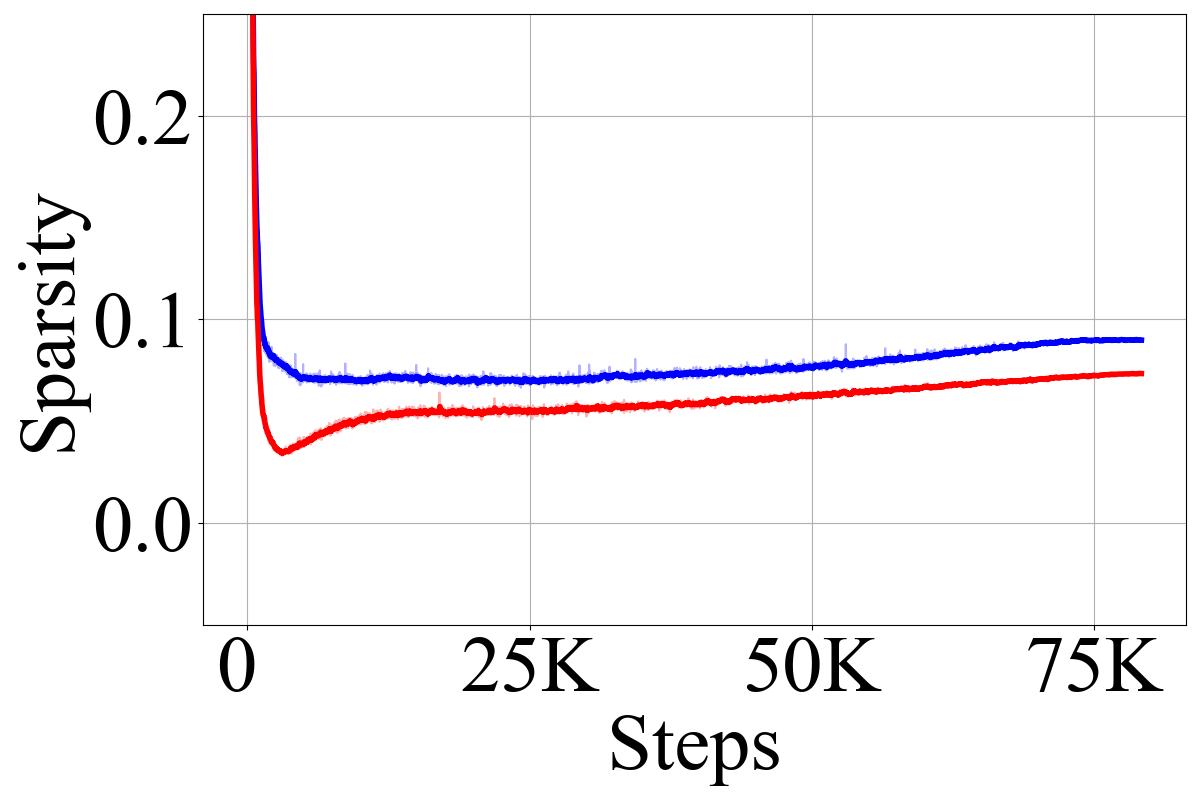}
    }
    \subfloat[Training, layerwise. \label{figure:productive_laion_end_training}]{
        \centering
        \myincludegraphics[width=0.24\linewidth]{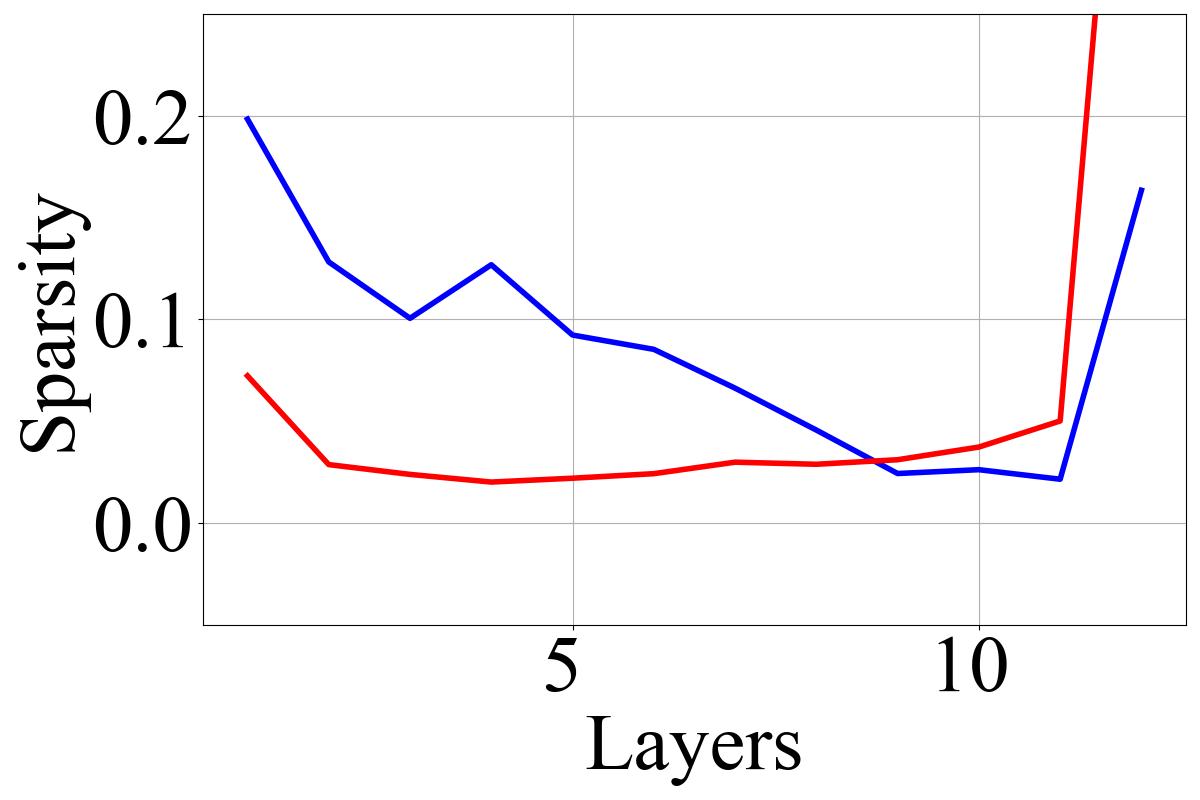}
    }
    \subfloat[Testing, stepwise.]{
        \centering
        \myincludegraphics[width=0.24\linewidth]{pic/results/dumps/laion/comparison/testing.jpg}
    }
    \subfloat[Testing, layerwise.]{
        \centering
        \myincludegraphics[width=0.24\linewidth]{pic/results/dumps/laion/comparison/test_end.jpg}
    }\\
    \resetHeight{}
    \subfloat[Training, modified. \label{figure:productive_laion_sparsified_training}]{
        \centering
        \myincludegraphics[width=0.24\linewidth]{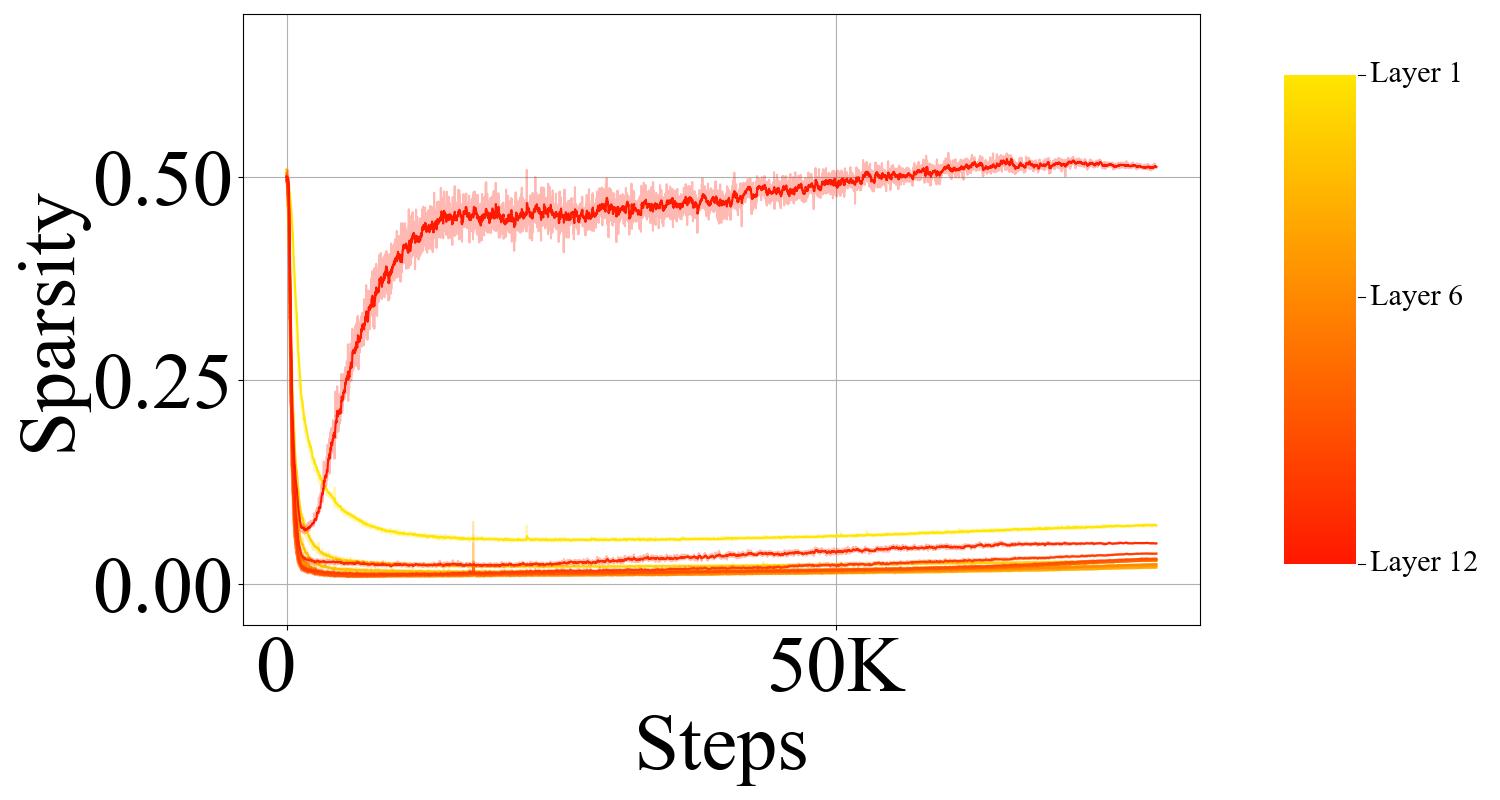}
}
    \subfloat[Training, vanilla. \label{figure:productive_laion_vanilla_training}]{
        \centering
        \myincludegraphics[width=0.24\linewidth]{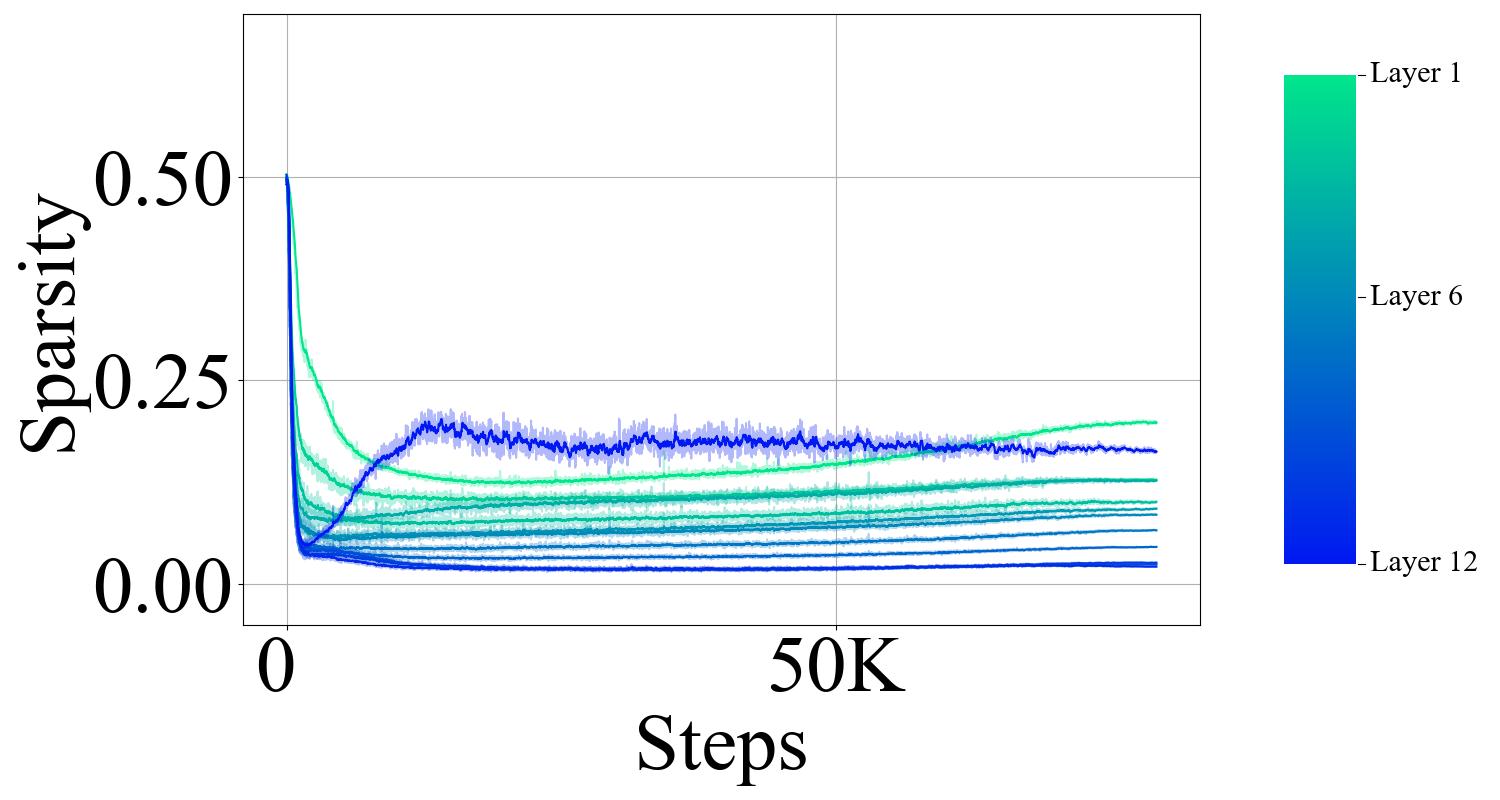}
}
    \subfloat[Testing, modified. \label{figure:productive_laion_sparsified_testing}]{
        \centering
        \myincludegraphics[width=0.24\linewidth]{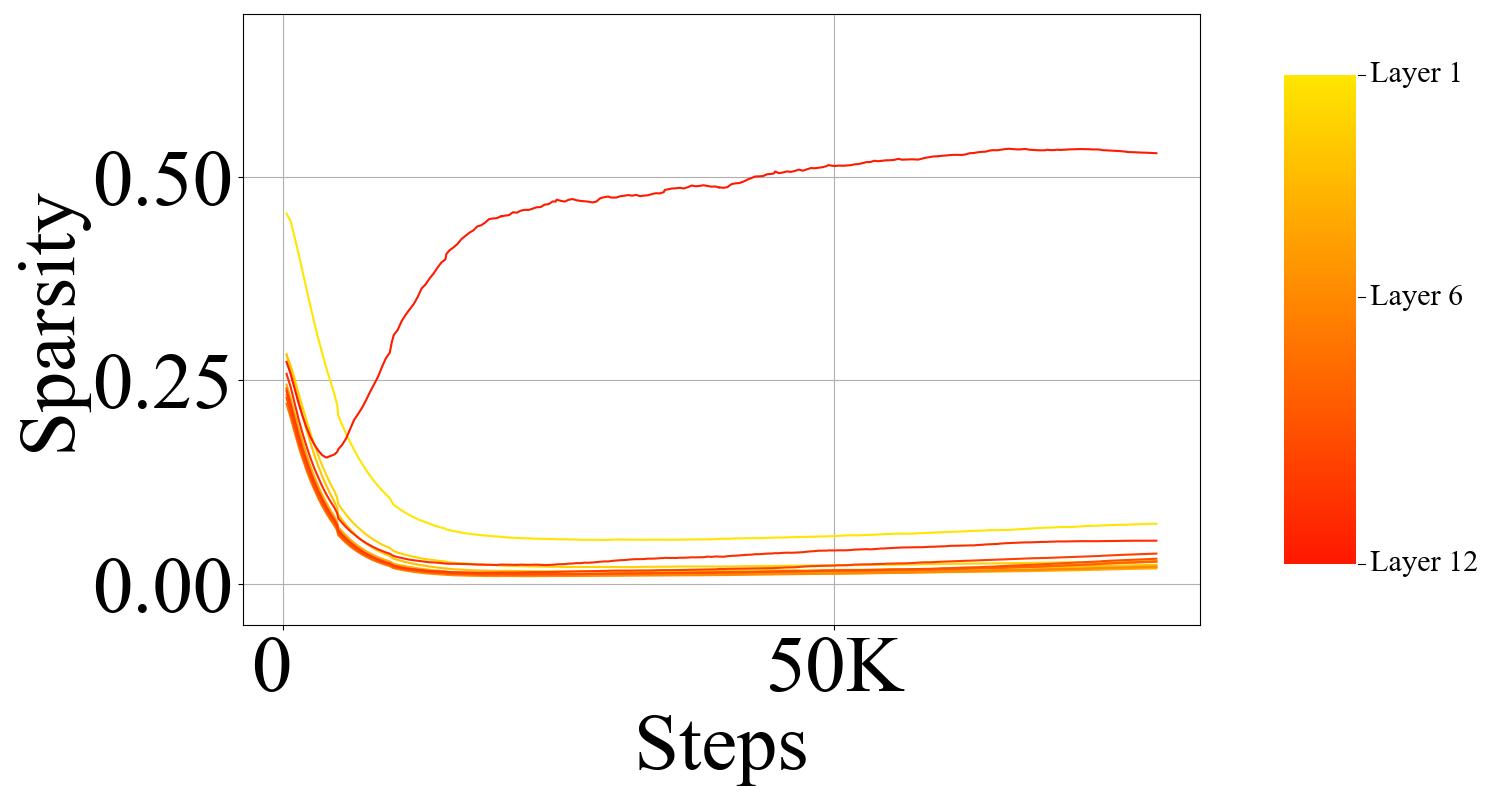}
}
    \subfloat[Testing, vanilla. \label{figure:productive_laion_vanilla_testing}]{
        \centering
        \myincludegraphics[width=0.24\linewidth]{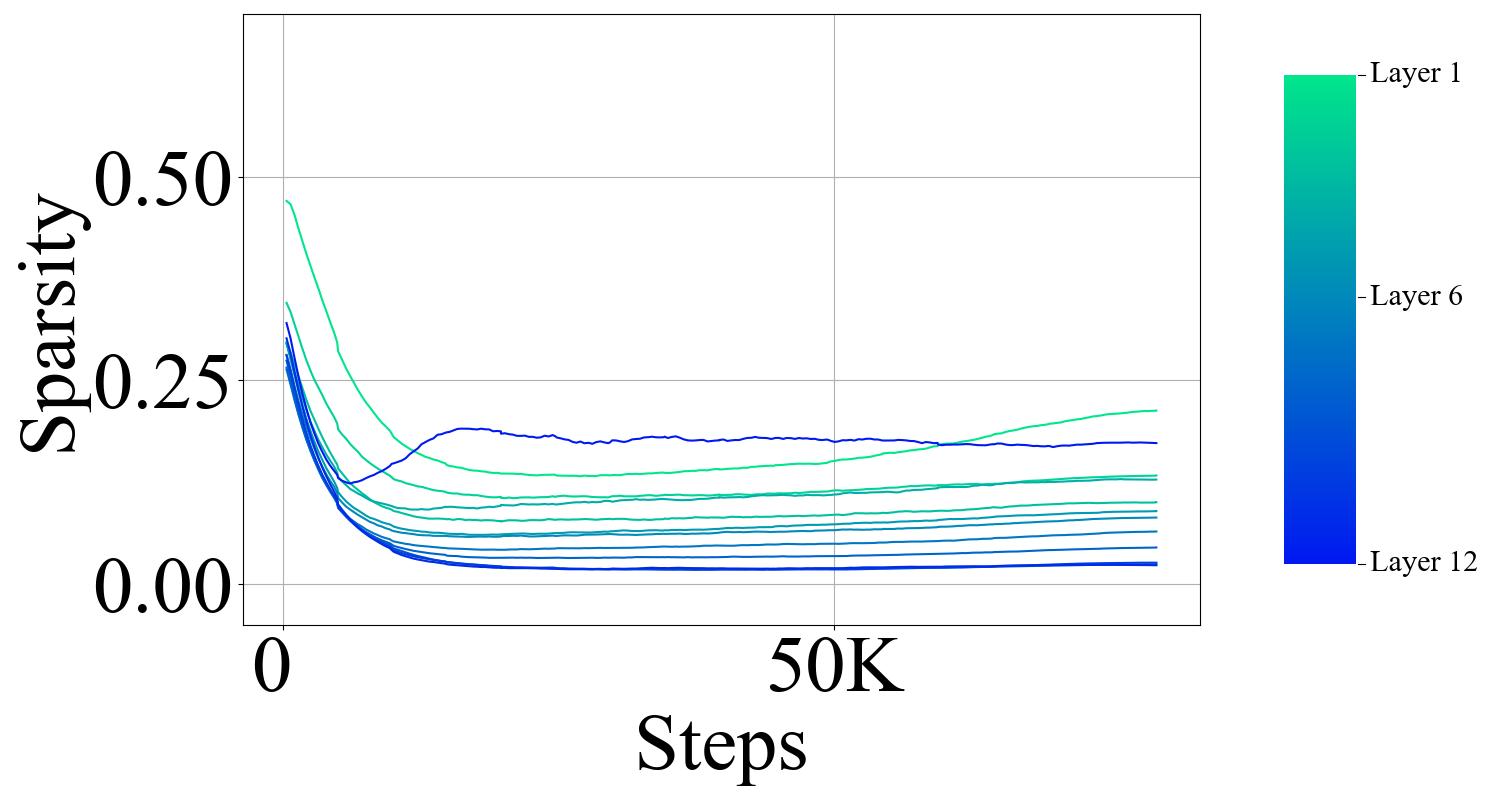}
}
    \caption{\minorrevision{Training and testing sparsity during training of ViT-Base on LAION-40M across layers and steps. Red is used for modified ViT-Base while blue indicates vanilla ViT-Base.
    }}\label{figure:productive_laion_full}
\end{figure*}

\begin{figure*}
    \centering
    \resetHeight{}
    \subfloat[Training, modified enc.\label{figure:productive_t5_sparsified_training_encoder}]{
        \myincludegraphics[width=0.24\linewidth]{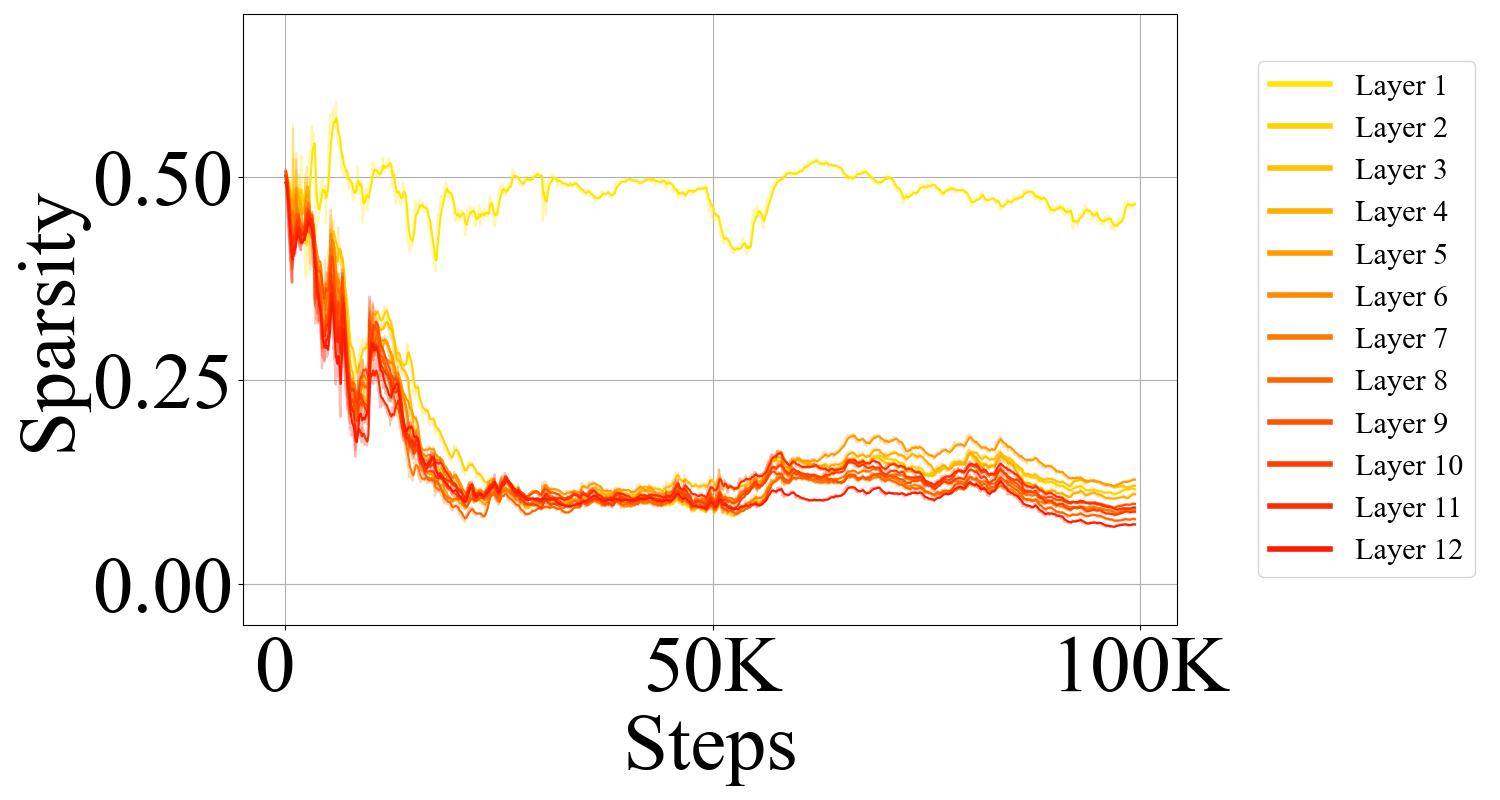}
}
    \subfloat[Training, vanilla enc.\label{figure:productive_t5_vanilla_training_encoder}]{
        \myincludegraphics[width=0.24\linewidth]{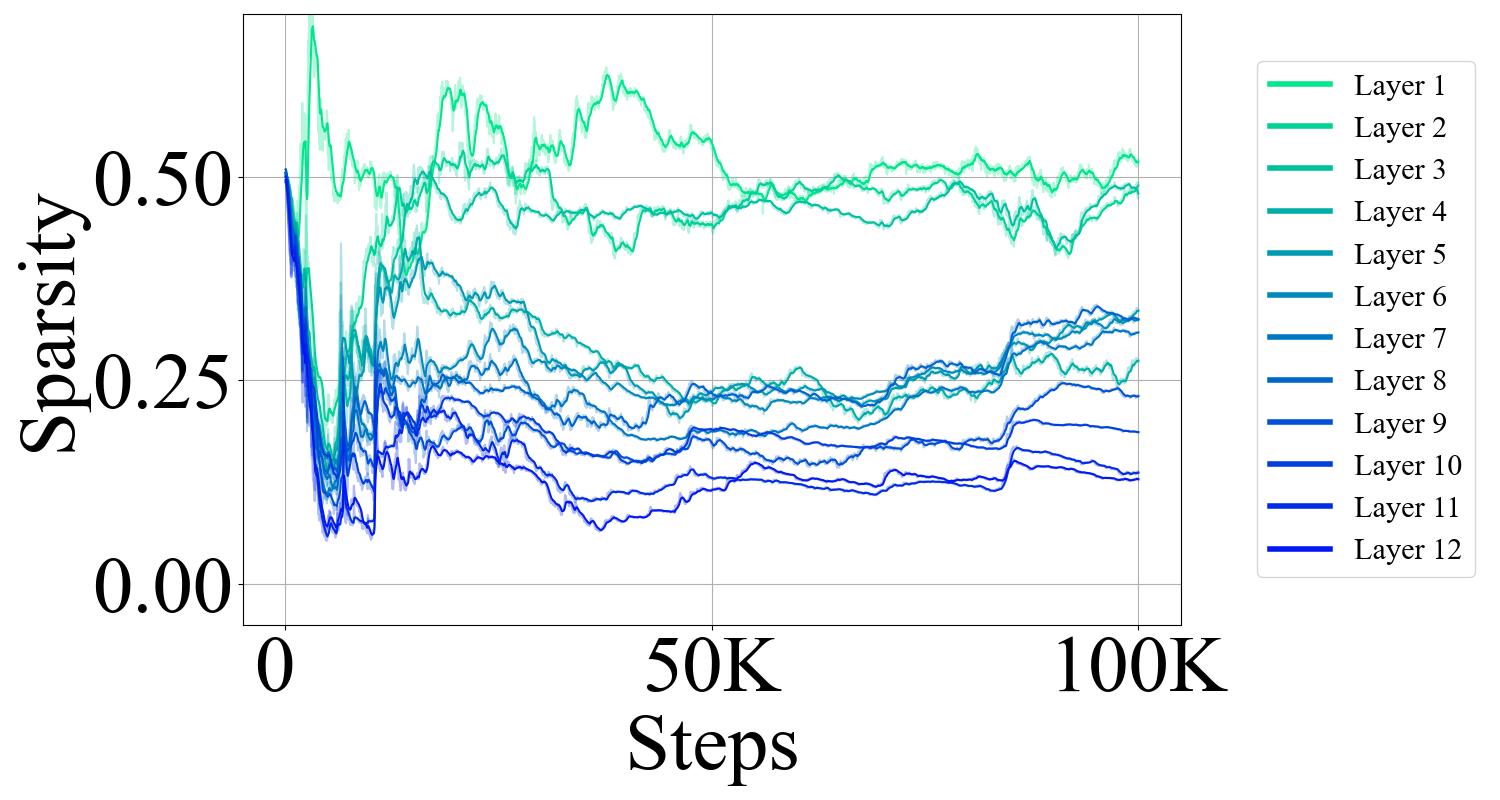}
}
    \subfloat[Training, modified dec.\label{figure:productive_t5_sparsified_training_decoder}]{
        \myincludegraphics[width=0.24\linewidth]{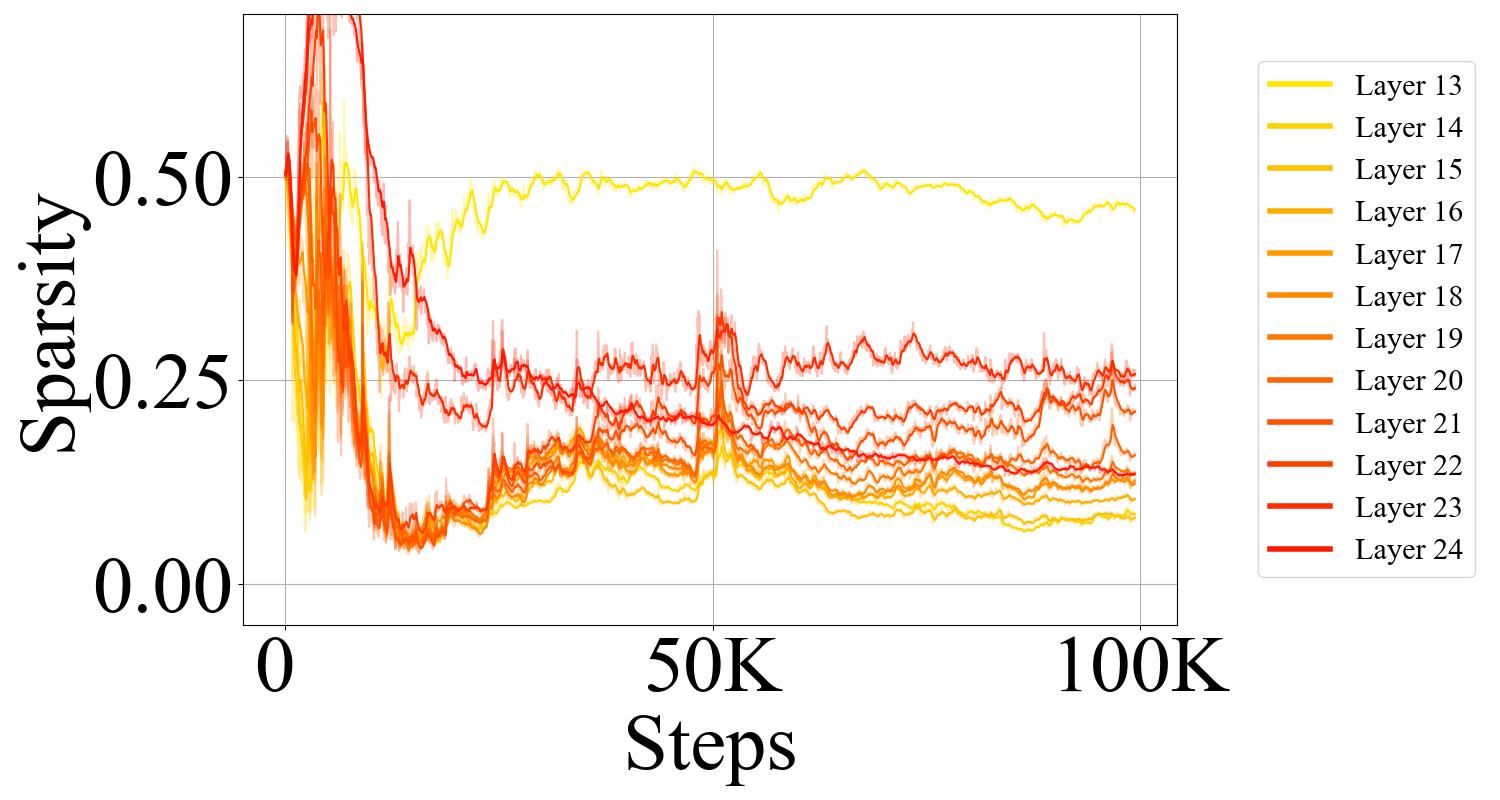}
}
    \subfloat[Training, vanilla dec.\label{figure:productive_t5_vanilla_training_decoder}]{
        \myincludegraphics[width=0.24\linewidth]{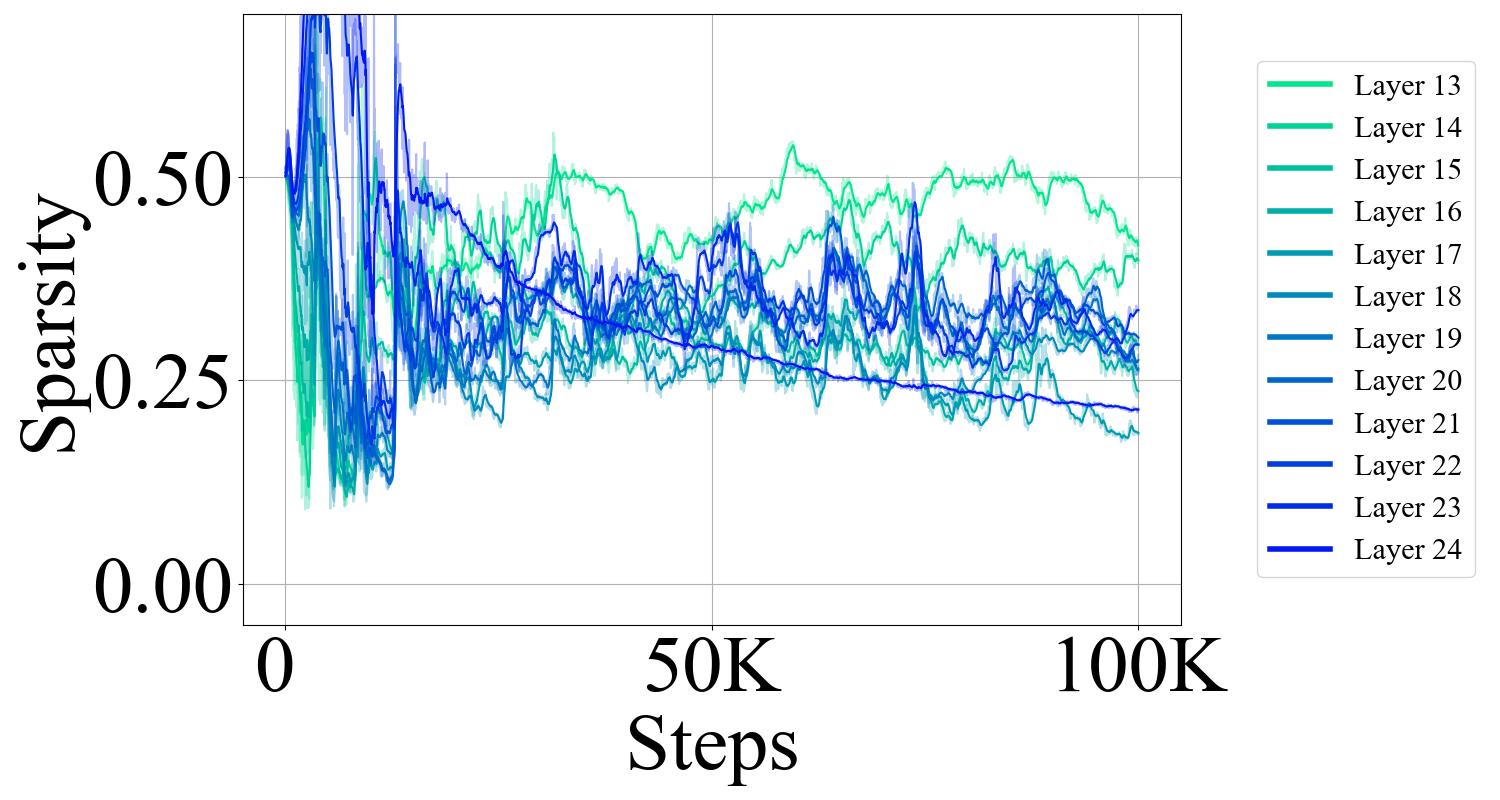}
} \\
    \subfloat[Testing, modified enc.\label{figure:productive_t5_sparsified_testing_encoder}]{
        \myincludegraphics[width=0.24\linewidth]{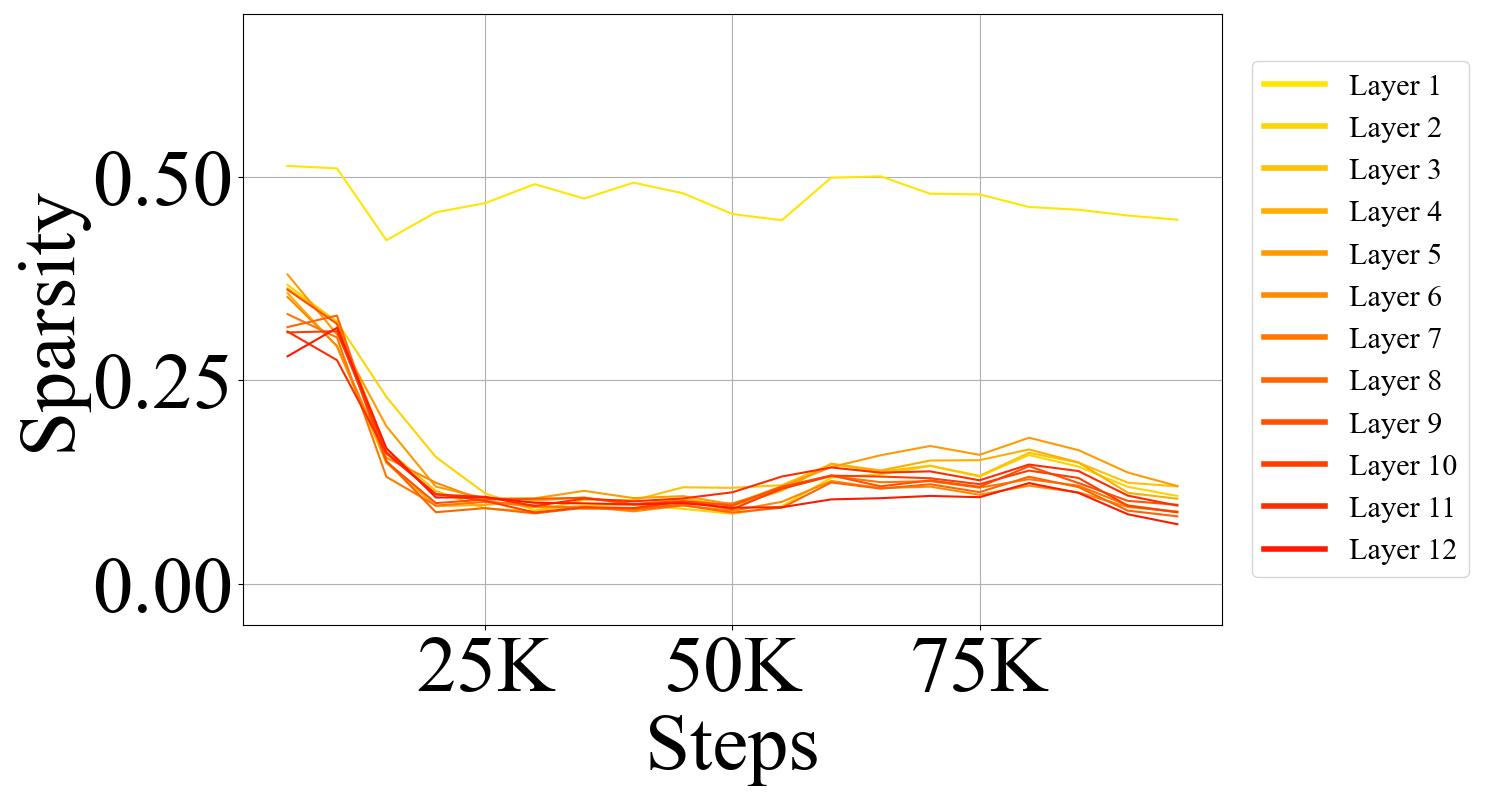}
}
    \subfloat[Testing, vanilla enc.\label{figure:productive_t5_vanilla_testing_encoder}]{
        \myincludegraphics[width=0.24\linewidth]{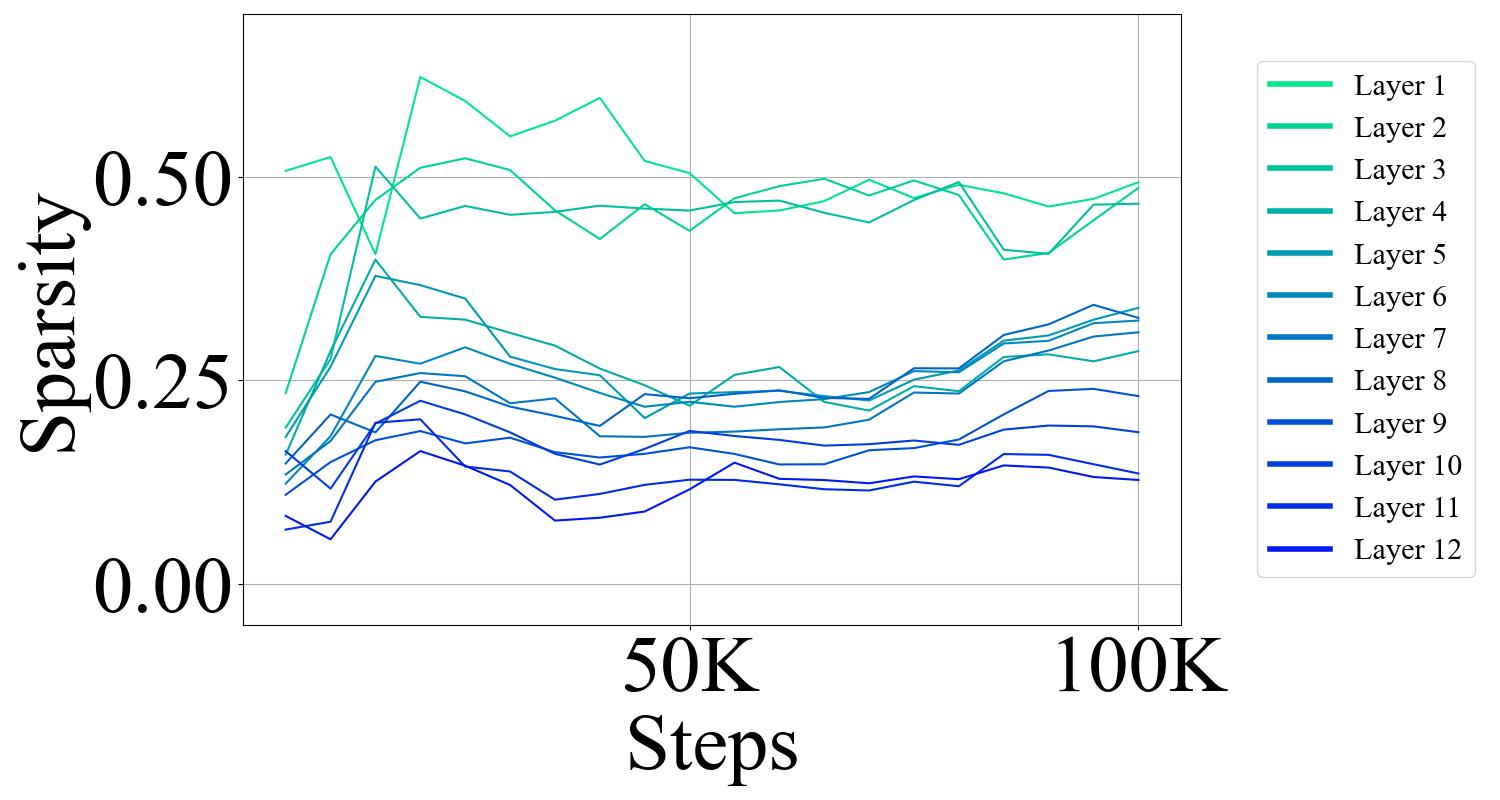}
}
    \subfloat[Testing, modified dec.\label{figure:productive_t5_sparsified_testing_decoder}]{
        \myincludegraphics[width=0.24\linewidth]{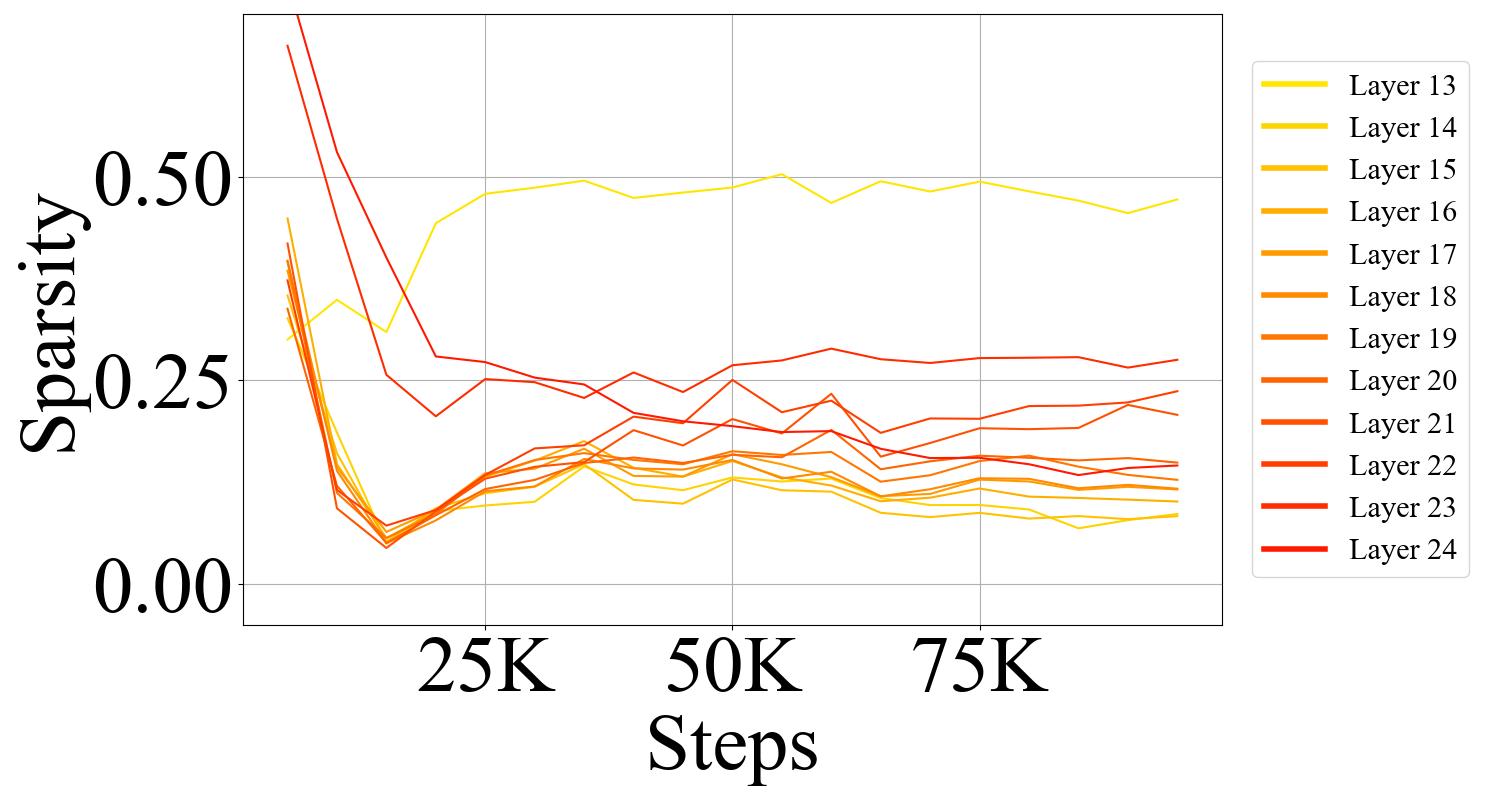}
}
    \subfloat[Testing, vanilla dec.\label{figure:productive_t5_vanilla_testing_decoder}]{
        \myincludegraphics[width=0.24\linewidth]{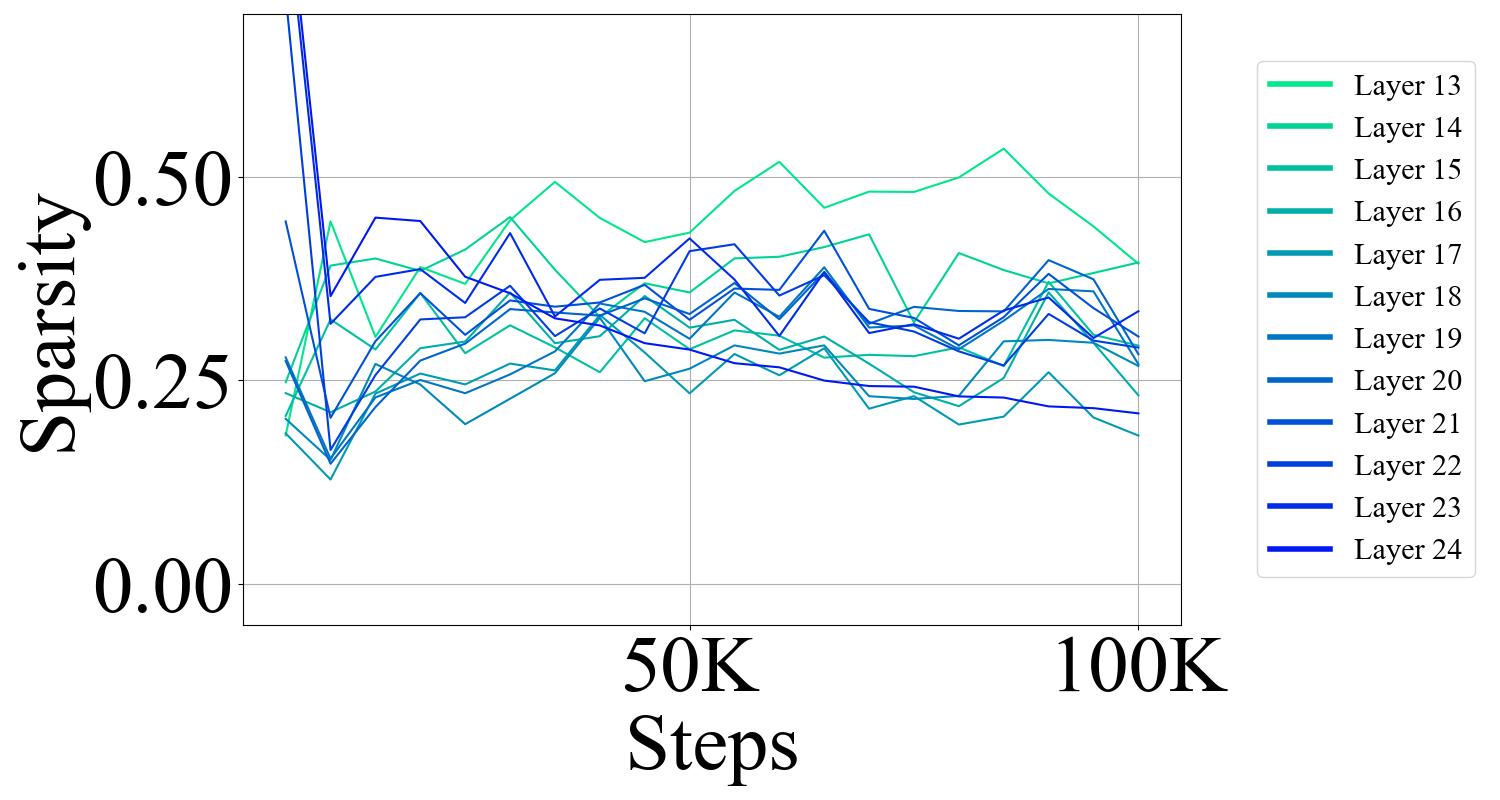}
}
    \caption{Training and testing sparsity during training of T5-Base on C4. Red is used for modified models while blue indicates vanilla ones. 
    }\label{figure:productive_t5_full}
\end{figure*}

\begin{figure*}[!htb]
    \centering
    \includegraphics[width=0.4\linewidth]{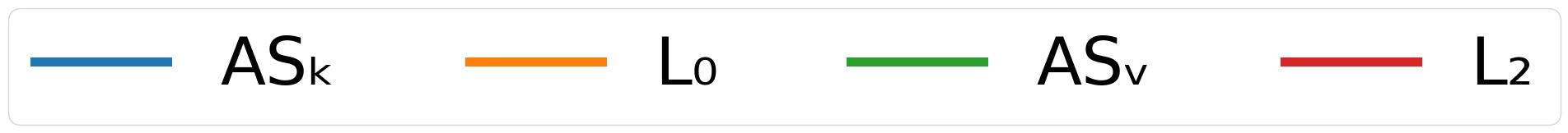}\\
    \subfloat[Sparsity.]{
        \includegraphics[width=0.17\linewidth]{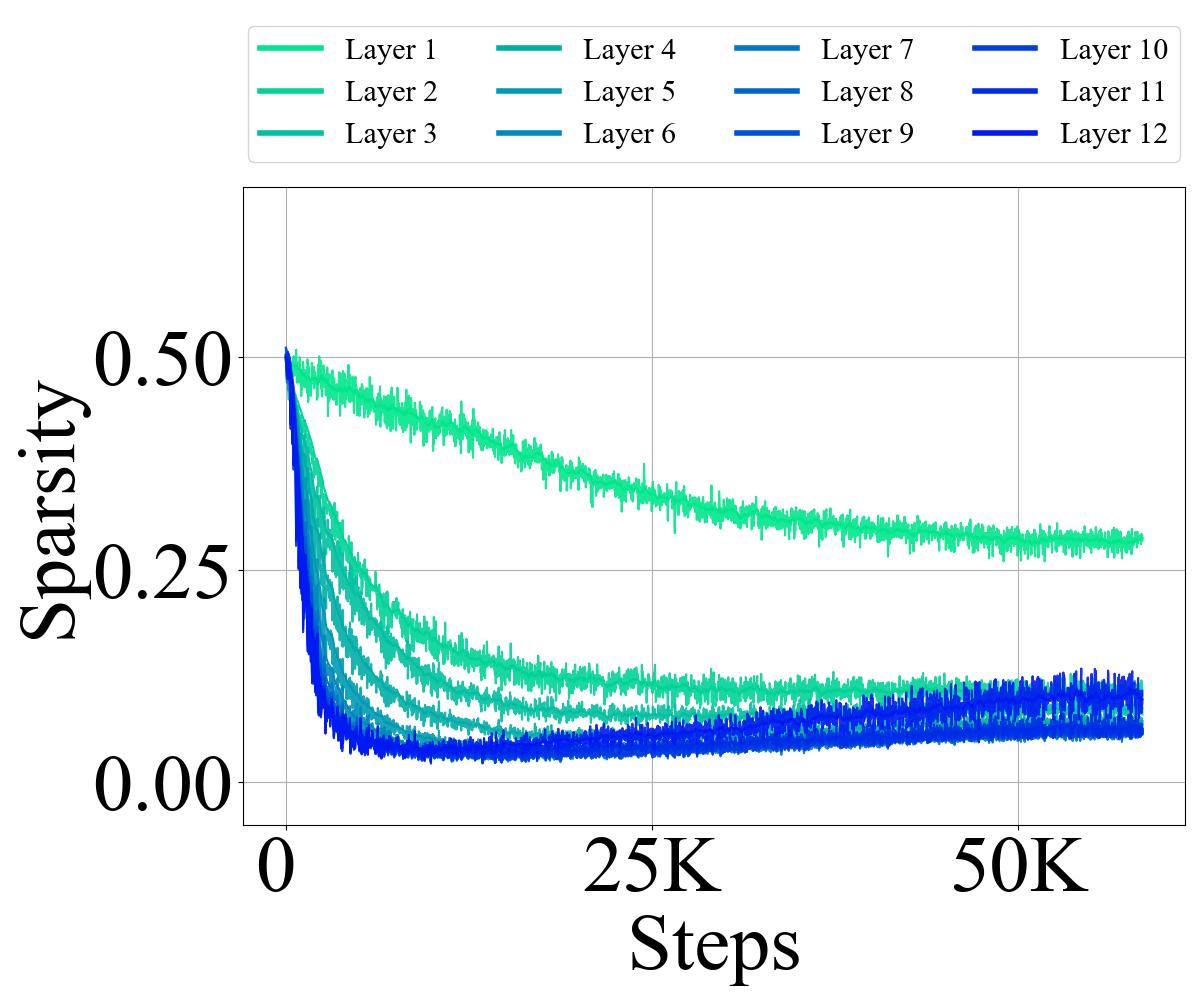}
    }
    \foreach \i in {0,...,11}{
        \pgfmathtruncatemacro{\layerid}{1+\i} 
        \subfloat[\small Layer \layerid]{
            \includegraphics[width=0.17\linewidth]{pic/af/augmented_flatness_cifar10_allepochs/vanilla/output/\i.jpg}
        }
    }
    \caption{Layerwise sparsity and $\tildeAS[\tildeKparam], c^l_{L_0}, \tildeAS[\tildeVparam], c^l_{L_2}$ of ViT-Base trained on CIFAR-10.}\label{fig:detail_start}
\end{figure*}

\begin{figure*}[!htb]
    \centering
    \includegraphics[width=0.4\linewidth]{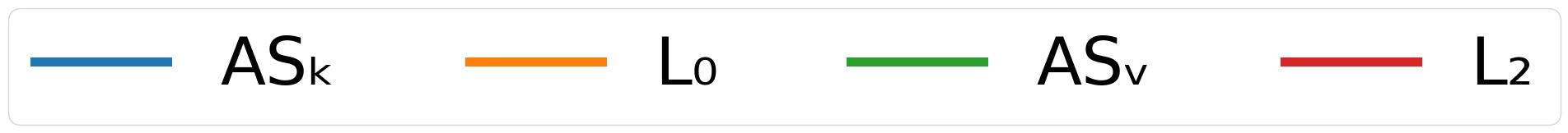}\\
    \foreach \i in {0,...,11}{
        \pgfmathtruncatemacro{\layerid}{1+\i} 
        \subfloat[\small Layer \layerid]{
            \includegraphics[width=0.17\linewidth]{pic/af/augmented_flatness_128/vanilla/output/\i.jpg}
        }
    }
    \caption{Layerwise $\tildeAS[\tildeKparam], c^l_{L_0}, \tildeAS[\tildeVparam], c^l_{L_2}$ of vanilla ViT-Base trained on ImageNet-1K during the first 10 epochs.}
\end{figure*}

\begin{figure*}[!htb]
    \centering
    \includegraphics[width=0.4\linewidth]{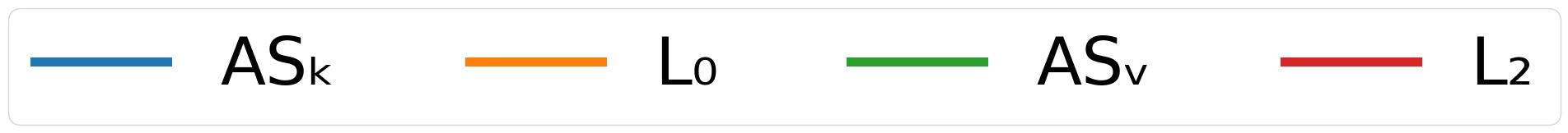}\\
    \foreach \i in {0,...,11}{
        \pgfmathtruncatemacro{\layerid}{1+\i} 
        \subfloat[\small Layer \layerid]{
            \includegraphics[width=0.17\linewidth]{pic/af/augmented_flatness_128_allepochs/vanilla/output/\i.jpg}
        }
    }
    \caption{Layerwise $\tildeAS[\tildeKparam], c^l_{L_0}, \tildeAS[\tildeVparam], c^l_{L_2}$ of vanilla ViT-Base trained on ImageNet-1K.}
\end{figure*}

\begin{figure*}[!htb]
    \centering
    \includegraphics[width=0.4\linewidth]{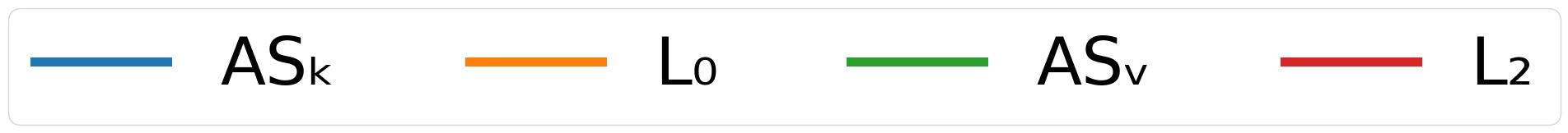}\\
    \foreach \i in {0,...,11}{
        \pgfmathtruncatemacro{\layerid}{1+\i} 
        \subfloat[\small Layer \layerid]{
            \includegraphics[width=0.17\linewidth]{pic/af/augmented_flatness_128_allepochs_sparsified/sparsified/output/\i.jpg}
        }
    }
    \caption{Layerwise $\tildeAS[\tildeKparam], c^l_{L_0}, \tildeAS[\tildeVparam], c^l_{L_2}$ of sparsified ViT-Base trained on ImageNet-1K.}\label{fig:detail_end}
\end{figure*} 
\minorrevision{
\section{Generalization of Sparsity}\label{appendix:sparsity_overfitting}

To show that the sparsity overfitting discovered in \cref{sec:analyses} cannot be arbitrarily severe, and to provide preliminary guidance for future algorithms that alleviate this overfitting, we derive an information-theoretic generalization bound for sparsity.

Information-theoretic generalization theory is increasingly viewed as promising due to its data- and algorithm-dependent nature, tightness, and, importantly, mild assumptions \cite{wang_tighter_2023,peng2025leveraging}, which allows us to apply it to sparsity.
Specifically, the theory shows that if a loss function is sub-Gaussian (a random variable $X$ is $R$-sub-Gaussian if $\ex{e^{\lambda (X - \ex{X})}} \le e^{\lambda^2 R^2/2}$ for any $\lambda \in \reals$), then overfitting measured by this loss is upper-bounded by the mutual information between training data and parameters, which reflects how much training-set-specific information is memorized in the parameters \cite{xu_information-theoretic_2017}:
\begin{lemma}\label{lemma:MI_bound}
    Let $\mathcal{P}_{\vec{x}, y}$ be the underlying data distribution of the task of interest.
    Let $\ell(\allparam, (\vec{x}, y)) \ge 0$ be a non-negative loss function.
    Assume $\ell(\allparam, \cdot)$ is $R$-sub-Gaussian on the sample distribution $\mathcal{P}_{\vec{x}, y}$ for any $\allparam$.
    Suppose the model is trained by first sampling $n$ samples to form training set $\dataset$ from $\mathcal{P}_{\vec{x}, y}$ in an independent and identically distributed (I.I.D.) manner, and then training the model using any training algorithm. Let $\allparam$ be the resulting parameter, which is a function of random $\dataset$ and is itself a random variable correlated with $\dataset$. 
    The generalization error \wrt{} $\ell$ is bounded by
    \begin{align}
        \ex[\dataset]{\underbrace{\ex[(\vec{x}', y') \sim \mathcal{P}_{\vec{x}, y}]{\ell(\allparam, (\vec{x}', y'))}}_{\text{testing loss}} - \underbrace{\ex[(\vec{x}, y) \sim U[\dataset]]{\ell(\allparam, (\vec{x}, y))}}_{\text{training loss}}}
         \le \sqrt{\frac{2 R^2}{n} I(\dataset; \allparam)},
    \end{align}
    where $I(\dataset; \allparam)$ is the mutual information between the training set and the resulting parameter.
\end{lemma}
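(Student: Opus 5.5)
This is the classical bound of Xu and Raginsky \cite{xu_information-theoretic_2017}. I would prove it through the Donsker--Varadhan variational representation of KL divergence, followed by a decoupling argument.

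\textbf{Setup.} Write $P_{\dataset,\allparam}$ for the joint law of the training set and the learned parameter. Let $\bar{P} = P_{\dataset}\otimes P_{\allparam}$ be the product of its marginals, so that $I(\dataset;\allparam)=\mathrm{KL}(P_{\dataset,\allparam}\,\|\,\bar{P})$. Define the functional
\begin{align}
g(\dataset,\allparam) \defeq \frac{1}{n}\sum_{(\vec{x},y)\in\dataset}\ell(\allparam,(\vec{x},y)),
\end{align}
which is the empirical (training) loss.

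\textbf{Step 1: identify the two expectations.} Under the joint law, $\mathbb{E}\,g$ is the expected training loss. Under $\bar{P}$, the dataset $\dataset$ is independent of $\allparam$. Since the samples are i.i.d. from $\mathcal{P}_{\vec{x},y}$, $\mathbb{E}_{\bar P}\,g$ equals $\mathbb{E}_{\allparam}\,\mathbb{E}_{(\vec{x}',y')}\ell(\allparam,(\vec{x}',y'))$, which is exactly the expected testing loss. The generalization gap is therefore $\mathbb{E}_{\bar P}g-\mathbb{E}_{P}g$.

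\textbf{Step 2: sub-Gaussianity of $g$ under $\bar{P}$.} Fix $\allparam$. Then $g$ is an average of $n$ i.i.d. $R$-sub-Gaussian variables, so it is $R/\sqrt{n}$-sub-Gaussian by independence, because the moment generating functions multiply. Conditioning on $\allparam$ and applying this bound for every $\lambda\in\reals$ gives
\begin{align}
\log\mathbb{E}_{\bar P}\big[e^{\lambda(g-\mathbb{E}_{\bar P}[g\mid\allparam])}\big]\le \lambda^2R^2/(2n).
\end{align}

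\textbf{Step 3: apply Donsker--Varadhan.} For every $\lambda$,
\begin{align}
\mathrm{KL}(P\|\bar P)\ge \lambda\,\mathbb{E}_P g-\log\mathbb{E}_{\bar P}e^{\lambda g}.
\end{align}
Here lies the main obstacle. The centering in Step 2 is conditional on $\allparam$, whereas Donsker--Varadhan needs a centering at the unconditional mean. I would handle this by applying Donsker--Varadhan to the conditional KL divergences $\mathrm{KL}(P_{\dataset\mid\allparam}\,\|\,P_{\dataset})$ for each fixed $\allparam$, where the required sub-Gaussian bound holds directly. This gives, for each $\allparam$,
\begin{align}
\big|\mathbb{E}[g\mid\allparam]-\mathbb{E}_{\dataset}g(\dataset,\allparam)\big|\le\sqrt{2R^2\,\mathrm{KL}(P_{\dataset\mid\allparam}\|P_{\dataset})/n}.
\end{align}
This follows by optimizing the resulting quadratic inequality $\lambda\Delta-\lambda^2R^2/(2n)\le \mathrm{KL}$ over $\lambda$, taking both signs of $\lambda$ to obtain the absolute value.

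\textbf{Step 4: integrate over $\allparam$.} Integrate the Step 3 bound over $\allparam$ and use Jensen's inequality (concavity of the square root). Since $\mathbb{E}_{\allparam}\mathrm{KL}(P_{\dataset\mid\allparam}\|P_{\dataset})=I(\dataset;\allparam)$, this yields the claimed bound $\sqrt{2R^2I(\dataset;\allparam)/n}$.

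Non-negativity of $\ell$ is only needed so that all expectations are well defined, and the case $I(\dataset;\allparam)=\infty$ is trivial.
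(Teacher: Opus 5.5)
Your argument is correct. The paper gives no proof of its own and simply imports this result from Xu and Raginsky \cite{xu_information-theoretic_2017}; your route is the standard proof of that bound: apply Donsker--Varadhan to $\mathrm{KL}(P_{\dataset\mid\allparam}\,\|\,P_{\dataset})$ for each fixed $\allparam$, where the $R/\sqrt{n}$ sub-Gaussianity holds around the $\allparam$-dependent population risk, then average over $\allparam$ and apply Jensen using $I(\dataset;\allparam)=\mathbb{E}_{\allparam}\mathrm{KL}(P_{\dataset\mid\allparam}\,\|\,P_{\dataset})$.
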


Note that in \cref{lemma:MI_bound}, the training algorithm can use any loss other than $\ell$ and they are decoupled. Therefore, we can treat sparsity percentage (percentage of non-zero activations)
\begin{align}
    S^l(\allparam, (\vec{x}, y)) \defeq& \norm{\mat{D}^l}_0 / k n,\\
    S^l_{\textnormal{train}}(\allparam, \dataset) \defeq& \ex[(\vec{x}, y) \sim U[\dataset]]{S^l(\allparam, (\vec{x}, y))},\\
    S^l_{\textnormal{test}}(\allparam) \defeq& \ex[(\vec{x}', y') \sim \mathcal{P}_{\vec{x}, y}]{S^l(\allparam, (\vec{x}', y'))}
\end{align}
as a special loss.

The only non-trivial assumption required by \cref{lemma:MI_bound} is that the loss is sub-Gaussian.
Note that boundedness, which sparsity percentage naturally satisfies, naturally implies sub-Gaussianity. Therefore, \cref{lemma:MI_bound} can be applied to sparsity overfitting, as formalized by the following theorem:
\begin{theorem}\label{theorem:sparsity_overfitting}
    Assume the training set is sampled in an I.I.D. manner. Then the difference between testing and training sparsity is upper-bounded by
    \begin{align}
        \ex[\dataset]{S^l_{\textnormal{test}}(\allparam) - S^l_{\textnormal{train}}(\allparam, \dataset)}
         \le \sqrt{\frac{I(\dataset; \allparam)}{2|\dataset|}}
    \end{align}
\end{theorem}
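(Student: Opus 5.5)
The plan is to instantiate \cref{lemma:MI_bound} with the sparsity percentage $S^l$ playing the role of the loss $\ell$. \cref{lemma:MI_bound} requires nothing of how the training algorithm produced $\allparam$ from $\dataset$. So we treat $S^l(\allparam, (\vec{x}, y)) = \norm{\mat{D}^l}_0 / kn$ as a fixed measurable function of the parameter and the sample. We also note that training is performed with cross-entropy, which is irrelevant to the bound.

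First, check the hypotheses of \cref{lemma:MI_bound}.

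Non-negativity is immediate, since $\norm{\mat{D}^l}_0 \ge 0$.

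Boundedness follows because $\mat{D}^l \in \reals^{k \times n}$ has at most $kn$ non-zero entries. Hence $S^l(\allparam, \cdot) \in [0, 1]$ for every $\allparam$ and every sample.

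Sub-Gaussianity comes from Hoeffding's lemma. Any random variable supported on $[a, b]$ is $\frac{b-a}{2}$-sub-Gaussian, i.e. $\ex{e^{\lambda(X - \ex{X})}} \le e^{\lambda^2 (b-a)^2/8}$. With $[a,b] = [0,1]$ this gives $R = 1/2$ for each fixed $\allparam$ under $(\vec{x}', y') \sim \mathcal{P}_{\vec{x}, y}$.

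Second, plug into the bound. With $n = |\dataset|$ i.i.d. samples, \cref{lemma:MI_bound} gives
\begin{align}
\ex[\dataset]{S^l_{\textnormal{test}}(\allparam) - S^l_{\textnormal{train}}(\allparam, \dataset)} \le \sqrt{\frac{2 \cdot (1/4)}{|\dataset|} I(\dataset; \allparam)} = \sqrt{\frac{I(\dataset; \allparam)}{2|\dataset|}},
\end{align}
which is exactly the claim. Note the notation clash: the $n$ in \cref{lemma:MI_bound} is the sample count, while in $S^l$ it is the MLP width. I will rename the former to $|\dataset|$ to avoid confusion.

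The only step needing care is measurability and well-definedness of $S^l$ as a function of $(\allparam, \vec{x})$. $\mat{D}^l$ uses the zero-assigned subderivative of \cref{def:subdrivatives}, and $\norm{\cdot}_0$ is an indicator count. For standard activations such as $\relu$ or $\jrelu$, $S^l$ is a finite sum of indicators of Borel sets of pre-activations, which are continuous in $(\allparam, \vec{x})$. So it is Borel measurable and the mutual-information argument applies.

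I expect no real obstacle beyond stating the sub-Gaussian constant correctly: the tightest constant $R = 1/2$ is what yields the factor $1/2$ inside the square root.
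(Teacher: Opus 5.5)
Your proposal is correct and follows the paper's proof exactly. You bound $S^l \in [0,1]$, use the fact that a $[a,b]$-valued variable is $\frac{b-a}{2}$-sub-Gaussian to get $R = 1/2$, and apply \cref{lemma:MI_bound} with $n = |\dataset|$ to obtain $\sqrt{I(\dataset;\allparam)/(2|\dataset|)}$. Your added remarks on measurability and on the clash between the sample count and the MLP width $n$ are sensible points that the paper leaves implicit.
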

\begin{proof}
    Since $\mat{D}^l \in \reals^{k \times n}$, we have $\norm{\mat{D}^l}_0 \in [0, k n]$ and $S^l(\allparam, (\vec{x}, y)) \defeq \norm{\mat{D}^l}_0 / k n \in [0, 1]$.
    Since a random variable in $[a, b]$ is naturally sub-Gaussian with $R = (b - a) / 2$, $S^l(\allparam, \cdot)$ is sub-Gaussian with $R = 1/2$.
    Then applying \cref{lemma:MI_bound} \wrt{} $S^l$ with $R=1/2$ and $n=|\dataset|$ proves the desired result.
\end{proof}
}

\end{document}